\definecolor{darkred}{rgb}{0.7,0,0}
\definecolor{darkblue}{rgb}{0,0,0.7}
\definecolor{darkgreen}{rgb}{0,0.4,0}
\definecolor{crimson}{rgb}{0.7294,0.0666,0.0470}
\newenvironment{customcondition}[1]
{\innercustomthm}
{\endinnercustomthm}
\NewDocumentCommand{\evalat}{sO{\big}mm}{%
  \IfBooleanTF{#1}
   {\mleft. #3 \mright|_{#4}}
   {#3#2|_{#4}}%
}
\newcommand{\R}{\mathbb{R}}
\newcommand{\N}{\mathbb{N}}
\newcommand{\norm}[1]{\left\lVert #1 \right\rVert}
\newcommand{\abs}[1]{\left\vert #1 \right\rvert}
\newcommand{\E}[1]{\mathbb{E}{\left[ #1\right]}}
\newcommand{\brac}[1]{\left( #1 \right)}
\newcommand{\inner}[1]{\left\langle #1 \right\rangle}
\begin{document}
\title{Algorithmic Stability of Stochastic Gradient Descent with Momentum
under Heavy-Tailed Noise}

\author{%
\name Thanh Dang $^\dagger$  \email td22v@fsu.edu \\
 \addr Department of Mathematics\\ Florida State University, Tallahassee, FL, USA
 \AND
 \name Melih Barsbey $^\dagger$ \email m.barsbey@imperial.ac.uk \\
 \addr Department of Computing\\ Imperial College London, London, UK
 \AND
\name A K M Rokonuzzaman Sonet \email asonet@fsu.edu \\
 \addr Department of Mathematics\\ Florida State University, Tallahassee, FL, USA
 \AND
 \name Mert G\"{u}rb\"{u}zbalaban \email mg1366@rutgers.edu\\
 \addr Department of Management Science
 and Information Systems\\ Rutgers Business School, Piscataway, NJ, USA 
 \AND
  \name Umut \c{S}im\c{s}ekli $^\clubsuit$ \email umut.simsekli@inria.fr \\
 \addr Inria, CNRS, Ecole Normale Sup\'{e}rieure\\ 
PSL Research University, Paris, France
\AND
\name Lingjiong Zhu $^\clubsuit$  \email zhu@math.fsu.edu \\
 \addr Department of Mathematics\\ Florida State University, Tallahassee, FL, USA
 \\
        ~\\
       \name $^\dagger$ \addr Equal contributing first authors.\\
       \name $^\clubsuit$ \addr Corresponding authors.\\
}

\maketitle

\begin{abstract} 
Understanding the generalization properties of optimization algorithms under heavy-tailed noise has gained growing attention. However, the existing theoretical results mainly focus on stochastic gradient descent (SGD) and the analysis of heavy-tailed optimizers beyond SGD is still missing. In this work, we establish generalization bounds for SGD with momentum (SGDm) under heavy-tailed gradient noise. We first consider the continuous-time limit of SGDm, i.e., a Lévy-driven stochastic differential equation (SDE), and establish quantitative Wasserstein algorithmic stability bounds for a class of potentially non-convex loss functions. Our bounds reveal a remarkable observation: For quadratic loss functions, we show that SGDm admits a worse generalization bound in the presence of heavy-tailed noise, indicating that the interaction of momentum and heavy tails can be harmful for generalization. We then extend our analysis to discrete-time and develop a uniform-in-time discretization error bound, which, to our knowledge, is the first result of its kind for SDEs with degenerate noise. This result shows that, with appropriately chosen step-sizes, the discrete dynamics retain the generalization properties of the limiting SDE. We illustrate our theory on both synthetic quadratic problems and neural networks. 
\end{abstract}

\begin{keywords}
Algorithmic stability, generalization, stochastic gradient descent, momentum, heavy tails
\end{keywords}

\section{Introduction}

The goal of many supervised learning problems is to minimize the population risk that is
\begin{align}  \label{eqn:pop_risk}
    \min_{\theta \in \mathbb{R}^d} \left\{ F(\theta) := \mathbb{E}_{x \sim \mathcal{D}} [f(\theta, x)] \right\},
\end{align}
where \(x \in \mathcal{X}\) represents a random data point drawn from an unknown probability distribution \(\mathcal{D}\) over the data space \(\mathcal{X}\). The space \(\mathcal{X}\) is a subset of a normed vector space equipped with the norm \(\|\cdot\|\), and without loss of generality, we assume that \(0 \in \mathcal{X}\). Furthermore, \(\theta \in \mathbb{R}^d\) is the parameter vector to be learned, and \(f(\theta, x)\) is the loss function. 
 Suitable choices of \(f\) will correspond to a wide range of supervised learning problems which appear in deep learning, logistic regression, and support vector machines \citep{shalev2014understanding}.

Since \(\mathcal{D}\) is often unknown, practitioners instead study the empirical risk minimization problem (ERM)  which is 
\begin{align*}
\min_{\theta \in \mathbb{R}^d} \left\{ \widehat{F}(\theta,X_n) := \frac1{n} \sum\nolimits_{i=1}^n f(\theta, x_i) \right\},
\end{align*}
where \(X_n = \{x_1, \ldots, x_n\} \subset \mathcal{X}^n\) is a training dataset consisting of independent and identically distributed (i.i.d.) observations.

Stochastic gradient descent (SGD) has been the bread-and-butter algorithm to tackle the ERM problem and is based on the 
recursion:
\begin{equation}
\theta_{k+1}=\theta_k-\eta\nabla \tilde{F}_{k+1}(\theta_k,X_{n}), 
\end{equation}
where $\eta>0$ is the step-size and
\begin{equation}\label{stoch:grad}
\nabla \tilde{F}_{k}(\theta,X_{n}) := \frac1{b} \sum\nolimits_{i\in \Omega_k} \nabla f(\theta, x_i)
\end{equation} 
is the stochastic gradient, while $\Omega_k \subset \{1,\dots,n\}$ is a random subset drawn with or without replacement and $b := |\Omega_k| \ll n$ is the batch-size.

A substantial challenge in learning theory is to understand the generalization properties of stochastic optimization algorithms, including SGD. More precisely, one is interested in deriving an upper bound of the generalization error $|\hat{F}(\theta,X_n) - F(\theta)|$. 
The past few years have witnessed the birth of a variety of approaches that aim at answering the previous question for different optimization algorithms, see e.g., \citep{cao2019generalization,lei2020fine,neu2021information,camuto2021fractal,park2022generalization,hodgkinson2022generalization,zhu2024uniform,andreeva2024topological}. 

The recent years have witnessed an increasing attention in the analysis of the generalization error of SGD under \emph{heavy-tailed} gradient noise, which is typically expressed by the following recursion:
\begin{align}
\label{eqn:sgd_ht}
    \theta_{k+1}= \theta_k- \eta\nabla \widehat{F}(\theta_k,X_{n})+\xi_{k+1},
\end{align}
where $(\xi_k)_{k\geq 1}$ is a sequence of heavy-tailed random vectors, potentially with unbounded higher-order moments, i.e., $\mathbb{E}\|\xi_k\|^p = +\infty$ for some $p>1$. The interest in the generalization error analysis of optimizers with heavy-tailed noise mainly stems from two facts:
\begin{enumerate}%[label=(\roman*)]
    \item It has been both theoretically and empirically illustrated that a \emph{heavy-tailed} behavior can naturally emerge in stochastic optimization depending on the choice of hyperparameters ($\eta$ and $b$), the data distribution $\mathcal{D}$, and the geometry of the loss function $f$ \citep{gurbuzbalaban2020heavy,hodgkinson2021multiplicative,schertzer2024stochastic,jiao2024emergence,damek2024analysing}; and moreover the heaviness of the tail turns out to be positively correlated with the generalization performance in certain settings \citep{mahoney2019traditional,simsekli2020hausdorff,martin2021predicting,barsbey2021heavy}. This has motivated the use of the recursion \eqref{eqn:sgd_ht} as a `heavy-tailed proxy' for the true SGD recursion in the presence of heavy tails, which --to some extent-- facilitated the analysis of SGD in terms of its generalization error.
    \item Recently, \citet{wan2024implicit} showed that explicitly injecting heavy-tailed noise to the SGD recursion (i.e., executing \eqref{eqn:sgd_ht} directly, possibly replacing $\nabla \widehat{F}$ with $\nabla \tilde{F}_{k+1}$) for a class of neural networks results in `compressible' network weights, which might provide crucial benefits in resource-bounded applications. Moreover, \citet{lim2022chaotic} showed that heavy-tailed dynamics can emerge in deterministic gradient descent; highlighting the need for a precise understanding of the role of heavy-tails in optimization.
\end{enumerate}
In terms of understanding the links between heavy-tails and generalization, \citet{simsekli2020hausdorff} presented the first generalization bounds where the optimization algorithm was modeled by a general class of heavy-tailed stochastic differential equations (SDE). They showed that the bound is controlled by the heaviness of the tails and some incomputable information theoretic terms. \citet{squareloss2023algorithmic} analyzed the case of SGD on quadratic loss functions, where they obtained fully explicit bounds. They then extended their approach in \citep{generalloss} to a general class of (possibly non-convex) loss functions (the class that we also consider in this study). Very recently, \citet{dupuisht} refined these results and proved tighter bounds. 

While these studies have revealed various interesting phenomena that emerge in the presence of heavy tails, they only cover the case of SGD, hence, the effects of heavy tails on other popular stochastic optimization algorithms yet to be discovered.     

In this study, we aim to take a step for bridging this gap and analyze the generalization properties of stochastic gradient descent with momentum (SGDm) with heavy-tailed noise, which admits the following recursion \citep{SZTG20}:
\begin{align}
\label{eqn:sgd_addht}
    &v_{k+1}=v_k-\eta\gamma v_k-\eta\nabla \widehat{F}(\theta_k,X_{n})+\xi_{k+1},\nonumber\\
    &\theta_{k+1}=\theta_{k}+\eta v_{k+1},
\end{align}
where $\eta>0$ is the step-size (or learning-rate), $\gamma>0$ is the friction or momentum parameter, and $(\xi_k)_{k\geq 1}$ is again a sequence of heavy-tailed random vectors.

Our main goal is to provide an algorithmic stability bound for SGDm with general loss function (which can be non-convex). Our work is a follow-up of \cite{squareloss2023algorithmic,generalloss}, whose authors study algorithmic stability for heavy-tailed SGD without momentum. 
We are interested in providing generalization error bound through the lens of algorithmic stability
for heavy-tailed SGDm, and compare it with the case without momentum.
Our contributions are as follows:
\begin{itemize}
    \item We first consider the continuous-time limit of \eqref{eqn:sgd_addht}, which is an $\alpha$-stable-L\'{e}vy-driven SDE. We derive $1$-Wasserstein algorithmic stability bounds for this SDE (Theorem~\ref{theorem_wassersteinbound_mainsection}), which then leads to a generalization error bound (Corollary~\ref{coro_generalizationbound}). Our analysis relies on a Wasserstein contraction rate of the corresponding SDE that is obtained in \cite{jianwangbao2022coupling} and a framework for probability approximation of Markov processes that is established in \cite{chenxu23amarkovapproach}. 
    
\item While it seems not easy to compare the generalization error bound of SGDm and SGD for general loss functions (see Remark~\ref{remark_comparingbound}), by focusing on the case of quadratic losses, we are able to make a comparison for these two algorithms, and this result is presented in Section~\ref{section_mainsquareloss}. Our result for the quadratic loss is a $p$-Wasserstein algorithmic stability bound
for any $p\in[1,\alpha)$ (Theorem~\ref{theorem_squareloss}), which is itself a novel contribution. 
It turns out that for quadratic losses, the generalization error bound of SGDm is always larger than that of SGD (Corollary~\ref{cor:quadratic}, Proposition~\ref{lemma_comparesingularvalues}). This result reveals the fact that the interaction of momentum and heavy tails can be harmful for generalization. 

\item We provide uniform-in-time $1$-Wasserstein discretization error bound between the $\alpha$-stable-L\'{e}vy-driven SDE and its discretization, i.e., the recursion \eqref{eqn:sgd_addht} (Theorem~\ref{thm:discrete}). To the best of our knowledge, it is the first uniform-in-time $1$-Wasserstein discretization error bound for an $\alpha$-stable-L\'{e}vy-driven SDE with degenerate noise, which is of its own interest. 
As a by-product, we obtain stability (Corollary~\ref{cor:discrete:algostab}) and generalization bounds (Corollary~\ref{cor:discrete:generalizationerrorbound}) for the recursion \eqref{eqn:sgd_addht}, which illustrate that the discrete-time dynamics inherit the generalization properties of the limiting SDE for an appropriately chosen step-size.
\item We support our theory with experiments conducted on synthetic quadratic problems, and fully-connected and convolutional neural networks on MNIST and CIFAR10. 
\end{itemize}

\section{Technical Background and Notations}
\label{section_background}

%%%%%%%%%%%%%%%%%%%%%%%%%%%%%%%%%%%%%%%%%%%
\textbf{Algorithmic stability.} We define algorithmic stability as in the seminal reference \cite{hardt2016train}.
\begin{definition}[\cite{hardt2016train}, Definition 2.1] \label{def:stability}
 For a (surrogate) loss function $\ell:\mathbb{R}^d \times \mathcal{X} \rightarrow \mathbb{R}$, an algorithm $\mathcal{A} : \bigcup_{n=1}^\infty \mathcal{X}^n \to \mathbb{R}^d$ is $\varepsilon$-uniformly stable if
 \begin{align}
     \sup_{X\cong \hat{X}}\sup_{z \in \mathcal{X}}~ \mathbb{E}\left[\ell(\mathcal{A}(X),z) - \ell(\mathcal{A}(\hat{X}),z) \right] \leq \varepsilon ,
 \end{align}
where the first supremum is taken over data $X, \hat{X} \in \mathcal{X}^n$   that differ by one element, denoted by $X\cong \hat{X}$.
\end{definition}
We will employ a surrogate loss function $\ell$ to measure the algorithmic stability, which might be different from the original loss function $f$. The necessity to use a surrogate loss function in the definition of the algorithmic stability has to do with the heavy-tailed noise in our model and a detailed explanation is given in \citep[Section 3.1]{squareloss2023algorithmic} and some example realistic scenarios are presented in \citep[Appendix A]{zhu2024uniform}.

Algorithmic stability is an important concept in learning theory as it is related to the generalization performance of a randomized algorithm, which is the content of the next Theorem. In order to state the result, let us define 
\begin{align*}
    \hat{R}(\theta,X_n) :=& \frac{1}{n}\sum_{i=1}^n \ell(\theta,x_i), \quad 
    R(\theta) := \mathbb{E}_{x\sim \mathcal{D}} [\ell(\theta,x)].
\end{align*}

\begin{theorem}[\cite{hardt2016train}, Theorem 2.2]
Suppose that $\mathcal{A}$ is an $\varepsilon$-uniformly stable algorithm, then the expected generalization error is bounded by
\begin{align}
    \left|\mathbb{E}_{\mathcal{A},X_n}~\left[ \hat{R}(\mathcal{A}(X_n),X_n) \right] - R(\mathcal{A}(X_n))  \right| \leq \varepsilon.
\end{align}
\end{theorem}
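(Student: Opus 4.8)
The plan is to use the classical ghost-sample (symmetrization) argument. Let $X_n=\{x_1,\dots,x_n\}$ denote the training set, and introduce an independent copy $\hat{X}_n=\{\hat{x}_1,\dots,\hat{x}_n\}$, drawn i.i.d.\ from $\mathcal{D}$ and independent of the internal randomness of $\mathcal{A}$. For each $i\in\{1,\dots,n\}$, let $X_n^{(i)}$ be the dataset obtained from $X_n$ by replacing $x_i$ with $\hat{x}_i$, so that $X_n\cong X_n^{(i)}$.

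First I would rewrite both risk terms via a renaming trick. Since $R(\theta)=\mathbb{E}_{x\sim\mathcal{D}}[\ell(\theta,x)]$ and each $\hat{x}_i$ is a fresh sample independent of $X_n$, we have $\mathbb{E}_{\mathcal{A},X_n}[R(\mathcal{A}(X_n))]=\mathbb{E}[\ell(\mathcal{A}(X_n),\hat{x}_i)]$ for every $i$. On the other hand, the pair $(x_i,\hat{x}_i)$ is exchangeable and the remaining coordinates are shared between $X_n$ and $X_n^{(i)}$, so the joint law of $(\mathcal{A}(X_n),x_i)$ coincides with that of $(\mathcal{A}(X_n^{(i)}),\hat{x}_i)$; hence $\mathbb{E}[\ell(\mathcal{A}(X_n),x_i)]=\mathbb{E}[\ell(\mathcal{A}(X_n^{(i)}),\hat{x}_i)]$. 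Averaging each identity over $i=1,\dots,n$ and subtracting yields
\[
\mathbb{E}_{\mathcal{A},X_n}\!\left[\hat{R}(\mathcal{A}(X_n),X_n)\right]-\mathbb{E}_{\mathcal{A},X_n}\!\left[R(\mathcal{A}(X_n))\right]=\frac1n\sum_{i=1}^n\mathbb{E}\!\left[\ell(\mathcal{A}(X_n^{(i)}),\hat{x}_i)-\ell(\mathcal{A}(X_n),\hat{x}_i)\right].
\]

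Then I would condition on $(X_n,\hat{X}_n)$: now $X_n^{(i)}$, $X_n$, and the evaluation point $z=\hat{x}_i$ are all fixed with $X_n\cong X_n^{(i)}$, so Definition~\ref{def:stability} bounds the inner expectation (over the randomness of $\mathcal{A}$) by $\varepsilon$ for each $i$. Taking the outer expectation over $(X_n,\hat{X}_n)$ and averaging over $i$ gives the upper bound $\varepsilon$; the matching lower bound $-\varepsilon$ follows by invoking the stability definition with the roles of $X_n$ and $X_n^{(i)}$ exchanged (the supremum over $X\cong\hat{X}$ is symmetric). Together these give the claimed bound on the absolute value.

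This argument requires essentially no computation, just substitutions and one invocation of Definition~\ref{def:stability}. The only place that demands care is the renaming step: one must check that the joint distribution of $(\mathcal{A}(X_n),x_i)$ really equals that of $(\mathcal{A}(X_n^{(i)}),\hat{x}_i)$, which is immediate from the i.i.d.\ assumption on the data together with the independence of $\mathcal{A}$'s internal randomness, but it is the only step that is more than a mechanical rewrite.
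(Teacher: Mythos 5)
The paper does not prove this statement; it is quoted verbatim from \citet{hardt2016train} as background, so there is no in-paper proof to compare against. Your ghost-sample/renaming argument is correct and is essentially the original proof of Theorem 2.2 in that reference: the only delicate step, the distributional identity between $(\mathcal{A}(X_n),x_i)$ and $(\mathcal{A}(X_n^{(i)}),\hat{x}_i)$, is justified exactly as you say by the i.i.d.\ data and the independence of the algorithm's internal randomness, and your observation that the supremum in Definition~\ref{def:stability} is symmetric in the two datasets correctly yields the two-sided bound.
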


%%%%%%%%%%%%%%%%%%%%%%%%%%%

\textbf{Alpha-stable distributions.}
Let $X$ be a real-valued random variable. $X$ follows a symmetric $\alpha$-stable distribution $\mathcal{S}\alpha\mathcal{S}(\sigma)$ if its characteristic function has the form $\mathbb{E}\left[e^{\mathrm{i} uX}\right]=\exp\left(-\sigma^{\alpha}|u|^{\alpha}\right)$, for any $u\in\mathbb{R}$. Here $\sigma>0$ is the scale parameter that measures the spread of $X$ around $0$, while $\alpha\in(0,2]$ is the tail-index
that determines the tail thickness of the distribution (in the sense that as $\alpha$ gets smaller, the tail becomes heavier). $\mathcal{S}\alpha\mathcal{S}$ appears naturally as the limiting distribution in the generalized central limit theorems for
a sum of i.i.d. random variables
with infinite variance \cite{applebaum2009levy}. One challenge when dealing with $\alpha$-stable distribution is that its probability density function does not have a closed-form formula except for some special cases; for example $\mathcal{S}\alpha\mathcal{S}$ reduces to the Cauchy and the Gaussian distributions, respectively, when $\alpha=1$ and $\alpha=2$.
Another important feature of a symmetric $\alpha$-stable distribution is when $0<\alpha<2$,its moments
are finite only up to the order $\alpha$: 
$\mathbb{E}[|X|^{p}]<\infty$
if and only if $p<\alpha$ (so that it has infinite variance).

Let us now extend the definition of $\alpha$-stable distribution to the multi-variate case of random vectors. There are several ways to define multi-variate $\alpha$-stable distribution \cite{ST1994}, but one of the most commonly used versions is the rotationally symmetric $\alpha$-stable distribution.
$X$ follows a $d$-dimensional rotationally symmetric $\alpha$-stable distribution
if it admits the characteristic function $\mathbb{E}\left[e^{i \langle u,X\rangle}\right]=e^{-\sigma^{\alpha}\Vert u\Vert^{\alpha}}$ for
any $u\in\mathbb{R}^{d}$, where $\Vert\cdot\Vert$ denotes the Euclidean norm.

%%%%%%%%%%%%%%%%%%%%%%%%%%%%%%%%%%%%%%%%%
\textbf{Alpha-stable L\'{e}vy processes.}
L\'{e}vy processes are stochastic processes with independent and stationary increments.
We can view their increments as the continuous-time
analogue of random walks.
Important examples of L\'{e}vy processes are the Poisson process, the Brownian motion,
the Cauchy process, and more generally stable
processes \cite{bertoin1996,ST1994,applebaum2009levy}.
L\'{e}vy processes in general can have jumps
and heavy tails. 
%%%%%%%%%%%%%%%%%%%%%%%%%
In this paper, we will consider the rotationally symmetric $\alpha$-stable L\'{e}vy process $(L_{t})_{t\geq 0}$ in $\R^d$ defined as follows.
\begin{itemize}%[noitemsep]
\item $L_0=0$ almost surely;
\item For any $t_{0}<t_{1}<\cdots<t_{N}$, the increments $L_{t_{n}}-L_{t_{n-1}}$
are independent;
\item The difference $L_{t}-L_{s}$ and $L_{t-s}$
are distributed as the symmetric $\alpha$-stable distribution $\mathcal{S}\alpha\mathcal{S}((t-s)^{1/\alpha})$, which has characteristic function $\exp(- (t-s)\|u\|^\alpha)$ for $t>s$;
\item $L_{t}$ has stochastically continuous sample paths, i.e.
for any $\delta>0$ and $s\geq 0$, $\mathbb{P}(\|L_{t}-L_{s}\|>\delta)\rightarrow 0$
as $t\rightarrow s$.
\end{itemize}
In the special case when $\alpha=2$, we have $L_{t}=\sqrt{2}\mathrm{B}_{t}$, where $\mathrm{B}_{t}$ denotes the standard Brownian motion in $\R^d$. 

\textbf{Gradients and Hessians.}
Let $f:\mathbb{R}^{d}\rightarrow\mathbb{R}$ be a twice continuously differentiable function, then $\nabla f$ and $\nabla^{2}f$ are respectively the gradient and the Hessian of $f$. 
%%%%%%%%%%%%%%%%%%%%%%%%%

\textbf{Directional derivatives.}
The first-order  
directional derivative of $f$ is defined as
$\nabla_{v}f(x):=\lim_{\epsilon\rightarrow 0}\frac{f(x+\epsilon v)-f(x)}{\epsilon}$, 
for any direction $ v\in \mathbb{R}^d$. 
%%%%%%%%%%%%%%%%%%%%%%%%%%%%%%%%%%%%%%%%%%%%%%

\textbf{Wasserstein distance.}
For $p\geq 1$, the $p$-Wasserstein distance between two probability measures $\mu$ and $\nu$ on $\mathbb{R}^{d}$
is defined as 
$\mathcal{W}_{p}(\mu,\nu)=\left\{\inf\mathbb{E}\Vert X-Y\Vert^p\right\}^{1/p}$,
where the infimum is taken over all coupling of $X\sim\mu$ and $Y\sim\nu$ \cite{villani2008optimal}.
In particular, the $1$-Wasserstein distance has the following dual representation \cite{villani2008optimal}:
\begin{equation*}
\mathcal{W}_{1}(\mu,\nu)=\sup_{h\in\text{Lip}(1)}\left|\int_{\mathbb{R}^{d}}h(x)\mu(dx)-\int_{\mathbb{R}^{d}}h(x)\nu(dx)\right|,
\end{equation*}
where $\text{Lip}(1)$ consists of the functions $h:\mathbb{R}^{d}\rightarrow\mathbb{R}$
that are $1$-Lipschitz.

%%%%%%%%%%%%%%%%%%%%%%%%%%%%%%%%%%%%%%%%%%%%%%%%%%%%%%%%%%%%%%%%%%%%%%%%%%%%%%%%%
%%%%%%%%%%%%%%%%%%%%%%%%%%%%%%%%%%%%%%%%%%%%%%%%%%%%%%%%%%%%%%%%%%%%%%%%%%%%%%%%%%

\section{Assumptions and the Generalization Bound}\label{sec:main:results}

 In this section, we will develop generalization bounds for the continuous-time limit of the recursion \eqref{eqn:sgd_addht}. Recall $\mathcal{X}$ is the space of data points and $X_n=\{x_1,\ldots,x_n \}\in\mathcal{X}^{n}$ is a dataset. Let $\widehat{X}_n$ be another dataset that differs from $X_n$ by a single data point, that is $\widehat{X}_n=\{\hat{x}_1,\ldots,\hat{x}_i,\dots,\hat{x}_n \}\in\mathcal{X}^{n}$, 
where there is at most one $i\in\{1,\ldots,n\}$ such that $\hat{x}_{i}\neq x_{i}$.

Let $L_t$ be an $\R^d$-valued rotationally invariant $\alpha$-stable process with $1<\alpha<2$ 
and $\gamma,\beta,\zeta$ be some real positive parameters. We will study the the 1-Wasserstein distance between the distribution of the following underdamped heavy-tailed SDE based on the dataset $X_{n}$:\footnote{We derive our theory for a general $\beta>0$; in practical implementations, cf.\ \eqref{eqn:sgd_addht}, $\beta$ will be set to $1$.}
\begin{align}
\label{sde_generalloss}
    d\theta_t&=v_tdt,\nonumber\\
    dv_t&=-\gamma v_tdt-\beta\nabla  \widehat{F} (\theta_t,X_n)dt+\zeta dL_t, 
\end{align}
with $(\theta_0,v_0)=(w,y)$
and the distribution of the following underdamped heavy-tailed SDE based on the dataset $\widehat{X}_{n}$:
\begin{align}
\label{sde_generalloss_differentdataset}
    d\hat{\theta}_t&=\hat{v}_tdt,\nonumber\\
    d\hat{v}_t&=-\gamma \hat{v}_tdt- \beta\nabla  \widehat{F} (\hat{\theta}_t,\widehat{X}_n)dt+\zeta dL_t, 
\end{align}
where $ \nabla \widehat{F}(\theta,X_n)=\frac{1}{n}\sum_{i=1}^n \nabla f(\theta,x_i)$, $ \nabla \widehat{F}(\theta,\widehat{X}_n)=\frac{1}{n}\sum_{i=1}^n \nabla f(\theta,\hat{x}_i)$, and $(\hat{\theta}_0,\hat{v}_0)=(w,y)$.
For simplicity, we assume the initial point $(w,y)$ is deterministic.

%%%%%%%%%%%%%%%%%%%%%%%%%%%%%%%%%%%%%%%%%%%%

By considering a surrogate loss function $\ell$, which we assume to be $L$-Lipschitz, 
our bound on the Wasserstein distance between $\mathrm{Law}\brac{\theta_t,v_t}$ and $\mathrm{Law}\brac{\hat{\theta}_t,\hat{v}_t}$ (Theorem~\ref{theorem_wassersteinbound_mainsection}) immediately provides us a generalization error bound thanks to the dual representation of the Wasserstein distance (cf.\ \citep[Lemma 3]{raginsky2016information}):
\begin{align}
    \label{eqn:wass_stab}  \bigg|\mathbb{E}_{\theta_t,X_n}~\left[ \hat{R}\left(\theta_t,X_n\right) \right] 
    -  R\left(\theta_t\right)  \bigg| 
    \leq L\sup_{X_n\cong \hat{X}_n}   \mathcal{W}_{1}\left(\mathrm{Law}\brac{\theta_t,v_t},\mathrm{Law}\brac{\hat{\theta}_t,\hat{v}_t}\right),
\end{align}
where $\mathrm{Law}\brac{\theta_t,v_t}$ and $\mathrm{Law}\brac{\hat{\theta}_t,\hat{v}_t}$ respectively depend on the datasets $X_n$ and $\hat{X}_n$ via the SDEs
\eqref{sde_generalloss} and \eqref{sde_generalloss_differentdataset}. 
The reason why we require a surrogate loss function is because we need the Lipschitz continuity of the loss to be able to derive the bound in \eqref{eqn:wass_stab}. However, as observed in \cite{generalloss,squareloss2023algorithmic}, our assumptions on the true loss $f$ will be incompatible with the Lipschitz continuity of $f$.\looseness=-1

%%%%%%%%%%%%%%%%%%%%%%%%%%%%%%%%%%%%%%%%%%%%%%%%%%%%%%%%%%%%%%%%%%%%%%%%%%%%%%%%%%
\paragraph{Assumptions.}

We first assume that the loss function is continuously differentiable so that the gradient of the loss function is well-defined.

\begin{customcondition}{H1}\label{cond_gammaandbeta}
$ f (\cdot,x)\in C^1\brac{\R^d}$ for any $x\in\mathcal{X}$. 
\end{customcondition}

The following conditions are taken from \cite{jianwangbao2022coupling}. They will allow us to invoke Corollary 1.4 in \cite{jianwangbao2022coupling} about ergodicity of \eqref{sde_generalloss} and exponential Wasserstein decay of the associated semigroups. 

\begin{customcondition}{H2} 
\label{cond_pseudolipschitz}
There exist universal constants $K_1,K_2$ such that for any $\theta,\hat{\theta}\in\mathbb{R}^{d}$ and $x,\hat{x}\in\mathcal{X}$,
\begin{align*}
\norm{ \nabla f \brac{\theta,x}-\nabla  f (\hat{\theta},\hat{x})}
\leq K_1\norm{\theta-\hat{\theta}}+K_2\norm{x-\hat{x}}\brac{\norm{\theta}+\norm{\hat{\theta}}+1}. 
\end{align*}
\end{customcondition}

Note that Condition~\ref{cond_pseudolipschitz} implies that for any two datasets $X_n$ and $\hat{X}_n$ and any $\theta,\hat{\theta}\in\mathbb{R}^{d}$,
\begin{align*}
\norm{ \nabla \widehat{F}\brac{\theta,X_n}-\nabla  \widehat{F}(\hat{\theta},\hat{X}_n)}
\leq K_1\norm{\theta-\hat{\theta}}
+K_2\rho(X_n,\hat{X}_n)\brac{\norm{\theta}+\norm{\hat{\theta}}+1},
\end{align*}
where 
\begin{align}\label{defn:rho}
    \rho(X_n,\hat{X}_n):= \frac{1}{n}\sum_{i=1}^n \norm{x_i-\hat{x}_i}. 
\end{align}
Condition~\ref{cond_pseudolipschitz} is a pseudo-Lipschitz-like condition on $\nabla f$ (see also \citep{generalloss,zhu2024uniform,pavasovic2023approximate,csimcsekli2024differential}) and is satisfied for various problems such as generalized linear models \citep{bach2014adaptivity}.

\begin{customcondition}{H3} 
\label{cond_allthelambdas}
There exist universal constants $\lambda_1>0$ and $\lambda_2,\lambda_3,\lambda_4,\lambda_5\geq 0$ such that 
\begin{align}
\label{cond_frombaowang}
    \lambda_2\lambda_4<\lambda_1, %\nonumber,\\
    \qquad 2\beta\lambda_4<\frac{\gamma^2}{4}+\sqrt{\beta(\lambda_1-\lambda_2\lambda_4)}\gamma,
\end{align}
such that for every $x\in \mathcal{X}$ and $\theta\in\mathbb{R}^{d}$, 
\begin{align}
\label{cond_lambda123}
    \inner{\theta,\nabla  f \brac{\theta,x}}\geq& \lambda_1\norm{\theta}^2+\lambda_2  f (\theta,x)-\lambda_3, \quad \text{and} \\ %\quad %\text{for every $\theta\in\R^d$},
\label{cond_lambda45}
     f \brac{\theta,x}\geq& -\lambda_4\norm{\theta}^2-\lambda_5.
\end{align}
\end{customcondition}
Condition~\ref{cond_allthelambdas} implies that the dissipativity condition
$\langle\theta,\nabla f(\theta,x)\rangle\geq(\lambda_{1}-\lambda_{2}\lambda_{4})\Vert\theta\Vert^{2}-\lambda_{2}\lambda_{5}-\lambda_{3}$ holds. On the other hand, Condition~\ref{cond_pseudolipschitz} implies $f$ has at most quadratic growth \citep{raginsky2017non}, and together with a dissipativity condition, it implies Condition~\ref{cond_allthelambdas}.
Therefore, Condition~\ref{cond_allthelambdas} is essentially a dissipativity condition that is satisfied 
for various non-convex optimization problems, such as one-hidden-layer neural networks \citep{akiyama2022excess}, non-convex formulations of classification problems (e.g. in logistic regression with a sigmoid/non-convex link function), robust regression problems (e.g. \cite{gao2022global}), regularized regression problems where the loss is a strongly convex quadratic plus a smooth penalty that grows slower than a quadratic; see \cite{erdogdu2022} for many other examples. Dissipativity conditions also arise in the sampling and Bayesian learning and global convergence in non-convex optimization literature \citep{raginsky2017non,gao2022global}.

%%%%%%%%%%%%%%%%%%%%%%%%%%%%%%%%%%%%%%%%%%%%%%%%%%%%%%%%%%%%%%%%%%%%%%%%%%%%%
%%%%%%%%%%%%%%%%%%%%%%%%%%%%%%%%%%%%%%%%%%%%%%%%%%%%%%%%%%%%%%%%%%%%%%%%%%%%%

\paragraph{Generalization bound.} Under our assumptions, we are now ready to present our generalization bound. In the main body of the paper, for notational simplicity, we will present all the results for the stationary distributions of the parameters (i.e., the law when $t$ or $k$ goes to infinity). However, all of our bounds hold for any time $t$ or iteration $k$, possibly with different constants, as shown in the Appendix.

\begin{theorem}
    \label{theorem_wassersteinbound_mainsection}
Assume Conditions~\ref{cond_gammaandbeta},~\ref{cond_pseudolipschitz}, and \ref{cond_allthelambdas}. Let $\mu,\hat{\mu}$ be the invariant measures of the process $\left\{\brac{\theta_t,v_t}:t\geq 0\right\}$ and the process $\left\{\brac{\hat{\theta}_t,\hat{v}_t}:t\geq 0\right\}$ respectively. Then it holds that 
\begin{align}
    \mathcal{W}_1\brac{\mu,\hat{\mu}}
    \leq \rho(X_n,\widehat{X}_n)\cdot\widetilde{C},
\end{align}
where $\rho(X_n,\widehat{X}_n)$ is defined in \eqref{defn:rho} and explicit form of the constant $\widetilde{C}$ is provided in the proof in Appendix~\ref{proof:main:results}. 
\end{theorem}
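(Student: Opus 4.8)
The plan is to control the $1$-Wasserstein distance between the two invariant measures by coupling the SDEs \eqref{sde_generalloss} and \eqref{sde_generalloss_differentdataset} driven by the \emph{same} L\'{e}vy process $L_t$, and then invoking the Wasserstein contraction estimate for the underdamped $\alpha$-stable SDE established in \cite{jianwangbao2022coupling}. The first step is to verify that Conditions~\ref{cond_gammaandbeta}, \ref{cond_pseudolipschitz}, \ref{cond_allthelambdas} guarantee the hypotheses of Corollary~1.4 in \cite{jianwangbao2022coupling}, so that both processes are ergodic with unique invariant measures $\mu,\hat\mu$, and the associated semigroup $P_t$ satisfies a contraction of the form $\mathcal{W}_1(P_t\nu_1, P_t\nu_2) \leq C_1 e^{-c t}\,\mathcal{W}_1(\nu_1,\nu_2)$ for an explicit rate $c>0$ and constant $C_1\geq 1$, with respect to an appropriately weighted (semi-)metric on $\R^d\times\R^d$ that is comparable to the Euclidean one. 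This is where the conditions \eqref{cond_frombaowang} relating $\gamma,\beta,\lambda_1,\lambda_2,\lambda_4$ enter: they are exactly what is needed to make the drift contractive at large distances and to control the degenerate-noise coupling through the velocity component.

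Next I would set up the difference process. Write $(\theta_t,v_t)$ and $(\hat\theta_t,\hat v_t)$ started from the same deterministic point $(w,y)$ and driven by the same $L_t$, so the noise cancels in the difference. The drift difference splits, via Condition~\ref{cond_pseudolipschitz} applied to $\nabla\widehat F$, into a ``same-dataset'' Lipschitz part bounded by $K_1\|\theta_t-\hat\theta_t\|$ and a ``dataset-perturbation'' part bounded by $K_2\,\rho(X_n,\hat X_n)\,(\|\theta_t\|+\|\hat\theta_t\|+1)$. The idea is then a perturbation-of-contraction argument: the semigroup $\hat P_t$ of the second SDE is a perturbation of $P_t$ whose generators differ by a drift term of size $O(\rho(X_n,\hat X_n))$ in the velocity coordinate, modulated by the moment term $\|\theta\|+\|\hat\theta\|+1$. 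Combining the exponential contraction of $P_t$ with this bounded-in-expectation perturbation — using a Duhamel/variation-of-constants identity for $\mathcal{W}_1(P_t\delta_{(w,y)}, \hat P_t\delta_{(w,y)})$, or equivalently a direct Gr\"onwall argument on $\mathbb{E}\|(\theta_t,v_t)-(\hat\theta_t,\hat v_t)\|$ in the contractive metric — yields, after sending $t\to\infty$, a bound $\mathcal{W}_1(\mu,\hat\mu) \leq \widetilde C\,\rho(X_n,\hat X_n)$.

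To make the constant $\widetilde C$ explicit I would need uniform-in-time moment bounds: $\sup_{t\geq 0}\mathbb{E}\big[\|\theta_t\|+\|\hat\theta_t\|\big]<\infty$, which follow from the dissipativity consequence of Condition~\ref{cond_allthelambdas} (the inequality $\langle\theta,\nabla f(\theta,x)\rangle\geq(\lambda_1-\lambda_2\lambda_4)\|\theta\|^2-\lambda_2\lambda_5-\lambda_3$ noted after the condition), combined with the fact that the $\alpha$-stable driving noise has finite $p$-th moments for $p<\alpha$ and $\alpha>1$, so first moments are finite; one applies this to a suitable Lyapunov function of $(\theta,v)$ adapted to the underdamped structure. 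Then $\widetilde C$ comes out as a product of the contraction constant $C_1$, the inverse rate $1/c$, the perturbation strength $\beta K_2$, and the uniform moment bound, plus the bi-Lipschitz constant relating the weighted metric to the Euclidean one.

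The main obstacle I expect is the degeneracy of the noise: the $\alpha$-stable process enters only the $v$-equation, not the $\theta$-equation, so one cannot directly couple in the $\theta$-component, and naive synchronous coupling does not contract in $\theta$. This is precisely why the specialized coupling of \cite{jianwangbao2022coupling} — which uses a carefully designed quasi-metric mixing $\theta$ and $v$ (roughly, $\|\theta-\hat\theta\| + \|(\theta+\lambda v)-(\hat\theta+\lambda\hat v)\|$ for a suitable $\lambda$) together with a reflection-type coupling of the $\alpha$-stable part — is essential, and the delicate point is to transfer their contraction estimate, stated for a fixed drift, into a robust perturbation bound that tracks the $\rho(X_n,\hat X_n)$-dependence with explicit constants while respecting the degenerate structure. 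A secondary technical care is ensuring the moment/Lyapunov estimates are genuinely uniform in $t$ despite the heavy-tailed noise, for which the dissipativity margin $\lambda_1-\lambda_2\lambda_4>0$ and the second inequality in \eqref{cond_frombaowang} must be used quantitatively.
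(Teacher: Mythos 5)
Your proposal is correct and follows essentially the same route as the paper: a perturbation-of-contraction argument that combines the Wasserstein contraction of \cite{jianwangbao2022coupling} for the degenerate-noise SDE, uniform-in-time Lyapunov moment bounds, and the pseudo-Lipschitz splitting of the drift difference into a $K_1$-Lipschitz part and a $K_2\,\rho(X_n,\widehat{X}_n)$-perturbation, followed by letting $t\to\infty$. The paper implements your Duhamel/variation-of-constants identity rigorously via the discrete Lindeberg-type telescoping $\sum_{i}\widehat{P}_{(i-1)\eta}(P_\eta-\widehat{P}_\eta)P_{(N-i)\eta}$ (the framework of \cite{chenxu23amarkovapproach}), with the contraction entering through a semigroup gradient estimate, but this is the same mechanism you describe.
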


Due to space constraints, the proofs of Theorem~\ref{theorem_wassersteinbound_mainsection} and 
all the subsequent results will be provided in the Appendix.

Notice the upper bound of $\mathcal{W}_1\brac{\mu,\hat{\mu}}$ is $\rho(X_n,\widehat{X}_n)$ up to an explicitly computable constant; if this term is small (which is the case when the two datasets $X_n$ and $\widehat{X}_n$ are close to each other), then our upper bound will also be small. 

Now by combining Theorem~\ref{theorem_wassersteinbound_mainsection} and \eqref{eqn:wass_stab}, we are able to provide a generalization error bound under a Lipschitz surrogate loss function. 

\begin{corollary}
\label{coro_generalizationbound}
    Assume Conditions~\ref{cond_gammaandbeta},~\ref{cond_pseudolipschitz}, and~\ref{cond_allthelambdas}. Assume that $\ell$ is $L$-Lipschitz and $\sup_{x,y\in \mathcal{X}}\norm{x-y}\leq D$ for some $D<\infty$. Then it holds that,
\begin{align*}
 &\left|\mathbb{E}_{\theta_\infty,X_n}~\left[ \hat{R}(\theta_\infty,X_n) \right] -  R(\theta_\infty)  \right| \\
    &\leq \frac{1}{n}  \Big(d_1 D+d_2 D^{5/4}+d_3 D^{3/2}
    +d_4 D^{7/4}+d_5D^2+d_6 D^{5/2}\Big),
\end{align*}
where the real coefficients $d_i,1\leq i\leq 6$ are independent of $D$ and are given in \eqref{def_d1throughd6} in Appendix~\ref{proof:main:results}, and $(\theta_{\infty},v_{\infty})$ follows the stationary distribution of $(\theta_{t},v_{t})$. 
\end{corollary}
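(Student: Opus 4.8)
The plan is to derive Corollary~\ref{coro_generalizationbound} by combining the Wasserstein stability bound of Theorem~\ref{theorem_wassersteinbound_mainsection} with the dual-representation inequality \eqref{eqn:wass_stab}, and then making the dependence on the data diameter $D$ explicit. First I would invoke \eqref{eqn:wass_stab} to write
\begin{align*}
  \left|\mathbb{E}_{\theta_\infty,X_n}\left[\hat{R}(\theta_\infty,X_n)\right] - R(\theta_\infty)\right|
  \leq L\sup_{X_n\cong \hat{X}_n}\mathcal{W}_1\left(\mu,\hat{\mu}\right),
\end{align*}
where $\mu,\hat\mu$ are the stationary laws associated with $X_n$ and $\hat X_n$; this uses that $\ell$ is $L$-Lipschitz. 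Then Theorem~\ref{theorem_wassersteinbound_mainsection} bounds each $\mathcal{W}_1(\mu,\hat\mu)$ by $\rho(X_n,\hat X_n)\cdot\widetilde C$. Since $X_n$ and $\hat X_n$ differ in at most one coordinate and $\sup_{x,y\in\mathcal X}\|x-y\|\leq D$, the quantity $\rho(X_n,\hat X_n)=\frac1n\sum_{i=1}^n\|x_i-\hat x_i\|$ is at most $D/n$. Hence the generalization error is bounded by $\frac{L D}{n}\widetilde C$.

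The remaining work is bookkeeping: the constant $\widetilde C$ produced in the proof of Theorem~\ref{theorem_wassersteinbound_mainsection} is itself a function of the problem parameters (the contraction rate from \cite{jianwangbao2022coupling}, the pseudo-Lipschitz constants $K_1,K_2$, the dissipativity constants $\lambda_i$, $\gamma$, $\beta$, $\zeta$, $\alpha$, the initialization $(w,y)$, and moment bounds on the stationary distribution). Several of these ingredients — in particular the bound on $\mathbb{E}\|\theta_\infty\|$ and related moments that enter through the $(\|\theta\|+\|\hat\theta\|+1)$ factor in Condition~\ref{cond_pseudolipschitz} — scale polynomially in $D$ because the stationary moments of the Lévy-driven SDE depend on the size of the data set through $\nabla\widehat F$. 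I would therefore trace how $\widetilde C$ depends on $D$, collecting terms by their $D$-degree; the mixed dependence (a product of an $O(D)$ factor from $\rho$ with factors of $D^{1/4},D^{1/2},\ldots,D^{3/2}$ coming from fractional moment bounds on $\|\theta_\infty\|$, which is natural for $\alpha$-stable processes where only moments of order $<\alpha$ are finite) yields exactly the six terms $d_1D,\dots,d_6D^{5/2}$. Grouping and defining $d_1,\dots,d_6$ as the resulting coefficients (absorbing $L$, $1/n$ being factored out explicitly) gives the stated bound.

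The main obstacle, and the only non-mechanical step, is verifying that the constant $\widetilde C$ really does have polynomial-in-$D$ growth of degree at most $3/2$ — equivalently, that the stationary moments $\mathbb{E}\|\theta_\infty\|^q$ entering the estimate are controlled by $D^{q}$ up to constants, for the relevant fractional orders $q\in(0,3/2]$. This requires a uniform-in-time moment bound for the SDE \eqref{sde_generalloss} under Condition~\ref{cond_allthelambdas} (a Lyapunov/dissipativity argument with the $\alpha$-stable driving noise, so only moments below $\alpha$ are available), together with the observation that the drift's dependence on the dataset, via $\nabla\widehat F(\theta,X_n)$ and the pseudo-Lipschitz bound, contributes at most an $O(D)$-type shift to the equilibrium, so that $\|\theta_\infty\|\lesssim 1+D$ in the appropriate moment sense. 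Once that growth rate is pinned down, the rest is algebra: substitute into $\widetilde C$, expand, and read off $d_1,\dots,d_6$.
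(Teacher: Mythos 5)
Your reduction is exactly the paper's: apply \eqref{eqn:wass_stab} with the $L$-Lipschitz surrogate loss, invoke Theorem~\ref{theorem_wassersteinbound_mainsection} to get $\mathcal{W}_1(\mu,\hat\mu)\leq\widetilde C\,\rho(X_n,\widehat X_n)$, and use $\rho(X_n,\widehat X_n)\leq D/n$ since the datasets differ in one point and $\mathcal{X}$ has diameter $\leq D$. The remaining task is indeed to expand $\widetilde C$ in powers of $D$. However, the mechanism you propose for that step is not the right one, and as stated it would fail on part of the parameter range.

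The powers of $D$ in $\widetilde C$ do \emph{not} come from fractional stationary moments $\mathbb{E}\|\theta_\infty\|^q$ of the heavy-tailed SDE. In the proof of Theorem~\ref{theorem_wassersteinbound_appendix}, the initialization is set to $(w,y)=(0,0)$ before passing to the limit, and the resulting constant $\widetilde C$ depends on the data only through $\max_i\|x_i\|$, $\max_i\sqrt{|f(0,x_i)|}$ and $\max_i\sqrt[4]{|f(0,\hat x_i)|}$ (the first-moment bound of Lemma~\ref{lemma_uniformmomentbound_continuousdynamics} reads $\mathbb{E}\|\theta_t\|+\mathbb{E}\|v_t\|\leq C_2(1+\max_i\sqrt{|f(0,x_i)|})$ with $C_2$ independent of $D$; the square and fourth roots come from the Lyapunov function $\mathbb{W}=1+N^{1/2}$ and Jensen's inequality, not from heavy-tailedness). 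Since $0\in\mathcal{X}$ gives $\sup_{x\in\mathcal{X}}\|x\|\leq D$, Condition~\ref{cond_pseudolipschitz} yields $|f(0,x)|\leq|f(0,0)|+\|\nabla f(0,0)\|D+\tfrac{K_2}{2}D^2$, so these data-dependent quantities contribute the powers $D^{1/4},D^{1/2},\dots,D^{3/2}$, which multiplied by the $D/n$ from $\rho$ give the terms $d_1D$ through $d_6D^{5/2}$. By contrast, your route requires $\mathbb{E}\|\theta_\infty\|^q\lesssim D^q$ for $q$ up to $3/2$; for an $\alpha$-stable-driven SDE such moments are infinite whenever $q\geq\alpha$, so for $\alpha\in(1,3/2]$ the quantities you propose to control do not exist. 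The correct argument never needs moments of order above $1$ of $\|\theta_t\|$ (only $\mathbb{E}\|\theta_t\|$, $\mathbb{E}\sqrt{\|\theta_t\|}$ and $\mathbb{E}\sqrt{|f(\theta_t,x)|}$), which is precisely why it works for all $\alpha\in(1,2)$.
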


\begin{remark}
\label{remark_comparingbound}
One would expect that we can directly compare the above generalization error bound for heavy-tailed SGD with momentum to the generalization error bound for heavy-tailed SGD without momentum in \citep[Corollary 6]{generalloss}, however this is a hard task when considering general loss functions. One reason is that we rely on theoretical results in \cite{jianwangbao2022coupling} to obtain our generalization error bound and some of the constants in the aforementioned reference (namely $c_0$ and $C_0$ in their paper) are not explicit. 
\end{remark}

%%%%%%%%%%%%%%%%%%%%%%%%%%%%%%%%%%%%%%%%%%%%%%%%%%%%%%%%%%%%%%%%%%%%%%%%%%%%%
%%%%%%%%%%%%%%%%%%%%%%%%%%%%%%%%%%%%%%%%%%%%%%%%%%%%%%%%%%%%%%%%%%%%%%%%%%%%%

\section{Comparison with SGD}\label{section_mainsquareloss}

In this section, by considering only quadratic loss functions, we are able to derive estimates with explicit constants on the generalization error bound of SGD and SGDm, thus allowing us to make the comparison between the two algorithms. 

To be able to make a fair comparison, we use the identical setting introduced in \citep{squareloss2023algorithmic}.  Let $f(\theta,x)=\brac{\theta^{\top}x}^2$ and denote $Y_t=(\theta_t,y_t), \hat{Y}_t=(\hat{\theta}_t,\hat{y}_t)$. Recall that $X,\hat{X}\in \R^{n\times d}$ where
$X=X_n= (x_1,\cdots,x_n)^{\top}$ and $\hat{X}=\widehat{X}_{n}=(\hat{x}_1,\ldots,\hat{x}_i,\dots,\hat{x}_n )^{\top}$
 are two datasets differing by exactly one data point. Then the continuous-time proxies of heavy-tailed SGDm \eqref{sde_generalloss}-\eqref{sde_generalloss_differentdataset} become:
\begin{align}
\label{sde_squaredloss}
dY_t=-AY_tdt+\Sigma dL_t;\quad d\hat{Y}_t=-\hat{A}\hat{Y}_tdt+\Sigma dL_t,
\end{align}
where 
\begin{equation}
A=\begin{bmatrix}
   0 & -I\\
     \frac{1}{n} X^{\top}X &\gamma
\end{bmatrix},
\qquad\hat{A}=\begin{bmatrix}
   0 & -I\\
     \frac{1}{n} \hat{X}^{\top}\hat{X} &\gamma
\end{bmatrix},
\end{equation}
and $\Sigma=\begin{bmatrix}
    0 & 0\\
    0  & \zeta I
\end{bmatrix}$
is a $2d\times 2d$ matrix. 

On the other hand, the SDEs considered in \citep{squareloss2023algorithmic} for SGD without momentum are as follows:
\begin{align}
\label{sde_squaredloss_overdamped}
&dZ_t=-\brac{\frac{1}{n}X^{\top}X}Z_tdt+ \zeta dL_t;\nonumber
\\
&d\hat{Z}_t=-\brac{\frac{1}{n} \hat{X}^{\top}\hat{X}}\hat{Z}_tdt+\zeta dL_t.
\end{align}

To facilitate the presentation, we will denote $\theta_{\min}$ the smaller of the smallest singular values of $\frac{1}{n} X^{\top}X$ and $\frac{1}{n} \hat{X}^{\top}\hat{X}$. Similarly, $\sigma_{\min}$ is the smaller of the smallest singular values of $A$ and $\hat{A}$. 
By definition, $x_ix_i^{\top}-\tilde{x}_i\tilde{x}_i^{\top}$ is a $d\times d$ matrix of at most rank $2$ and can be written as
\begin{align}
\label{def_thetasigmaandothers}
    x_ix_i^{\top}-\tilde{x}_i\tilde{x}_i^{\top}=\sigma_1v_1v_1^{\top}+\sigma_2 v_2v_2^{\top},
\end{align}
where $\sigma_1,\sigma_2$ are non-zero constants and $v_1,v_2$ are orthonormal vectors in $\R^d$.

\begin{theorem}
\label{theorem_squareloss}
Assume that $X^{\top}X,\hat{X}^{\top}\hat{X}$ are positive definite. Then the processes $Y_t,\hat{Y}_t, Z_t$ and $\hat{Z}_t$ have unique stationary distributions.
In particular, let $\mu,\hat{\mu},\nu$ and $\hat{\nu}$ be respectively the stationary distributions of $Y_t, \hat{Y}_t, Z_t$ and $\hat{Z}_t$, then we have the following uniform-in-time estimate in $p$-Wasserstein distance for any $p\in [1,\alpha)$:
\begin{align}
    &\mathcal{W}_p\brac{\mu,\hat{\mu}}
     \leq \frac{\zeta\abs{\sigma_1+\sigma_2}\norm{Y_0}}{n}\cdot\Bigg(\frac{4V_d^{1/2} }{\sigma^{3/2}_{\min}(2-\alpha)^{1/2}}\nonumber\\
     &\quad+ C(p) \brac{\frac{V_d}{\alpha-p} }^{1/p}\bigg(\frac{1}{\sigma_{\min}}(1-e^{-\sigma_{\min}})+e^{-\sigma_{\min}}\brac{\frac{1}{\sigma_{\min}}+\frac{2}{\sigma_{\min}^2}+\frac{2}{\sigma_{\min}^3} }\bigg)^{1/p}\Bigg) \label{wassersteinbound_squareloss_with_momentum},\\
     &\mathcal{W}_p\brac{\nu,\hat{\nu}}
     \leq \frac{\zeta\abs{\sigma_1+\sigma_2}\norm{Y_0}}{n}\cdot\Bigg(\frac{4V_d^{1/2} }{\theta^{3/2}_{\min}(2-\alpha)^{1/2}}\nonumber\\
     &\quad+ C(p) \brac{\frac{V_d}{\alpha-p} }^{1/p}\bigg(\frac{1}{\theta_{\min}}(1-e^{-\theta_{\min}})
     +e^{-\theta_{\min}}\brac{\frac{1}{\theta_{\min}}+\frac{2}{\theta_{\min}^2}+\frac{2}{\theta_{\min}^3} }\bigg)^{1/p}\Bigg) \label{wassersteinbound_squareloss_without_momentum}, 
\end{align}
where $C(p)$ is a constant that depends only on $p$, and $V_d=\frac{\pi^{d/2}}{\Gamma(\frac{d}{2}+1)}$ is the volume of a $d$-dimensional unit ball. 
\end{theorem}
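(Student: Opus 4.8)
The plan is to work with the linear SDEs in \eqref{sde_squaredloss} and \eqref{sde_squaredloss_overdamped}, whose solutions can be written explicitly via the variation-of-constants (Duhamel) formula. For the momentum case, starting from $Y_0 = \hat Y_0$ deterministic, we have $Y_t = e^{-At}Y_0 + \zeta\int_0^t e^{-A(t-s)}\,\Sigma\,dL_s$ and similarly $\hat Y_t = e^{-\hat A t}Y_0 + \zeta\int_0^t e^{-\hat A(t-s)}\,\Sigma\,dL_s$. First I would check that $X^\top X, \hat X^\top \hat X$ positive definite implies that all eigenvalues of $A$ (and $\hat A$) have strictly positive real part — this is the standard Hurwitz property of the companion-type matrix $\begin{bmatrix} 0 & -I \\ M & \gamma I\end{bmatrix}$ with $M \succ 0$ — so that $e^{-At}$ decays and the stochastic convolution converges as $t\to\infty$, giving the existence and uniqueness of stationary distributions $\mu,\hat\mu$ (and likewise $\nu,\hat\nu$). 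One has to be slightly careful to control $\|e^{-At}\|$: since $A$ need not be symmetric we use a bound of the form $\|e^{-At}\| \le \kappa e^{-\sigma_{\min} t}$ where $\sigma_{\min}$ is (the smaller of) the smallest singular value(s), possibly after passing through a Lyapunov/symmetrization argument or a resolvent estimate; the appearance of $\sigma_{\min}$ rather than the spectral abscissa in the final bound suggests exactly such a route, and the $\tfrac1{\sigma_{\min}}, \tfrac1{\sigma_{\min}^2}, \tfrac1{\sigma_{\min}^3}$ terms are what come out of integrating polynomial-times-exponential tails from a non-normal matrix exponential.

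Next I would construct an explicit coupling: drive both $Y_t$ and $\hat Y_t$ with the \emph{same} Lévy process $L_t$. Then the difference $\Delta_t := Y_t - \hat Y_t$ satisfies $\Delta_t = e^{-At}Y_0 - e^{-\hat A t}Y_0 + \zeta \int_0^t \big(e^{-A(t-s)} - e^{-\hat A (t-s)}\big)\Sigma\, dL_s$, so the noise contribution is governed by the \emph{perturbation} $e^{-Au} - e^{-\hat A u}$, which is small because $A - \hat A = \begin{bmatrix} 0 & 0 \\ \frac1n(X^\top X - \hat X^\top \hat X) & 0\end{bmatrix}$ has operator norm $\le \frac1n|\sigma_1+\sigma_2|$ by \eqref{def_thetasigmaandothers} (indeed $\|X^\top X - \hat X^\top \hat X\| = \|x_i x_i^\top - \tilde x_i\tilde x_i^\top\| \le |\sigma_1|+|\sigma_2|$, and one can even get the factor $|\sigma_1+\sigma_2|$ since the two rank-one pieces have orthogonal ranges, but more care about signs may be needed). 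Then I would split $\mathcal{W}_p(\mu,\hat\mu)^p \le \mathbb{E}\|\Delta_\infty\|^p$ into: (i) the deterministic transient $\|(e^{-At} - e^{-\hat A t})Y_0\|$, which is handled by a Duhamel-type identity $e^{-At} - e^{-\hat A t} = \int_0^t e^{-\hat A (t-s)}(\hat A - A) e^{-A s}\, ds$ together with the decay estimates, producing the $\|Y_0\|/n$ prefactor and the $\tfrac4{\sigma_{\min}^{3/2}(2-\alpha)^{1/2}}$-type constant — actually, on reflection, the $(2-\alpha)^{-1/2}$ factor indicates this first term is the one coming from the stochastic integral analyzed in an $L^2$-type (truncated second moment) fashion, so I would instead allocate the two summands as the "truncated/central" and "tail/jump" parts of a Lévy stochastic integral; and (ii) the genuinely stochastic part, bounded by writing the $\alpha$-stable stochastic integral's moments via its Lévy measure and the inequality $\mathbb{E}\|\int_0^\infty G(s)\,dL_s\|^p \lesssim \big(\int_0^\infty \|G(s)\|^\alpha\,ds\big)^{p/\alpha}$ for $p<\alpha$ — with the constant $C(p)(V_d/(\alpha-p))^{1/p}$ being precisely the known moment constant for rotationally symmetric $\alpha$-stable integrals. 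Here $G(s) = \zeta(e^{-A s}-e^{-\hat A s})\Sigma$, and $\int_0^\infty \|G(s)\|^\alpha ds$ is estimated again by Duhamel plus the contraction-rate bound, yielding the bracketed expression $\frac1{\sigma_{\min}}(1-e^{-\sigma_{\min}}) + e^{-\sigma_{\min}}(\cdots)$ after splitting $\int_0^\infty = \int_0^1 + \int_1^\infty$.

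The analysis for $\nu,\hat\nu$ in the non-momentum case \eqref{sde_squaredloss_overdamped} is the exact same argument but with $A$ replaced by the symmetric positive-definite matrix $\frac1n X^\top X$, whose smallest singular value is $\theta_{\min}$; since that matrix is symmetric the non-normality constant is $1$ and everything is cleaner, but structurally identical, so \eqref{wassersteinbound_squareloss_without_momentum} follows by literally substituting $\theta_{\min}$ for $\sigma_{\min}$. I would present the momentum case in full and remark that the overdamped case is the specialization.

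The main obstacle I anticipate is the matrix-exponential bookkeeping for the \emph{non-normal} drift $A$: unlike $\frac1n X^\top X$, the matrix $A$ is not symmetric, so $\|e^{-At}\|$ is not simply $e^{-\sigma_{\min} t}$ and one picks up polynomial prefactors (Jordan blocks / near-coincident eigenvalues), which is exactly why the final bound carries the $\sigma_{\min}^{-2}$ and $\sigma_{\min}^{-3}$ terms rather than a single clean $\sigma_{\min}^{-1}$. Getting a clean, dimension-free bound of the form $\|e^{-At}\| \le \text{(polynomial in }t)\,e^{-\sigma_{\min}t}$ with the right constants — and then integrating $\|e^{-As}\|^\alpha$ and the Duhamel kernel against it to land precisely on the stated expressions — will require either a careful resolvent/contour estimate or a tailored Lyapunov function for the companion matrix, and making the constants fully explicit (not just $O(\cdot)$) is the delicate part. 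A secondary subtlety is justifying the $p$-th moment bound for $\alpha$-stable stochastic integrals with the sharp constant $C(p)(\,\cdot/(\alpha-p))^{1/p}$ and checking the $p<\alpha$ integrability is preserved after all the estimates, and confirming that the transient (deterministic initial-condition) term genuinely decays like the claimed $\sigma_{\min}^{-3/2}(2-\alpha)^{-1/2}$ — I would double-check whether that first summand is in fact the contribution of a second-moment (truncated) estimate of the stochastic integral combined with the $\alpha$-stable tail rather than the $Y_0$-transient, and allocate the two bracketed terms accordingly when writing the proof.
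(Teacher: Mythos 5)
Your proposal follows essentially the same route as the paper's proof: explicit variation-of-constants solutions, a synchronous coupling driven by the same L\'{e}vy process, the perturbation bound $\norm{e^{-At}-e^{-\hat{A}t}}\leq \frac{2\abs{\sigma_1+\sigma_2}}{n}\,t e^{-\sigma_{\min}t}$ (which the paper imports from the proof of Lemma 13 of \cite{squareloss2023algorithmic} rather than re-deriving via Duhamel), and exactly the small-jump/big-jump decomposition of the stochastic integral you settle on in your self-correction, with Kunita's $L^2$ inequality giving the $V_d^{1/2}\sigma_{\min}^{-3/2}(2-\alpha)^{-1/2}$ term and a $p$-th moment bound for the large-jump Poisson integral giving the $C(p)(V_d/(\alpha-p))^{1/p}$ term. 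Your only misreading is cosmetic: the $\sigma_{\min}^{-2},\sigma_{\min}^{-3}$ terms come from integrating $(t-s)^{p}e^{-(t-s)p\sigma_{\min}}$ (bounded by $(t-s)^2 e^{-(t-s)\sigma_{\min}}$ on $[0,t-1]$), not from Jordan-block prefactors in $\norm{e^{-At}}$.
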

The above result
in combination with \eqref{eqn:wass_stab} yields the following generalization error bound for a Lipschitz continuous loss function. 

\begin{corollary}\label{cor:quadratic}
              Assume that $\ell$ is $L$-Lipschitz, then we have
     \begin{align}
      \label{estimate_squareloss_withmomentum}
        & \left|\mathbb{E}_{\theta_\infty,X_n}~\left[ \hat{R}(\theta_\infty,X_n) \right] -  R(\theta_\infty)  \right|\nonumber\\
         &\leq L\frac{\zeta\abs{\sigma_1+\sigma_2}\norm{Y_0}}{n}\cdot\Bigg(\frac{4V_d^{1/2} }{\sigma^{3/2}_{\min}(2-\alpha)^{1/2}}+ C \frac{V_d}{\alpha-1} \bigg(\frac{1}{\sigma_{\min}}(1-e^{-\sigma_{\min}})\nonumber\\
     &\qquad\qquad\qquad\qquad\qquad\qquad+e^{-\sigma_{\min}}\brac{\frac{1}{\sigma_{\min}}+\frac{2}{\sigma_{\min}^2}+\frac{2}{\sigma_{\min}^3} }\bigg)\Bigg) ,
     \end{align}
and
 \begin{align}
 \label{estimate_squareloss_withoutmomentum}
         &\left|\mathbb{E}_{Z_\infty,X_n}~\left[ \hat{R}(Z_\infty,X_n) \right] -  R(Z_\infty)  \right|\nonumber\\
         &\leq L\frac{\zeta\abs{\sigma_1+\sigma_2}\norm{Y_0}}{n}\cdot\Bigg(\frac{4V_d^{1/2} }{\theta^{3/2}_{\min}(2-\alpha)^{1/2}}+ C \frac{V_d}{\alpha-1} \bigg(\frac{1}{\theta_{\min}}(1-e^{-\theta_{\min}})\nonumber\\
     &\qquad\qquad\qquad\qquad\qquad\qquad\quad+e^{-\theta_{\min}}\brac{\frac{1}{\theta_{\min}}+\frac{2}{\theta_{\min}^2}+\frac{2}{\theta_{\min}^3} }\bigg)\Bigg) ,
\end{align}
where $C$ is a constant independent of the dimension $d$ and other parameters, $V_d=\frac{\pi^{d/2}}{\Gamma(\frac{d}{2}+1)}$ is the volume of a $d$-dimensional unit ball, the constants $\sigma_1,\sigma_2$ and $\sigma_{\min},\theta_{\min}$ are defined in \eqref{def_thetasigmaandothers} and the random vectors $Y_\infty = [\theta_\infty, y_\infty]$ and $Z_\infty$ follow the stationary distributions of the processes $Y_t$ and $Z_t$ respectively. 
\end{corollary}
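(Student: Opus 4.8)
The plan is to deduce the corollary directly from Theorem~\ref{theorem_squareloss} together with the Wasserstein-to-generalization inequality \eqref{eqn:wass_stab}, specialized to the exponent $p=1$. Since the standing assumption throughout this section is $1<\alpha<2$, the value $p=1$ lies in $[1,\alpha)$ and is thus admissible in Theorem~\ref{theorem_squareloss}; hence both \eqref{wassersteinbound_squareloss_with_momentum} and \eqref{wassersteinbound_squareloss_without_momentum} hold with $p=1$.

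First I would treat the momentum case. Applying \eqref{eqn:wass_stab} at stationarity — i.e.\ replacing $\mathrm{Law}(\theta_t,v_t)$ and $\mathrm{Law}(\hat{\theta}_t,\hat{v}_t)$ by their $t\to\infty$ limits $\mu$ and $\hat{\mu}$, whose existence and uniqueness are furnished by the first part of Theorem~\ref{theorem_squareloss} — and using that $\ell$ is $L$-Lipschitz, one gets
\[
\left|\mathbb{E}_{\theta_\infty,X_n}\left[\hat{R}(\theta_\infty,X_n)\right]-R(\theta_\infty)\right|
\;\le\; L\,\sup_{X_n\cong\widehat{X}_n}\mathcal{W}_1\brac{\mu,\hat{\mu}}.
\]
Since $\mu$ and $\hat{\mu}$ are the joint stationary laws of $(\theta_\infty,y_\infty)$ and $(\hat{\theta}_\infty,\hat{y}_\infty)$, the quantity $\mathcal{W}_1(\mu,\hat\mu)$ is exactly the one estimated in \eqref{wassersteinbound_squareloss_with_momentum}, so no projection onto the $\theta$-coordinate is even needed. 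Substituting \eqref{wassersteinbound_squareloss_with_momentum} with $p=1$, writing $C:=C(1)$, noting that $\bigl(V_d/(\alpha-p)\bigr)^{1/p}$ becomes $V_d/(\alpha-1)$ and that the bracketed factor raised to the power $1/p$ becomes itself, yields \eqref{estimate_squareloss_withmomentum} verbatim.

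The SGD-without-momentum case is handled identically: by Theorem~\ref{theorem_squareloss} the overdamped processes $Z_t,\hat{Z}_t$ from \eqref{sde_squaredloss_overdamped} admit unique stationary laws $\nu,\hat{\nu}$, and the same dual-representation argument underlying \eqref{eqn:wass_stab} — applied to $Z_t$, whose state is already the parameter — gives $\bigl|\mathbb{E}_{Z_\infty,X_n}[\hat{R}(Z_\infty,X_n)]-R(Z_\infty)\bigr|\le L\sup_{X_n\cong\widehat{X}_n}\mathcal{W}_1(\nu,\hat{\nu})$. Plugging in \eqref{wassersteinbound_squareloss_without_momentum} with $p=1$ (which replaces $\sigma_{\min}$ by $\theta_{\min}$) produces \eqref{estimate_squareloss_withoutmomentum}.

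I do not expect any genuine obstacle here, as all the analytic content resides in Theorem~\ref{theorem_squareloss}; the only points meriting a line of justification are bookkeeping in nature. \emph{(i) Well-posedness of the left-hand sides:} finiteness of $R(\theta_\infty)$ and $\mathbb{E}_{\theta_\infty,X_n}[\hat{R}(\theta_\infty,X_n)]$ follows from the Lipschitz bound $|\ell(\theta,z)|\le|\ell(0,z)|+L\norm{\theta}$ together with the fact, implicit in the construction behind Theorem~\ref{theorem_squareloss}, that the stationary laws have finite moments of every order $p<\alpha$ — in particular a finite first moment, since $\alpha>1$. \emph{(ii) Dataset dependence of the prefactors:} the constants $\sigma_1,\sigma_2$ (and $\sigma_{\min},\theta_{\min}$) depend on which data point is perturbed, so the bounds \eqref{estimate_squareloss_withmomentum}–\eqref{estimate_squareloss_withoutmomentum} are to be read after evaluating these quantities at the neighboring dataset realizing the supremum $\sup_{X_n\cong\widehat{X}_n}$ appearing in \eqref{eqn:wass_stab}, which is consistent with how the bound is presented.
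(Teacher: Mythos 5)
Your proposal is correct and is exactly the paper's intended argument: the corollary is obtained by specializing Theorem~\ref{theorem_squareloss} to $p=1$ (admissible since $\alpha>1$), so that $C(1)=C$ and the $1/p$-powers disappear, and then invoking the duality-based inequality \eqref{eqn:wass_stab} at stationarity with the $L$-Lipschitz surrogate loss. The paper treats this as an immediate consequence and gives no separate proof, so there is nothing further to compare.
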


Regarding the estimates in Corollary~\ref{cor:quadratic}, notice that 
\begin{align*}
    x\mapsto(1-e^{-x})/x,\qquad x\mapsto e^{-x}\brac{1/x+2/x^2+2/x^3} 
\end{align*}
are monotone decreasing functions on $(0,\infty)$. It follows that, in order to compare the generalization error bound for heavy-tailed SGD and heavy-tailed SGD with momentum, the key quantities
to compare are $\sigma_{\min}$ and $\theta_{\min}$. 
In the next result, we will show 
that we always have $\sigma_{\min}\leq \theta_{\min}$.

\begin{proposition}
\label{lemma_comparesingularvalues}
    It holds that $\sigma_{\min}\leq \theta_{\min}$. 
 \end{proposition}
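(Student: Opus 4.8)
The plan is to reduce the statement to a single matrix inequality and then produce an explicit test vector. Write $s_{\min}(\cdot)$ for the smallest singular value. Recall that for any square matrix $B$ one has $s_{\min}(B)=\min_{\norm{u}=1}\norm{Bu}$, and that when $B$ is symmetric positive semidefinite this coincides with $\lambda_{\min}(B)$. By definition $\sigma_{\min}=\min\{s_{\min}(A),s_{\min}(\hat A)\}$ and $\theta_{\min}=\min\{s_{\min}(\tfrac1n X^{\top}X),s_{\min}(\tfrac1n \hat X^{\top}\hat X)\}$, and $\min$ is monotone in each argument, so it suffices to prove the two pointwise bounds $s_{\min}(A)\le s_{\min}(\tfrac1n X^{\top}X)$ and $s_{\min}(\hat A)\le s_{\min}(\tfrac1n \hat X^{\top}\hat X)$; the two proofs are identical up to hats.

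For the first bound, set $M:=\tfrac1n X^{\top}X$, which is symmetric positive definite by hypothesis, and let $w\in\R^d$ be a unit eigenvector of $M$ for its smallest eigenvalue $\lambda:=\lambda_{\min}(M)=s_{\min}(M)$. I would take the test vector $u:=(w^{\top},0)^{\top}\in\R^{2d}$, which has $\norm{u}=1$. Using the block form of $A$, a direct multiplication gives
\begin{equation*}
Au=\begin{bmatrix} 0 & -I\\ M & \gamma I\end{bmatrix}\begin{bmatrix} w\\ 0\end{bmatrix}=\begin{bmatrix} 0\\ Mw\end{bmatrix}=\begin{bmatrix} 0\\ \lambda w\end{bmatrix},
\end{equation*}
so $\norm{Au}=\lambda$. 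Hence $s_{\min}(A)=\min_{\norm{u'}=1}\norm{Au'}\le\norm{Au}=\lambda=s_{\min}(M)$, which is exactly what we want. Repeating the computation with $\hat X,\hat A$ in place of $X,A$ yields $s_{\min}(\hat A)\le s_{\min}(\tfrac1n \hat X^{\top}\hat X)$, and combining the two via monotonicity of the minimum gives $\sigma_{\min}\le\theta_{\min}$.

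There is essentially no analytic obstacle here; the only point worth flagging is that $A$ is non-normal, so one should not attempt to compare its singular values with its eigenvalues or with those of the damping block $\gamma I$. The one-sided estimate we actually need drops out immediately by restricting the map $u'\mapsto\norm{Au'}$ to the $d$-dimensional subspace $\{(w^{\top},0)^{\top}:w\in\R^d\}$, on which (after diagonalizing $M$) $A$ acts like multiplication by $\lambda_{\min}(M)$. I would also remark, though it is not needed for the inequality itself, that positive definiteness of $X^{\top}X$ makes $A$ invertible (indeed $\det A=\det M\neq 0$ since the bottom blocks commute), so $\sigma_{\min},\theta_{\min}$ are strictly positive and the bounds in Theorem~\ref{theorem_squareloss} are well posed.
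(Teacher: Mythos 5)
Your proof is correct, and it takes a genuinely different route from the paper's. The paper proceeds by computing $AA^{\top}$ explicitly, block-diagonalizing it into $2\times 2$ blocks
\begin{equation*}
T_i=\begin{bmatrix} 1 & -\gamma\\ -\gamma & \gamma^2+\kappa_i^2\end{bmatrix},
\end{equation*}
writing down both eigenvalues $\mu_{i,\pm}$ of each block in closed form, and then verifying the algebraic inequality $\mu_{i,-}\leq\kappa_i^2$ by squaring and factoring. You instead use the variational characterization $s_{\min}(A)=\min_{\norm{u}=1}\norm{Au}$ and exhibit the single test vector $u=(w^{\top},0)^{\top}$ with $w$ a minimal unit eigenvector of $M=\tfrac{1}{n}X^{\top}X$, for which $Au=(0,Mw)^{\top}$ has norm exactly $\lambda_{\min}(M)$; the block multiplication and the reduction via monotonicity of the minimum are both correct, and positive definiteness of $M$ guarantees $\lambda_{\min}(M)>0$ so that $\norm{Au}=\lambda_{\min}(M)$ as claimed. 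Your argument is shorter, avoids the full spectral computation, and makes transparent why the bound is one-sided (you restrict the minimization to the position subspace, where $A$ acts as $w\mapsto(0,Mw)$ regardless of $\gamma$). What the paper's heavier computation buys is the exact formula $\sigma_{\min}=\min_i\sqrt{\mu_{i,-}}$ as a function of $\gamma$ and the $\kappa_i$, which is reused in the main text (the remark following Proposition~\ref{lemma_comparesingularvalues} on how $\sigma_{\min}$ decreases in $\gamma$); your test-vector argument yields only the inequality, not the explicit dependence on $\gamma$.
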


Since $\sigma_{\min}\leq \theta_{\min}$ per Proposition~\ref{lemma_comparesingularvalues}, the generalization error bound for SGDm at \eqref{estimate_squareloss_withmomentum} is larger than the generalization error bound for SGD at \eqref{estimate_squareloss_withoutmomentum} according to Corollary~\ref{cor:quadratic}.

\begin{remark}
There are some key differences between our estimate at \eqref{estimate_squareloss_withoutmomentum} and \citep[Theorem~4]{squareloss2023algorithmic}, the latter of which is also about algorithmic stability of heavy-tailed SGD without momentum for least square regression. First, whereas \cite{squareloss2023algorithmic} uses a power function (of some power between 1 and 2) as the loss function in their definition of algorithmic stability (see their Section 3.1), our estimate \eqref{estimate_squareloss_withoutmomentum} is derived under the assumption that the loss function is Lipschitz-continuous. Second, in term of methodology, \cite{squareloss2023algorithmic} takes advantages of Fourier transform to estimate the stability, while we use a simple coupling argument. 
\end{remark}

\begin{remark}
Here we discuss how the choice of friction (momentum) parameter $\gamma>0$ affects the generalization bound for SGDm in \eqref{estimate_squareloss_withmomentum}. Per  Appendix~\ref{section_proofcomparesingularvalues}, we have 
$\sigma_{\min}= \min_{1\leq i\leq d} \{g_i(\gamma) \}$,
where 
\begin{align*}
g_i(\gamma):=\frac{\gamma^2+\kappa_i^2+1- \sqrt{\brac{\gamma^2+\kappa_i^2+1}^2-4\kappa_i^2} }{2}. 
\end{align*}
In addition, since the map $x\mapsto x-\sqrt{x^2-a^{2}}$ is strictly decreasing for $x\geq a>0$, $g_{i}(\gamma)$ is strictly decreasing in $\gamma>0$. These facts and the estimate \eqref{estimate_squareloss_withmomentum} suggest that a choice of smaller $\gamma$ will lead to a smaller generalization bound for SGDm. Note however that no matter how we choose $\gamma>0$, generalization error of SGDm cannot be tighter than that of SGD, as Proposition~\ref{lemma_comparesingularvalues} has shown. 
\end{remark}

%%%%%%%%%%%%%%%%%%%%%%%%%%%%%%%%%%%%%%%%%%%%%%%%%%%

\section{Discrete-Time Analysis}\label{sec:discrete}

In Section~\ref{sec:main:results} and Section~\ref{section_mainsquareloss}, 
our analysis was based on the continuous-time dynamics
\eqref{sde_generalloss}--\eqref{sde_generalloss_differentdataset}.
Next, we introduce and study the following
discretization of \eqref{sde_generalloss}--\eqref{sde_generalloss_differentdataset}:
\begin{align}
    \label{discrete_equation:0}
    V_{k+1}&=V_k-\eta\gamma V_k-\eta\nabla\widehat{F} (\Theta_k,X_n)+\zeta\xi_{k+1},\nonumber\\
    \Theta_{k+1}&=\Theta_k+\eta V_{k+1}, 
\end{align}
and
\begin{align}
    \label{discrete_equation:1}
    \hat{V}_{k+1}&=\hat{V}_k-\eta\gamma\hat{V}_k-\eta\nabla\widehat{F} (\hat{\Theta}_k,\widehat{X}_n)+\zeta\xi_{k+1},\nonumber\\
    \hat{\Theta}_{k+1}&=\hat{\Theta}_k+\eta\hat{V}_{k+1}, 
\end{align}
with $\xi_{k+1}:=L_{k+1}-L_k$ and $(\Theta_{0},V_{0})=(\hat{\Theta}_{0},\hat{V}_{0})=(w,y)$. 
We will obtain a uniform-in-time 1-Wasserstein error
bound on the discretization error between \eqref{sde_generalloss}-\eqref{sde_generalloss_differentdataset}
and \eqref{discrete_equation:0}-\eqref{discrete_equation:1}. 
To the best of our knowledge, the uniform-in-time discretization error bound in 1-Wasserstein distance for L\'{e}vy-driven SDE
has only been studied in \cite{chenxu2023euler} for rotationally invariant L\'{e}vy noise and in \cite{dangzhu2024euler} for L\'{e}vy noise with i.i.d.\ components that allows $\widehat{F}$ to be non-convex.
Our discretization scheme \eqref{discrete_equation:0}-\eqref{discrete_equation:1} is fundamentally different than 
the ones considered in \cite{chenxu2023euler,dangzhu2024euler}.
First, it is based on \eqref{sde_generalloss}-\eqref{sde_generalloss_differentdataset} with \emph{degenerate noise}.
Second, it is a modification of the Euler-Maruyama scheme. 
Therefore, by obtaining the time-uniform 
1-Wasserstein discretization error
guarantee, we make a contribution to the theory of L\'{e}vy-driven SDE with degenerate noise which is of its own interest.

Before we proceed, we first obtain the following ergodicity result for \eqref{discrete_equation:0}-\eqref{discrete_equation:1}. 

\begin{theorem}\label{thm:ergodicity:discretedynamics}
Assume Conditions~\ref{cond_gammaandbeta},~\ref{cond_pseudolipschitz}, and~\ref{cond_allthelambdas} hold, and also that $\sup_{x,y\in \mathcal{X}}\norm{x-y}\leq D$ for some $D<\infty$.
The Markov chain $\{(\Theta_n,V_n):n\in\N\}$ in \eqref{discrete_equation:0} admits a unique invariant measure $\mu_\eta$ 
and the Markov chain $\{(\hat{\Theta}_n,\hat{V}_n):n\in\N\}$ in \eqref{discrete_equation:1} admits a unique invariant measure $\hat{\mu}_\eta$
provided that 
$\eta<\bar{\eta}$,
where $\bar{\eta}$ is an explicit constant given in \eqref{bar:eta:defn} in Appendix~\ref{appendix:discrete}.
\end{theorem}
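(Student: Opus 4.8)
The plan is to verify a geometric (Foster--Lyapunov) drift condition for the Markov chain $\{(\Theta_n,V_n)\}$ together with a minorization condition on compact sets, and then appeal to the standard Meyn--Tweedie machinery to conclude the existence and uniqueness of an invariant measure. The first step is to build a Lyapunov function of the form $\mathcal{V}(\theta,v)=\|\theta\|^2 + a\langle\theta,v\rangle + c\|v\|^2 + \text{(lower-order terms)}$, where the cross term is included because the noise is degenerate (it only enters the $v$-component), so one needs the coupling between $\theta$ and $v$ to transfer contraction from the velocity coordinate to the position coordinate — this is the same reason underdamped Langevin analyses use such quadratic forms. Using Condition~\ref{cond_allthelambdas} (the dissipativity of $\nabla\widehat F$ inherited from \eqref{cond_lambda123}--\eqref{cond_lambda45}) and Condition~\ref{cond_pseudolipschitz} (at most quadratic growth of the gradient), I would expand $\mathbb{E}[\mathcal{V}(\Theta_{k+1},V_{k+1})\mid(\Theta_k,V_k)]$ along the recursion \eqref{discrete_equation:0}. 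One subtlety is that $\xi_{k+1}$ is $\alpha$-stable with $1<\alpha<2$, so it has no second moment; hence $\mathcal{V}$ cannot be genuinely quadratic. The standard fix (as in \cite{dangzhu2024euler,chenxu2023euler}) is to work with a truncated/smoothed version such as $\mathcal{V}(\theta,v)=(1+\|\theta\|^2+a\langle\theta,v\rangle+c\|v\|^2)^{s/2}$ for some $s\in(1,\alpha)$, or equivalently to track $\mathbb{E}[\|(\Theta_k,V_k)\|^s]$; then $\mathbb{E}\|\xi_{k+1}\|^s<\infty$ and the jump contribution is a bounded constant.

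Carrying out the drift computation, the deterministic part $V_k-\eta\gamma V_k-\eta\nabla\widehat F(\Theta_k,X_n)$, $\Theta_k+\eta V_{k+1}$ contributes, to leading order in $\eta$, a term $-\eta\kappa\,\mathcal{V}(\Theta_k,V_k)$ for some $\kappa>0$, provided the coefficients $a,c$ in the quadratic form are chosen so that the $2\times 2$ effective drift matrix $\begin{bmatrix} 0 & -I \\ \tfrac1n X^\top X\text{-like} & \gamma I\end{bmatrix}$ (whose eigenvalue structure is exactly what governs $\sigma_{\min}$ in Section~\ref{section_mainsquareloss}) is Hurwitz after symmetrization against the weight matrix $\begin{bmatrix} 1 & a/2 \\ a/2 & c\end{bmatrix}$; here the inequalities \eqref{cond_frombaowang} on $\gamma,\beta,\lambda_1,\ldots,\lambda_5$ are precisely what guarantee this is possible. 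The $O(\eta^2)$ terms coming from the squared increment and the $\eta V_{k+1}$ substitution must be absorbed into the $-\eta\kappa\mathcal{V}$ term, which forces $\eta<\bar\eta$ for an explicit $\bar\eta$ depending on $\gamma,\beta,K_1,\lambda_i,\zeta,D$ and the dimension; this is where the threshold in \eqref{bar:eta:defn} comes from. The net conclusion is a drift inequality $\mathbb{E}[\mathcal{V}(\Theta_{k+1},V_{k+1})\mid \mathcal{F}_k]\le(1-\eta\kappa)\mathcal{V}(\Theta_k,V_k)+\eta b\,\mathbf{1}_{\{(\Theta_k,V_k)\in \mathcal{K}\}}$ for a compact set $\mathcal{K}$.

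For irreducibility and aperiodicity, I would use that the $\alpha$-stable law $\zeta\xi_{k+1}$ has a strictly positive Lebesgue density on all of $\R^d$; since $V_{k+1}$ is a (deterministic, given $\mathcal{F}_k$) shift of $\zeta\xi_{k+1}$ and then $\Theta_{k+1}=\Theta_k+\eta V_{k+1}$ is an invertible affine image, the one-step transition kernel has a positive density on $\R^{2d}$, which gives $\psi$-irreducibility with respect to Lebesgue measure, aperiodicity, and a minorization condition on every compact set (so all compacts are small/petite). Combining the geometric drift with the minorization via \citep[Theorem~15.0.1]{meyntweedie} (or \citep[Theorem~16.0.1]{meyntweedie}) yields a unique invariant measure $\mu_\eta$ with geometric ergodicity; the argument for $\{(\hat\Theta_n,\hat V_n)\}$ and $\hat\mu_\eta$ is verbatim the same, since Conditions~\ref{cond_pseudolipschitz}--\ref{cond_allthelambdas} hold uniformly over datasets and $\rho(X_n,\widehat X_n)\le D$ is bounded. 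The main obstacle is the degeneracy of the noise in the drift step: one cannot simply use $\|\theta\|^2+\|v\|^2$ as the Lyapunov function, and the cross-term coefficients $a,c$ must be tuned delicately against $\gamma,\beta$ and the dissipativity constants — in particular, tracking how the $O(\eta^2)$ discretization errors interact with this tuned quadratic form (rather than a diagonal one) is the step that determines the explicit value of $\bar\eta$ and is the technically heaviest part of the proof.
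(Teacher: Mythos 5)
Your proposal is correct and follows essentially the same route as the paper's proof: a Foster--Lyapunov drift condition for a fractional power of a quadratic form containing a position--velocity cross term (the paper uses $\mathbb{W}=1+N^{1/2}$ from \cite{jianwangbao2022coupling}, with the potential $\widehat{F}$ itself folded into $N$ so that \eqref{cond_lambda123} can be invoked directly), absorption of the $O(\eta^2)$ discretization terms into the first-order contraction to produce the threshold $\bar{\eta}$, and irreducibility from the everywhere-positive density of the stable increments followed by the Meyn--Tweedie theorem. The only cosmetic difference is that the paper bounds the jump contribution $\mathcal{D}_1$ via Dynkin's formula and the fractional Laplacian acting on $\mathbb{W}$ (yielding an $O(\eta)$ term) rather than a direct fractional-moment bound on the increment, but either route suffices for the ergodicity conclusion.
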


Next, let us recall that $\mu$ is the unique invariant measure
for the process $\left\{\brac{\theta_t,v_t}:t\geq 0\right\}$ 
in \eqref{sde_generalloss} and 
$\hat{\mu}$ is the unique invariant measure for 
the process $\left\{\brac{\hat{\theta}_t,\hat{v}_t}:t\geq 0\right\}$
in \eqref{sde_generalloss_differentdataset}. 
Then, we have the following 
uniform-in-time 
1-Wasserstein discretization error
guarantee.

\begin{theorem}\label{thm:discrete}
Under the assumptions in Theorem~\ref{thm:ergodicity:discretedynamics}, 
\begin{align}
&\mathcal{W}_{1}(\mu,\mu_{\eta})\leq C\eta^{1/\alpha},
\\
&\mathcal{W}_{1}(\hat{\mu},\hat{\mu}_{\eta})\leq\widehat{C}\eta^{1/\alpha},
\end{align}
where $C,\widehat{C}>0$ are some constants (independent of $\eta$)
that are provided in the proof in Appendix~\ref{appendix:discrete}.
\end{theorem}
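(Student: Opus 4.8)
\textbf{Proof proposal for Theorem~\ref{thm:discrete}.}
The plan is to establish the time-uniform bound by combining an exponential Wasserstein contraction for the continuous-time SDE (from Condition~\ref{cond_allthelambdas}, via the ergodicity granted by \cite{jianwangbao2022coupling}) with a one-step (local) discretization error estimate, and then summing a geometric series. Concretely, I would introduce, for $k\in\N$, the continuous-time process $(\theta_t,v_t)$ started at time $k\eta$ from the discrete iterate $(\Theta_k,V_k)$ and run for one step of length $\eta$, and compare its law at time $(k+1)\eta$ with $\mathrm{Law}(\Theta_{k+1},V_{k+1})$. Let $e_k:=\mathcal{W}_1\brac{\mathrm{Law}(\Theta_k,V_k),\,P_{k\eta}\delta_{(w,y)}}$ denote the global error at step $k$, where $P_t$ is the SDE semigroup. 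The triangle inequality gives
\begin{align*}
e_{k+1}\leq \underbrace{\mathcal{W}_1\brac{\mathrm{Law}(\Theta_{k+1},V_{k+1}),\,P_\eta\,\mathrm{Law}(\Theta_k,V_k)}}_{\text{one-step error}} + \underbrace{\mathcal{W}_1\brac{P_\eta\,\mathrm{Law}(\Theta_k,V_k),\,P_{(k+1)\eta}\delta_{(w,y)}}}_{\leq\, e^{-c\eta}\,e_k},
\end{align*}
the second term being controlled by the contraction rate $e^{-c\eta}$ of $P_\eta$ in $\mathcal{W}_1$. Iterating yields $\sup_k e_k\leq \frac{1}{1-e^{-c\eta}}\cdot(\text{one-step error})$, and since the one-step error will be shown to be $O(\eta^{1+1/\alpha})$ while $1-e^{-c\eta}\asymp c\eta$, this produces the $O(\eta^{1/\alpha})$ uniform bound; letting $k\to\infty$ and using that both chains/flows are ergodic (Theorems~\ref{thm:ergodicity:discretedynamics} and the continuous-time ergodicity) transfers the bound to the invariant measures $\mu,\mu_\eta$. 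The bound for $\hat\mu,\hat\mu_\eta$ is identical with the hatted data.

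For the one-step error I would couple the two processes by driving them with the \emph{same} Lévy path $(L_t)$ on $[k\eta,(k+1)\eta]$ and the same starting point $(\Theta_k,V_k)$, so the stochastic integrals $\zeta(L_{(k+1)\eta}-L_{k\eta})$ cancel exactly; what remains is the difference of the drift terms integrated over a step. Writing $D_s:=(\theta_s-\Theta_k,\,v_s-V_k+\eta\gamma V_k s/\eta+\dots)$ — more carefully, the discrete scheme freezes the drift $-\gamma V_k-\beta\nabla\widehat F(\Theta_k,X_n)$ over the interval while the SDE uses the time-varying drift $-\gamma v_s-\beta\nabla\widehat F(\theta_s,X_n)$ — the difference is governed by (i) the deterministic drift displacement over a step, which is $O(\eta)$ in the velocity and $O(\eta^2)$ in the position, and (ii) the fluctuation of $(\theta_s,v_s)$ away from $(\Theta_k,V_k)$ caused by the injected $\alpha$-stable increment, whose typical size over a step is $\eta^{1/\alpha}$. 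Using Condition~\ref{cond_pseudolipschitz} (Lipschitz-in-$\theta$ gradient with at-most-linear growth in $\|\theta\|$) to bound $\|\nabla\widehat F(\theta_s,X_n)-\nabla\widehat F(\Theta_k,X_n)\|\lesssim\|\theta_s-\Theta_k\|$, plus a uniform-in-time $p$-th moment bound ($p<\alpha$) on $\|\Theta_k\|,\|V_k\|$ and on the SDE — which one gets from the dissipativity in Condition~\ref{cond_allthelambdas} exactly as in the proof of Theorem~\ref{thm:ergodicity:discretedynamics} — the expected one-step displacement works out to $O(\eta^{1+1/\alpha})$ after integrating the $\eta^{1/\alpha}$-scale fluctuation against the $O(\eta)$-length interval. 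Grönwall on the coupled displacement over the single short interval closes this estimate.

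The main obstacle, and the genuinely new ingredient relative to \cite{chenxu2023euler,dangzhu2024euler}, is the \emph{degeneracy} of the noise: the Lévy process enters only the velocity coordinate, so there is no smoothing/contraction directly in $\theta$, and one cannot run a naive synchronous coupling and get contraction in the Euclidean metric. I expect to handle this by working with the twisted/weighted metric $\rho\brac{(\theta,v),(\theta',v')}\asymp \|\theta-\theta'\|+\|(\theta-\theta')+c(v-v')\|$ (or the equivalent quasi-norm used in \cite{jianwangbao2022coupling}) under which the underdamped dynamics \emph{is} contractive; all of the above triangle-inequality and geometric-summation steps must be carried out in that metric, and the equivalence constants between $\rho$ and the $\mathcal{W}_1$ metric (bounded above and below by dimension-dependent constants) must be tracked to produce the explicit $C,\widehat C$. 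A secondary technical point is that the contraction rate $c$ coming from \cite{jianwangbao2022coupling} involves their non-explicit constants $c_0,C_0$ (cf.\ Remark~\ref{remark_comparingbound}), so $C,\widehat C$ will be explicit only up to these; this is acceptable since the theorem statement only asserts existence of such constants. Finally, one must verify the step-size restriction $\eta<\bar\eta$ is exactly what is needed for the discrete moment bounds and for $1-e^{-c\eta}\geq c\eta/2$ type inequalities to hold, which is bookkeeping once $\bar\eta$ from Theorem~\ref{thm:ergodicity:discretedynamics} is in hand.
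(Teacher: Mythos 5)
Your proposal is correct in substance and produces the same geometric sum as the paper, but it travels the dual route. The paper uses the Lindeberg-type semigroup decomposition $P_{N\eta}h-Q_{N}h=\sum_{i=1}^{N}Q_{i-1}\brac{P_\eta-Q_1}P_{(N-i)\eta}h$ (the Chen--Xu Markov-approximation framework already used for Theorem~\ref{theorem_wassersteinbound_mainsection}): the exponential decay is applied to the \emph{test function}, via the semigroup gradient estimate $\norm{\nabla P_{t}h}_{\operatorname{op},\infty}\leq \norm{\nabla h}_{\operatorname{op},\infty}C_*e^{-\lambda_* t}$ of Lemma~\ref{lemma_semigroupgradiateestimate}, while the one-step error $\abs{(P_\eta-Q_1)g}\leq C_9\norm{\nabla g}_{\operatorname{op},\infty}(\cdots)\eta^{1+1/\alpha}$ is Lemma~\ref{lemma_onestepestimate_discretedynamics}, proved exactly as you describe: drive both dynamics with the same L\'{e}vy increment so the noise cancels, and control the drift mismatch with Condition~\ref{cond_pseudolipschitz} and the uniform moment bounds. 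Summing $\sum_i e^{-\lambda_*(N-i)\eta}\leq(\lambda_*\eta)^{-1}$ converts $\eta^{1+1/\alpha}$ into $\eta^{1/\alpha}$, and letting $N\to\infty$ transfers the bound to $\mu,\mu_\eta$, just as in your last step. The one substantive caution concerns your primal recursion $e_{k+1}\leq(\text{local error})+e^{-c\eta}e_k$: the contraction the paper actually records (Lemma~\ref{lemma_wassersteindecaybyBaoWang}) is $\mathcal{W}_1(\mu P_t,\nu P_t)\leq C_*e^{-\lambda_* t}\mathcal{W}_1(\mu,\nu)$ with prefactor $C_*=c_0^2\sqrt{2d}$, which is typically larger than $1$; for small $\eta$ the per-step factor $C_*e^{-\lambda_*\eta}$ then exceeds $1$ and the naive iteration does not close. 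You correctly anticipate the remedy --- run the recursion in the twisted metric of \cite{jianwangbao2022coupling} in which $P_\eta$ is a genuine one-step contraction and pay the metric-equivalence constants once at the end --- but this requires extracting that metric from the coupling construction rather than citing the $\mathcal{W}_1$ statement. The paper's dual decomposition sidesteps the issue entirely, since the prefactor $C_*$ enters each summand only once, over the whole remaining horizon $(N-i)\eta$, rather than being compounded step by step.
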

By the triangle inequality for 1-Wasserstein distance and applying
Theorem~\ref{thm:discrete}, we obtain the 1-Wasserstein algorithmic stability for the discrete-time dynamics \eqref{discrete_equation:0}-\eqref{discrete_equation:1}:
\begin{equation}
\label{equation_algostabdiscrete_pre}
\mathcal{W}_{1}(\mu_{\eta},\hat{\mu}_{\eta})\leq\mathcal{W}_{1}(\mu,\hat{\mu})+C\eta^{1/\alpha}+\widehat{C}\eta^{1/\alpha},
\end{equation}
where $\mathcal{W}_{1}(\mu,\hat{\mu})$
is the 1-Wasserstein algorithmic stability for the continuous-time 
dynamics and by applying Theorem~\ref{theorem_wassersteinbound_mainsection} to \eqref{equation_algostabdiscrete_pre}, 
we arrive at the following result.

\begin{corollary}
\label{cor:discrete:algostab}
Under the Assumptions in Theorem~\ref{theorem_wassersteinbound_mainsection}
and Theorem~\ref{thm:discrete}, we have 
\begin{equation}
\mathcal{W}_{1}(\mu_{\eta},\hat{\mu}_{\eta})\leq\widetilde{C}\rho(X_n,\widehat{X}_n)+C\eta^{1/\alpha}+\widehat{C}\eta^{1/\alpha},
\end{equation}
where $\widetilde{C}$ is given in Theorem~\ref{theorem_wassersteinbound_mainsection}, and $C,\widehat{C}$ are given in Theorem~\ref{thm:discrete}..
\end{corollary}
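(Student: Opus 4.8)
The plan is to combine the triangle inequality for the $1$-Wasserstein distance with the two quantitative bounds already in hand, namely the continuous-time algorithmic stability estimate of Theorem~\ref{theorem_wassersteinbound_mainsection} and the uniform-in-time discretization error estimate of Theorem~\ref{thm:discrete}. First I would check that the standing assumptions of Corollary~\ref{cor:discrete:algostab} (Conditions~\ref{cond_gammaandbeta}--\ref{cond_allthelambdas}, the diameter bound, and $\eta<\bar\eta$) simultaneously place us in the hypotheses of both theorems; in particular $\eta<\bar\eta$ together with Theorem~\ref{thm:ergodicity:discretedynamics} guarantees that $\mu_\eta$ and $\hat\mu_\eta$ are well-defined unique invariant measures, so that $\mathcal{W}_1(\mu_\eta,\hat\mu_\eta)$ makes sense, and Theorem~\ref{thm:discrete} then supplies $\mathcal{W}_1(\mu,\mu_\eta)\le C\eta^{1/\alpha}$ and $\mathcal{W}_1(\hat\mu,\hat\mu_\eta)\le \widehat C\eta^{1/\alpha}$ with the constants $C,\widehat C$ specified there.

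Next I would write, using the triangle inequality and the symmetry of $\mathcal{W}_1$,
\[
\mathcal{W}_1(\mu_\eta,\hat\mu_\eta)\;\le\;\mathcal{W}_1(\mu_\eta,\mu)+\mathcal{W}_1(\mu,\hat\mu)+\mathcal{W}_1(\hat\mu,\hat\mu_\eta).
\]
The first and third terms are bounded by $C\eta^{1/\alpha}$ and $\widehat C\eta^{1/\alpha}$ respectively by the previous step. For the middle term I would apply Theorem~\ref{theorem_wassersteinbound_mainsection}, whose hypotheses are exactly Conditions~\ref{cond_gammaandbeta}--\ref{cond_allthelambdas}, to get $\mathcal{W}_1(\mu,\hat\mu)\le \widetilde C\,\rho(X_n,\widehat X_n)$. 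Substituting these three bounds into the displayed inequality yields precisely
\[
\mathcal{W}_{1}(\mu_{\eta},\hat{\mu}_{\eta})\leq\widetilde{C}\rho(X_n,\widehat{X}_n)+C\eta^{1/\alpha}+\widehat{C}\eta^{1/\alpha},
\]
with $\widetilde C$, $C$, $\widehat C$ inherited verbatim from Theorems~\ref{theorem_wassersteinbound_mainsection} and~\ref{thm:discrete}.

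There is essentially no substantive obstacle: the content is fully contained in the earlier results, and the only thing to verify is the compatibility of the two hypothesis sets, which is immediate since both are implied by the corollary's standing assumptions. If anything merits a line of care, it is observing that the discretization rate $\eta^{1/\alpha}$ furnished by Theorem~\ref{thm:discrete} is the same for both the dataset $X_n$ and the perturbed dataset $\widehat X_n$, so that the two discretization terms genuinely combine into $(C+\widehat C)\eta^{1/\alpha}$ as stated rather than producing mismatched exponents; and that the symmetry $\mathcal{W}_1(\hat\mu,\hat\mu_\eta)=\mathcal{W}_1(\hat\mu_\eta,\hat\mu)$ is used so that Theorem~\ref{thm:discrete}, as stated, directly applies.
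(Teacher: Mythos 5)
Your proposal is correct and follows exactly the paper's argument: the paper also obtains the bound by the triangle inequality $\mathcal{W}_{1}(\mu_{\eta},\hat{\mu}_{\eta})\leq\mathcal{W}_{1}(\mu,\hat{\mu})+C\eta^{1/\alpha}+\widehat{C}\eta^{1/\alpha}$ (see \eqref{equation_algostabdiscrete_pre}) and then applies Theorem~\ref{theorem_wassersteinbound_mainsection} to the middle term. Your additional remarks on the compatibility of the hypothesis sets and the well-definedness of $\mu_\eta,\hat\mu_\eta$ are sound but add nothing beyond what the paper already establishes.
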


As a corollary, one can also derive the generalization error bounds
for the discrete-time dynamics using Corollary~\ref{cor:discrete:algostab}
and the generalization error bounds 
for the continuous-time dynamics in Corollary~\ref{coro_generalizationbound}.

\begin{corollary}\label{cor:discrete:generalizationerrorbound}
Assume Conditions~\ref{cond_gammaandbeta},~\ref{cond_pseudolipschitz}, and~\ref{cond_allthelambdas}. Assume that $\ell$ is $L$-Lipschitz and $\sup_{x,y\in \mathcal{X}}\norm{x-y}\leq D$ for some $D<\infty$. Moreover, the step size $\eta$ satisfies
$\eta<\bar{\eta}$,
where $\bar{\eta}$ is an explicit constant given in \eqref{bar:eta:defn} in Appendix~\ref{appendix:discrete}. Then it holds that,
    \begin{align*}   &\left|\mathbb{E}_{\Theta_\infty,X_n}~\left[ \hat{R}(\Theta_\infty,X_n) \right] -  R(\Theta_\infty)  \right| \nonumber  \\
     &\leq \frac{1}{n}  \Big(d_1 D+d_2 D^{5/4}+d_3 D^{3/2}
     +d_4 D^{7/4}+d_5D^2+d_6 D^{5/2}\Big)+2L\eta^{1/\alpha}\brac{d_7+d_8\sqrt{D}+d_9D}, 
\end{align*}
where the constants $d_i$, $1\leq i\leq 9$, are independent of $D$ and their explicit formulas are provided in \eqref{def_d1throughd6} in Appendix~\ref{proof:main:results} and in \eqref{def_d7throughd9} in Appendix~\ref{appendix:discrete}. 
\end{corollary}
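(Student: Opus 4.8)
The plan is to combine the continuous-time generalization bound from Corollary~\ref{coro_generalizationbound} with the uniform-in-time discretization error bound from Theorem~\ref{thm:discrete}, glued together by the dual (Lipschitz) representation of the $1$-Wasserstein distance. First I would recall that, by \eqref{eqn:wass_stab} applied at the level of invariant measures, for any $L$-Lipschitz surrogate loss $\ell$ we have
\begin{align*}
\left|\mathbb{E}_{\Theta_\infty,X_n}\left[\hat{R}(\Theta_\infty,X_n)\right]-R(\Theta_\infty)\right|
\leq L\,\mathcal{W}_1\!\left(\mathrm{Law}(\Theta_\infty,V_\infty),\mathrm{Law}(\hat{\Theta}_\infty,\hat{V}_\infty)\right)
= L\,\mathcal{W}_1(\mu_\eta,\hat{\mu}_\eta),
\end{align*}
where $\mu_\eta,\hat\mu_\eta$ are the invariant measures of the discrete chains \eqref{discrete_equation:0}--\eqref{discrete_equation:1}, which exist and are unique by Theorem~\ref{thm:ergodicity:discretedynamics} under the stated condition $\eta<\bar\eta$.

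Next I would invoke Corollary~\ref{cor:discrete:algostab}, i.e. the triangle inequality $\mathcal{W}_1(\mu_\eta,\hat\mu_\eta)\leq \mathcal{W}_1(\mu,\hat\mu)+\mathcal{W}_1(\mu,\mu_\eta)+\mathcal{W}_1(\hat\mu,\hat\mu_\eta)$, bounding the first term by Theorem~\ref{theorem_wassersteinbound_mainsection} as $\widetilde{C}\,\rho(X_n,\widehat{X}_n)$ and the last two terms by Theorem~\ref{thm:discrete} as $C\eta^{1/\alpha}$ and $\widehat C\eta^{1/\alpha}$. Multiplying through by $L$ gives a bound of the form $L\widetilde{C}\,\rho(X_n,\widehat{X}_n)+L(C+\widehat C)\eta^{1/\alpha}$. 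For the first summand I would reuse exactly the computation already carried out in the proof of Corollary~\ref{coro_generalizationbound}: since $\rho(X_n,\widehat X_n)\le \frac{1}{n}\sup_{x,y}\|x-y\|\le D/n$ and $\widetilde C$ is a polynomial in $D$ of the stated fractional degrees, $L\widetilde C\,\rho(X_n,\widehat X_n)$ is precisely $\frac1n(d_1 D+d_2 D^{5/4}+d_3 D^{3/2}+d_4 D^{7/4}+d_5 D^2+d_6 D^{5/2})$ with the $d_i$ as in \eqref{def_d1throughd6}. For the second summand I would trace the $D$-dependence of the constants $C,\widehat C$ from the proof of Theorem~\ref{thm:discrete} in Appendix~\ref{appendix:discrete}: these constants depend on the data only through the diameter bound $D$ (entering via $\rho$, via the at-most-quadratic growth in Condition~\ref{cond_pseudolipschitz}, and via moment bounds on the stationary laws), and collecting the resulting powers of $D$ yields a bound of the form $L(C+\widehat C)\eta^{1/\alpha}\le 2L\eta^{1/\alpha}(d_7+d_8\sqrt D+d_9 D)$ for suitable $D$-independent constants $d_7,d_8,d_9$ as in \eqref{def_d7throughd9}. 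Adding the two pieces gives exactly the claimed inequality.

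The only genuinely nontrivial step is the last one: extracting the explicit $D$-dependence $d_7+d_8\sqrt D+d_9 D$ of the discretization constants $C,\widehat C$. This requires bookkeeping through the proof of Theorem~\ref{thm:discrete}, since $C$ and $\widehat C$ aggregate several sources of $D$-growth — the Lipschitz-in-data estimate on $\nabla\widehat F$, the linear-in-$\|\theta\|$ term in Condition~\ref{cond_pseudolipschitz} combined with uniform-in-time $p$-th moment bounds on $(\Theta_k,V_k)$ and $(\theta_t,v_t)$ (which themselves scale with $D$ through the dissipativity constants of Condition~\ref{cond_allthelambdas}), and the one-step discretization error of the modified Euler--Maruyama scheme — and one must verify that no higher power of $D$ than the first appears and that the $\sqrt D$ and constant terms are correctly accounted for. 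Everything else is a direct chaining of already-established results and the Lipschitz duality for $\mathcal{W}_1$.
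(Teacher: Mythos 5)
Your proposal is correct and follows essentially the same route as the paper: combine \eqref{eqn:wass_stab} with Corollary~\ref{cor:discrete:algostab}, reuse the $D$-bookkeeping from the proof of Corollary~\ref{coro_generalizationbound} for the $\widetilde{C}\rho(X_n,\widehat{X}_n)$ term, and bound $C,\widehat{C}$ by $d_7+d_8\sqrt{D}+d_9D$ using $\max_i\|x_i\|\leq D$ and the quadratic-growth estimate on $|f(0,x_i)|$. The only difference is cosmetic: the paper extracts the $D$-dependence directly from the explicit formulas for $C,\widehat{C}$ stated in the appendix restatement of Theorem~\ref{thm:discrete} (with $w=y=0$), rather than re-tracing the proof of that theorem as you suggest.
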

This result shows that, if $\eta$ is sufficiently small, the discrete-time process retains the generalization properties of the continuous-time SDE.

\section{Experiments}\label{sec:experiments}

%%%%%%%%%%%%%%%%%%%%%%%%%%%%%%
\paragraph{Synthetic data.}
We first consider the setting in Section~\ref{section_mainsquareloss} and test our theory on a linear model using synthetic data. We assume that  $x_i\overset{i.i.d.}{\sim}\mathcal{N}(0,\sigma_A)$, where $\mathcal{N}$ is a Gaussian distribution and $\sigma_A$ determines the distribution's standard deviation. In the synthetic data experiments we systematically vary $\sigma_A$ and the tail exponent of the noise, $\alpha$. Throughout experiments we fix the learning-rate $\eta = 0.05$ and set the momentum parameter $\gamma\in\{2.5, 5.0\}$ when utilizing SGDm, and train the models for $2000$ epochs. We set the sample size to $n = 1000$, and we conduct experiments across two dimensionalities with $d \in \{100, 250\}$. 
\noindent
\begin{minipage}[b][][b]{0.48\textwidth}
\centering
\includegraphics[width=\linewidth]{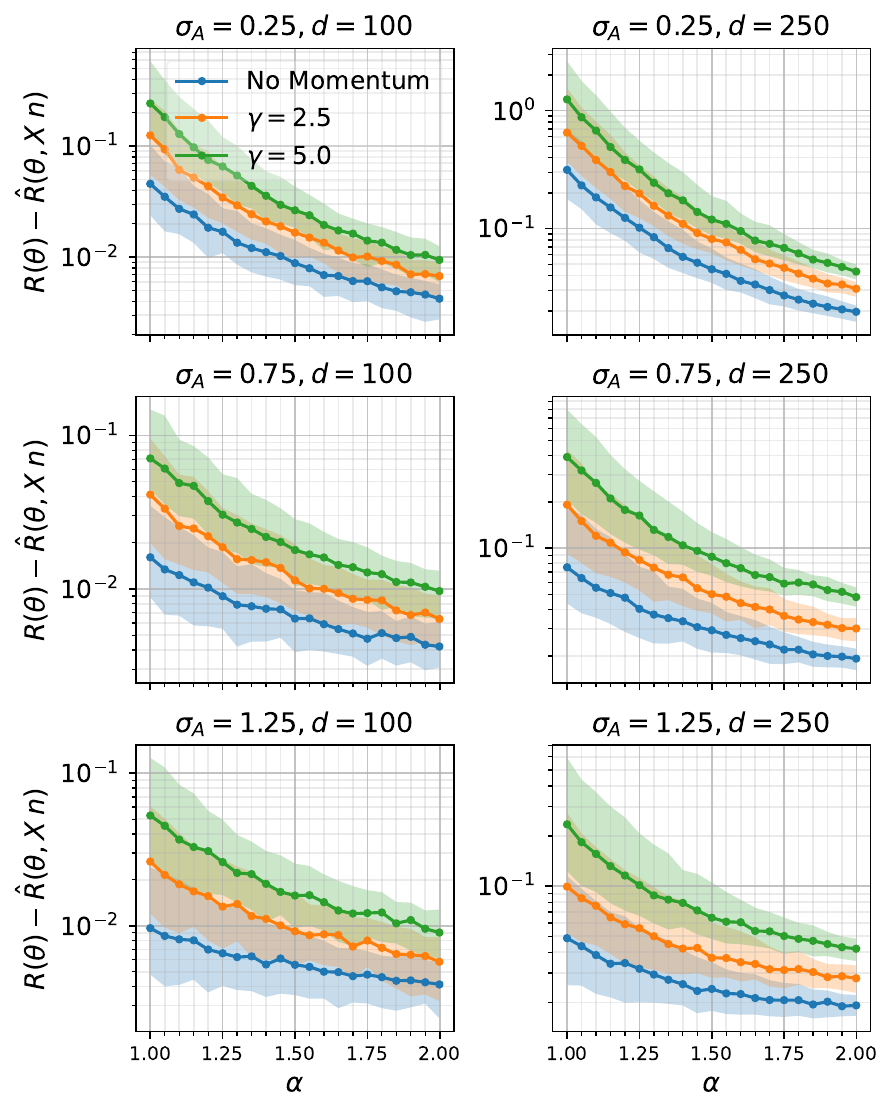}
    % \vspace{-2em}
    \captionof{figure}{Experiments comparing SGD with and without momentum on synthetic data with quadratic loss $f$. }
    \label{fig:synth-data}
\end{minipage} \hfill
% \begin{figure}[htb]
\begin{minipage}[b][][b]{0.45\textwidth}
    \centering
    % \vspace{-3\baselineskip}
	% \begin{subfigure}[t]{0.23\textwidth}
		\includegraphics[width=
		0.48\linewidth]{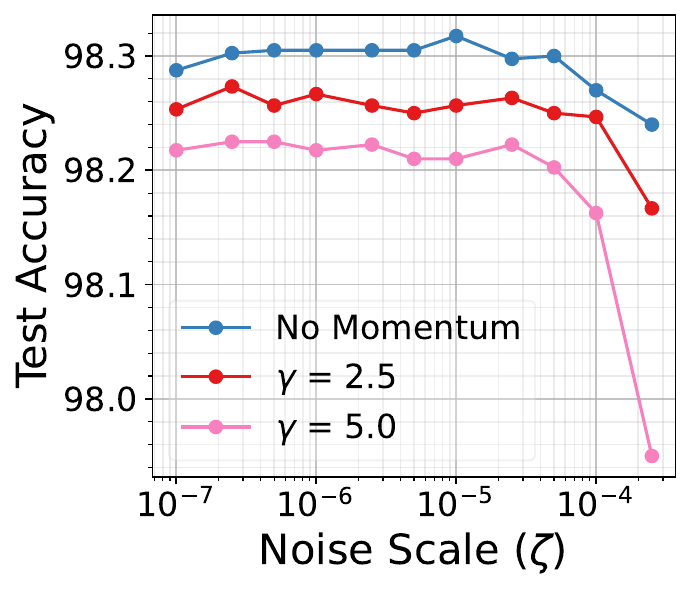}
        % \vspace{-1.75em}
        % \caption{}
        % \label{fig:mnist_fcn}
	% \end{subfigure}
	% \begin{subfigure}[t]{0.23\textwidth}
		\includegraphics[width=		0.48\linewidth]{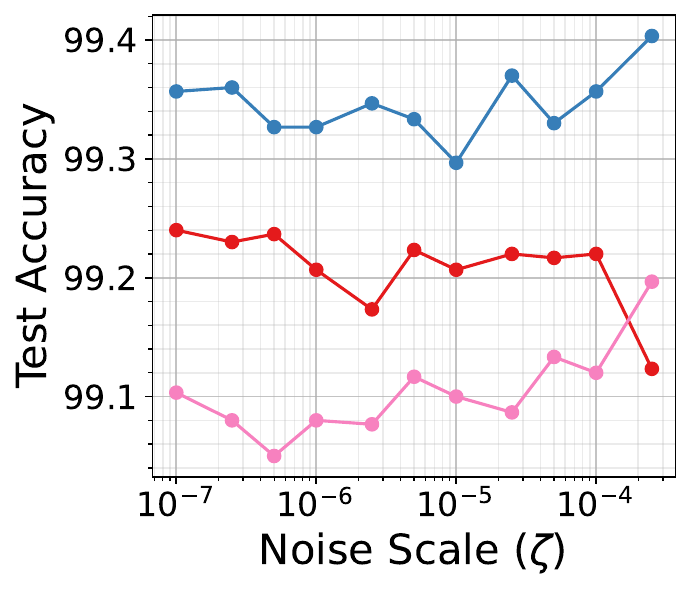}
        % \vspace{-1.75em}
        % \caption{}
        % \label{fig:mnist_vgg11}
	% \end{subfigure}
	% \begin{subfigure}[t]{0.23\textwidth}
		\includegraphics[width=		0.48\linewidth]{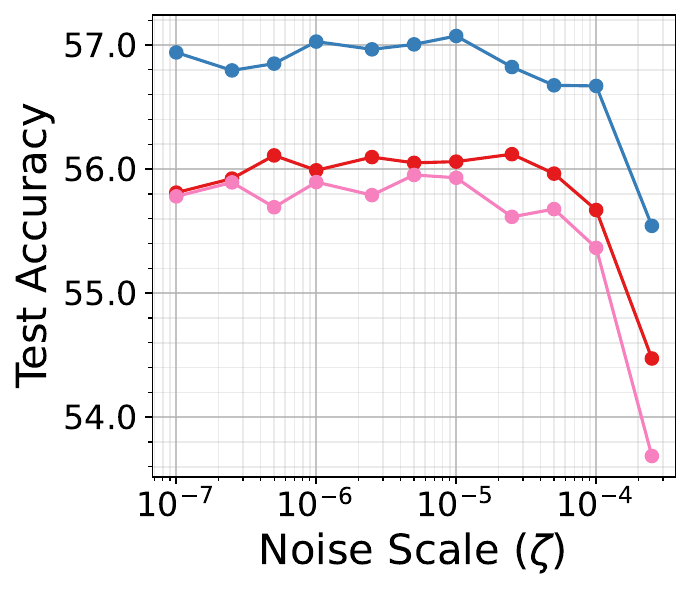}
        % \vspace{-1.75em}
        % \caption{}
        % \label{fig:cifar10_fcn}
	% \end{subfigure}
	% \begin{subfigure}[t]{0.23\textwidth}
		\includegraphics[width=		0.48\linewidth]{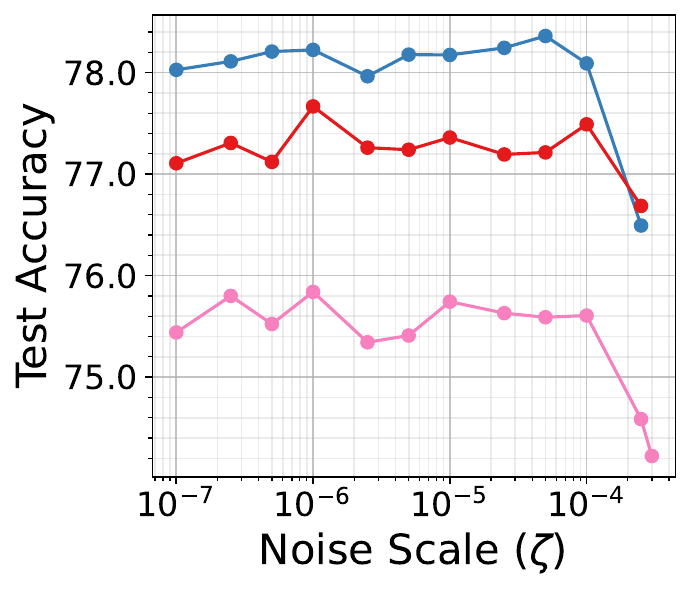}
        % \vspace{-1.75em}
        % \caption{}
        % \label{fig:cifar10_vgg11}
	% \end{subfigure}
    % \vspace{-1em}
    \captionof{figure}{Comparing SGD with and without momentum, using the following model-dataset combinations: (top left) MNIST - FCN, (top right) MNIST - CNN, (bottom left) CIFAR-10 - FCN, and (bottom right) CIFAR-10 - CNN.}
    \label{fig:mnist-cifar10}
\end{minipage}

The surrogate loss is chosen as $\ell(\theta,x) = |\theta^\top x|$. Experiments in each configuration was repeated for $250$ different random seeds. For each replication, the size of the test set was set to $10000$, sampled independently from the training set. The generalization gap was computed to be the difference between test and training  losses. To mitigate numerical issues, $\zeta$ parameter was selected so that the overall added noise scale was identical across runs with and without momentum, which implies an additional scaling of $1/\eta$ for SGDm.

The results are presented in Figure~\ref{fig:synth-data}, where we plot the median of the generalization gap computed for all replications in each setting, with the shaded area representing the interquartile range. The results are clearly in support of our hypothesis. Across all selections of variance, dimension, and tail index, SGD surpasses SGDm in having a smaller generalization gap. Furthermore, we observe that the generalization error decreases as we decrease $\gamma$, which is also in line with our theory. Having confirmed our theoretical predictions in synthetic data, we now investigate if our conclusions apply beyond our theoretical setting.

%\vspace{-0.2in}
%%%%%%%%%%%%%%%%%%%%%%%%%%%%%%
\paragraph{MNIST and CIFAR-10.}
We demonstrate our results on frequently used image classification datasets MNIST \cite{lecunnMNIST1998} and CIFAR-10 \cite{krizhevskyImageNetClassification2017}. We test our hypothesis under training with two different architectures: a fully connected network (FCN) and a convolutional neural network (CNN). The FCN includes one hidden layer of width $5000$ with ReLU activation, while the CNN is a slightly simplified version of the VGG11 architecture \cite{simonyanVeryDeep2015}, both trained with cross-entropy loss. The training was similar to above, where we use an SGD with or without momentum, with a constant learning rate of 0.05. The models were trained until $100\%$ accuracy, in rare cases where a model does not reach $100\%$ training accuracy due to added noise, we include the model in our results if it has a final training accuracy of $>97.5\%$. All the results are the average of $3$ random seeds. See Appendix~\ref{appendix:experimental_details}
for further details regarding our setup.

The results are presented in Figure~\ref{fig:mnist-cifar10}. Given equal (and in rare cases near-equal) training accuracy, the differences between test accuracy are equivalent to generalization gap. Here we again see a clear advantage for SGD in comparison to SGDm, where the performance of SGDm gracefully degrades for increasing $\gamma$; hence, providing another clear support towards our theoretical predictions.

%%%%%%%%%%%%%%%%%%%%%%%%%%%%%%%%%
\section{Conclusion}
In this work, we established generalization bounds for SGD with momentum (SGDm) under heavy-tailed noise through the lens of uniform stability. Analyzing the continuous-time limit of SGDm as a Lévy-driven SDE, we first derived stability bounds for 
a class of non-convex loss functions. Remarkably, our results showed that for quadratic losses SGDm admits a generalization bound that is always worse than that of SGD without momentum, highlighting that the interaction of heavy tails and momentum can be harmful for generalization. Extending our analysis to discrete-time, we then developed a novel discretization error bound, showing that with appropriate step-sizes, the discrete dynamics retain the SDE’s generalization properties. Finally, we validated our findings on quadratic problems and neural networks. \looseness=-1

\textbf{Limitations and future work.} Our results illustrate that momentum worsens generalization under heavy-tailed noise. However, at this stage we are not able to explain \emph{why} this happens and we leave the finer understanding of the interaction between heavy tails and momentum for future work. On the other hand, \citet{liu2023breaking}, showed that with gradient clipping, momentum can achieve faster rates on the training error under heavy-tailed noise. The link between convergence speed and the generalization error is also yet to be understood.  \looseness=-1

\section*{Acknowledgements}
A K M Rokonuzzaman Sonet and Lingjiong Zhu are partially supported by the NSF grant DMS-2053454.
Mert G\"urb\"uzbalaban’s research is supported in part by the Office of Naval Research Award
Number N00014-24-1-2628. Umut \c{S}im\c{s}ekli's research is partially supported by the European Research Council Starting Grant
DYNASTY – 101039676 and the management
of Agence Nationale de la Recherche as part of the “France 2030” program, reference
ANR-23-IACL-0008 (PR[AI]RIE-PSAI).
Lingjiong Zhu is also partially supported by the NSF grant DMS-2208303.

\bibliography{refs}

%%%%%%%%%%%%%%%%%%%%%%%%%%%%%%%%%%%%%%%%%%%%%%%%%%%%%%%%%%%%%%%%%%%%%%%%%%%%%%%
%%%%%%%%%%%%%%%%%%%%%%%%%%%%%%%%%%%%%%%%%%%%%%%%%%%%%%%%%%%%%%%%%%%%%%%%%%%%%%%
% APPENDIX
%%%%%%%%%%%%%%%%%%%%%%%%%%%%%%%%%%%%%%%%%%%%%%%%%%%%%%%%%%%%%%%%%%%%%%%%%%%%%%%
%%%%%%%%%%%%%%%%%%%%%%%%%%%%%%%%%%%%%%%%%%%%%%%%%%%%%%%%%%%%%%%%%%%%%%%%%%%%%%%
\newpage
\appendix
\onecolumn

\begin{center}
\Large \bf Appendix
\end{center}

The Appendix is organized as follows.
\begin{itemize}
\item
In Appendix~\ref{proof:main:results}, we provide
the proofs of the results in Section~\ref{sec:main:results}
for the general loss function.
\item 
In Appendix~\ref{appendix_squareloss}, we provide
the proofs of the results in Section~\ref{section_mainsquareloss} for the quadratic loss function.
\item 
In Appendix~\ref{appendix:discrete}, we provide
the proofs of the results in Section~\ref{sec:discrete}
for the discrete-time analysis.
\item 
In Appendix~\ref{appendix:experimental_details}, we provide
additional details regarding our experiments presented in Section~\ref{sec:experiments}.
\end{itemize}

%%%%%%%%%%%%%%%%%%%%%%%%%%%%%%%%%%%%%%%%%%%%%%%%%%%%%%%%%%%%%%%%%%%%%%%%%%%%%%%
\section{Proofs of the Results in Section~\ref{sec:main:results}}\label{proof:main:results}

\subsection{Notations}
\label{section_notationsinappendix}

Let us first introduce some notations
that will be used in the proofs of results in 
Section~\ref{sec:main:results}.

\begin{itemize}
\item
Let $\{P_t:t\geq 0\}$ and $\{\widehat{P}_t:t\geq 0\}$ denote respectively the semigroups associated with the process $\left\{\brac{\theta_t,v_t}:t\geq 0\right\}$ given in \eqref{sde_generalloss} and the process $\left\{\brac{\hat{\theta}_t,\hat{v}_t}:t\geq 0\right\}$ given in \eqref{sde_generalloss_differentdataset}. 
\item 
For the process $\left\{\brac{\theta_t,v_t}:t\geq 0\right\}$ given in \eqref{sde_generalloss} and the process $\left\{\brac{\hat{\theta}_t,\hat{v}_t}:t\geq 0\right\}$ given in \eqref{sde_generalloss_differentdataset}, we write
\begin{equation*}
\brac{\theta_t,v_t}=\brac{\theta_t^{w,y},v_t^{w,y}},
\qquad
\brac{\hat{\theta}_t,\hat{v}_t}=\brac{\hat{\theta}_t^{w,y},\hat{v}_t^{w,y}},
\end{equation*}
for any $t\geq 0$ to emphasize the dependence on the initialization $\brac{\theta_0,v_0}=\brac{\hat{\theta}_0,\hat{v}_0}=(w,y)$.
\item
The operator norm $\norm{\cdot}_{\operatorname{op}}$ of a linear map $T:\mathbb{R}^{d}\to\mathbb{R}^{d}$ is defined as 
\begin{align*}
\norm{T}_{\operatorname{op}}:=\sup_{v\in\mathbb{R}^{d}:\norm{v}=1}\norm{Tv}. 
\end{align*}
\item 
Per \citep[Section 2.1]{bookcartan1983differential}, any differentiable function $f:\R^d\to\R$ and a choice of $x\in\R^d$ induces a linear map $\nabla f(x):\R^d\to \R$. 
This allows us to define 
the operator norm
\begin{align*}
   \norm{ \nabla f(x)}_{\operatorname{op}}:=\sup_{y\in \R^d: \norm{y}= 1} \norm{\inner{ \nabla f(x), y}},
\end{align*}
and the supremum norm 
\begin{align*}
    \norm{ \nabla f(x)}_{\operatorname{op},\infty}=\sup_{x\in\R^d}\sup_{y\in \R^d: \norm{y}= 1} \norm{\inner{ \nabla f(x), y}}. 
\end{align*}
If $f$ is twice-differentiable, similar definitions of norms can be introduced to $\nabla^2 f(x)$ as a linear map: $\R^{d}\times \R^d\to\R$. A formal introduction to higher derivatives can be found in \citep[Section 5.1]{bookcartan1983differential}. 

\item
For any two real numbers $x,y$, we denote $x\vee y:=\max\{x,y\}$.
\item
Denote $\operatorname{Lip}(1)$ the space of $1$-Lipschitz functions from $\mathbb{R}^{2d}$ to $\mathbb{R}$.
\end{itemize}

%%%%%%%%%%%%%%%%%%%%%%%%%%%%%%%%%%%%%%%%%%%%
\subsection{Proof of Theorem~\ref{theorem_wassersteinbound_mainsection}}
\label{section_proofofmaintheorem}

\begin{theorem}
\label{theorem_wassersteinbound_continuousdynamics_appendix}
[Restatement of Theorem~\ref{theorem_wassersteinbound_mainsection}]\label{theorem_wassersteinbound_appendix}
    Assume Conditions~\ref{cond_gammaandbeta},~\ref{cond_pseudolipschitz}, and~\ref{cond_allthelambdas}. The following two statements hold.
    \begin{enumerate}%[label={\roman*})]
        \item For every positive integer $N$ and $\eta\in (0,1)$, 
\begin{align*}
&\mathcal{W}_1\brac{\mathrm{Law}\brac{\theta^{w,y}_{N\eta},v^{w,y}_{N\eta}},\mathrm{Law}\brac{\hat{\theta}^{w,y}_{N\eta},\hat{v}^{w,y}_{N\eta}}}\\
&\leq (1+\gamma)\Bigg[C_2\brac{1+\gamma+\norm{\nabla f(0,0)}+K_1+\frac{K_2}{n}\sum_{i=1}^n \norm{x_i} }\\
&\qquad\qquad\qquad\qquad\qquad\cdot \brac{1+\widehat{P}(w,y)+\mathcal{M}(w,y)}+\frac{K_2}{n}\sum_{i=1}^n \norm{x_i}+\zeta\E{\norm{L_1}}\Bigg]\eta^{1+1/{\alpha}}\\
           &\qquad+(1+\gamma)\Bigg[C_2\brac{1+\gamma+\norm{\nabla f(0,0)}+K_1+\frac{K_2}{n}\sum_{i=1}^n \norm{\hat{x}_i} }\\
           &\qquad\qquad\qquad\qquad\qquad\cdot \brac{1+\widehat{P}(w,y)+\widehat{\mathcal{M}}(w,y)}+\frac{K_2}{n}\sum_{i=1}^n \norm{\hat{x}_i}+\zeta\E{\norm{L_1}}\Bigg]\eta^{1+1/{\alpha}}\\
    &\qquad\quad+K_2 \rho(X_n,\widehat{X}_n)\Bigg[2C_2 +2C_2\widehat{P}(w,y)
+C_2\mathcal{M}(w,y)+C_2\widehat{\mathcal{M}}(w,y)+1\Bigg]\eta\\
&\quad+ \frac{C_*}{\lambda_*}\Bigg\{(1+\gamma)\Bigg[C_2\left(1+\gamma+\norm{\nabla f(0,0)}+K_1+\frac{K_2}{n}\sum_{j=1}^n \norm{x_j} \right)\\
&\qquad\qquad\qquad\qquad\qquad\cdot \brac{1+\widehat{\mathcal{P}}(w,y) +\mathcal{M}(w,y)}+\frac{K_2}{n}\sum_{j=1}^n \norm{x_j}+\zeta\E{\norm{L_1}}\Bigg]\eta^{1/{\alpha}}
    \\
    &\qquad\quad+ (1+\gamma)\Bigg[C_2\left(1+\gamma+\norm{\nabla f(0,0)}+K_1+\frac{K_2}{n}\sum_{j=1}^n \norm{\hat{x}_j} \right)\\
    &\qquad\qquad\qquad\qquad\qquad\cdot \brac{1+\widehat{\mathcal{P}}(w,y) +\widehat{\mathcal{M}}(w,y) }+\frac{K_2}{n}\sum_{j=1}^n \norm{\hat{x}_j}+\zeta\E{\norm{L_1}}\Bigg]\eta^{1/{\alpha}}
           \\  &\qquad\qquad\quad+K_2\rho(X_n,\widehat{X}_n)\Bigg[2C_2+2C_2\widehat{\mathcal{P}}(w,y)+C_2\mathcal{M}(w,y)+C_2\widehat{\mathcal{M}}(w,y)+1\Bigg]\Bigg\}. 
\end{align*}
The constants $\gamma,K_1,K_2$ and the function $\rho$ are specified in Condition~\ref{cond_pseudolipschitz} and Condition~\ref{cond_gammaandbeta}. $C_2$ is defined in \eqref{def_C2}. $C_*$ and $\lambda_*$ are specified in Lemma~\ref{lemma_wassersteindecaybyBaoWang}. The functions $\mathcal{M}(\cdot,\cdot),\widehat{\mathcal{M}}(\cdot,\cdot)$ are respectively defined as
\begin{align}
\label{def_M}
   &\mathcal{M}(w,y)
   \nonumber
   \\
   &:= \sqrt{\max_{1\leq i\leq n}\abs{f(0,x_i)}}
   \nonumber
   \\
   &\hspace{0.5em}+\sqrt{\frac{K_2\max_{1\leq i\leq n}\norm{x_i}+1}{2}}C_2\brac{1+\norm{w}+\norm{y}+\max_{1\leq i\leq n}\sqrt{\abs{f\brac{w,\hat{x}_i}}}}\nonumber\\
   &\hspace{1em}+\sqrt{K_2\max_{1\leq i\leq n}\norm{x_i} +\norm{\nabla f(0,0) }}\sqrt{C_2}\brac{1+\sqrt{\norm{w}}+\sqrt{\norm{y}}+\max_{1\leq i\leq n}\sqrt[4]{\abs{f\brac{w,\hat{x}_i}}}},
\end{align}
and
\begin{align}
\label{def_hatM}
     &\widehat{\mathcal{M}}(w,y)\nonumber
     \\
     &:= \sqrt{\max_{1\leq i\leq n}\abs{f(0,\hat{x}_i)}}\nonumber
     \\
     &\hspace{0.5em}+\sqrt{\frac{K_2\max_{1\leq i\leq n}\norm{\hat{x}_i}+1}{2}}C_2\brac{1+\norm{w}+\norm{y}+\max_{1\leq i\leq n}\sqrt{\abs{f\brac{w,\hat{x}_i}}}}\nonumber\\
     &\hspace{1em}+\sqrt{K_2\max_{1\leq i\leq n}\norm{\hat{x}_i} +\norm{\nabla f(0,0) }}\sqrt{C_2}\brac{1+\sqrt{\norm{w}}+\sqrt{\norm{y}}+\max_{1\leq i\leq n}\sqrt[4]{\abs{f\brac{w,\hat{x}_i}}}},
\end{align}
and $\widehat{\mathcal{P}}(\cdot,\cdot)$ is defined as
\begin{align}
\label{def_hatP}
    \widehat{P}(w,y):=C_2\brac{1+\norm{w}+\norm{y}+\max_{1\leq i\leq n}\sqrt{\abs{f\brac{w,\hat{x}_i}}}}. 
\end{align}

\item If $\mu,\hat{\mu}$ denote respectively the invariant measures of the process $\{\brac{\theta_t,v_t}:t\geq 0\}$ and the process $\left\{\brac{\hat{\theta}_t,\hat{v}_t}:t\geq 0\right\}$, then
\begin{align*}
    \mathcal{W}_1\brac{\mu,\hat{\mu}}
    &\leq \rho(X_n,\widehat{X}_n)\cdot \widetilde{C}, 
\end{align*}
where $\widetilde{C}$ is defined as 
\begin{align*}
&\widetilde{C}:=\frac{C_*K_2}{\lambda_*}\Bigg\{2C_2+2C_2^2\brac{1+\sqrt{\hat{m}_{1}}}+C_2\Bigg(\sqrt{m_{1}}+\sqrt{\frac{K_2m_{2}+1}{2}}C_2\brac{1+\sqrt{\hat{m}_{1}}}\nonumber\\
   &\quad\qquad\qquad\qquad+\sqrt{K_2m_{2} +\norm{\nabla f(0,0) }}\sqrt{C_2}\brac{1+\sqrt[4]{\hat{m}_{1}}}\Bigg)
   \\
   &\qquad\qquad+C_2 \Bigg(\sqrt{\hat{m}_{1}}+\sqrt{\frac{K_2\hat{m}_{2}+1}{2}}C_2\brac{1+\sqrt{\hat{m}_{1}}}\nonumber\\
     &\qquad\qquad\qquad\qquad\qquad+\sqrt{K_2\hat{m}_{2} +\norm{\nabla f(0,0) }}\sqrt{C_2}\brac{1+\sqrt[4]{\hat{m}_{1}}} \Bigg)+1\Bigg\}, 
\end{align*}
where
\begin{align}
&m_{1}:=\max_{1\leq i\leq n}|f(0,x_{i})|, 
\quad\hat{m}_{1}:=\max_{1\leq i\leq n}|f(0,\hat{x}_{i})|,
\\
&m_{2}:=\max_{1\leq i\leq n}\|x_{i}\|,
\quad\hat{m}_{2}:=\max_{1\leq i\leq n}\|\hat{x}_{i}\|,
\end{align}
and $\rho(X_n,\widehat{X}_{n})$ is defined in \eqref{defn:rho} and $K_2$ is defined in Condition~\ref{cond_pseudolipschitz}, and $C_2$ is defined as 
\begin{align}
\label{def_C2mainsection}
   C_2&=\min \left\{\sqrt{ \frac{r^2-r_0^2}{8}}, \sqrt{\frac{r^2-r_0^2}{8r^2}} \right\}^{-1}
\cdot\max \left\{\sqrt{\beta},\sqrt{\beta\lambda_4+r^2},1+\sqrt{\beta\lambda_5+1}+\frac{C_0}{c_0} \right\}, 
\end{align}
where
$r=\brac{\frac{\gamma^2}{2}+\frac{\gamma}{2} \sqrt{\beta(\lambda_1-\lambda_2\lambda_4)} -\beta\lambda_4}^{1/2}$ and $r_0=\frac{\gamma}{2}$. Moreover, the constants $\lambda_*$ and $C_*$ are given by
\begin{align}
\label{def_C*lambda*mainsection}
     \lambda_*=\min \left\{ \frac{c_0\epsilon}{1+2\epsilon},\frac{3c_1\brac{1-\frac{1}{\alpha}}\gamma}{8(1+c_1)} \right\};\quad
     C_*=  c_0^2\sqrt{2d}. 
 \end{align}
In addition, the constants $\lambda_4,\lambda_5$ are from Condition~\ref{cond_allthelambdas}. Meanwhile, the constants $c_0$ and $C_0$ (which are from Lemma~\ref{lemma_lyapunov} in the Appendix) depend only on the parameters $\alpha,\gamma,\zeta,\beta$, dimension $d$ plus $\lambda_i, 1\leq i\leq 5$ in Condition~\ref{cond_allthelambdas}, and does not depend on the dataset $X_n$.
    \end{enumerate}
\end{theorem}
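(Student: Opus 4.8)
The plan is to prove this in two stages: first establish the one-step/finite-time Wasserstein bound of part~(1), then pass to the limit $N\to\infty$ using the exponential Wasserstein contraction from \cite{jianwangbao2022coupling} to obtain the bound on $\mathcal{W}_1(\mu,\hat\mu)$ in part~(2). For part~(1), the natural strategy is a \emph{synchronous coupling} of the two SDEs \eqref{sde_generalloss} and \eqref{sde_generalloss_differentdataset} driven by the \emph{same} L\'evy process $L_t$, combined with a discretization/interpolation argument in the spirit of the probability-approximation framework of \cite{chenxu23amarkovapproach}. Concretely, I would introduce the semigroups $P_t,\widehat P_t$ and write the difference $\mathrm{Law}(\theta_{N\eta},v_{N\eta})-\mathrm{Law}(\hat\theta_{N\eta},\hat v_{N\eta})$ telescopically over the $N$ blocks of length $\eta$, so that each block contributes (i) a ``frozen-coefficient'' local discretization error, controlled by $\eta^{1+1/\alpha}$ after using moment bounds on $L_1$ and on the iterates, and (ii) a genuine dataset-mismatch term, controlled by $K_2\rho(X_n,\widehat X_n)$ times $\eta$ via Condition~\ref{cond_pseudolipschitz} applied to $\nabla\widehat F(\cdot,X_n)-\nabla\widehat F(\cdot,\widehat X_n)$. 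Summing the $N$ local errors and applying the contraction factor geometric series turns $N\eta^{1+1/\alpha}$ into $\eta^{1/\alpha}$ and $N\eta\rho$ into $\rho/\lambda_*$ (up to the constant $C_*$), which is exactly the structure appearing in the displayed bound.

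The ingredients I would assemble before the telescoping argument are: (a) a Lyapunov/moment estimate for $(\theta_t,v_t)$ and $(\hat\theta_t,\hat v_t)$ — this is where the auxiliary functions $\mathcal{M},\widehat{\mathcal{M}},\widehat P$ and the constant $C_2$ enter; they are built from the Lyapunov function of \cite{jianwangbao2022coupling} (with the explicit $r=(\gamma^2/2+\tfrac{\gamma}{2}\sqrt{\beta(\lambda_1-\lambda_2\lambda_4)}-\beta\lambda_4)^{1/2}$, $r_0=\gamma/2$ coming from Condition~\ref{cond_allthelambdas}), evaluated at the deterministic initialization $(w,y)$, and control quantities like $\mathbb{E}\|\theta_t\|,\mathbb{E}\|v_t\|,\mathbb{E}\sqrt{|f(\theta_t,x_i)|}$ uniformly in $t$; (b) the exponential Wasserstein decay $\mathcal{W}_1(\delta_{(w,y)}\widehat P_t,\hat\mu)\le C_*e^{-\lambda_* t}(1+\widehat P(w,y))$, i.e.\ Corollary~1.4 of \cite{jianwangbao2022coupling}, with $\lambda_*,C_*$ as in \eqref{def_C*lambda*mainsection}; and (c) basic moment facts for the $\alpha$-stable increment, in particular $\mathbb{E}\|L_1\|<\infty$ for $\alpha>1$ and the scaling $\mathbb{E}\|L_\eta\|=\eta^{1/\alpha}\mathbb{E}\|L_1\|$. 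With these in hand, a single block estimate — bounding $\mathcal{W}_1$ between the true flow over $[k\eta,(k+1)\eta]$ and a comparison process with coefficients frozen at time $k\eta$ — produces the $\eta^{1+1/\alpha}$ rate; the degeneracy of the noise (only the $v$-component is driven by $L_t$) is handled because the drift couples $v$ into $\theta$ over the block, and the pseudo-Lipschitz bound on $\nabla f$ keeps all error terms linear in the iterate norms.

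For part~(2), once part~(1) gives $\mathcal{W}_1(\mathrm{Law}(\theta_{N\eta},v_{N\eta}),\mathrm{Law}(\hat\theta_{N\eta},\hat v_{N\eta}))\le A_N(\eta)$ with $A_N(\eta)$ the displayed expression, I would first let $N\to\infty$ with $\eta$ fixed, using that both laws converge in $\mathcal{W}_1$ to their invariant measures $\mu,\hat\mu$ (ergodicity part of Corollary~1.4 in \cite{jianwangbao2022coupling}) and that the $\eta^{1+1/\alpha}$ and the block-mismatch terms, multiplied by $N$ and then discounted by the geometric factor $\sum_{k}e^{-\lambda_* k\eta}\asymp \tfrac{1}{\lambda_*\eta}$, stabilize; this replaces $N\eta^{1+1/\alpha}\to \eta^{1/\alpha}/\lambda_*$ up to $C_*$, and $K_2N\eta\,\rho\to K_2\rho/\lambda_*$ up to $C_*$. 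Then I would send $\eta\downarrow 0$ to kill the leftover $\eta^{1/\alpha}$ terms entirely, leaving only the $\rho(X_n,\widehat X_n)$-proportional piece. At this point I would simplify the stationary versions of $\mathcal{M},\widehat{\mathcal{M}},\widehat P$: at stationarity the initialization-dependent norms $\|w\|,\|y\|$ and $\sqrt{|f(w,\hat x_i)|}$ are replaced by their stationary analogues, which are themselves controlled by $C_2(1+\sqrt{\hat m_1})$ via the Lyapunov bound, and $m_1,\hat m_1$ (resp.\ $m_2,\hat m_2$) replace $\max_i|f(0,x_i)|$ (resp.\ $\max_i\|x_i\|$); collecting constants yields precisely the stated $\widetilde C$.

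The main obstacle I anticipate is the uniform-in-time moment control needed to make the telescoping sum converge: naively, summing $N$ local errors of size $\eta^{1+1/\alpha}$ each carrying a factor that grows with $\mathbb{E}\|(\theta_{k\eta},v_{k\eta})\|$ would blow up unless those moments are bounded \emph{uniformly in $k$}. Securing this requires a careful Lyapunov argument under Condition~\ref{cond_allthelambdas} (the dissipativity, ensuring the drift pulls back) together with the sub-$\alpha$ moment integrability of the stable noise, and then feeding the resulting time-uniform moment bounds back into every local estimate — this is exactly the role of Lemma~\ref{lemma_lyapunov} and the constants $c_0,C_0$, and it is the delicate part because the noise is \emph{degenerate}, so the regularization that normally comes from ellipticity must instead be extracted from the hypoelliptic coupling of $v$ into $\theta$ over each block. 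Matching the resulting constants to the explicit $C_2$, $\lambda_*$, $C_*$, $\mathcal{M}$, $\widehat{\mathcal{M}}$, $\widehat P$ in the statement is then bookkeeping, but it is the kind of bookkeeping where the $\sqrt{\cdot}$ and $\sqrt[4]{\cdot}$ powers in $\mathcal{M}$ must be tracked exactly (they arise from bounding $\sqrt{|f(\theta,x)|}$-type quantities using the quadratic-growth consequence of Condition~\ref{cond_pseudolipschitz}).
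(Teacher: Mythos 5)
Your proposal matches the paper's proof in all essentials: a Lindeberg-type telescoping of the semigroups, $P_{N\eta}h-\widehat P_{N\eta}h=\sum_{i}\widehat P_{(i-1)\eta}(P_\eta-\widehat P_\eta)P_{(N-i)\eta}h$, a synchronous-coupling one-step estimate producing the $\eta^{1+1/\alpha}$ local error and the $K_2\rho(X_n,\widehat X_n)\eta$ dataset-mismatch term, the semigroup gradient decay from the Bao--Wang Wasserstein contraction to sum the geometric series into $C_*/(\lambda_*\eta)$, and uniform-in-time Lyapunov moment bounds exactly as in Lemmas~\ref{lemma_lyapunov} and~\ref{lemma_uniformmomentbound_continuousdynamics}. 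The only cosmetic difference is in part~(2), where the paper exploits that $\mathcal{W}_1(\mu,\hat\mu)$ is independent of the initialization and simply sets $(w,y)=(0,0)$ (and lets $\eta\to0$), rather than replacing the initialization-dependent quantities by stationary analogues.
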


\begin{proof}
The proof is inspired by the proof strategies in \cite{generalloss}[Proof of Theorem 3.3], \cite{chenxu2023euler}[Proof of Theorem 1.2] (see also \cite{chenxu23amarkovapproach}). 

To prove Part i), we start with a decomposition of the semigroups that is in the spirit of the classical Lindeberg's principle (also known as Lindeberg exchange method):
\begin{align*}
    &P_{N\eta}h(w,y)-\widehat{P}_{N\eta}h(w,y)=\sum_{i=1}^{N}\widehat{P}_{(i-1)\eta}\brac{P_\eta-\widehat{P}_\eta }P_{(N-i)\eta}h(w,y).
\end{align*}
While the above decomposition appears simple, it provides a powerful way to obtain some significant results about probabilistic approximation of Markov process. The idea was first proposed in \cite{chenxu23amarkovapproach} and has been successfully applied to various probabilistic approximation problems in \cite{chenxu2023euler,chenxufollowuppaper, generalloss,jin2024approximation,deng2024total}. 

Based on the above equation, we can write
\begin{align*}
        &\sup_{h\in \operatorname{Lip}(1)} \abs{P_{N\eta}h(w,y)-\widehat{P}_{N\eta}h(w,y) }\\
        &\leq \sup_{h\in \operatorname{Lip}(1)} \abs{\widehat{P}_{(N-1)\eta}\brac{P_\eta-\widehat{P}_\eta }h(w,y) }+\sup_{h\in \operatorname{Lip}(1)} \sum_{i=1}^{N-1}\abs{\widehat{P}_{(i-1)\eta}\brac{P_\eta-\widehat{P}_\eta }P_{(N-i)\eta}h(w,y) }\\
        &=:\mathcal{A}_1+\mathcal{A}_2. 
    \end{align*}

Regarding the term $\mathcal{A}_1$, Lemma~\ref{lemma_onestepestimate} implies that
\begin{align*}
  &\mathcal{A}_1\leq \norm{\nabla h}_{\operatorname{op},\infty}  \Bigg\{ (1+\gamma)\Bigg[C_2\brac{1+\gamma+\norm{\nabla f(0,0)}+K_1+\frac{K_2}{n}\sum_{i=1}^n \norm{x_i} }\\
           &\qquad\qquad\cdot \brac{1+\E{\norm{\hat{\theta}^{w,y}_{(N-1)\eta}}}+\E{\norm{\hat{v}^{w,y}_{(N-1)\eta}}}+\max_{1\leq i\leq n}\E{\sqrt{f\brac{\hat{\theta}^{w,y}_{(N-1)\eta},x_i }}}}\\
           &\hspace{23em}+\frac{K_2}{n}\sum_{i=1}^n \norm{x_i}+\zeta\E{\norm{L_1}}\Bigg]\eta^{1+1/{\alpha}}\\
           &\qquad\quad+(1+\gamma)\Bigg[C_2\brac{1+\gamma+\norm{\nabla f(0,0)}+K_1+\frac{K_2}{n}\sum_{i=1}^n \norm{\hat{x}_i} }\\
           &\qquad\qquad\cdot \brac{1+\E{\norm{\hat{\theta}^{w,y}_{(N-1)\eta}}}+\E{\norm{\hat{v}^{w,y}_{(N-1)\eta}}}+\max_{1\leq i\leq n}\E{\sqrt{f\brac{\hat{\theta}^{w,y}_{(N-1)\eta},\hat{x}_i }}}}\\
           &\hspace{23em}+\frac{K_2}{n}\sum_{i=1}^n \norm{\hat{x}_i}+\zeta\E{\norm{L_1}}\Bigg]\eta^{1+1/{\alpha}}\\
    &\qquad\qquad+K_2 \rho(X_n,\widehat{X}_n)\Bigg[C_2\Bigg(2+2\E{\norm{\hat{\theta}^{w,y}_{(N-1)\eta}}}+2\E{\norm{\hat{v}^{w,y}_{(N-1)\eta}}}
    \\
    &\qquad\qquad\qquad+\max_{1\leq i\leq n}\E{\sqrt{\abs{f\brac{\hat{\theta}_{(N-1)\eta},x_i}}}}+\max_{1\leq i\leq n}\E{\sqrt{\abs{f\brac{\hat{\theta}_{(N-1)\eta},\hat{x}_i}}}}\Bigg)+1\Bigg]\eta\Bigg\}.
\end{align*}
Combining this with the fact that $h\in \operatorname{Lip}(1)$ and the moment estimates in Lemma~\ref{lemma_uniformmomentbound_continuousdynamics}, Lemma~\ref{lemma_momentbound_squarerootf} to get
\begin{align*}
\mathcal{A}_1&\leq
(1+\gamma)\Bigg[C_2\brac{1+\gamma+\norm{\nabla f(0,0)}+K_1+\frac{K_2}{n}\sum_{i=1}^n \norm{x_i} }\\
           &\qquad\qquad\cdot \brac{1+\widehat{P}(w,y)+\mathcal{M}(w,y)}+\frac{K_2}{n}\sum_{i=1}^n \norm{x_i}+\zeta\E{\norm{L_1}}\Bigg]\eta^{1+1/{\alpha}}\\
           &\qquad+(1+\gamma)\Bigg[C_2\brac{1+\gamma+\norm{\nabla f(0,0)}+K_1+\frac{K_2}{n}\sum_{i=1}^n \norm{\hat{x}_i} }\\
           &\qquad\qquad\cdot \brac{1+\widehat{P}(w,y)+\widehat{\mathcal{M}}(w,y)}+\frac{K_2}{n}\sum_{i=1}^n \norm{\hat{x}_i}+\zeta\E{\norm{L_1}}\Bigg]\eta^{1+1/{\alpha}}\\
    &\qquad\quad+K_2 \rho(X_n,\widehat{X}_n)\Bigg[2C_2 +2C_2\widehat{P}(w,y)
+C_2\mathcal{M}(w,y)+C_2\widehat{\mathcal{M}}(w,y)+1\Bigg]\eta,
\end{align*}
where $\mathcal{M}(w,y),\widehat{\mathcal{M}}(w,y)$ and $\widehat{P}(w,y)$ are respectively defined in \eqref{def_M}, \eqref{def_hatM} and \eqref{def_hatP}.  

Regarding the term $\mathcal{A}_2$, Lemma~\ref{lemma_onestepestimate} implies that for any $h\in \operatorname{Lip}(1)$, we have 
\begin{align*}
  &\abs{ \brac{P_\eta-\widehat{P}_\eta }P_{(N-i)\eta}h(w,y)}
  \\
&\leq \norm{\nabla P_{(N-i)\eta} h}_{\operatorname{op},\infty}\Bigg\{(1+\gamma)\Bigg[C_2\Big(1+\gamma+\norm{\nabla f(0,0)}+K_1+\frac{K_2}{n}\sum_{j=1}^n \norm{x_j} \Big)\\
           &\hspace{0.5em}\cdot \brac{1+\norm{w}+\norm{y}+\max_{1\leq j\leq n}\sqrt{\abs{f\brac{w,x_j}}}}+\frac{K_2}{n}\sum_{i=1}^n \norm{x_i}+\zeta\E{\norm{L_1}}\Bigg]\eta^{1+1/{\alpha}}
    \\
    &\qquad+ (1+\gamma)\Bigg[C_2\Big(1+\gamma+\norm{\nabla f(0,0)}+K_1+\frac{K_2}{n}\sum_{j=1}^n \norm{\hat{x}_j} \Big)\\
           &\hspace{0.5em}\cdot \brac{1+\norm{w}+\norm{y}+\max_{1\leq j\leq n}\sqrt{\abs{f\brac{w,\hat{x}_j}}}}+\frac{K_2}{n}\sum_{j=1}^n \norm{\hat{x}_j}+\zeta\E{\norm{L_1}}\Bigg]\eta^{1+1/{\alpha}}
           \\
           &\qquad\qquad+K_2 \rho(X_n,\widehat{X}_n)\Bigg[C_{2}\bigg(2+2\norm{w}+2\norm{y}+\max_{1\leq j\leq n}\sqrt{\abs{f\brac{w,x_j}}}\\
           &\hspace{19em}+\max_{1\leq j\leq n}\sqrt{\abs{f\brac{w,\hat{x}_j}}}\bigg)+1\Bigg]\eta\Bigg\}. 
\end{align*}
Moreover, we know from Lemma~\ref{lemma_semigroupgradiateestimate} that 
\begin{align*}
    \norm{\nabla P_{(N-i)\eta} h}_{\operatorname{op},\infty}\leq \norm{\nabla h}_{\operatorname{op},\infty}   C_* \exp\brac{-\lambda_*(N-i)\eta},
\end{align*}
so that 
\begin{align*}
  \mathcal{A}_2
  &\leq \sum_{i=1}^{N-1} C_* \exp\brac{-\lambda_*(N-i)\eta}\Bigg\{(1+\gamma)\Bigg[C_2\Big(1+\gamma+\norm{\nabla f(0,0)}+K_1+\frac{K_2}{n}\sum_{j=1}^n \norm{x_j} \Big)\\
           &\qquad\cdot \brac{1+\E{\norm{\hat{\theta}^{w,y}_{(N-i)\eta}}}+\E{\norm{\hat{v}^{w,y}_{(N-i)\eta}}} +\max_{1\leq i\leq n}\E{\sqrt{f\brac{\hat{\theta}^{w,y}_{(N-i)\eta},x_i }}}}\\
           &\hspace{20em}+\frac{K_2}{n}\sum_{j=1}^n \norm{x_j}+\zeta\E{\norm{L_1}}\Bigg]\eta^{1+1/{\alpha}}
    \\
    &\qquad\qquad\qquad+ (1+\gamma)\Bigg[C_2\Big(1+\gamma+\norm{\nabla f(0,0)}+K_1+\frac{K_2}{n}\sum_{j=1}^n \norm{\hat{x}_j} \Big)\\
           &\qquad\cdot \brac{1+\E{\norm{\hat{\theta}^{w,y}_{(N-i)\eta}}}+\E{\norm{\hat{v}^{w,y}_{(N-i)\eta}}} +\max_{1\leq i\leq n}\E{\sqrt{f\brac{\hat{\theta}^{w,y}_{(N-i)\eta},\hat{x}_i }}}}\\
           &\hspace{20em}+\frac{K_2}{n}\sum_{j=1}^n \norm{\hat{x}_j}+\zeta\E{\norm{L_1}}\Bigg]\eta^{1+1/{\alpha}}
           \\
           &\qquad\qquad+K_2 \rho(X_n,\widehat{X}_n)\Bigg[C_{2}\Bigg(2+2\E{\norm{\hat{\theta}^{w,y}_{(N-i)\eta}}}+2\E{\norm{\hat{v}^{w,y}_{(N-i)\eta}}}\\
           &\hspace{5em}+\max_{1\leq j\leq n}\E{\sqrt{f\brac{\hat{\theta}^{w,y}_{(N-i)\eta},x_j }}}+\max_{1\leq j\leq n}\E{\sqrt{f\brac{\hat{\theta}^{w,y}_{(N-i)\eta},\hat{x}_j }}}\Bigg)+1\Bigg]\eta\Bigg\}. 
\end{align*}

It follows from the uniform moment estimates in Lemma~\ref{lemma_uniformmomentbound_continuousdynamics} and Lemma~\ref{lemma_momentbound_squarerootf} that
\begin{align*}
  &\mathcal{A}_2\leq \sum_{i=1}^{N-1} C_* \exp\brac{-\lambda_*(N-i)\eta}\Bigg\{(1+\gamma)\Bigg[C_2\Big(1+\gamma+\norm{\nabla f(0,0)}+K_1+\frac{K_2}{n}\sum_{j=1}^n \norm{x_j} \Big)\\
           &\qquad\qquad\qquad\qquad\cdot \brac{1+\widehat{\mathcal{P}}(w,y) +\mathcal{M}(w,y)}+\frac{K_2}{n}\sum_{j=1}^n \norm{x_j}+\zeta\E{\norm{L_1}}\Bigg]\eta^{1+1/{\alpha}}
    \\
    &\qquad+ (1+\gamma)\Bigg[C_2\left(1+\gamma+\norm{\nabla f(0,0)}+K_1+\frac{K_2}{n}\sum_{j=1}^n \norm{\hat{x}_j} \right)\cdot \brac{1+\widehat{\mathcal{P}}(w,y) +\widehat{\mathcal{M}}(w,y) }\\
           &\qquad\qquad\qquad\qquad\qquad+\frac{K_2}{n}\sum_{j=1}^n \norm{\hat{x}_j}+\zeta\E{\norm{L_1}}\Bigg]\eta^{1+1/{\alpha}}
           \\
&\qquad\qquad\qquad+K_2\rho(X_n,\widehat{X}_n)\Bigg[2C_2+2C_2\widehat{\mathcal{P}}(w,y)+C_2\mathcal{M}(w,y)+C_2\widehat{\mathcal{M}}(w,y)+1\Bigg]\eta\Bigg\}.
\end{align*}
 
Since 
\begin{align*}
    \sum_{i=1}^{N-1} \exp\brac{-\lambda_*(N-i)\eta}\leq \exp\brac{-\lambda_*N}\int_1^N \exp\brac{\lambda_* \eta x}dx\leq \frac{1}{\lambda_* \eta}, 
\end{align*}
we can deduce that
\begin{align*}
  \mathcal{A}_2&\leq \frac{C_*}{\lambda_*}\Bigg\{(1+\gamma)\Bigg[C_2\Big(1+\gamma+\norm{\nabla f(0,0)}+K_1+\frac{K_2}{n}\sum_{j=1}^n \norm{x_j} \Big)\\
           &\qquad\qquad\cdot \brac{1+\widehat{\mathcal{P}}(w,y) +\mathcal{M}(w,y)}+\frac{K_2}{n}\sum_{j=1}^n \norm{x_j}+\zeta\E{\norm{L_1}}\Bigg]\eta^{1/{\alpha}}
    \\
    &\qquad\qquad+ (1+\gamma)\Bigg[C_2\Big(1+\gamma+\norm{\nabla f(0,0)}+K_1+\frac{K_2}{n}\sum_{j=1}^n \norm{\hat{x}_j} \Big)\\
           &\qquad\qquad\qquad\qquad\cdot \brac{1+\widehat{\mathcal{P}}(w,y) +\widehat{\mathcal{M}}(w,y) }+\frac{K_2}{n}\sum_{j=1}^n \norm{\hat{x}_j}+\zeta\E{\norm{L_1}}\Bigg]\eta^{1/{\alpha}}
           \\
     &\qquad\qquad\qquad+K_2\rho(X_n,\widehat{X}_n)\Bigg[2C_2+2C_2\widehat{\mathcal{P}}(w,y)+C_2\mathcal{M}(w,y)+C_2\widehat{\mathcal{M}}(w,y)+1\Bigg]\Bigg\}. 
\end{align*}

Combining the bounds on $\mathcal{A}_1$ and $\mathcal{A}_2$ yields the desired result in Part i). 

Part ii) is a simple consequence of Part i). Indeed, observe that
\begin{align*}
\mathcal{W}_1(\mu,\hat{\mu})
&\leq \mathcal{W}_1\brac{\mu,\mathrm{Law}\brac{\hat{\theta}^{w,y}_{N\eta},\hat{v}^{w,y}_{N\eta}}}
\\
&\qquad+\mathcal{W}_1\brac{\mathrm{Law}\brac{\theta^{w,y}_{N\eta},v^{w,y}_{N\eta}},\mathrm{Law}\brac{\hat{\theta}^{w,y}_{N\eta},\hat{v}^{w,y}_{N\eta}}}+\mathcal{W}_1\brac{\mathrm{Law}\brac{\theta^{w,y}_{N\eta},v^{w,y}_{N\eta}},\hat{\mu}}. 
\end{align*}
    We apply $\lim_{N\to\infty}$ on both hand sides in the above equation to get
    \begin{align*}
     \mathcal{W}_1(\mu,\hat{\mu})\leq    \lim_{N\to\infty}\mathcal{W}_1\brac{\mathrm{Law}\brac{\theta^{w,y}_{N\eta},v^{w,y}_{N\eta}},\mathrm{Law}\brac{\hat{\theta}^{w,y}_{N\eta},\hat{v}^{w,y}_{N\eta}}}. 
    \end{align*}
Since $\mathcal{W}_1(\mu,\hat{\mu})$ is independent of $\eta$ and initial condition $(w,y)$, we can set $\eta=0$ and $(w,y)=(0,0)$ to obtain
\begin{align*}
    &\mathcal{W}_1(\mu,\hat{\mu})\\
    &\leq    \lim_{N\to\infty}\mathcal{W}_1\brac{\mathrm{Law}\brac{\theta^{w,y}_{N\eta},v^{w,y}_{N\eta}},\mathrm{Law}\brac{\hat{\theta}^{w,y}_{N\eta},\hat{v}^{w,y}_{N\eta}}}\\
    &\leq  \rho(X_n,\widehat{X}_n)\frac{C_*K_2}{\lambda_*}\Bigg\{2C_2+2C_2^2\brac{1+\max_{1\leq i\leq n}\sqrt{\abs{f\brac{0,\hat{x}_i}}}}\\
    &\quad+C_2\Bigg(\sqrt{\max_{1\leq i\leq n}\abs{f(0,x_i)}}+\sqrt{\frac{K_2\max_{1\leq i\leq n}\norm{x_i}+1}{2}}C_2\brac{1+\max_{1\leq i\leq n}\sqrt{\abs{f\brac{0,\hat{x}_i}}}}\nonumber\\
   &\quad\quad+\sqrt{K_2\max_{1\leq i\leq n}\norm{x_i} +\norm{\nabla f(0,0) }}\sqrt{C_2}\brac{1+\max_{1\leq i\leq n}\sqrt[4]{\abs{f\brac{0,\hat{x}_i}}}}\Bigg)\\
    &\quad\quad\quad+C_2 \Bigg(\sqrt{\max_{1\leq i\leq n}\abs{f(0,\hat{x}_i)}}+\sqrt{\frac{K_2\max_{1\leq i\leq n}\norm{\hat{x}_i}+1}{2}}C_2\brac{1+\max_{1\leq i\leq n}\sqrt{\abs{f\brac{0,\hat{x}_i}}}}\nonumber\\
     &\quad\quad\quad\quad+\sqrt{K_2\max_{1\leq i\leq n}\norm{\hat{x}_i} +\norm{\nabla f(0,0) }}\sqrt{C_2}\brac{1+\max_{1\leq i\leq n}\sqrt[4]{\abs{f\brac{0,\hat{x}_i}}}} \Bigg)+1\Bigg\}. 
\end{align*}
This completes the proof.
\end{proof}

%%%%%%%%%%%%%%%%%%%%%%%%%%%%%%%%%%%
\subsection{Technical Lemmas}\label{section_technicallemmas}

We start with two important results from \cite{jianwangbao2022coupling}, i.e. their Lemma 4.1 and Corollary 1.4. Notice that compared to the equation considered in \citep[Corollary 1.4]{jianwangbao2022coupling}, our Equation~\eqref{sde_generalloss} has an additional parameter $\zeta$ in front of the $\alpha$-stable L\'{e}vy process. This is such a minor addition that unsurprisingly, the finding in \cite{jianwangbao2022coupling} still holds in our setting. Indeed, denote 
\begin{equation}
\Theta(dy):=\frac{\alpha 2^{\alpha-1}\Gamma(\frac{d+\alpha}{2})}{\pi^{d/2}\Gamma\brac{1-\frac{\alpha}{2}}}\frac{1}{\norm{y}^{\alpha+d
}}dy 
\end{equation}
as the L\'{e}vy measure of the $\alpha$-stable L\'{e}vy process $L_t$ then the  proof of Corollary 1.4 of \cite{jianwangbao2022coupling}(at the end of page 119) involves showing $\Theta(dy)$ satisfies their Condition $\operatorname{B_2}$. Writing $\phi_F$ for the characteristic function of a random variable $F$ then per \citep[Section 1.2.4]{applebaum2009levy}, $\Theta(dy)$ is the unique measure which satisfies
\begin{align*}
    \phi_{L_1}(u)=\E{\exp\brac{-iu\zeta L_1}}=\int_{\R^d}\brac{\exp\brac{i\inner{u,y}}-1-i\inner{u,y}\mathds{1}_{\{\norm{y}\leq 1 \} } }\Theta(dy). 
\end{align*}
This leads to
\begin{align*}
     \phi_{\zeta L_1}(u)= \phi_{ L_1}(\zeta u)&=\int_{\R^d}\brac{\exp\brac{i\inner{\zeta u,y}}-1-i\inner{\zeta u,y}\mathds{1}_{\{\norm{y}\leq 1 \} } }\Theta(dy)\\
     &=\int_{\R^d}\brac{\exp\brac{i\inner{ u,y}}-1-i\inner{ u,y}\mathds{1}_{\{\norm{y}\leq \zeta \} } }\Theta\brac{\frac{dy}{\zeta}}\\
     &=\int_{\R^d}\brac{\exp\brac{i\inner{ u,y}}-1-i\inner{ u,y}\mathds{1}_{\{\norm{y}\leq 1 \} } }\Theta\brac{\frac{dy}{\zeta}}, 
\end{align*}
where the last line is due to rotational symmetry: $ \int_{\R^d}\inner{u,y}\mathds{1}_{\{a\leq \norm{y}\leq b \}} du=0$
for any $a,b\geq 0$. Our calculation of $\phi_{\zeta L_1}(u)$ suggests that the L\'{e}vy measure of $\zeta L_t$ in our Equation \eqref{sde_generalloss} is
\begin{align*}
    \Theta \brac{\frac{dy}{\zeta}}=\frac{\alpha 2^{\alpha-1}\Gamma(\frac{d+\alpha}{2})}{\pi^{d/2}\Gamma\brac{1-\frac{\alpha}{2}}}\zeta^{\alpha+d}\frac{1}{\norm{y}^{\alpha+d}}. 
\end{align*}
Since it has been shown in the proof of Corollary 1.4 in \cite{jianwangbao2022coupling} that $\Theta(dy)$ satisfies their Condition $\operatorname{B_2}$, one can see right away that $\Theta \brac{\frac{dy}{\zeta}}$ also satisfies Condition $\operatorname{B_2}$ (with a different scaling constant that depends on $\zeta$). Thus, analogous to \citep[Corollary 1.4]{jianwangbao2022coupling} which is about their Equation (1.1) with L\'{e}vy noise $L_t$, we will have Lemma~\ref{lemma_wassersteindecaybyBaoWang} below about Equation \eqref{sde_generalloss} with L\'{e}vy noise $\zeta L_t$.

\begin{lemma}(\citep[Lemma 4.1 and Lemma 4.4]{jianwangbao2022coupling})
\label{lemma_lyapunov}
Recall the function $\widehat{F} (\theta,X_n)$ introduced in Section~\ref{sec:main:results}. Let $r_0=\frac{\gamma}{2}$ and $r$  be any constant in the interval
\begin{align*}
    \brac{\frac{\gamma}{2}, \brac{\frac{\gamma^2}{4}+\gamma \sqrt{\beta(\lambda_1-\lambda_2\lambda_4)} -2\beta\lambda_4}^{1/2}}. 
\end{align*}
Set 
\begin{align}\label{defn:V:0}
    V_0(\theta,X_n):=\beta\brac{ \widehat{F} (\theta,X_n)+\lambda_4\norm{\theta}^2+\lambda_5},
\end{align}
and 
\begin{align}
\label{definition_N}
    N(\theta,z,X_n):= 1+V_0(x,X_n)+\frac{r^2}{2}\norm{\theta}^2+\frac{1}{2}\norm{z}^2+r_0\inner{\theta,z}.
\end{align}
Moreover, let
\begin{align}
\label{def_Wlyapunovfunction}
    \mathbb{W}(\theta,z,X_n):=1+N(\theta,z,X_n)^{1/2 }. 
\end{align}
Note that the constant $r$ is well-defined due to \eqref{cond_frombaowang}.

Next, denote $\mathcal{L}$ the infinitesimal generator of \eqref{sde_generalloss} which acts on real-valued functions $h(\theta,v)$ that are twice continuously differentiable in the first and second variables as 
\begin{align*}
    \mathcal{L}h(\theta,v)&=\inner{ v, \nabla_\theta h(\theta,v)}+\inner{-\gamma v-\beta\nabla \widehat{F}(\theta,X_n),\nabla_v h(\theta,v) }\\
    &+\frac{\alpha 2^{\alpha-1}\Gamma(\frac{d+\alpha}{2})}{\pi^{d/2}\Gamma\brac{1-\frac{\alpha}{2}}}\cdot\int_{\R^d} \left(h(\theta,v+z)-h(\theta,v)-\inner{\nabla_vh(\theta,v),z} \mathds{1}_{\{\norm{z}\leq 1\} }\right)\frac{\zeta^{d+\alpha}}{\norm{z}^{d+\alpha}}dz, 
\end{align*}
and similarly, let $\widehat{\mathcal{L}}$ be the infinitesimal generator of \eqref{sde_generalloss_differentdataset}
such that
\begin{align*}
    \widehat{\mathcal{L}}h(\theta,v)&=\inner{ v, \nabla_\theta h(\theta,v)}+\inner{-\gamma v-\beta\nabla \widehat{F}(\theta,\widehat{X}_n),\nabla_{v} h(\theta,v) }\\
    &+\frac{\alpha 2^{\alpha-1}\Gamma(\frac{d+\alpha}{2})}{\pi^{d/2}\Gamma\brac{1-\frac{\alpha}{2}}}\cdot\int_{\R^d} \left(h(\theta,v+z)-h(\theta,v)-\inner{\nabla_{v}h(\theta,v),z} \mathds{1}_{\{\norm{z}\leq 1\} }\right)\frac{\zeta^{d+\alpha}}{\norm{z}^{d+\alpha}}dz. 
\end{align*}

Then under Conditions~\ref{cond_gammaandbeta},~\ref{cond_pseudolipschitz}, and~\ref{cond_allthelambdas}, $\mathbb{W}(\cdot,\cdot,\cdot)$ defined at \eqref{def_Wlyapunovfunction} is a Lyapunov function associated to the processes $(\theta_t,v_t)_{t\geq 0}$ in \eqref{sde_generalloss} and $(\hat{\theta}_t,\hat{v}_t)_{t\geq 0}$ in \eqref{sde_generalloss_differentdataset} and satisfies
    \begin{align*}
        &\mathcal{L}\mathbb{W}(\theta,v,X_n)\leq -c_0 \mathbb{W}(\theta,v,X_n)+C_0,
        \\
        &\widehat{\mathcal{L}}\mathbb{W}(\hat{\theta},\hat{v},\widehat{X}_n)\leq -c_0 \mathbb{W}(\hat{\theta},\hat{v},\widehat{X}_n)+C_0,
    \end{align*}
for some positive constants $c_0$ and $C_0$ that depend on $\alpha,\gamma,\beta,\zeta$, the dimension $d$ plus the parameters $\lambda_i, 1\leq i\leq 5$ in Condition~\ref{cond_allthelambdas}, but do not depend on the datasets $X_n,\widehat{X}_{n}$.
\end{lemma}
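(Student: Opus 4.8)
The plan is to reduce the statement to Lemma~4.1 and Lemma~4.4 of \cite{jianwangbao2022coupling}, the only new ingredients being (i) verifying that Conditions~\ref{cond_gammaandbeta}, \ref{cond_pseudolipschitz}, and~\ref{cond_allthelambdas} imply the hypotheses used there, and (ii) tracking the effect of the extra scale parameter $\zeta$. For (ii), the computation in the paragraph preceding the lemma already shows that $\zeta L_t$ has Lévy measure $\Theta(dy/\zeta)$, which differs from $\Theta(dy)$ only by the finite multiplicative constant $\zeta^{\alpha+d}$ on the radial density; hence every estimate in \cite{jianwangbao2022coupling} that depends on the Lévy measure goes through verbatim with constants that now additionally depend on $\zeta$. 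From here I would treat $\zeta=1$ for readability.

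First I would record that $N(\theta,z,X_n)\geq 1$, so that $N^{1/2}$ and $\mathbb{W}$ are well defined: by Condition~\ref{cond_lambda45} we have $V_0(\theta,X_n)=\beta(\widehat F(\theta,X_n)+\lambda_4\|\theta\|^2+\lambda_5)\geq 0$, while the quadratic form $\tfrac{r^2}{2}\|\theta\|^2+\tfrac12\|z\|^2+r_0\inner{\theta,z}$ is positive semidefinite precisely because $r_0=\gamma/2<r$, i.e. $r_0^2<r^2$; adding the constant $1$ gives $N\geq 1$. The constant $r$ lies in a nonempty interval thanks to \eqref{cond_frombaowang}.

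The core step is the drift estimate $\mathcal{L}N\leq -c\,N+C'$ (equivalently, one may first treat $\mathcal{L}V_0$ and the quadratic part separately, as in \cite{jianwangbao2022coupling}). I would split $\mathcal{L}N$ into the first-order part (transport, friction, gradient) and the nonlocal integral part. For the first-order part one differentiates $N$ and uses: the chain rule on $V_0$ together with Condition~\ref{cond_lambda123} to produce the dissipative term $-\beta\inner{\theta,\nabla\widehat F(\theta,X_n)}\leq -\beta(\lambda_1-\lambda_2\lambda_4)\|\theta\|^2+\dots$; the cross term $r_0\inner{\theta,z}$, whose time derivative yields $r_0\|z\|^2-\gamma r_0\inner{\theta,z}-\beta r_0\inner{\theta,\nabla\widehat F}$; and the at-most-quadratic growth of $\nabla\widehat F$ coming from Condition~\ref{cond_pseudolipschitz}. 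Choosing $r$ in the stated interval is exactly what makes the resulting quadratic form in $(\|\theta\|,\|z\|,\inner{\theta,z})$ negative definite up to an additive constant; this is the algebraic content of \eqref{cond_frombaowang}. For the nonlocal part, since $N$ is a sum of a term independent of $z$ (namely $1+V_0$), a quadratic in $z$, and a term linear in $z$, a second-order Taylor expansion in $z$ makes the integrand vanish to first order, and $\int_{\|z\|\leq1}\|z\|^2\,\Theta(dz)<\infty$ (as $\alpha<2$) together with $\int_{\|z\|>1}\Theta(dz)<\infty$ bounds the integral by a constant; the $\|z\|^2$ contribution is a positive constant, which is harmless.

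Finally I would pass from the drift condition on $N$ to the asserted one on $\mathbb{W}=1+N^{1/2}$. This is where the non-locality of $\mathcal{L}$ requires care: because $\mathcal{L}$ contains the jump integral, $\mathcal{L}\mathbb{W}\neq \tfrac{1}{2N^{1/2}}\mathcal{L}N$. Instead, following Lemma~4.4 of \cite{jianwangbao2022coupling}, I would use that $x\mapsto 1+x^{1/2}$ is concave and that $\sqrt{a+b}\leq\sqrt a+\sqrt b$: concavity controls the first-order part via the bound $\tfrac{1}{2N^{1/2}}\mathcal{L}^{(1)}N$, while subadditivity together with the explicit summand structure of $N$ in $z$ controls the nonlocal part, yielding $\mathcal{L}\mathbb{W}\leq -c_0\mathbb{W}+C_0$ with $c_0,C_0$ depending only on $\alpha,\gamma,\beta,\zeta,d$ and $\lambda_1,\dots,\lambda_5$, and not on the dataset, as claimed. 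I expect this last step — the interplay between the concave transform $\sqrt{\cdot}$ and the nonlocal generator, while keeping all constants independent of $X_n$ — to be the main technical obstacle; the remainder is a careful but routine generator computation.
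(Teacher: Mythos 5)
Your proposal is correct and follows essentially the same route as the paper: the paper's proof is a direct reduction to Lemmas 4.1 and 4.4 of \cite{jianwangbao2022coupling}, with the $\zeta$-scaling handled exactly as you describe via the Lévy measure $\Theta(dy/\zeta)$, and the choice $p=1$ in the family $\mathbb{W}_p=1+N^{p/2}$ of Lyapunov functions. Your sketch of the internal drift computation and the concavity/subadditivity argument for passing from $N$ to $\mathbb{W}$ accurately reflects the content of the cited lemmas, which the paper itself does not reproduce.
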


\begin{proof}
    Refer to Lemma 4.1 and Lemma 4.4 in \cite{jianwangbao2022coupling}. The specific choice of constants $r_0$ and $r$ are given in the proof of their Lemma 4.4. Moreover, notice that per Equation (4.4) and the discussion right below Theorem 1.3 in \citep{jianwangbao2022coupling}, let $p\in (0,\alpha)$ then
    \begin{align*}
       \mathbb{W}_p(\theta,z,X_n):= 1+\left(N(\theta,z,X_n)\right)^{p/2}
    \end{align*}
    is a Lyapunov function associated to the processes $(\theta_t,v_t)_{t\geq 0}$ in \eqref{sde_generalloss}. Since we are working with $\alpha$-stable L\'{e}vy processes with $\alpha\in (1,2)$, we choose $p=1$  (and thus $ \mathbb{W}(\theta,z,X_n)= \mathbb{W}_1(\theta,z,X_n)$ is our Lyapunov function). 
    
    Further note that there are two typos in the upper bound of $r$ in the proof of their Lemma 4.4 ($\alpha_0$ in there should be $\alpha$ and $r_0^2/2$ in there should be $r_0^2$), per private communication with one of the authors of \cite{jianwangbao2022coupling}. 
\end{proof}

\begin{lemma}(\citep[Corollary 1.4]{jianwangbao2022coupling})
\label{lemma_wassersteindecaybyBaoWang}
 Under Condition~\ref{cond_pseudolipschitz} and Condition~\ref{cond_allthelambdas}, the process $(\theta_t,v_t)_{t\geq 0}$ in \eqref{sde_generalloss} admits a unique invariant measure and correspondingly, the process $(\hat{\theta}_t,\hat{v}_t)_{t\geq 0}$ in \eqref{sde_generalloss_differentdataset} also admits a unique invariant measure. Moreover, for any $t\geq 0$ and initial conditions $(w,y), (w',y')$,  it holds that
 \begin{align*}
&\mathcal{W}_1\brac{\operatorname{Law}\brac{\theta^{w,y}_t,v^{w,y}_t},\operatorname{Law}\brac{\theta^{w',y'}_t,v^{w',y'}_t} }\leq C_*e^{-\lambda_* t}\norm{(w,y)-\brac{w',y'}},
\\
&\mathcal{W}_1\brac{\operatorname{Law}\brac{\hat{\theta}^{w,y}_t,\hat{v}^{w,y}_t},\operatorname{Law}\brac{\hat{\theta}^{w',y'}_t,\hat{v}^{w',y'}_t} }\leq C_*e^{-\lambda_* t}\norm{(w,y)-\brac{w',y'}},
 \end{align*}
 where 
 \begin{align*}
     \lambda_*=\min \left\{ \frac{c_0\epsilon}{1+2\epsilon},\frac{3c_1\brac{1-\frac{1}{\alpha}}\gamma}{8(1+c_1)} \right\};\qquad
     C_*=  c_0^2\sqrt{2d},
 \end{align*}
where $\epsilon$ and $c_1$ are positive constants defined in \citep[Section 3.2]{jianwangbao2022coupling}. The positive constant $c_0$ is from Lemma~\ref{lemma_lyapunov}. 
\end{lemma}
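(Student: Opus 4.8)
The plan is to derive Lemma~\ref{lemma_wassersteindecaybyBaoWang} directly from \citep[Corollary 1.4]{jianwangbao2022coupling}, once it is checked that the SDEs \eqref{sde_generalloss} and \eqref{sde_generalloss_differentdataset} fall within the scope of that result. Their corollary treats second-order (kinetic) SDEs of the form $d\theta_t=v_t\,dt$, $dv_t=-\gamma v_t\,dt-\beta\nabla U(\theta_t)\,dt+dL_t$, driven by a pure-jump Lévy process whose Lévy measure satisfies their Condition $\operatorname{B_2}$, under a dissipativity-type hypothesis on $U$; the conclusion is existence and uniqueness of the invariant measure together with exponential $\mathcal{W}_1$-contraction of the associated semigroup. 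Taking $U=\widehat{F}(\cdot,X_n)$ (respectively $U=\widehat{F}(\cdot,\widehat{X}_n)$), the only structural deviation of \eqref{sde_generalloss} from their Equation~(1.1) is the scale factor $\zeta$ in front of $L_t$.

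The first step is to dispose of this deviation. As computed in the paragraph preceding Lemma~\ref{lemma_lyapunov}, the Lévy measure of $\zeta L_t$ is $\Theta(dy/\zeta)$, which differs from $\Theta(dy)$ only by the positive multiplicative constant $\zeta^{\alpha+d}$; since $\Theta(dy)$ was verified in the proof of \citep[Corollary 1.4]{jianwangbao2022coupling} to satisfy Condition $\operatorname{B_2}$, and this condition is preserved under multiplication of the measure by a positive constant (only the constants appearing in the associated bounds change), $\Theta(dy/\zeta)$ also satisfies Condition $\operatorname{B_2}$. Hence replacing $L_t$ by $\zeta L_t$ keeps us inside the framework, with all resulting constants now permitted to depend on $\zeta$.

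Next, I would match the remaining hypotheses. For a fixed dataset, Condition~\ref{cond_pseudolipschitz} gives global $K_1$-Lipschitz continuity of $\theta\mapsto\nabla\widehat{F}(\theta,X_n)$, which is the drift regularity required by their result; Condition~\ref{cond_allthelambdas} together with the inequalities in \eqref{cond_frombaowang} is precisely the dissipativity input, already exploited in Lemma~\ref{lemma_lyapunov} to build the Lyapunov function $\mathbb{W}$ with $\mathcal{L}\mathbb{W}\le-c_0\mathbb{W}+C_0$ (and $\widehat{\mathcal{L}}\mathbb{W}\le-c_0\mathbb{W}+C_0$). Feeding this Lyapunov drift condition into the coupling construction of \citep{jianwangbao2022coupling} yields the asserted contraction for the semigroup of \eqref{sde_generalloss} with rate $\lambda_*=\min\{c_0\epsilon/(1+2\epsilon),\,3c_1(1-1/\alpha)\gamma/(8(1+c_1))\}$ and prefactor $C_*=c_0^2\sqrt{2d}$, where $\epsilon,c_1$ are the auxiliary constants from \citep[Section~3.2]{jianwangbao2022coupling} and $c_0$ is from Lemma~\ref{lemma_lyapunov}. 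Uniqueness of the invariant measure then follows by the standard argument of applying the contraction to the two candidate invariant measures and using that they are fixed points of the semigroup, which forces their $\mathcal{W}_1$-distance (finite by the Lyapunov bound) to vanish. Since $\widehat{X}_n\in\mathcal{X}^n$ and Conditions~\ref{cond_pseudolipschitz},~\ref{cond_allthelambdas} hold uniformly over datasets, the same argument applies verbatim to \eqref{sde_generalloss_differentdataset}, giving the second contraction estimate with identical $\lambda_*$ and $C_*$.

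I do not expect a genuine obstacle here: the entire content is confirming that the scale parameter $\zeta$ does not push us outside the hypotheses of \citep[Corollary 1.4]{jianwangbao2022coupling}, which reduces to the Condition-$\operatorname{B_2}$ verification for $\Theta(dy/\zeta)$ carried out before Lemma~\ref{lemma_lyapunov} and the observation that the friction and drift structure are untouched. With that settled, the statement is a transcription of their corollary into our notation, and the constants $\lambda_*$ and $C_*$ are read off directly from it.
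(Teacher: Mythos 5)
Your proposal is correct and follows essentially the same route as the paper: the paper's proof likewise reduces the statement to \citep[Corollary 1.4]{jianwangbao2022coupling} after verifying (in the discussion preceding Lemma~\ref{lemma_lyapunov}) that the L\'evy measure of $\zeta L_t$, namely $\Theta(dy/\zeta)$, still satisfies their Condition $\operatorname{B_2}$, and then reads off $\lambda_*$ and $C_*$ from the proof of their Theorem 1.1. Your additional remarks on matching Conditions~\ref{cond_pseudolipschitz}--\ref{cond_allthelambdas} to their hypotheses and on deducing uniqueness of the invariant measure from the contraction are consistent with what the paper implicitly relies on.
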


\begin{proof}
This is a consequence of the discussion at the beginning of this section \ref{section_technicallemmas}, \citep[Corollary 1.4]{jianwangbao2022coupling} and the proof of their Theorem 1.1. Specifically, we use inequality (4.1) and the equivalence relation stated at the end of the proof of Theorem 1.1. The explicit form of the constant $\lambda_*$ is also provided in the proof of Theorem 1.1. 
\end{proof}

Based on Lemma~\ref{lemma_wassersteindecaybyBaoWang}, we immediately get the following semigroup gradient estimate.
\begin{lemma}
\label{lemma_semigroupgradiateestimate}
Assume $h:\mathbb{R}^{2d}\rightarrow\mathbb{R}$ is a Lipschitz-continuous function.  Under Conditions~\ref{cond_gammaandbeta},~\ref{cond_pseudolipschitz} and~\ref{cond_allthelambdas}, it holds that
\begin{align}
\label{estimate_gradsemigroup_prop}
  \sup_{w,y\in\R^d} \norm{\nabla_u P_th(w,y)}&\leq \norm{\nabla h}_{\operatorname{op},\infty}\norm{u} C_*e^{-\lambda_* t},
  \\
  \sup_{w,y\in\R^d} \norm{\nabla_u \widehat{P}_th(w,y)}&\leq \norm{\nabla h}_{\operatorname{op},\infty}\norm{u} C_*e^{-\lambda_* t},\nonumber
\end{align}
where the constants $C_*$ and $\lambda_*$ are defined in Lemma~\ref{lemma_wassersteindecaybyBaoWang}. 
\end{lemma}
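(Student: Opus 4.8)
The plan is to deduce the gradient estimate directly from the Wasserstein contraction rate in Lemma~\ref{lemma_wassersteindecaybyBaoWang} together with the dual (Kantorovich--Rubinstein) representation of $\mathcal{W}_1$. First I would fix a Lipschitz test function $h:\mathbb{R}^{2d}\to\mathbb{R}$ with finite $\norm{\nabla h}_{\operatorname{op},\infty}$, fix $t\ge 0$, a base point $(w,y)\in\mathbb{R}^d\times\mathbb{R}^d$, and a direction $u\in\mathbb{R}^{2d}$. By the definition of the directional derivative,
\begin{align*}
\nabla_u P_t h(w,y)=\lim_{\epsilon\to 0}\frac{P_t h\brac{(w,y)+\epsilon u}-P_t h(w,y)}{\epsilon},
\end{align*}
so it suffices to bound $\abs{P_t h\brac{(w,y)+\epsilon u}-P_t h(w,y)}$ for small $\epsilon>0$. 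Writing $P_t h(w,y)=\E{h\brac{\theta_t^{w,y},v_t^{w,y}}}$, this difference is exactly the difference of expectations of $h$ under $\operatorname{Law}\brac{\theta_t^{(w,y)+\epsilon u},v_t^{(w,y)+\epsilon u}}$ and $\operatorname{Law}\brac{\theta_t^{w,y},v_t^{w,y}}$.

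Next I would invoke the dual representation of $\mathcal{W}_1$: since $h$ is Lipschitz with constant bounded by $\norm{\nabla h}_{\operatorname{op},\infty}$ (the supremum-over-points of the operator norm of the differential is precisely the global Lipschitz constant), we have
\begin{align*}
\abs{P_t h\brac{(w,y)+\epsilon u}-P_t h(w,y)}
\le \norm{\nabla h}_{\operatorname{op},\infty}\,\mathcal{W}_1\brac{\operatorname{Law}\brac{\theta_t^{(w,y)+\epsilon u},v_t^{(w,y)+\epsilon u}},\operatorname{Law}\brac{\theta_t^{w,y},v_t^{w,y}}}.
\end{align*}
Applying the first contraction inequality of Lemma~\ref{lemma_wassersteindecaybyBaoWang} with the two initial conditions $(w',y')=(w,y)+\epsilon u$ and $(w,y)$, the right-hand side is at most $\norm{\nabla h}_{\operatorname{op},\infty}\, C_* e^{-\lambda_* t}\,\norm{\epsilon u}=\epsilon\,\norm{\nabla h}_{\operatorname{op},\infty}\,C_* e^{-\lambda_* t}\,\norm{u}$. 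Dividing by $\epsilon$ and letting $\epsilon\to 0$ yields $\abs{\nabla_u P_t h(w,y)}\le \norm{\nabla h}_{\operatorname{op},\infty}\,\norm{u}\,C_* e^{-\lambda_* t}$; taking the supremum over $(w,y)$ gives the first claim. The second claim follows identically from the second contraction inequality of Lemma~\ref{lemma_wassersteindecaybyBaoWang} applied to the hatted process and semigroup $\widehat{P}_t$.

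The argument is essentially a two-line consequence of the contraction estimate, so there is no serious obstacle; the only points requiring a little care are (i) justifying that the directional derivative $\nabla_u P_t h$ exists, which one can sidestep by noting that the displayed inequality already shows $(w,y)\mapsto P_t h(w,y)$ is globally Lipschitz with constant $\norm{\nabla h}_{\operatorname{op},\infty} C_* e^{-\lambda_* t}$, so its directional derivatives exist almost everywhere and are bounded by that constant wherever they exist, and (ii) matching the norm conventions so that the Lipschitz constant of $h$ is indeed controlled by $\norm{\nabla h}_{\operatorname{op},\infty}$ as defined in Section~\ref{section_notationsinappendix}. Neither of these requires more than the mean value inequality and the definitions already in place.
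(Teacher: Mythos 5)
Your argument is correct and is essentially identical to the paper's own proof: both bound $\abs{P_t h(w,y)-P_t h(w',y')}$ by $\norm{\nabla h}_{\operatorname{op},\infty}\,\mathcal{W}_1\brac{\delta_{(w,y)}P_t,\delta_{(w',y')}P_t}$ via the Kantorovich--Rubinstein duality and then apply the contraction of Lemma~\ref{lemma_wassersteindecaybyBaoWang} before passing to the directional derivative. Your added remarks on the existence of $\nabla_u P_t h$ and the norm conventions are sound and, if anything, slightly more careful than the paper's presentation.
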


\begin{proof}
We can compute that
\begin{align*}
    \left|P_th(w,y)-P_th\brac{w',y'}\right|&=\left|\E{h\brac{\theta^{w,y}_t,v^{w,y}_t}-h\brac{\theta^{w',y'}_t,v^{w',y'}_t} }\right|\\
    &\leq \norm{\nabla f}_{\operatorname{op},\infty} \mathcal{W}_1\brac{\delta_{(w,y)}P_t,\delta_{\brac{w',y'}}P_t}\\
    &\leq \norm{\nabla f}_{\operatorname{op},\infty}  C_* e^{-\lambda_* t}\norm{(w,y)-\brac{w',y'}},  
\end{align*}
where the last line is due to Lemma~\ref{lemma_wassersteindecaybyBaoWang}. 
This implies that
\begin{equation*}
\sup_{w,y\in\R^d} \norm{\nabla_u P_th(w,y)}\leq \norm{\nabla h}_{\operatorname{op},\infty}\norm{u} C_*e^{-\lambda_* t}.    
\end{equation*}
Similarly, one can show that
\begin{equation*}
\sup_{w,y\in\R^d} \norm{\nabla_u \widehat{P}_th(w,y)}\leq \norm{\nabla h}_{\operatorname{op},\infty}\norm{u} C_*e^{-\lambda_* t}.
\end{equation*}
This completes the proof.
\end{proof}

The next two lemmas provide moment estimates concerning $\theta^{w,y},v^{w,y}$ and $\hat{\theta}^{w,y},\hat{v}^{w,y}$. 

\begin{lemma}
\label{lemma_uniformmomentbound_continuousdynamics}
Under Conditions~\ref{cond_gammaandbeta},~\ref{cond_pseudolipschitz}, and~\ref{cond_allthelambdas}, we have the uniform estimate (over $t\in [0,\infty)$)
 \begin{align*}
   \E{\norm{\theta^{w,y}_t}}+\E{\norm{v^{w,y}_t}}&\leq   \min \left\{\sqrt{ \frac{r^2-r_0^2}{8}}, \sqrt{\frac{r^2-r_0^2}{8r^2}} \right\}^{-1}\E{\mathbb{W}\left(\theta^{w,y}_s,v^{w,y}_s,X_n\right)}\\
   &
     \leq C_2\brac{1+\norm{w}+\norm{y}+\max_{1\leq i\leq n}\sqrt{\abs{f\brac{w,x_i}}}},
     \\
   \E{\norm{\hat{\theta}^{w,y}_t}}+\E{\norm{\hat{v}^{w,y}_t}}&\leq   \min \left\{\sqrt{ \frac{r^2-r_0^2}{8}}, \sqrt{\frac{r^2-r_0^2}{8r^2}} \right\}^{-1}\E{\mathbb{W}\left(\hat{\theta}^{w,y}_s,\hat{v}^{w,y}_s,\widehat{X}_n\right)}\\
   &
     \leq C_2\brac{1+\norm{w}+\norm{y}+\max_{1\leq i\leq n}\sqrt{\abs{f\brac{w,\hat{x}_i}}}},
 \end{align*}
where the function $\mathbb{W}(\cdot,\cdot,\cdot)$ is defined in Lemma~\ref{lemma_lyapunov} and
\begin{align}
\label{def_C2}
    C_2=\min \left\{\sqrt{ \frac{r^2-r_0^2}{8}}, \sqrt{\frac{r^2-r_0^2}{8r^2}} \right\}^{-1}\cdot\max \left\{\sqrt{\beta},\sqrt{\beta\lambda_4+r^2},1+\sqrt{\beta\lambda_5+1}+\frac{C_0}{c_0} \right\}. 
\end{align}
The constants $r$ and $r_0$ are provided in Lemma~\ref{lemma_lyapunov}. 
\end{lemma}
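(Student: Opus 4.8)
The plan is to split the two inequalities in the statement. The first one, comparing $\E{\norm{\theta_t^{w,y}}}+\E{\norm{v_t^{w,y}}}$ to $\E{\mathbb{W}(\theta_t^{w,y},v_t^{w,y},X_n)}$, is a purely algebraic consequence of the definition of $\mathbb{W}$ in Lemma~\ref{lemma_lyapunov} together with a pointwise comparison of $\norm{\theta}+\norm{z}$ and $\sqrt{N(\theta,z,X_n)}$; the second one follows from a uniform-in-time moment estimate for $\mathbb{W}$ obtained from the Lyapunov drift inequality $\mathcal{L}\mathbb{W}\le -c_0\mathbb{W}+C_0$ of Lemma~\ref{lemma_lyapunov}, combined with an upper bound on the initial value $\mathbb{W}(w,y,X_n)$. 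The hatted process is handled identically since $c_0,C_0$ do not depend on the dataset.

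For the first inequality I would argue pointwise in $(\theta,z)$. Averaging the lower bound \eqref{cond_lambda45} of Condition~\ref{cond_allthelambdas} over $X_n$ gives $V_0(\theta,X_n)=\beta(\widehat{F}(\theta,X_n)+\lambda_4\norm{\theta}^2+\lambda_5)\ge 0$, so it suffices to bound $\norm{\theta}$ and $\norm{z}$ by $\sqrt{N(\theta,z,X_n)}$. Completing the square in two ways, $\frac{r^2}{2}\norm{\theta}^2+\frac12\norm{z}^2+r_0\inner{\theta,z}=\frac12\norm{z+r_0\theta}^2+\frac{r^2-r_0^2}{2}\norm{\theta}^2=\frac{r^2}{2}\norm{\theta+\frac{r_0}{r^2}z}^2+\frac{r^2-r_0^2}{2r^2}\norm{z}^2$, which yields $N(\theta,z,X_n)\ge \frac{r^2-r_0^2}{2}\norm{\theta}^2$ and $N(\theta,z,X_n)\ge \frac{r^2-r_0^2}{2r^2}\norm{z}^2$, hence $\norm{\theta}\le\sqrt{2/(r^2-r_0^2)}\,\sqrt{N}$ and $\norm{z}\le\sqrt{2r^2/(r^2-r_0^2)}\,\sqrt{N}$. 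Adding these, using $\sqrt{N}\le \mathbb{W}$ and the identity $\sqrt{2/(r^2-r_0^2)}+\sqrt{2r^2/(r^2-r_0^2)}=\tfrac12\big(\sqrt{8/(r^2-r_0^2)}+\sqrt{8r^2/(r^2-r_0^2)}\big)\le \min\{\sqrt{(r^2-r_0^2)/8},\sqrt{(r^2-r_0^2)/(8r^2)}\}^{-1}$, gives $\norm{\theta}+\norm{z}\le \min\{\cdots\}^{-1}\mathbb{W}(\theta,z,X_n)$; taking expectations along the SDE solution yields the first inequality.

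For the second inequality I would feed the drift bound $\mathcal{L}\mathbb{W}(\theta,v,X_n)\le -c_0\mathbb{W}(\theta,v,X_n)+C_0$ into Dynkin's formula along a localizing sequence of stopping times (non-explosion and finiteness of first moments following from the at-most-quadratic growth of $\nabla f$ in Condition~\ref{cond_pseudolipschitz} and $\alpha>1$), pass to the limit, and apply Grönwall's inequality to get, uniformly in $t\ge 0$, $\E{\mathbb{W}(\theta_t^{w,y},v_t^{w,y},X_n)}\le e^{-c_0 t}\mathbb{W}(w,y,X_n)+\tfrac{C_0}{c_0}(1-e^{-c_0 t})\le \mathbb{W}(w,y,X_n)+\tfrac{C_0}{c_0}$. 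It then remains to bound the initial value: from the square-completed form $N(w,y,X_n)=1+V_0(w,X_n)+\frac12\norm{y+r_0 w}^2+\frac{r^2-r_0^2}{2}\norm{w}^2$, the bound $\widehat{F}(w,X_n)\le\max_{1\le i\le n}|f(w,x_i)|$, the estimate $\norm{y+r_0w}^2\le 2\norm{y}^2+2r_0^2\norm{w}^2$ and $r_0\le r$, one gets $N(w,y,X_n)\le 1+\beta\lambda_5+\beta\max_i|f(w,x_i)|+(\beta\lambda_4+r^2)\norm{w}^2+\norm{y}^2$, so by subadditivity of the square root $\mathbb{W}(w,y,X_n)\le\big(1+\sqrt{1+\beta\lambda_5}\big)+\sqrt{\beta}\,\sqrt{\max_i|f(w,x_i)|}+\sqrt{\beta\lambda_4+r^2}\,\norm{w}+\norm{y}$. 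Adding $C_0/c_0$ and factoring out $\max\{\sqrt{\beta},\sqrt{\beta\lambda_4+r^2},1+\sqrt{\beta\lambda_5+1}+C_0/c_0\}$ produces exactly $C_2(1+\norm{w}+\norm{y}+\max_i\sqrt{|f(w,x_i)|})$ with $C_2$ as in \eqref{def_C2}, completing the proof; the computation for $(\hat\theta_t^{w,y},\hat v_t^{w,y})$ is word-for-word the same with $\widehat{X}_n$ in place of $X_n$.

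I expect the main obstacle to be the rigorous justification of the Dynkin/Grönwall step for the degenerate $\alpha$-stable jump SDE: one must choose the localization so the stopped process has enough integrability to apply the optional-time Dynkin formula, verify that the expected-value drift inequality $\frac{d}{dt}\E{\mathbb{W}(\theta_t,v_t,X_n)}\le -c_0\E{\mathbb{W}(\theta_t,v_t,X_n)}+C_0$ survives removing the localization (monotone/dominated convergence, using that $\mathbb{W}$ grows only linearly and the $\alpha$-stable noise has finite first moment for $\alpha\in(1,2)$), and only then invoke Grönwall. The remaining pieces—the two square completions and the constant bookkeeping needed to land on precisely the constant in \eqref{def_C2}—are elementary but must be tracked carefully.
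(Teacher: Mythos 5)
Your proposal is correct and follows essentially the same route as the paper: Dynkin's formula plus the Lyapunov drift inequality $\mathcal{L}\mathbb{W}\le -c_0\mathbb{W}+C_0$ from Lemma~\ref{lemma_lyapunov} yields $\E{\mathbb{W}(\theta_t,v_t,X_n)}\le e^{-c_0t}\mathbb{W}(w,y,X_n)+C_0/c_0$, and the two sandwich inequalities for $\mathbb{W}$ in terms of $\norm{\theta},\norm{v}$ give the stated constant $C_2$. The only cosmetic difference is that you derive the lower bound on $N(\theta,z,X_n)$ by completing the square yourself, whereas the paper imports the corresponding inequality (their \eqref{lyapunovfunction_upperandlowerestimate}) from \citep[(4.5)]{jianwangbao2022coupling}; both lead to the same factor $\min\{\sqrt{(r^2-r_0^2)/8},\sqrt{(r^2-r_0^2)/(8r^2)}\}^{-1}$.
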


\begin{proof}
    By Dynkin's formula (see e.g. \cite{oksendal}) and Lemma~\ref{lemma_lyapunov},
    \begin{align*}
    \E{\mathbb{W}\brac{\theta^{w,y}_t,v^{w,y}_t,X_n}}&=\mathbb{W}\brac{w,y,X_n}+\int_0^t \E{\mathcal{L}\mathbb{W}(\theta^{w,y}_s,v^{w,y}_s,X_n)}ds\\
        &\leq \mathbb{W}\brac{w,y,X_n}+\int_0^t \left(-c_0\E{\mathbb{W}(\theta^{w,y}_s,v^{w,y}_s,X_n)}+C_0\right)ds,
    \end{align*}
where $\mathbb{W}(\cdot,\cdot,\cdot)$ is the Lyapunov function that is defined in Lemma~\ref{lemma_lyapunov}.

This implies
\begin{align*}
    e^{c_0 t}\E{\mathbb{W}(\theta^{w,y}_t,v^{w,y}_t,X_n)}- \E{\mathbb{W}(w,y,X_n)}\leq \frac{C_0}{c_0}\brac{e^{c_0 t}-1}\leq \frac{C_0}{c_0}e^{c_0 t},
\end{align*}
and therefore
\begin{align}
\label{middlestep_momentbound}
     \E{\mathbb{W}(\theta^{w,y}_s,v^{w,y}_s,X_n)}&\leq e^{-c_0t}\E{\mathbb{W}(w,y,X_n)}+\frac{C_0}{c_0}.
\end{align}

Next, \citep[(4.5)]{jianwangbao2022coupling} states that regarding the Lyapunov function in Lemma~\ref{lemma_lyapunov}, we have
\begin{align}
\label{lyapunovfunction_upperandlowerestimate}
  &  1+\brac{1+\beta\left(\widehat{F} (\theta,X_n)+\lambda_4 \norm{\theta}^2+\lambda_5\right)+\frac{r^2-r_0^2}{4}\brac{\norm{\theta}^2+\frac{1}{r^2}\norm{v}^2} }^{1/2}\nonumber \\
    &\leq \mathbb{W}(\theta,v,X_n)\nonumber\\
    &\leq 1+\brac{1+\beta\left(\widehat{F} (\theta,X_n)+\lambda_4 \norm{\theta}^2+\lambda_5\right)+r^2\norm{\theta}^2+\norm{v}^2  }^{1/2}. 
\end{align}
Combining with the fact that $\widehat{F} (\theta,X_n)+\lambda_4 \norm{\theta}^2+\lambda_5\geq 0$ per \eqref{cond_lambda45}, it follows that 
\begin{align}
\label{lyapunovfunction_upperandlowerestimate_secondversion}
   &1+ \min  \left\{\sqrt{ \frac{r^2-r_0^2}{4}}, \sqrt{\frac{r^2-r_0^2}{4r^2}} \right\}\brac{\norm{\theta}^2+\norm{v}^2 }^{1/2} \nonumber\\
   &\leq \mathbb{W}(\theta,v,X_n)\nonumber\\
   &\leq 1+\brac{1+\beta\left(\widehat{F} (\theta,X_n)+\lambda_4 \norm{\theta}^2+\lambda_5\right)+r^2\norm{\theta}^2+\norm{v}^2  }^{1/2}, 
\end{align}
which further leads to
\begin{align*}
    &  \min \left\{\sqrt{ \frac{r^2-r_0^2}{8}}, \sqrt{\frac{r^2-r_0^2}{8r^2}} \right\} \brac{\norm{\theta}+\norm{v}}\\
    &\leq \mathbb{W}(\theta,v,X_n)\\
   &\leq  1+\brac{\sqrt{1+\beta\lambda_5}+\sqrt{\beta}\sqrt{\norm{\widehat{F} (\theta,X_n)}}+\sqrt{\beta\lambda_4+r^2}\norm{\theta}+\norm{v}  }.
\end{align*}
The above estimate of $\mathbb{W}(\theta,v,X_n)$ and \eqref{middlestep_momentbound} imply 
\begin{align*}
    &\E{\norm{\theta^{w,y}_t}}+\E{\norm{v^{w,y}_t}}\\&\leq \min \left\{\sqrt{ \frac{r^2-r_0^2}{8}}, \sqrt{\frac{r^2-r_0^2}{8r^2}} \right\}^{-1}\E{\mathbb{W}(\theta^{w,y}_t,v^{w,y}_t,X_n)}\\
    &\leq \min \left\{\sqrt{ \frac{r^2-r_0^2}{8}}, \sqrt{\frac{r^2-r_0^2}{8r^2}} \right\}^{-1}\cdot\max \left\{\sqrt{\beta},\sqrt{\beta\lambda_4+r^2},1+\sqrt{\beta\lambda_5+1}+\frac{C_0}{c_0} \right\}\\
    &\hspace{17em}\cdot\brac{1+\norm{\theta}+\norm{v}+\max_{1\leq i\leq n}\sqrt{\abs{f \brac{\theta,x_i}}}}. 
\end{align*}
Finally, $\E{\norm{\hat{\theta}^{w,y}_t}}+\E{\norm{\hat{v}^{w,y}_t}}$ can be bounded in the same way. 
\end{proof}
%%%%%%%%%%%%%%%%%%%%%%%%%%%%%%%%%%%%%%%%%%%%%%%%%%%%%%%%%%%%%%%%%%%%%%%

Lemma~\ref{lemma_uniformmomentbound_continuousdynamics} allows us to deduce the following estimate that is needed in the proof of Theorem~\ref{theorem_wassersteinbound_appendix}.  

\begin{lemma}
\label{lemma_momentbound_squarerootf}
  It holds for any $x\in\mathcal{X}$ that
\begin{align*}
  &\E{\sqrt{f\brac{\hat{\theta}^{w,y}_{t},x }}}\\
   &\leq \sqrt{\abs{f(0,x)}}+\sqrt{K_2\norm{x} +\norm{\nabla f(0,0) }}\sqrt{C_2}\brac{1+\sqrt{\norm{w}}+\sqrt{\norm{y}}+\max_{1\leq i\leq n}\sqrt[4]{\abs{f\brac{w,\hat{x}_i}}}}\\
   &\qquad\qquad\qquad\qquad+\sqrt{\frac{K_2\norm{x}+1}{2}}C_2\brac{1+\norm{w}+\norm{y}+\max_{1\leq i\leq n}\sqrt{\abs{f\brac{w,\hat{x}_i}}}},
\end{align*}   
where the constant $C_2$ is defined in \eqref{def_C2}. 
\end{lemma}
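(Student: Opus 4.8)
The plan is to reduce the statement to a \emph{deterministic} (pathwise) quadratic growth bound on $|f(\theta,x)|$ in the variable $\theta$, and then to push the uniform moment estimate of Lemma~\ref{lemma_uniformmomentbound_continuousdynamics} through it. (As in Lemma~\ref{lemma_uniformmomentbound_continuousdynamics}, the quantity on the left should be read with $|f|$; this does not affect the argument.) \textbf{Step 1 (quadratic majorant of $f$).} Fix $x\in\mathcal{X}$ and $\theta\in\R^{d}$. By the fundamental theorem of calculus along the segment $s\mapsto s\theta$,
\[
f(\theta,x)-f(0,x)=\int_{0}^{1}\inner{\nabla f(s\theta,x),\theta}\,ds ,
\]
so $|f(\theta,x)|\leq |f(0,x)|+\norm{\theta}\int_{0}^{1}\norm{\nabla f(s\theta,x)}\,ds$. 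Applying Condition~\ref{cond_pseudolipschitz} between $(s\theta,x)$ and $(0,0)$ (and, if convenient, first between $(s\theta,x)$ and $(0,x)$, then between $(0,x)$ and $(0,0)$) bounds $\norm{\nabla f(s\theta,x)}$ by a constant plus a term linear in $s\norm{\theta}$; integrating in $s$ then yields an estimate of the form
\[
|f(\theta,x)|\leq |f(0,x)|+\brac{\norm{\nabla f(0,0)}+K_{2}\norm{x}}\norm{\theta}+\frac{K_{2}\norm{x}+1}{2}\norm{\theta}^{2}.
\]

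\textbf{Step 2 (take square roots).} Using the subadditivity of $t\mapsto\sqrt{t}$ on $[0,\infty)$ (so $\sqrt{a+b+c}\leq\sqrt{a}+\sqrt{b}+\sqrt{c}$) together with $\sqrt{\norm{\theta}^{2}}=\norm{\theta}$,
\[
\sqrt{|f(\theta,x)|}\leq \sqrt{|f(0,x)|}+\sqrt{\norm{\nabla f(0,0)}+K_{2}\norm{x}}\,\sqrt{\norm{\theta}}+\sqrt{\tfrac{K_{2}\norm{x}+1}{2}}\,\norm{\theta}.
\]
\textbf{Step 3 (specialise to $\theta=\hat{\theta}^{w,y}_{t}$, take $\mathbb{E}$, use Jensen).} Substitute $\theta=\hat{\theta}^{w,y}_{t}$, take expectations, and apply Jensen to the middle term, $\E{\sqrt{\norm{\hat{\theta}^{w,y}_{t}}}}\leq\sqrt{\E{\norm{\hat{\theta}^{w,y}_{t}}}}$. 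Since $\norm{\hat{\theta}^{w,y}_{t}}\leq\norm{\hat{\theta}^{w,y}_{t}}+\norm{\hat{v}^{w,y}_{t}}$, Lemma~\ref{lemma_uniformmomentbound_continuousdynamics} gives, uniformly in $t\geq 0$, $\E{\norm{\hat{\theta}^{w,y}_{t}}}\leq C_{2}\brac{1+\norm{w}+\norm{y}+\max_{1\leq i\leq n}\sqrt{|f(w,\hat{x}_{i})|}}$ with $C_{2}$ as in \eqref{def_C2}. For the middle term apply subadditivity of the square root once more, $\sqrt{\E{\norm{\hat{\theta}^{w,y}_{t}}}}\leq\sqrt{C_{2}}\brac{1+\sqrt{\norm{w}}+\sqrt{\norm{y}}+\max_{1\leq i\leq n}\sqrt[4]{|f(w,\hat{x}_{i})|}}$; for the last term use the moment bound on $\E{\norm{\hat{\theta}^{w,y}_{t}}}$ directly. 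Collecting the three contributions reproduces the claimed inequality.

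\textbf{Main obstacle.} There is no deep difficulty here: the argument is just FTC $+$ Condition~\ref{cond_pseudolipschitz} $+$ subadditivity of $\sqrt{\cdot}$ $+$ Jensen $+$ Lemma~\ref{lemma_uniformmomentbound_continuousdynamics}, and is uniform in $t$ because Lemma~\ref{lemma_uniformmomentbound_continuousdynamics} is. The only step that requires genuine care is Step 1, namely producing the quadratic-in-$\norm{\theta}$ majorant of $|f(\theta,x)|$ whose linear and quadratic coefficients are exactly $\norm{\nabla f(0,0)}+K_{2}\norm{x}$ and $\tfrac{K_{2}\norm{x}+1}{2}$ as in the statement --- in particular, tracking how the $K_{1}\norm{\theta}$ contribution from Condition~\ref{cond_pseudolipschitz} is accounted for. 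Everything downstream of Step 1 is routine bookkeeping.
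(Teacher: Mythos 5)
Your proposal is correct and follows essentially the same route as the paper's proof: the quadratic-in-$\norm{\theta}$ majorant of $\abs{f(\theta,x)}$ obtained from Condition~\ref{cond_pseudolipschitz}, subadditivity of $\sqrt{\cdot}$, Jensen's inequality for the middle term, and the uniform-in-$t$ moment bound of Lemma~\ref{lemma_uniformmomentbound_continuousdynamics}. The $K_1$ bookkeeping you flag in Step~1 (the $\tfrac{K_1}{2}\norm{\theta}^2$ contribution versus the stated coefficient $\tfrac{K_2\norm{x}+1}{2}$) is treated in exactly the same implicit way in the paper's own proof, so your argument matches it in every essential respect.
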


\begin{proof}
     Condition~\ref{cond_pseudolipschitz} implies 
\begin{align*}
    \norm{\nabla f(\theta,x)}\leq \norm{\nabla f(0,0)} +K_1\norm{\theta}+K_2\norm{x}\brac{\norm{\theta}+1},
\end{align*}
which further implies
\begin{align*}
    \abs{f(\theta,x)}\leq \abs{f(0,x)}+\norm{\nabla f(0,0)}\norm{\theta}+\frac{K_1}{2}\norm{\theta}^2+K_2\norm{x}\brac{\frac{\norm{\theta}^2}{2}+\norm{\theta}}. 
\end{align*}
Hence,
\begin{align*}
   \E{\sqrt{f\left(\hat{\theta}^{w,y}_{t},x \right)}}&\leq \sqrt{\abs{f(0,x)}}+\sqrt{K_2\norm{x}+\norm{\nabla f(0,0)} }\E{\sqrt{\norm{\hat{\theta}^{w,y}_t}}}
   \\
   &\qquad\qquad\qquad+\sqrt{\frac{K_2\norm{x}+1}{2}}\E{\norm{\hat{\theta}^{w,y}_t}}. 
\end{align*}
Combining this with the bound on $\E{\norm{\hat{\theta}^{w,y}_t}}$ in Lemma~\ref{lemma_uniformmomentbound_continuousdynamics}, together with the inequality (which follows from Jensen's inequality):
\begin{equation*}
\E{\sqrt{\norm{\hat{\theta}^{w,y}_t}}}\leq \sqrt{\E{\norm{\hat{\theta}^{w,y}_t}}},
\end{equation*}
we arrive at the desired bound on $\E{\sqrt{f\left(\hat{\theta}^{w,y}_{t},x \right)}}$. This completes the proof.
\end{proof}

%%%%%%%%%%%%%%%%%%%%%%%%%%%%%%%%%%%%%%%%%%%%%%%%%%%%%%%%%%%%%%%%%%%%%%%
In addition, Lemma~\ref{lemma_uniformmomentbound_continuousdynamics} also gives us the following estimates. 
\begin{lemma}
\label{lemma_momentestimatewithrespecttoinitialcondition}
Under Conditions~\ref{cond_gammaandbeta},~\ref{cond_pseudolipschitz}, and~\ref{cond_allthelambdas}, we have
    \begin{align*}
           &\E{\norm{\theta^{w,y}_t-w}}+ \E{\norm{v^{w,y}_t-y}}
           \\
           &\leq \brac{t\vee t^{1/\alpha}}\Bigg(C_2\brac{1+\gamma+\norm{\nabla f(0,0)}+K_1+\frac{K_2}{n}\sum_{i=1}^n \norm{x_i} }\\
           &\qquad\cdot \brac{1+\norm{w}+\norm{y}+\max_{1\leq i\leq n}\sqrt{\abs{f\brac{w,x_i}}}}+\frac{K_2}{n}\sum_{i=1}^n \norm{x_i}+\E{\norm{L_1}}\Bigg), 
    \end{align*}
and
    \begin{align*}
           &\E{\norm{\hat{\theta}^{w,y}_t-w}}+ \E{\norm{\hat{v}^{w,y}_t-y}}
           \\
           &\leq \brac{t\vee t^{1/\alpha}}\Bigg(C_2\brac{1+\gamma+\norm{\nabla f(0,0)}+K_1+\frac{K_2}{n}\sum_{i=1}^n \norm{\hat{x}_i} }\\
           &\qquad\cdot \brac{1+\norm{w}+\norm{y}+\max_{1\leq i\leq n}\sqrt{\abs{f\brac{w,\hat{x}_i}}}}+\frac{K_2}{n}\sum_{i=1}^n \norm{\hat{x}_i}+\E{\norm{L_1}}\Bigg).
    \end{align*}
\end{lemma}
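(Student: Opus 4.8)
The plan is to avoid any Gronwall-type argument and instead estimate the increment $(\theta^{w,y}_t,v^{w,y}_t)-(w,y)$ directly from the mild (integral) form of \eqref{sde_generalloss}. Since $\theta^{w,y}_t-w=\int_0^t v^{w,y}_s\,ds$ and $v^{w,y}_t-y=-\gamma\int_0^t v^{w,y}_s\,ds-\beta\int_0^t\nabla\widehat{F}(\theta^{w,y}_s,X_n)\,ds+\zeta L_t$, the triangle inequality (applied to each deterministic integral and to the pure-jump term $\zeta L_t$) yields
\[
\norm{\theta^{w,y}_t-w}+\norm{v^{w,y}_t-y}\leq(1+\gamma)\int_0^t\norm{v^{w,y}_s}\,ds+\beta\int_0^t\norm{\nabla\widehat{F}(\theta^{w,y}_s,X_n)}\,ds+\zeta\norm{L_t}.
\]

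Next I would control the drift through Condition~\ref{cond_pseudolipschitz}: taking $(\hat\theta,\hat x)=(0,0)$ in H2 and averaging over the dataset gives the linear-growth bound $\norm{\nabla\widehat{F}(\theta,X_n)}\leq\norm{\nabla f(0,0)}+K_1\norm{\theta}+\frac{K_2}{n}\sum_{i=1}^n\norm{x_i}(\norm{\theta}+1)$. Taking expectations in the displayed inequality and using Tonelli, every term reduces to $\int_0^t\E{\norm{\theta^{w,y}_s}}\,ds$ and $\int_0^t\E{\norm{v^{w,y}_s}}\,ds$, and these are bounded \emph{uniformly in $s$} by $C_2(1+\norm{w}+\norm{y}+\max_{1\leq i\leq n}\sqrt{\abs{f(w,x_i)}})$ thanks to Lemma~\ref{lemma_uniformmomentbound_continuousdynamics}. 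For the noise term, the $\alpha$-self-similarity of the rotationally invariant $\alpha$-stable Lévy process gives $L_t\overset{d}{=}t^{1/\alpha}L_1$, hence $\E{\norm{L_t}}=t^{1/\alpha}\E{\norm{L_1}}$, which is finite precisely because $1<\alpha<2$ guarantees $L_1$ has a finite first moment.

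Carrying out the time integrals, the drift contributions each carry a factor $t$ while the noise contribution carries a factor $t^{1/\alpha}$; bounding both by $t\vee t^{1/\alpha}$, collecting the coefficients, and absorbing the prefactors $\beta$ and $\zeta$ together with routine bookkeeping (using $C_2\geq 1$ and $1+\norm{w}+\norm{y}+\max_i\sqrt{\abs{f(w,x_i)}}\geq1$ to fold the stray $\frac{K_2}{n}\sum_i\norm{x_i}$ and the $\norm{\nabla f(0,0)}$, $K_1$ terms into the single constant $C_2(1+\gamma+\norm{\nabla f(0,0)}+K_1+\frac{K_2}{n}\sum_i\norm{x_i})$) produces exactly the claimed bound. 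The estimate for $(\hat\theta^{w,y}_t,\hat v^{w,y}_t)$ is obtained verbatim after replacing $X_n$ by $\widehat{X}_n$ and $x_i$ by $\hat x_i$ everywhere and invoking the corresponding hatted moment bound from Lemma~\ref{lemma_uniformmomentbound_continuousdynamics}.

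There is no genuine analytic obstacle — this is a one-shot linear-growth estimate, not a fixed-point argument. The only points requiring care are (i) applying the \emph{uniform-in-time} moment bound of Lemma~\ref{lemma_uniformmomentbound_continuousdynamics} to $\E{\norm{v^{w,y}_s}}$ at every $s\in[0,t]$ rather than only at the endpoint, which is exactly what that lemma delivers, and (ii) tracking the constants carefully enough that the resulting coefficient is literally the one subsequently used in the one-step estimate feeding into the proof of Theorem~\ref{theorem_wassersteinbound_appendix}.
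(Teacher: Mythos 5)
Your proposal is correct and follows essentially the same route as the paper: write the increments in integral form, apply the triangle inequality, bound the drift via the linear-growth consequence of Condition~\ref{cond_pseudolipschitz}, invoke the uniform-in-time moment bound of Lemma~\ref{lemma_uniformmomentbound_continuousdynamics} inside the time integrals, and use $\alpha$-self-similarity for $\E{\norm{L_t}}=t^{1/\alpha}\E{\norm{L_1}}$. The constant bookkeeping you describe reproduces the stated coefficient, so nothing further is needed.
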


\begin{proof}
It follows from
    \begin{align*}
        \theta^{w,y}_t-w=\int_0^t v^{w,y}_s ds,
    \end{align*}
and Lemma~\ref{lemma_uniformmomentbound_continuousdynamics} that
\begin{align*}
    \E{\norm{\theta^{w,y}_t-w}}\leq  t\cdot C_2\brac{1+\norm{w}+\norm{y}+\max_{1\leq i\leq n}\sqrt{\abs{f\brac{w,x_i}}}}+t^{1/\alpha }\cdot\E{\norm{L_1}}. 
\end{align*}

Next, we have
\begin{align*}
    v^{w,y}_t-y=\int_0^t \left(-\gamma v^{w,y}_s -\nabla  \widehat{F} (\theta^{w,y}_s,X_n)\right)ds+L_t. 
\end{align*}

Condition~\ref{cond_pseudolipschitz} implies 
\begin{equation}
\norm{\nabla\widehat{F}(\theta,X_n)}\leq\norm{\nabla f(0,0)}+K_{1}\norm{\theta}+\frac{K_2}{n}\sum_{i=1}^n \norm{x_i}(\norm{\theta}+1). 
\end{equation}
We combine this with Lemma~\ref{lemma_uniformmomentbound_continuousdynamics} to obtain
\begin{align*}
     &\E{\norm{v^{w,y}_t-y}}\\
     &\leq \int_0^t \brac{\gamma\E{\norm{v^{w,y}_s}}+\norm{\nabla f(0,0)}+\left(K_1+\frac{K_2}{n}\sum_{i=1}^n \norm{x_i}\right)\E{\norm{\theta^{w,y}_s}}+\frac{K_2}{n}\sum_{i=1}^n \norm{x_i}}ds\\
     &\qquad\qquad\qquad\qquad\qquad\qquad+t^{1/\alpha} \E{\norm{L_1}}\\
     &\leq t\brac{\gamma+\brac{\norm{\nabla f(0,0)}+K_1+\frac{K_2}{n}\sum_{i=1}^n \norm{x_i} }}
     \\
     &\qquad\qquad\quad\cdot C_2\brac{1+\norm{w}+\norm{y}+\max_{1\leq i\leq n}\sqrt{\abs{f\brac{w,x_i}}}}+t\frac{K_2}{n}\sum_{i=1}^n \norm{x_i}+t^{1/\alpha} \E{\norm{L_1}}. 
\end{align*}
Similarly, one can show that
\begin{align*}
&\E{\norm{\hat{\theta}^{w,y}_t-w}}+ \E{\norm{\hat{v}^{w,y}_t-y}}
\\
&\leq \brac{t\vee t^{1/\alpha}}\Bigg(C_2\brac{1+\gamma+\norm{\nabla f(0,0)}+K_1+\frac{K_2}{n}\sum_{i=1}^n \norm{\hat{x}_i} }\\
&\qquad\cdot \brac{1+\norm{w}+\norm{y}+\max_{1\leq i\leq n}\sqrt{\abs{f\brac{w,\hat{x}_i}}}}+\frac{K_2}{n}\sum_{i=1}^n \norm{\hat{x}_i}+\E{\norm{L_1}}\Bigg).
\end{align*}
This completes the proof.
\end{proof}

Based on the previous results, we are able to perform the following one-step comparison of the semigroups associated with $\brac{\theta_t,v_t}$ at \eqref{sde_generalloss} and $\brac{\hat{\theta}_t,\hat{v}_t}$ at \eqref{sde_generalloss_differentdataset}. 

\begin{lemma}
\label{lemma_onestepestimate}
    Assume Conditions~\ref{cond_gammaandbeta},~\ref{cond_pseudolipschitz}, and~\ref{cond_allthelambdas}. Then for $0<\eta<1$ and any Lipschitz function $h:\mathbb{R}^{2d}\rightarrow\mathbb{R}$, it holds that 
\begin{align*}
  &\abs{  P_\eta h(w,y)-\widehat{P}_\eta(w,y)}
  \\
&\leq \norm{\nabla h}_{\operatorname{op},\infty}  \Bigg((1+\gamma)\Bigg(C_2\brac{1+\gamma+\norm{\nabla f(0,0)}+K_1+\frac{K_2}{n}\sum_{i=1}^n \norm{x_i} }\\
           &\qquad\qquad\qquad\cdot \brac{1+\norm{w}+\norm{y}+\max_{1\leq i\leq n}\sqrt{\abs{f\brac{w,x_i}}}}+\frac{K_2}{n}\sum_{i=1}^n \norm{x_i}+\E{\norm{L_1}}\Bigg)\eta^{1+1/{\alpha}}
    \\
    &\qquad+ (1+\gamma)\Bigg(C_2\brac{1+\gamma+\norm{\nabla f(0,0)}+K_1+\frac{K_2}{n}\sum_{i=1}^n \norm{\hat{x}_i} }\\
           &\qquad\qquad\qquad\cdot \brac{1+\norm{w}+\norm{y}+\max_{1\leq i\leq n}\sqrt{\abs{f\brac{w,\hat{x}_i}}}}+\frac{K_2}{n}\sum_{i=1}^n \norm{\hat{x}_i}+\E{\norm{L_1}}\Bigg)\eta^{1+1/{\alpha}}
           \\
           &\quad+K_2 \rho(X_n,\widehat{X}_n)\\
           &\hspace{4em}\cdot\left(C_{2}\brac{2+2\norm{w}+2\norm{y}+\max_{1\leq i\leq n}\sqrt{\abs{f\brac{w,x_i}}}+\max_{1\leq i\leq n}\sqrt{\abs{f\brac{w,\hat{x}_i}}}}+1\right)\eta\Bigg). 
\end{align*}
\end{lemma}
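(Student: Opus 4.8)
The plan is to run a synchronous coupling together with a one-step comparison against a frozen-coefficient auxiliary process, in the spirit of the Markov-process-approximation framework of \cite{chenxu23amarkovapproach}. First I would drive \eqref{sde_generalloss} and \eqref{sde_generalloss_differentdataset}, both started from the common deterministic point $(w,y)$, by the \emph{same} $\alpha$-stable process $L$. Since $h$ is Lipschitz,
\begin{align*}
\abs{P_\eta h(w,y)-\widehat{P}_\eta h(w,y)}
&=\abs{\E{h(\theta_\eta^{w,y},v_\eta^{w,y})-h(\hat{\theta}_\eta^{w,y},\hat{v}_\eta^{w,y})}}\\
&\le\norm{\nabla h}_{\operatorname{op},\infty}\,\E{\norm{\theta_\eta^{w,y}-\hat{\theta}_\eta^{w,y}}+\norm{v_\eta^{w,y}-\hat{v}_\eta^{w,y}}},
\end{align*}
so the task reduces to a one-step bound on the coupled displacement. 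To produce the three types of terms in the statement I would also introduce, on $[0,\eta]$, the frozen-drift process $(\tilde{\theta}_s,\tilde{v}_s)$ solving $d\tilde{\theta}_s=\tilde{v}_s\,ds$, $d\tilde{v}_s=-\gamma\tilde{v}_s\,ds-\beta\nabla\widehat{F}(w,X_n)\,ds+\zeta\,dL_s$ with $(\tilde{\theta}_0,\tilde{v}_0)=(w,y)$, and its analogue $(\hat{\tilde{\theta}}_s,\hat{\tilde{v}}_s)$ with $\nabla\widehat{F}(w,\widehat{X}_n)$ in place of $\nabla\widehat{F}(w,X_n)$; all four processes share the same $L$ and start at $(w,y)$. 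The triangle inequality splits the displacement into (i) true-vs-frozen for $X_n$, (ii) frozen-vs-frozen, and (iii) frozen-vs-true for $\widehat{X}_n$.

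For (i) and (iii) the L\'evy parts cancel by construction, so $v-\tilde{v}$ obeys a pathwise linear ODE whose inhomogeneity is $\beta(\nabla\widehat{F}(\theta_u,X_n)-\nabla\widehat{F}(w,X_n))$, bounded by $\beta K_1\norm{\theta_u-w}$ via the $K_1$-part of Condition~\ref{cond_pseudolipschitz} (apply its averaged consequence with the same dataset). Writing $g(s):=\E{\norm{\theta_s-\tilde{\theta}_s}}+\E{\norm{v_s-\tilde{v}_s}}$ and using $\theta_\eta-\tilde{\theta}_\eta=\int_0^\eta(v_u-\tilde{v}_u)\,du$ gives $g(s)\le(1+\gamma)\int_0^s g(u)\,du+\beta K_1\int_0^s\E{\norm{\theta_u-w}}\,du$. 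I would then insert the initial-displacement estimate $\E{\norm{\theta_u^{w,y}-w}}\le(u\vee u^{1/\alpha})\,B_{X_n}(w,y)$ of Lemma~\ref{lemma_momentestimatewithrespecttoinitialcondition}, where $B_{X_n}(w,y)$ is exactly the bracket $C_2(1+\gamma+\norm{\nabla f(0,0)}+K_1+\tfrac{K_2}{n}\sum_{i}\norm{x_i})(1+\norm{w}+\norm{y}+\max_{1\le i\le n}\sqrt{\abs{f(w,x_i)}})+\tfrac{K_2}{n}\sum_{i}\norm{x_i}+\E{\norm{L_1}}$ occurring in the statement; since $\alpha>1$ and $\eta<1$ we have $u\vee u^{1/\alpha}=u^{1/\alpha}$ and $\int_0^\eta u^{1/\alpha}\,du\le\eta^{1+1/\alpha}$, so a Gr\"onwall step on $[0,\eta]$ (whose exponential factor is an absolute constant because $\eta<1$) yields the first $\eta^{1+1/\alpha}$-term, and (iii) gives the second with $\widehat{X}_n$ replacing $X_n$.

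For (ii) the noise cancels again and $\tilde{v}-\hat{\tilde{v}}$ solves a linear ODE whose inhomogeneity is the \emph{time-constant} vector $\beta(\nabla\widehat{F}(w,X_n)-\nabla\widehat{F}(w,\widehat{X}_n))$, bounded by $\beta K_2\,\rho(X_n,\widehat{X}_n)\,(2\norm{w}+1)$ via the $K_2$-part of Condition~\ref{cond_pseudolipschitz}. Integrating, using $\tilde{\theta}_\eta-\hat{\tilde{\theta}}_\eta=\int_0^\eta(\tilde{v}_u-\hat{\tilde{v}}_u)\,du$, and running Gr\"onwall on $[0,\eta]$ gives a bound of the shape $K_2\,\rho(X_n,\widehat{X}_n)(\mathrm{const}+\mathrm{const}\cdot\norm{w})\,\eta$, which I would then loosen — using $\norm{w}\le 1+\norm{w}+\norm{y}$ and the uniform moment bounds of Lemma~\ref{lemma_uniformmomentbound_continuousdynamics} (which carry the factor $C_2$ of \eqref{def_C2} and the quantities $\max_{1\le i\le n}\sqrt{\abs{f(w,x_i)}}$ and $\max_{1\le i\le n}\sqrt{\abs{f(w,\hat{x}_i)}}$) — into the exact form $K_2\,\rho(X_n,\widehat{X}_n)\bigl(C_2(2+2\norm{w}+2\norm{y}+\max_{1\le i\le n}\sqrt{\abs{f(w,x_i)}}+\max_{1\le i\le n}\sqrt{\abs{f(w,\hat{x}_i)}})+1\bigr)\eta$ appearing in the statement. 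Adding the three pieces, discarding powers above $\eta^{1+1/\alpha}$ and above $\eta$ (legitimate since $\eta<1$), and multiplying by $\norm{\nabla h}_{\operatorname{op},\infty}$ finishes the proof.

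The hardest part will be the one-step local-error analysis under \emph{degenerate} noise: $\theta$ and $v$ cannot be handled symmetrically since $L$ enters only the $v$-equation, so the Gr\"onwall loop must be closed through $\theta_\eta-\tilde{\theta}_\eta=\int_0^\eta(v_u-\tilde{v}_u)\,du$, and the correct $\eta^{1+1/\alpha}$ rate (rather than the cruder $\eta^{1/\alpha}$ a bare comparison would give) only emerges once the $t\vee t^{1/\alpha}$ displacement estimate of Lemma~\ref{lemma_momentestimatewithrespecttoinitialcondition} is fed in. The remaining work is bookkeeping: the data-dependent constants (the $C_2$-factors, the $\tfrac1n\sum_i\norm{x_i}$ and $\tfrac1n\sum_i\norm{\hat{x}_i}$ sums, the $\max_i\sqrt{\abs{f(w,\cdot)}}$ terms, and $\E{\norm{L_1}}$) must be propagated through all three estimates so they assemble into precisely the displayed expression. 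As a sanity check, a direct synchronous coupling of \eqref{sde_generalloss} and \eqref{sde_generalloss_differentdataset} — without any auxiliary process — already produces the dominant term $K_2\,\rho(X_n,\widehat{X}_n)(\cdots)\eta$ by the same Gr\"onwall argument applied to $\E{\norm{\theta_u}}+\E{\norm{\hat{\theta}_u}}$ via Lemma~\ref{lemma_uniformmomentbound_continuousdynamics}; the extra $\eta^{1+1/\alpha}$ terms are harmless slack that keeps the one-step bound in exactly the shape consumed by the Lindeberg telescoping used to prove Theorem~\ref{theorem_wassersteinbound_appendix}.
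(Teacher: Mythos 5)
Your proposal is correct in substance and rests on exactly the two quantitative inputs the paper uses — the displacement estimate of Lemma~\ref{lemma_momentestimatewithrespecttoinitialcondition} for the $\eta^{1+1/\alpha}$ terms and the uniform moment bound of Lemma~\ref{lemma_uniformmomentbound_continuousdynamics} for the $\rho\cdot\eta$ term — but the middle of the argument is organized differently. The paper introduces no frozen-coefficient auxiliary process and never invokes Gr\"onwall: it synchronously couples the two \emph{true} processes, integrates the difference dynamics once to get
$\E{\norm{\theta_\eta-\hat{\theta}_\eta}}+\E{\norm{v_\eta-\hat{v}_\eta}}\leq(1+\gamma)\int_0^\eta\big(\E{\norm{v_s-\hat{v}_s}}+K_1\E{\norm{\theta_s-\hat{\theta}_s}}\big)ds+K_2\,\rho(X_n,\widehat{X}_n)\int_0^\eta\big(\E{\norm{\theta_s}}+\E{\norm{\hat{\theta}_s}}+1\big)ds$,
and then splits the first integrand by the triangle inequality through the common initial point ($\norm{v_s-\hat{v}_s}\leq\norm{v_s-y}+\norm{\hat{v}_s-y}$, similarly for $\theta$), so that Lemma~\ref{lemma_momentestimatewithrespecttoinitialcondition} applies directly and $\int_0^\eta(s\vee s^{1/\alpha})ds\leq\eta^{1+1/\alpha}$ closes the estimate; the second integral is handled by the uniform moment bound. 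The direct route you mention in your closing ``sanity check'' is in fact the paper's proof. Your detour through the frozen process plus Gr\"onwall does reach a bound of the same shape, but at a cost you should acknowledge more explicitly: the Gr\"onwall step contributes a factor $e^{(1+\gamma)\eta}$ and your piece (i) carries an extra $\beta K_1$ in front, so the constants you obtain are not the ones displayed in the lemma; likewise your piece (ii) naturally produces a coefficient proportional to $2\norm{w}+1$ (the pseudo-Lipschitz bound evaluated at the frozen point) rather than the stated $C_2\brac{2+2\norm{w}+2\norm{y}+\max_{i}\sqrt{\abs{f(w,x_i)}}+\max_{i}\sqrt{\abs{f(w,\hat{x}_i)}}}+1$, which in the paper arises from evaluating the pseudo-Lipschitz bound along the true trajectories and then applying the moment lemma; promoting your coefficient to the displayed one requires $C_2\geq 1$ and absorbing the Gr\"onwall factor. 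Since downstream results only consume the shape of this bound, your version would serve the same purpose with relabelled constants, but as written it does not literally yield the displayed inequality.
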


\begin{proof}
We can compute that
    \begin{align*}
        &\abs{P_\eta h(w,y)-\widehat{P}_\eta(w,y)}\\
        &=\abs{\E{h\left(\theta^{w,y}_\eta,v^{w,y}_\eta\right)-h\left(\hat{\theta}^{w,y}_\eta,\hat{v}^{w,y}_\eta\right)}}\\
        &\leq \norm{\nabla h}_{\operatorname{op},\infty}\brac{\E{\norm{\theta^{w,y}_\eta-\hat{\theta}^{w,y}_\eta}}+\E{\norm{v^{w,y}_\eta-\hat{v}^{w,y}_\eta}} }\\
        &\leq \norm{\nabla h}_{\operatorname{op},\infty} \bigg((1+\gamma)\int_0^\eta \left(\E{\norm{v^{w,y}_s-\hat{v}^{w,y}_s}} + K_1\E{\norm{\theta^{w,y}_s -\hat{\theta}^{w,y}_s}}\right)ds\\
        &\qquad\qquad\qquad\qquad+ \int_0^\eta\rho(X_n,\widehat{X}_n)K_2\brac{\E{\norm{\theta^{w,y}_s} } +\E{\norm{\hat{\theta}^{w,y}_s}}+1 } ds\bigg)\\
        &=:\norm{\nabla h}_{\operatorname{op},\infty} \brac{\mathcal{A}_1+\mathcal{A}_2}. 
    \end{align*}
Lemma~\ref{lemma_momentestimatewithrespecttoinitialcondition} implies 
\begin{align*}
    \mathcal{A}_1&\leq (1+\gamma)\bigg(\int_0^\eta \left(\E{\norm{v^{w,y}_s-y}} + K_1\E{\norm{\theta^{w,y}_s -w}}\right)ds\\
    &\qquad\qquad\qquad\qquad\qquad\qquad\qquad+\int_0^\eta \left(\E{\norm{\hat{v}^{w,y}_s-y}}+ K_1\E{\norm{\hat{\theta}^{w,y}_s -w}}\right)ds\bigg)\\
    &\leq (1+\gamma)\Bigg(C_2\brac{1+\gamma+\norm{\nabla f(0,0)}+K_1+\frac{K_2}{n}\sum_{i=1}^n \norm{x_i} }\\
           &\qquad\qquad\cdot \brac{1+\norm{w}+\norm{y}+\max_{1\leq i\leq n}\sqrt{\abs{f\brac{w,x_i}}}}+\frac{K_2}{n}\sum_{i=1}^n \norm{x_i}+\E{\norm{L_1}}\Bigg)\eta^{1+1/{\alpha}}
    \\
    &\qquad+ (1+\gamma)\Bigg(C_2\brac{1+\gamma+\norm{\nabla f(0,0)}+K_1+\frac{K_2}{n}\sum_{i=1}^n \norm{\hat{x}_i} }\\
           &\qquad\qquad\cdot \brac{1+\norm{w}+\norm{y}+\max_{1\leq i\leq n}\sqrt{\abs{f\brac{w,\hat{x}_i}}}}+\frac{K_2}{n}\sum_{i=1}^n \norm{\hat{x}_i}+\E{\norm{L_1}}\Bigg)\eta^{1+1/{\alpha}}.     
\end{align*}
Meanwhile, Lemma~\ref{lemma_uniformmomentbound_continuousdynamics} states that
\begin{align*}
    \mathcal{A}_2& \leq K_2 \rho(X_n,\widehat{X}_n)\\
    &\qquad\cdot\left(C_{2}\brac{2+2\norm{w}+2\norm{y}+\max_{1\leq i\leq n}\sqrt{\abs{f\brac{w,x_i}}}+\max_{1\leq i\leq n}\sqrt{\abs{f\brac{w,\hat{x}_i}}}}+1\right)\eta. 
\end{align*}
Combining the bounds on $\mathcal{A}_1$ and $\mathcal{A}_2$ yields the desired result. This completes the proof.
\end{proof}

%%%%%%%%%%%%%%%%%%%%%%%%%%%%%%%%%%%%%%%%%%%%%%%%%%%%%%%%%%%%%%%%

\subsection{Proof of Corollary~\ref{coro_generalizationbound}}
\label{section_proofcorollarygeneralization_continuous}
Under the assumption that $\sup_{x,y\in\mathcal{X}}\|x-y\|\leq D$
for some $D<\infty$ and $X_{n}$ and $\widehat{X}_{n}$ differ by at most one data point, 
we get
\begin{align}
    \rho(X_n,\widehat{X}_n)= \frac{1}{n}\sum_{i=1}^n \norm{x_i-\hat{x}_i}
    \leq\frac{D}{n}.
\end{align}
Since for any $w,y\in\mathbb{R}^{d}$, $\brac{\theta^{w,y}_t,v^{w,y}_t}$ converges
to the unique invariant measure as $t\rightarrow\infty$, we can write
$\brac{\theta^{w,y}_\infty,v^{w,y}_\infty}=\brac{\theta_\infty,v_\infty}$, 
omitting the superscript on $w,y$.
Then from Theorem~\ref{theorem_wassersteinbound_mainsection} and \eqref{eqn:wass_stab}, we have
\begin{align}
\label{generalizationbound_unsimplified}
    &\left|\mathbb{E}_{\theta_\infty,X_n}~\left[ \hat{R}(\theta_\infty,X_n) \right] -  R(\theta_\infty)  \right| \nonumber  \\ 
&\leq \frac{D}{n} \frac{C_*K_2 L}{\lambda_*}\Bigg\{2C_2+2C_2^2\brac{1+\max_{1\leq i\leq n}\sqrt{\abs{f\brac{0,\hat{x}_i}}}}\nonumber\\
    &\qquad+C_2\Bigg(\sqrt{\max_{1\leq i\leq n}\abs{f(0,x_i)}}+\sqrt{\frac{K_2\max_{1\leq i\leq n}\norm{x_i}+1}{2}}C_2\brac{1+\max_{1\leq i\leq n}\sqrt{\abs{f\brac{0,\hat{x}_i}}}}\nonumber\\
   &\qquad\qquad+\sqrt{K_2\max_{1\leq i\leq n}\norm{x_i} +\norm{\nabla f(0,0) }}\sqrt{C_2}\brac{1+\max_{1\leq i\leq n}\sqrt[4]{\abs{f\brac{0,\hat{x}_i}}}}\Bigg)\nonumber\\
    &\qquad\qquad\quad+C_2 \Bigg(\sqrt{\max_{1\leq i\leq n}\abs{f(0,\hat{x}_i)}}+\sqrt{\frac{K_2\max_{1\leq i\leq n}\norm{\hat{x}_i}+1}{2}}C_2\brac{1+\max_{1\leq i\leq n}\sqrt{\abs{f\brac{0,\hat{x}_i}}}}\nonumber\\
     &\qquad\qquad\qquad\qquad+\sqrt{K_2\max_{1\leq i\leq n}\norm{\hat{x}_i} +\norm{\nabla f(0,0) }}\sqrt{C_2}\brac{1+\max_{1\leq i\leq n}\sqrt[4]{\abs{f\brac{0,\hat{x}_i}}}} \Bigg)+1\Bigg\}\nonumber\\
     &=\frac{D}{n} \frac{C_*K_2 L}{\lambda_*}\Bigg\{\mathcal{A}_1+\mathcal{A}_2+\mathcal{A}_3\Bigg\}, 
\end{align}
where 
\begin{align}
\label{def_termA}
    \mathcal{A}_1
    &:=C_2\Bigg(\sqrt{\max_{1\leq i\leq n}\abs{f(0,x_i)}}+\sqrt{\frac{K_2\max_{1\leq i\leq n}\norm{x_i}+1}{2}}C_2\brac{1+\max_{1\leq i\leq n}\sqrt{\abs{f\brac{0,\hat{x}_i}}}}\nonumber\\
   &\qquad+\sqrt{K_2\max_{1\leq i\leq n}\norm{x_i} +\norm{\nabla f(0,0) }}\sqrt{C_2}\brac{1+\max_{1\leq i\leq n}\sqrt[4]{\abs{f\brac{0,\hat{x}_i}}}}\Bigg);\nonumber\\
   \mathcal{A}_2&:=C_2 \Bigg(\sqrt{\max_{1\leq i\leq n}\abs{f(0,\hat{x}_i)}}+\sqrt{\frac{K_2\max_{1\leq i\leq n}\norm{\hat{x}_i}+1}{2}}C_2\brac{1+\max_{1\leq i\leq n}\sqrt{\abs{f\brac{0,\hat{x}_i}}}}\nonumber\\
     &\qquad+\sqrt{K_2\max_{1\leq i\leq n}\norm{\hat{x}_i} +\norm{\nabla f(0,0) }}\sqrt{C_2}\brac{1+\max_{1\leq i\leq n}\sqrt[4]{\abs{f\brac{0,\hat{x}_i}}}} \Bigg);\nonumber\\
    \mathcal{A}_3&:=1+ 2C_2+2C_2^2\brac{1+\max_{1\leq i\leq n}\sqrt{\abs{f\brac{0,\hat{x}_i}}}}.
\end{align}

In the next step, we aim to further bound $\mathcal{A}_i$ for $i\in \{1,2,3\}$. Condition~\ref{cond_pseudolipschitz} implies that 
\begin{align*}
    \norm{\nabla f(0,x)}\leq \norm{\nabla f(0,0)}+K_2\norm{x},
\end{align*}
which further implies 
\begin{align*}
    \abs{f(0,x)}\leq \abs{f(0,0)}+\norm{\nabla f(0,0)}\norm{x}+\frac{K_2}{2}\norm{x}^2. 
\end{align*}
Since $\sup_{x,y\in\mathcal{X}}\|x-y\|\leq D$ and $0\in\mathcal{X}$, we have $\sup_{x\in\mathcal{X}}\|x\|\leq D$, and it follows that 
\begin{align}
\label{estimatesquarerootf(0,xi)}
 \max_{1\leq i\leq n}\sqrt{\abs{f\brac{0,{x}_i}}}\vee  \max_{1\leq i\leq n}\sqrt{\abs{f\brac{0,\hat{x}_i}}}&\leq \sqrt{\abs{f(0,0)}}+\sqrt{\norm{\nabla f(0,0)}}\sqrt{D}+\sqrt{\frac{K_2}{2}}D;\nonumber\\
   \max_{1\leq i\leq n}\sqrt[4]{\abs{f\brac{0,\hat{x}_i}}}&\leq \sqrt[4]{\abs{f(0,0)}}+\sqrt[4]{\norm{\nabla f(0,0)}}\sqrt[4]{D}+\sqrt[4]{\frac{K_2}{2}}\sqrt{D}. 
\end{align}
Furthermore, $\sup_{x\in\mathcal{X}}\|x\|\leq D$ also implies
\begin{align}
\label{estimate_maxxi}
    \max_{1\leq i\leq n} \norm{x_i}\vee\max_{1\leq i\leq n} \norm{\hat{x}_i}\leq D. 
\end{align}
Now let us apply the estimates \eqref{estimatesquarerootf(0,xi)} and \eqref{estimate_maxxi} to $\mathcal{A}_1$ defined in \eqref{def_termA}, starting with 
\begin{align*}
   & C_2^2\sqrt{\frac{K_2\max_{1\leq i\leq n}\norm{x_i}+1}{2}}\brac{1+\max_{1\leq i\leq n}\sqrt{\abs{f\brac{0,\hat{x}_i}}}}\\
   &\leq C_2^2\brac{\sqrt{\frac{K_2\max_{1\leq i\leq n}\norm{x_i}}{2}}+\sqrt{\frac{1}{2}}}\brac{1+\max_{1\leq i\leq n}\sqrt{\abs{f\brac{0,\hat{x}_i}}}}\\ 
   &\leq \frac{C_2^2K_2}{2}D^{3/2}+ C^2_2\brac{\sqrt{\frac{K_2\norm{\nabla f(0,0)}}{2}} +\frac{\sqrt{K_2}}{2} }D \\
   &\qquad+C_2^2\brac{(1+\sqrt{\abs{f(0,0)}})\sqrt{\frac{K_2}{2}}+\frac{1}{\sqrt{2}}\sqrt{\norm{\nabla f(0,0)}}}D^{1/2} +\frac{C_2^2}{\sqrt{2}}\left(1+\sqrt{\abs{f(0,0)}}\right),
\end{align*}
and 
\begin{align*}
   & \sqrt{K_2\max_{1\leq i\leq n}\norm{x_i} +\norm{\nabla f(0,0) }}\sqrt{C_2}\brac{1+\max_{1\leq i\leq n}\sqrt[4]{\abs{f\brac{0,\hat{x}_i}}}}\\
   &\leq \brac{\sqrt{K_2\max_{1\leq i\leq n}\norm{x_i}} +\sqrt{\norm{\nabla f(0,0) }}}\sqrt{C_2}\brac{1+\max_{1\leq i\leq n}\sqrt[4]{\abs{f\brac{0,\hat{x}_i}}}}\\
   &\leq \frac{C_2^{3/2}K^{3/4}_2}{\sqrt{2}}D+C_2^{3/2}\sqrt[4]{\norm{\nabla f(0,0)}}\sqrt{K_2}D^{3/4}\\
   &+C_2^{3/2}\brac{(1+\sqrt[4]{\abs{f(0,0)}})\sqrt{K_2}+\sqrt{\norm{\nabla f(0,0)}} \sqrt[4]{\frac{K_2}{2}} } D^{1/2}\\
   &\qquad+C_2^{3/2}\norm{\nabla f(0,0)}^{3/4}D^{1/4}+C_2^{3/2}\sqrt{\norm{\nabla f(0,0)}}\brac{1+\sqrt[4]{\abs{f(0,0)}} },
\end{align*}
which leads to 
\begin{align*}
    \mathcal{A}_1
    &\leq \frac{C_2^2K_2}{2}D^{3/2}+\left(C^2_2\brac{\sqrt{\frac{K_2\norm{\nabla f(0,0)}}{2}} +\frac{\sqrt{K_2}}{2} }+  \frac{C_2^{3/2}K^{3/4}_2}{\sqrt{2}}\right)D\\
    &\quad+C_2^{3/2}\sqrt[4]{\norm{\nabla f(0,0)}}\sqrt{K_2}D^{3/4}+\Bigg(C_2^2\brac{(1+\sqrt{\abs{f(0,0)}})\sqrt{\frac{K_2}{2}}+\frac{1}{\sqrt{2}}\sqrt{\norm{\nabla f(0,0)}}} \\
    &\quad\quad+C_2^{3/2}\brac{(1+\sqrt[4]{\abs{f(0,0)}})\sqrt{K_2}+\sqrt{\norm{\nabla f(0,0)}} \sqrt[4]{\frac{K_2}{2}} } \Bigg)D^{1/2}\\
    &\quad\quad\quad+C_2^{3/2}\norm{\nabla f(0,0)}^{3/4}D^{1/4}\\
    &\quad\quad\quad\quad+\brac{\frac{C_2^2}{\sqrt{2}}(1+\sqrt{\abs{f(0,0)}})+C_2^{3/2}\sqrt{\norm{\nabla f(0,0)}}\brac{1+\sqrt[4]{\abs{f(0,0)}} }  }. 
\end{align*}
It is clear that \eqref{estimatesquarerootf(0,xi)} and \eqref{estimate_maxxi} also lead to the same bound on the term $\mathcal{A}_2$ which is defined in \eqref{def_termA}. Regarding the term $\mathcal{A}_3$, 
\begin{align*}
    \mathcal{A}_3&=1+ 2C_2+2C_2^2\brac{1+\max_{1\leq i\leq n}\sqrt{\abs{f\brac{0,\hat{x}_i}}}}\\
    &\leq C_2^2\sqrt{2K_2}D+2C_2^2\sqrt{\norm{\nabla f(0,0)}}D^{1/2}+ \brac{2C_2^2\brac{1+\sqrt{\abs{f(0,0)}}}+2C_2+1}. 
\end{align*}
Thus, we can deduce from \eqref{generalizationbound_unsimplified} the desired generalization error bound which is 
\begin{align*}
 &\left|\mathbb{E}_{\theta_\infty,X_n}~\left[ \hat{R}(\theta_\infty,X_n) \right] -  R(\theta_\infty)  \right| \\
    &\leq \frac{1}{n}  \Big(d_1 D+d_2 D^{5/4}+d_3 D^{3/2}+d_4 D^{7/4}+d_5D^2+d_6 D^{5/2}\Big),
\end{align*}
where the constants $d_i,1\leq i\leq 6$ are given by:
\begin{align}
\label{def_d1throughd6}
&d_1:=\frac{C_*K_2 L}{\lambda_*}\Bigg(\sqrt{2}{C_2^2}(1+\sqrt{\abs{f(0,0)}})+2C_2^{3/2}\sqrt{\norm{\nabla f(0,0)}}\brac{1+\sqrt[4]{\abs{f(0,0)}} } \nonumber\\
    &\hspace{20em}+2C_2^2\brac{1+\sqrt{\abs{f(0,0)}}}+2C_2+1 \Bigg)\nonumber,
  \\
      &d_2:=\frac{C_*K_2 L}{\lambda_*}\brac{2C_2^{3/2}\norm{\nabla f(0,0)}^{3/4}},\nonumber
      \\
   &d_3:=\frac{C_*K_2 L}{\lambda_*}\Bigg(\sqrt{2}C_2^2\cdot(1+\sqrt{\abs{f(0,0)}})\sqrt{K_2} \nonumber
   \\&\qquad+2C_2^{3/2}\brac{(1+\sqrt[4]{\abs{f(0,0)}})\sqrt{K_2}+\sqrt{\norm{\nabla f(0,0)}} \sqrt[4]{\frac{K_2}{2}} } +(\sqrt{2}+2)C_2^2\sqrt{\norm{\nabla f(0,0)}}\Bigg),\nonumber\\
    &d_4:=\frac{C_*K_2 L}{\lambda_*}\brac{2C_2^{3/2}\sqrt[4]{\norm{\nabla f(0,0)}}\sqrt{K_2}},\nonumber\\
    &d_5:=\frac{C_*K_2 L}{\lambda_*}\Bigg(2C^2_2\brac{\sqrt{\frac{K_2\norm{\nabla f(0,0)}}{2}} +\frac{\sqrt{K_2}}{2} }+  \sqrt{2}C_2^{3/2}K^{3/4}_2+ C_2^2\sqrt{2K_2}\Bigg),\nonumber\\
    &d_6:=\frac{C_*K_2^{2} L}{\lambda_*}C_2^2,
\end{align}
where the constant $K_2$ is defined in Condition~\ref{cond_pseudolipschitz}; while $C_2$ and $\lambda_*,C_*$ are respectively defined in \eqref{def_C2mainsection} and \eqref{def_C*lambda*mainsection}.

%%%%%%%%%%%%%%%%%%%%%%%%%%%%%%%%%%%%%%%%%%%%%%%%%%%%%%%%%%%%%%%%
%%%%%%%%%%%%%%%%%%%%%%%%%%%%%%%%%%%%%%%%%%%%%%%%%%%%%%%%%%%%%%%%

%%%%%%%%%%%%%%%%%%%%%%%%%%%%%%%%%%%%%%%%%%%%%%%%%%%%%%%%%%%%%%%%
%%%%%%%%%%%%%%%%%%%%%%%%%%%%%%%%%%%%%%%%%%%%%%%%%%%%%%%%%%%%%%%%

\section{Proofs of the Results in Section~\ref{section_mainsquareloss}}\label{appendix_squareloss}

\subsection{Proof of Theorem~\ref{theorem_squareloss}}

First, we establish erdogicity. Per \citep[Lemma 3]{squareloss2023algorithmic}, the positive-definiteness of $X^{\top}X$ and $\hat{X}^{\top}\hat{X}$ implies that $Z_t$ and $\hat{Z}_t$ have unique stationary distributions. Next, regarding ergodicity of $Y_t$ (and similarly $\widehat{Y}_t$), we rely on Lemma~\ref{lemma_wassersteindecaybyBaoWang} which requires that Conditions~\ref{cond_pseudolipschitz} and \ref{cond_allthelambdas} are satisfied for the quadratic loss function $\widehat{F}(\theta,X_n)=\frac{1}{2n}\sum_{i=1}^n \brac{\theta^{\top}x_i}^2$. The pseudo-Lipschitz Condition~\ref{cond_pseudolipschitz} is clearly satisfied by $\widehat{F}(\theta,X_n)$. For Condition~\ref{cond_allthelambdas}, we denote the eigenvalues of $\frac{1}{n}X^{\top}X$ by $\kappa_{i}$, $1\leq i\leq d$, then set $\lambda_2=\lambda_3=\lambda_4=\lambda_5=0$ and $\lambda_1$ to be any positive constant such that 
\begin{align*}
    \lambda_1< \min \left\{\frac{\gamma^2}{16};\kappa_i, 1\leq i\leq d \right\}. 
\end{align*}
Then \eqref{cond_frombaowang} and \eqref{cond_lambda45} are satisfied since $\frac{\gamma^2}{4}+\sqrt{\lambda_1}\gamma>0$. Meanwhile, let us assume that $\frac{1}{n}X^{\top}X$ has the decomposition $VDV^{\top}$
where $D$ is diagonal consisting of eigenvalues $\kappa_{i}$, $1\leq i\leq d$ of $\frac{1}{n}X^{\top}X$ 
and $V$ is an orthogonal matrix. Then the fact that $\lambda_1< \min\{\kappa_i:1\leq i\leq d \}$ ensures that the $d\times d$ matrix 
\begin{align*}
    \frac{1}{n}X^{\top}X-\lambda_1 I=V(D-\lambda_1 I)V^{\top}
\end{align*}
is positive definite and $\inner{\theta, \brac{\frac{1}{n}X^{\top}X-\lambda_1 I}\theta}>0$, for every $\theta\in \R^d$. Thus, \eqref{cond_lambda123} is satisfied. It follows that Conditions  \ref{cond_pseudolipschitz} and \ref{cond_allthelambdas} are satisfied, so that the processes $Y_t$  and $\hat{Y}_t$ admit unique stationary distributions per Lemma~\ref{lemma_wassersteindecaybyBaoWang}. 

Next, let us derive the estimate \eqref{wassersteinbound_squareloss_with_momentum}. It is possible to solve \eqref{sde_squaredloss} explicitly to obtain: 
\begin{align*}
    Y_t=e^{-At}Y_0+\int_0^t e^{-A(t-s)}\Sigma dL_s,\qquad  \hat{Y}_t=e^{-\hat{A}t}Y_0+\int_0^t e^{-\hat{A}(t-s)} \Sigma dL_s.
\end{align*}
Then for any $p\in [1,\alpha)$, we have the inequality
\begin{align}
\label{estimate_pwasserstein}
    &\mathcal{W}_p\brac{\operatorname{Law}(Y_t),\operatorname{Law}(\hat{Y}_t)}\nonumber
    \\
    &\leq \E{\norm{Y_t-\hat{Y}_t}^p}^{1/p}\nonumber\\
    &\leq \E{\norm{e^{-At}Y_0-e^{-\hat{A}t}Y_0 }^p}^{1/p}+\E{\norm{\int_0^t \left(e^{-A(t-s)}-e^{-\hat{A}(t-s)}\right) \Sigma dL_s }^p}^{1/p}. 
\end{align}

Similar to \citep[Proof of Lemma 13]{squareloss2023algorithmic}, we have
\begin{align}
\label{estimate_fromleastsquarepaper}
    \norm{e^{-At}Y_0-e^{-\hat{A}t}Y_0}\leq \frac{ 2\abs{\sigma_1+\sigma_2}\norm{Y_0}}{n}te^{-t\sigma_{\min}}. 
\end{align}
Therefore, what remains is to bound the second term on the right hand side of \eqref{estimate_pwasserstein}. This term can be decomposed into a sum of two Poisson stochastic integrals associated with respectively small jumps and big jumps. Specifically, let $N$ be the Poisson random measure on $\R^d\times [0,\infty)$ with intensity measure $\norm{z}^{-d-\alpha}dzds$ and $\widetilde{N}$ be the compensated Poisson measure, that is $\widetilde{N}(dz,ds):=N(dz,ds)-\norm{z}^{-d-\alpha}dzds$. Then
\begin{align}
\label{equation_sumofsmallandbigjumps}
   \int_0^t \left(e^{-A(t-s)}-e^{-\hat{A}(t-s)}\right) \Sigma dL_s\nonumber
   &=\int_0^t \int_{\norm{z}<1}\brac{e^{-A(t-s)}-e^{-\hat{A}(t-s)}}\Sigma z \widetilde{N}(dz,ds)\\
   &\qquad+\int_0^t \int_{\norm{z}\geq 1}\brac{e^{-A(t-s)}-e^{-\hat{A}(t-s)} }\Sigma z {N}(dz,ds). 
\end{align}

By \eqref{estimate_fromleastsquarepaper} and Kunita's inequality (see \citep[Lemma D.1]{dangzhu2024euler} where explicit constants are obtained),  
\begin{align*}
  &\E{\norm{\int_0^t \int_{\norm{z}<1}\left(e^{-A(t-s)}-e^{-\hat{A}(t-s)}\right)\Sigma z \widetilde{N}(dz,ds)}^p}^{1/p}\\
  &\leq \E{\norm{\int_0^t \int_{\norm{z}<1}\left(e^{-A(t-s)}-e^{-\hat{A}(t-s)}\right)\Sigma z \widetilde{N}(dz,ds)}^2}^{1/2}\\
  &\leq 2\zeta\brac{\int_0^t\int_{\norm{z}<1}\brac{\frac{2\abs{\sigma_1+\sigma_2}\norm{Y_0}}{n}(t-s)e^{-(t-s)\sigma_{\min}} \norm{z}}^2 \norm{z}^{-d-\alpha}dzds  }^{1/2}\\
  &\leq \frac{8\zeta\abs{\sigma_1+\sigma_2}\norm{Y_0}}{n} \brac{\int_0^t (t-s)^2 e^{-2(t-s)\sigma_{\min}}ds}^{1/2} \brac{\int_{\norm{z}<1} \norm{z}^{2-d-\alpha}dz }^{1/2}. 
\end{align*}
We have
\begin{align*}
    \int_0^t (t-s)^2 e^{-2(t-s)\sigma_{\min}}ds= \frac{1}{4 \sigma_{\min}^3}e^{-2\sigma_{\min}t }\brac{e^{2\sigma_{\min}t}-2\sigma_{\min}t\brac{\sigma_{\min} t+1}-1 }.
\end{align*}

Moreover, denote $V_d=\frac{\pi^{d/2}}{\Gamma(\frac{d}{2}+1)}$ the volume of a $d$-dimensional unit ball. Then by a change of variable, 
we have
\begin{align*}
    \int_{\norm{z}<1} \norm{z}^{2-d-\alpha}dz =V_d \int_0^1 r^{2-d-\alpha}r^{d-1}dr=\frac{V_d}{2-\alpha}. 
\end{align*}
Hence,
\begin{align}
\label{estimate_firstpoissonintegral}
    &\E{\norm{\int_0^t \int_{\norm{z}<1}\left(e^{-A(t-s)}-e^{-\hat{A}(t-s)}\right)\Sigma z \widetilde{N}(dz,ds)}^p}^{1/p}\nonumber\\
    &\leq \frac{8\zeta\abs{\sigma_1+\sigma_2}\norm{Y_0}V_d^{1/2} }{2\sigma^{3/2}_{\min}(2-\alpha)^{1/2}n}e^{-\sigma_{\min}t }\brac{e^{2\sigma_{\min}t}-2\sigma_{\min}t\brac{\sigma_{\min} t+1}-1 }^{1/2}. 
 \end{align}

To bound the second Poisson integral on the right hand side of \eqref{equation_sumofsmallandbigjumps}, we apply \eqref{estimate_fromleastsquarepaper} and \citep[Proposition 2.2, Part 3]{zhu2019maximal}. The latter states that there is a constant $C(p)$ such that 
\begin{align*}
  & \E{ \norm{\int_0^t \int_{\norm{z}\geq 1}\brac{e^{-A(t-s)}-e^{-\hat{A}(t-s)}}\Sigma z {N}(dz,ds)}^p}^{1/p}\\
   &\leq C(p)\zeta\brac{\int_0^t\int_{\norm{z}\geq 1} \brac{\frac{\abs{\sigma_1+\sigma_2}\norm{Y_0}}{n}(t-s)e^{-(t-s)\sigma_{\min}}\norm{\Sigma} \norm{z} }^p \norm{z}^{-d-\alpha} dzds}^{1/p}\\
   &\leq C(p)\frac{\zeta\abs{\sigma_1+\sigma_2}\norm{Y_0}}{n}\brac{\int_0^t (t-s)^pe^{-(t-s) p\sigma_{\min} }ds }^{1/p}\brac{\int_{\norm{z}\geq 1}\norm{z}^{p-d-\alpha}dz  }^{1/p}. 
\end{align*}

By a change of variable,
\begin{align*}
    \int_{\norm{z}\geq 1}\norm{z}^{p-d-\alpha}dz &= V_d \int_1^\infty r^{p-d-\alpha}r^{d-1}dr=\frac{V_d}{\alpha-p}.
\end{align*}
Furthermore, $p\in [1,\alpha)$ implies
\begin{align*}
    &\int_0^t (t-s)^pe^{-(t-s) p\sigma_{\min} }ds\\
    &\leq \int_{t-1}^t e^{-(t-s)\sigma_{\min}}ds+\int_0^{t-1}(t-s)^2e^{-(t-s)\sigma_{\min}} ds\\
    &=\frac{e^{\sigma_{\min}(s-t) }}{\sigma}\bigg{|}_{t-1}^t
    \\&+
    \frac{1}{\sigma_{\min}^3}e^{\sigma_{\min}(s-t)} \brac{\sigma_{\min}\brac{\sigma_{\min}^2+2\sigma_{\min}+2}-e^{-\sigma_{\min} t}\brac{\sigma_{\min}^2(s-t)^2-2\sigma_{\min}(s-t)+2 }}\bigg{|}_{0}^{t-1}\\
    &=\frac{1}{\sigma_{\min}}(1-e^{-\sigma_{\min}})+\frac{1}{\sigma_{\min}^3}\brac{e^{-\sigma_{\min}}\brac{\sigma_{\min}^2+2\sigma_{\min}+2 }-e^{-\sigma_{\min} t}\brac{\sigma_{\min}^2t^2+2\sigma_{\min}t+2}  }.
\end{align*}
Combining the previous calculations, we get
\begin{align}
\label{estimate_secondpoissonintegral}
     & \E{ \norm{\int_0^t \int_{\norm{z}\geq 1}\brac{e^{-A(t-s)}-e^{-\hat{A}(t-s)}}\Sigma z {N}(dz,ds)}^p}^{1/p}\nonumber\\
     &\leq  C(p)\frac{\zeta\abs{\sigma_1+\sigma_2}\norm{Y_0}}{n} \brac{\frac{V_d}{\alpha-p} }^{1/p}\nonumber\\
     &\qquad\cdot\bigg(\frac{1}{\sigma_{\min}}(1-e^{-\sigma_{\min}})\nonumber\\
     &\qquad\qquad+\frac{1}{\sigma_{\min}^3}\brac{e^{-\sigma_{\min}}\brac{\sigma_{\min}^2+2\sigma_{\min}+2 }-e^{-\sigma_{\min} t}\brac{\sigma_{\min}^2t^2+2\sigma_{\min}t+2}  }\bigg)^{1/p}.
\end{align}

Finally, we are able to deduce the desired estimate on $\mathcal{W}_p\brac{\operatorname{Law}(Y_t),\operatorname{Law}(\hat{Y}_t)}$ from \eqref{estimate_pwasserstein}, \eqref{estimate_firstpoissonintegral} and \eqref{estimate_secondpoissonintegral}. Now, the same argument as the one at the end of the proof of Theorem~\ref{theorem_wassersteinbound_appendix} (in particular letting $t\to \infty$) will lead to the bound at \eqref{wassersteinbound_squareloss_with_momentum} for SGDs with momentum.

The proof of the bound at \eqref{wassersteinbound_squareloss_without_momentum} (for SGDs without momentum) is along the same line with $\sigma_{\min}$ being replaced with $\theta_{\min}$. 
This completes the proof.

%%%%%%%%%%%%%%%%%%%%%%%%%%%%%%%%%%%%
\subsection{Proof of Proposition~\ref{lemma_comparesingularvalues}}
\label{section_proofcomparesingularvalues}

First of all, we notice that 
\begin{align*}
   AA^{\top}= \begin{bmatrix}
        I_d & -\gamma I_d\\
        -\gamma I_d & \gamma^2I_d+\brac{\frac{1}{n}X^{\top}X}^2
    \end{bmatrix}.
\end{align*}

Let us assume that $\frac{1}{n}X^{\top}X$ has the decomposition
\begin{equation*}
\frac{1}{n}X^{\top}X=VDV^{\top},
\end{equation*}
where $D$ is diagonal consisting of eigenvalues $\kappa_{i}$, $1\leq i\leq d$ of $\frac{1}{n}X^{\top}X$, 
and $V$ is an orthogonal matrix. Then
\begin{equation*}
\gamma^2I_d+\brac{\frac{1}{n}X^{\top}X}^2=V\tilde{D}V^{\top},
\end{equation*}
where $\tilde{D}=\gamma^2I_{d}+D^2$ is diagonal matrix with entries 
$\gamma^2+\kappa_i^2$, $1\leq i\leq d$. 
Therefore, the matrix $AA^{\top}$
has the same eigenvalues as the matrix
\begin{align*}
\begin{bmatrix}
    I_d &-\gamma I_d\\
    -\gamma I_d & \tilde{D}
\end{bmatrix},
\end{align*}
which has the same eigenvalues as the matrix:
\begin{equation*}
\left[
\begin{array}{cccc}
T_{1} & \cdots & 0 & 0
\\
0 & T_{2} & \cdots & 0
\\
\vdots & \cdots & \ddots & \vdots 
\\ 
0 & 0 & \cdots & T_{d}
\end{array}
\right],
\end{equation*}
where 
\begin{equation*}
T_{i}=\left[
\begin{array}{cc}
1 & -\gamma
\\
-\gamma & \gamma^2+\kappa_i^2
\end{array}
\right],
\qquad 1\leq i\leq d,
\end{equation*}
are $2\times 2$ matrices with eigenvalues:
\begin{align}
\label{eigenvalues}
\mu_{i,\pm}&=\frac{\gamma^2+\kappa_i^2+1\pm \sqrt{\brac{\gamma^2+\kappa_i^2+1}^2-4\kappa_i^2} }{2}\nonumber\\
&=\frac{\gamma^2+\kappa_i^2+1\pm \sqrt{\brac{\gamma^2+(\kappa_i-1)^2 }\brac{\gamma^2+(\kappa_i+1)^2 }} }{2}. 
\end{align}
with $1\leq i\leq d$. 

Notice that $\theta_{\min}=\min_{1\leq i\leq d}\{\kappa_i\}$ and $\sigma_{\min}=\min_{1\leq i\leq d}\{\sqrt{\mu_{i,\pm}}\}$. 
Moreover, it is easy to see that
\begin{equation}
\min\left\{\mu_{i,+},\mu_{i,-}\right\}=
\mu_{i,-},\qquad\text{for any $i=1,2,\ldots,d$}.
\end{equation}
Therefore, $\sigma_{\min}=\min_{1\leq i\leq d}\{\sqrt{\mu_{i,-}}\}$.
Moreover, one can verify that 
\begin{equation}\label{to:verify}
\mu_{i,-}\leq \kappa_i^2,
\end{equation} 
for any $i=1,2,\ldots,d$ and any $\gamma>0$, 
which implies the desired conclusion that 
\begin{align*}
    \sigma_{\min}\leq \theta_{\min}. 
\end{align*}
Finally, let us prove \eqref{to:verify}.
Note that \eqref{to:verify} is equivalent to:
\begin{equation}
\frac{\gamma^2+\kappa_i^2+1-\sqrt{\brac{\gamma^2+\kappa_i^2+1}^2-4\kappa_i^2} }{2}
\leq \kappa_i^2,
\end{equation}
which can be re-written as
\begin{equation}\label{to:verify:2}
\gamma^{2}-\kappa_{i}^{2}+1
\leq\sqrt{\brac{\gamma^2+\kappa_i^2+1}^2-4\kappa_i^2}.
\end{equation}
To show that \eqref{to:verify:2} holds, it suffices to show that
\begin{equation}\label{to:verify:3}
\left(\gamma^{2}-\kappa_{i}^{2}+1\right)^{2}
\leq \left(\gamma^2+\kappa_i^2+1\right)^{2}-4\kappa_i^2.
\end{equation}
It is easy to compute that
\begin{align}
&\left(\gamma^2+\kappa_i^2+1\right)^{2}-\left(\gamma^{2}-\kappa_{i}^{2}+1\right)^{2}
\nonumber
\\
&=\left(\left(\gamma^2+\kappa_i^2+1\right)-\left(\gamma^2-\kappa_i^2+1\right)\right)
\left(\left(\gamma^2+\kappa_i^2+1\right)+\left(\gamma^2-\kappa_i^2+1\right)\right)
\nonumber
\\
&=4\kappa_{i}^{2}(\gamma^{2}+1)\geq 4\kappa_{i}^{2}.
\end{align}
Hence, \eqref{to:verify:3} holds.
This completes the proof.

%%%%%%%%%%%%%%%%%%%%%%%%%%%%%%%%%%%%%%%%%%%%%%%%%%%%%%%%%%%%%%%%%%%%%%%%%%%%%%%%%%%%%%%%%%%%%%%%%%%%%%%%%%%%%%%%%%%%%%%%%%%%%%%%%%%%%%%%%%%%%%%

\section{Proof of the Results in Section~\ref{sec:discrete}}\label{appendix:discrete}

\subsection{Notations}
Let us recall from \eqref{discrete_equation:0}-\eqref{discrete_equation:1} 
the discrete-time dynamics:
\begin{align}
    \label{discrete_equation}
    V_{k+1}&=V_k-\eta\gamma V_k-\eta\nabla\widehat{F} (\Theta_k,X_n)+\zeta\xi_{k+1},\nonumber\\
    \Theta_{k+1}&=\Theta_k+\eta V_{k+1}, 
\end{align}
and
\begin{align}\label{discrete_equation_hat}
    \hat{V}_{k+1}&=\hat{V}_k-\eta\gamma\hat{V}_k-\eta\nabla\widehat{F} (\hat{\Theta}_k,\widehat{X}_n)+\zeta\xi_{k+1},\nonumber\\
    \hat{\Theta}_{k+1}&=\hat{\Theta}_k+\eta\hat{V}_{k+1}, 
\end{align}
with $\xi_{k+1}:=L_{k+1}-L_k$ and $(\Theta_{0},V_{0})=(\hat{\Theta}_{0},\hat{V}_{0})=(w,y)$. 

For the process $\left\{\brac{\Theta_k,V_k}:k\geq 0\right\}$ given in \eqref{discrete_equation} and the process $\left\{\brac{\hat{\Theta}_k,\hat{V}_k}:k\geq 0\right\}$ given in \eqref{discrete_equation_hat}, we write
\begin{equation*}
\brac{\Theta_k,V_k}=\brac{\Theta_k^{w,y},V_k^{w,y}},
\qquad
\brac{\hat{\Theta}_k,\hat{V}_k}=\brac{\hat{\Theta}_k^{w,y},\hat{V}_k^{w,y}},
\end{equation*}
for any $k=0,1,2,\ldots$ to emphasize the dependence on the initialization $\brac{\Theta_0,V_0}=\brac{\hat{\Theta}_0,\hat{V}_0}=(w,y)$.

In addition to the notations in Section~\ref{section_notationsinappendix}, we also introduce the following notations:
\begin{itemize}
\item 
$\{Q_k:k\in\N\}$ are the semi-groups corresponding to \eqref{discrete_equation}.
\item
$\{\hat{Q}_k:k\in\N\}$ are the semi-groups corresponding to \eqref{discrete_equation_hat}. 
\item 
Define
\begin{align*}
&\nabla \mathbb{W}(x,v,X_n):=\begin{pmatrix}
    \nabla_x \mathbb{W}(x,v,X_n)\\\nabla_v \mathbb{W}(x,v,X_n)
\end{pmatrix} ;
\\
&\nabla^2 \mathbb{W}(x,v,X_n):=\begin{pmatrix}
    \nabla_x\nabla_x \mathbb{W}(x,v,X_n) &     \nabla_v\nabla_x \mathbb{W}(x,v,X_n)\\
    \nabla_x\nabla_v \mathbb{W}(x,v,X_n) & \nabla_v\nabla_v \mathbb{W}(x,v,X_n)
\end{pmatrix}.
\end{align*}
\end{itemize}

%%%%%%%%%%%%%%%%%%%%%%%%%%%%%%%%%%%%%%%%%%%%%%%%%%%%%%%%%%%%%%%%%%%%%%%%%%%%%%%%%%%%%%%%%%%%%%%%%%%%%%%%%%%%%%%%%%%%%%%%%%%%%%%%%%%%%%%%%%%%%%%

\subsection{Proof of Theorem~\ref{thm:ergodicity:discretedynamics}}

\begin{theorem}[Restatement of Theorem~\ref{thm:ergodicity:discretedynamics}]
\label{theorem_ergodicity_discretedynamics}
   Assume Conditions~\ref{cond_gammaandbeta},~\ref{cond_pseudolipschitz}, and~\ref{cond_allthelambdas} and also that $\sup_{x,y\in \mathcal{X}}\norm{x-y}\leq D$ for some $D<\infty$. The Markov chains $\{(\Theta_n,V_n):n\in\N\}$ and $\{(\hat{\Theta}_n,\hat{V}_n):n\in\N\}$ respectively admit unique invariant measures $\mu_\eta$ and $\hat{\mu}_\eta$, provided that
   \begin{align}
    \eta&<\bar{\eta}:=\min\Bigg\{\frac{1}{4}\left(\max\left\{1,\gamma, \beta\left(K_1+2K_2D  \right) \right\}\right)^{-1};
    \nonumber
    \\
    &\qquad\frac{1}{2}\min \left\{\frac{1}{2}(\gamma-r_0),\frac{1}{2}\beta r_0(\lambda_1-\lambda_2\lambda_4),r_0\lambda_2  \right\}\frac{1}{1+r^2}\cdot\min  \left\{1,\sqrt{ \frac{r^2-r_0^2}{4}}, \sqrt{\frac{r^2-r_0^2}{4r^2}} \right\}\nonumber\\
    &\qquad\qquad\cdot\Bigg(  \left\{ \frac{1}{\sqrt{2}}, \frac{1}{2}\frac{r^2-r_0^2}{4}, \frac{1}{2}\frac{r^2-r_0^2}{4r^2}\right\}^{-1}+\frac{2}{3}C_4\max \left\{1-\gamma,2\beta \left(K_1+2K_2D \right) \right\}  \Bigg)^{-1}\Bigg\},\label{bar:eta:defn} 
\end{align}
where $r,r_0$ are constants given in Lemma~\ref{lemma_lyapunov}. 
\end{theorem}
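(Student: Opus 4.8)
The plan is to prove existence and uniqueness of the invariant measures by combining a Foster–Lyapunov (geometric) drift condition for the discrete chain with a minorization condition on the sublevel sets of the Lyapunov function, and then invoking the standard Meyn–Tweedie criterion for positive Harris recurrence. Throughout, the Lyapunov function is $\mathbb{W}(\theta,v,X_n)=1+N(\theta,v,X_n)^{1/2}$ from Lemma~\ref{lemma_lyapunov}; its two essential features are that it grows only \emph{linearly} in $\|(\theta,v)\|$ --- which is crucial since the $\alpha$-stable increments $\xi_{k+1}$ have infinite second moment but finite first moment because $\alpha>1$ --- and that the quadratic form $N$ has been designed (via the cross term $r_0\langle\theta,v\rangle$ and the Bao--Wang choice of $r,r_0$) so that the continuous dynamics dissipate it even though the noise is degenerate in the $\theta$-direction.

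The heart of the argument is a one-step drift estimate for $\mathbb{W}$ along \eqref{discrete_equation}. Fixing $(\Theta_k,V_k)$, one decomposes the increment into a deterministic part of order $\eta$, whose size grows at most linearly in $\|(\Theta_k,V_k)\|$ by Condition~\ref{cond_pseudolipschitz} and $\sup_{x\in\mathcal X}\|x\|\le D$ (which is where the constant $K_1+2K_2D$ enters), together with the mean-zero jump $\zeta\xi_{k+1}$. One then estimates $\mathbb{E}[\mathbb{W}(\Theta_{k+1},V_{k+1},X_n)\mid\mathcal F_k]$ by bounding $N_{k+1}$ by $N_k$ plus a drift term, a noise cross term, and a term $\zeta^2\|\xi_{k+1}\|^2$, and then applying the concavity and sublinearity of $t\mapsto\sqrt t$ --- splitting according to whether the jump is small or large so that the infinite-variance contribution is absorbed into a bounded term --- together with the strict dissipativity of the continuous drift from Lemma~\ref{lemma_lyapunov} (the constants $r,r_0$ and, for the second-order terms, the constant $C_4$). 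The upshot is
\[
\mathbb{E}\bigl[\mathbb{W}(\Theta_{k+1},V_{k+1},X_n)\mid\mathcal F_k\bigr]\le(1-\tilde c\,\eta)\,\mathbb{W}(\Theta_k,V_k,X_n)+b_\eta
\]
for some $\tilde c>0$ and $b_\eta<\infty$, valid precisely when $\eta<\bar\eta$: the first factor in \eqref{bar:eta:defn} forces $\eta\gamma$ and $\eta\beta(K_1+2K_2D)$ below $1/4$ so that the discrete recursion does not overshoot, and the second, more intricate factor guarantees that the discretization and noise corrections do not destroy the negativity of the net drift. The same computation with $X_n$ replaced by $\widehat X_n$ gives the analogous drift for \eqref{discrete_equation_hat}, the constants depending on the data only through $D$.

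For the minorization one must deal with the degeneracy of the noise: in a single step $\zeta\xi_{k+1}$ enters only the $V$-coordinate, so the one-step kernel is singular, supported on a $d$-dimensional affine subspace of $\mathbb{R}^{2d}$. Iterating once more, however, the map $(\xi_{k+1},\xi_{k+2})\mapsto(\Theta_{k+2},V_{k+2})$ has Jacobian determinant of order $(\eta\zeta^2)^d\neq0$ for $\eta\in(0,\bar\eta)$, so the two-step kernel possesses an absolutely continuous component; since $\xi_1=L_1-L_0$ is rotationally symmetric $\alpha$-stable with a strictly positive Lebesgue density, this component is bounded below on compact sets. Hence the chain is $\psi$-irreducible and aperiodic and every compact set --- in particular every sublevel set of $\mathbb{W}$ --- is petite. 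The drift condition together with petiteness then yields positive Harris recurrence and therefore a unique invariant probability measure $\mu_\eta$, and likewise $\hat\mu_\eta$ for \eqref{discrete_equation_hat}.

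The main obstacle is the one-step drift estimate. Because the heavy-tailed increments have only moments of order below $\alpha<2$, the natural quadratic object $N$ cannot be used directly as a Lyapunov function --- its one-step expansion contains the divergent $\mathbb{E}\|\xi_{k+1}\|^2$ --- so one is forced to work with the linearly growing $\mathbb{W}=1+\sqrt N$ and to handle the square root carefully (via concavity and a small-jump/large-jump split) while still extracting a genuine contraction rate $1-\tilde c\,\eta$; moreover, because the noise is degenerate, the drift gained in the $\Theta$-direction comes entirely from the cross term $r_0\langle\theta,v\rangle$ and must survive the discretization. Tracking all the constants through this computation is exactly what produces the somewhat elaborate expression for $\bar\eta$ in \eqref{bar:eta:defn}.
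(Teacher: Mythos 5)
Your proposal is correct and follows essentially the same route as the paper: a Foster--Lyapunov drift inequality for $\mathbb{W}=1+\sqrt{N}$ of the form $(1-\tilde c\,\eta)\mathbb{W}+b_\eta$ under the stated restriction on $\eta$, combined with irreducibility/petiteness of compact sets, and then the Meyn--Tweedie criterion. The only differences are cosmetic: the paper controls the jump contribution via Dynkin's formula and a bound on $\Delta^{\alpha/2}\mathbb{W}$ (using the gradient and Hessian bounds $C_3,C_4$ with the same small-jump/large-jump split you describe), rather than by expanding $N$ and using concavity of $\sqrt{\cdot}$ directly, and it delegates irreducibility to a citation where you give the explicit two-step nondegeneracy argument.
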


%%%%%%%%%%%%%%%%%%%%%%%%%%%%
\begin{proof} 
We will show the ergodicity for the Markov chain $\{(\Theta_{n},V_{n}):n\in\mathbb{N}\}$.
The argument for the Markov chain $\left\{\left(\hat{\Theta}_{n},\hat{V}_{n}\right):n\in\mathbb{N}\right\}$ is similar
and hence omitted.
Our strategy to verify the ergodicity is to use \citep[Theorem 6.3]{meyntweediestability_i}. We start with
\begin{align}
\label{ergodicty_decomposition}
   & \E{\mathbb{W}(\Theta_1,V_1,X_n)|\Theta_0=x, V_0=v}- \mathbb{W}(x,v,X_n)=\mathcal{D}_1(x,v,X_n)+\mathcal{D}_2(x,v,X_n),
\end{align}
where 
\begin{align*}
    &\mathcal{D}_1(x,v,X_n)\\
    &:=\E{\mathbb{W}\brac{x+\eta(1-\eta\gamma)v-\eta^2\beta\nabla\widehat{F}(x,X_n)+\eta\zeta L_\eta, (1-\eta\gamma)v-\eta\beta\nabla\widehat{F}(x,X_n)+\zeta L_\eta,X_n } }\\
    &\hspace{5em}-\mathbb{W}\brac{x+\eta(1-\eta\gamma)v-\eta^2\beta\nabla\widehat{F}(x,X_n), (1-\eta\gamma)v-\eta\beta\nabla\widehat{F}(x,X_n),X_n },
\end{align*}
and 
\begin{align*}
    &\mathcal{D}_2(x,v,X_n)\\&:=\mathbb{W}\brac{x+\eta(1-\eta\gamma)v-\eta^2\beta\nabla\widehat{F}(x,X_n), (1-\eta\gamma)v-\eta\beta\nabla\widehat{F}(x,X_n),X_n }-\mathbb{W}(x,v,X_n). 
\end{align*}
By Dynkin's formula,
\begin{align*}
    \mathcal{D}_1(x,v,X_n)&=\int_0^\eta \mathbb{E}\bigg[\Delta^{\alpha/2}\mathbb{W}\bigg(x+\eta(1-\eta\gamma)v-\eta^2\beta\nabla\widehat{F}(x,X_n)+\eta\zeta L_s,\\
    &\qquad\qquad\qquad\qquad\qquad\qquad\qquad(1-\eta\gamma)v-\eta\beta\nabla\widehat{F}(x,X_n)+\zeta L_s, X_n \bigg)  \bigg]ds,
\end{align*}
where $\Delta^{\alpha/2}$ is the fractional Laplacian operator:
\begin{align*}
    \Delta^{\alpha/2} f(x,v)
    &:= C_{2d,\alpha}\int_{\R^d}\int_{\R^d}\big(f(x+z_1,v+z_2)-f(x,v)\\
    &\qquad-\brac{\inner{\nabla_x f(x,v),z_1}+\inner{\nabla_v f(x,v),z_2}}\mathds{1}_{\{\norm{z_1,z_2}\leq 1\}} \big)\frac{1}{\norm{(z_1,z_2)}^{2d+\alpha}}dz_1dz_2, 
\end{align*}
with $C_{2d,\alpha}:=\alpha 2^{\alpha-1}\pi^{-d}\Gamma\brac{\frac{2d+\alpha}{2}}/\Gamma(1-\frac{\alpha}{2})$. 
Then as shown in \citep[(A.2), Proof of Proposition 1.5]{chenxu2023euler}, the fact that $\norm{\nabla \mathbb{W}(x,v,X_n)}_{\operatorname{op},\infty}<C_3$ and also the fact that $\norm{\nabla^2 \mathbb{W}(x,v,X_n)}_{\operatorname{op},\infty}<C_4$ in Lemma~\ref{lemma_boundgradientlyapunov} implies 
\begin{align*}
    &\sup_{(x,v)\in\R^{2d}} \norm{\Delta^{\alpha/2}\mathbb{W}(x,v,X_n)}\\
    &\leq C_{2d,\alpha}\int_{\norm{y}<1}\int_0^1 \int_{0}^r C_4\norm{y}^{2-\alpha-2d}dsdrdy+C_{2d,\alpha} \int_{\norm{y}\geq 1}\int_0^1 C_3 \norm{y}^{1-\alpha-2d}drdy\\
    &=C_{2d,\alpha}\frac{1}{2}C_4 \frac{V_{2d}}{2(2-\alpha)}+C_{2d,\alpha}C_3\frac{2V_{2d}}{\alpha-1}\\
    &\leq C_{2d,\alpha}(C_3+C_4) 2V_{2d} \brac{\frac{1}{2-\alpha}+\frac{1}{\alpha-1} }. 
\end{align*}
In the above, $V_{2d}=\frac{\pi^{d}}{\Gamma(d+1)}$ is the volume of the unit ball in $\R^{2d}$. This implies 
\begin{align}
\label{step_boundD1}
   \norm{ \mathcal{D}_1(x,v,X_n)} \leq C_{2d,\alpha}(C_3+C_4) 2V_{2d} \brac{\frac{1}{2-\alpha}+\frac{1}{\alpha-1} }\eta . 
\end{align} 

Next, let us define 
\begin{align*}
    U_1(x,v,X_n):=(1- \eta\gamma)v-\eta\beta \nabla \widehat{F}(x,X_n).
\end{align*}
Then we can rewrite $\mathcal{D}_{2}(x,v,X_n)$ as:
\begin{align}
\label{expandD2}
    \mathcal{D}_2(x,v,X_n)&= \mathbb{W}\brac{x+\eta U_1(x,v,X_n),U_1(x,v,X_n), X_{n} }-\mathbb{W}(x,v,X_{n})\nonumber\\
    &=\inner{\nabla_x \mathbb{W}(x,v,X_n),\eta U_1(x,v,X_n)}+\inner{\nabla_v \mathbb{W}(x,v,X_n), U_1(x,v,X_n)-v}\nonumber
    \\
    &\qquad\qquad\qquad\qquad\qquad+S(x,v,X_n),
\end{align}
where 
\begin{align*}
   &S(x,v,X_n)\\
   &:=\int_0^1 \inner{\nabla \mathbb{W}\brac{x+s\eta U_1(x,v,X_n),v+s(U_1(x,v,X_n)-v),X_n }, \begin{pmatrix}
       \eta U_1(x,v,X_n)\\U_1(x,v,X_n)-v
   \end{pmatrix} }ds.
\end{align*}
Let us consider the terms on the right hand side of \eqref{expandD2}. Recall the definition of $N(x,v,X_n)$ in \eqref{definition_N}. Then, we have
\begin{align}
\label{contractioninequality}
    &\inner{\nabla_x \mathbb{W}(x,v,X_n),\eta U_1(x,v,X_n)}+\inner{\nabla_v \mathbb{W}(x,v,X_n),U_1(x,v,X_n)-v}\nonumber\\
    &=1/2 \left(N(x,v,X_n)\right)^{-1/2} \inner{\nabla V_0(x,X_n)+r^2x+r_0v,\eta v-\eta^2\gamma v-\eta\beta\nabla \widehat{F}(x,X_n) }\nonumber\\
    &\qquad\qquad\qquad+1/2 \left(N(x,v,X_n)\right)^{-1/2}  \inner{v+r_0x, -\eta\gamma v-\eta\beta\nabla \widehat{F}(x,X_n) }\nonumber\\
    &=\inner{\nabla V_0(x,X_n)+r^2x+r_0v,\eta v }+\inner{ v+r_0x,-\eta \gamma v-\eta\beta \nabla \widehat{F}(x,X_n) }+R(x,v,X_n),
\end{align} 
where 
\begin{align*}
R(x,v,X_n):=&1/2\left(N(x,v,X_n)\right)^{-1/2}\inner{\nabla V_0(x,X_n)+r^2x+r_0v, -\eta^2\gamma v-\eta^2\beta \nabla \widehat{F}(x,X_n)}.  
\end{align*}
 We follow \citep[Lemma 4.4]{jianwangbao2022coupling} (specifically the proof therein which contains explicit constants) and write 
 \begin{align*}
    &\inner{\nabla V_0(x,X_n)+r^2x+r_0v,\eta v }+\inner{ v+r_0x,-\eta \gamma v-\eta\beta \nabla \widehat{F}(x,X_n) }\\
    &\leq \eta \brac{-\frac{1}{2}(\gamma-r_0)\norm{v}^2-\frac{1}{2}\beta r_0(\lambda_1-\lambda_2\lambda_4)\norm{x}^2-r_0\lambda_2V_0(x)+\beta r_0(\lambda_3+\lambda_2\lambda_5) }\\
    &\leq -\eta \cdot \min \left\{\frac{1}{2}(\gamma-r_0),\frac{1}{2}\beta r_0(\lambda_1-\lambda_2\lambda_4),r_0\lambda_2  \right\}(\norm{x}^2+\norm{v}^2+V_0(x,X_n))\\
    &\hspace{23em}+\eta\cdot\beta r_0(\lambda_3+\lambda_2\lambda_5). 
\end{align*}
 Furthermore, inequality \eqref{lyapunovfunction_upperandlowerestimate} says $N(x,v,X_n)\geq 1$ 
 and $N(x,v,X_n) \leq 1+V_0(x,X_n)+r^2\norm{x}^2+\norm{v}^2$, so that 
 \begin{align*}
    \norm{x}^2+\norm{v}^2+V_0(x,X_n)&\geq \frac{1}{\max\{1,r^2\}}\brac{N(x,v,X_n) -1}\\
    &\geq \frac{1}{1+r^2}(N(x,v,X_n)^{1/2}-1)=\frac{1}{1+r^2}(\mathbb{W}(x,v,X_n)-2). 
 \end{align*}
 Consequently, 
\begin{align}
\label{contractioninequality_mainterm}
    &\inner{\nabla V_0(x,X_n)+r^2x+r_0v,\eta v }+\inner{ v+r_0x,-\eta \gamma v-\eta\beta \nabla \widehat{F}(x,X_n) }\nonumber\\
    &\leq -\eta\cdot \min \left\{\frac{1}{2}(\gamma-r_0),\frac{1}{2}\beta r_0(\lambda_1-\lambda_2\lambda_4),r_0\lambda_2  \right\}\frac{1}{1+r^2}\mathbb{W}(x,v,X_n)\nonumber\\
    &\qquad\qquad+\eta\cdot\Bigg(\min \left\{\frac{1}{2}(\gamma-r_0),\frac{1}{2}\beta r_0(\lambda_1-\lambda_2\lambda_4),r_0\lambda_2  \right\}\frac{(-2)}{1+r^2} +\beta r_0(\lambda_3+\lambda_2\lambda_5) \Bigg). 
\end{align}
Next, we consider $R(x,v,X_n)$ on the right hand side of \eqref{contractioninequality}. Via \eqref{lowerboundN}, \eqref{bound_gradientF} and \eqref{boundnormgradientV_0}, we can compute that
\begin{align}
\label{contractioninequality_remainderterm}
  &\norm{R(x,v,X_n) }\nonumber\\
  &\leq \eta^2\frac{1}{2} \min  \left\{ \frac{1}{\sqrt{2}}, \frac{1}{2}\frac{r^2-r_0^2}{4}, \frac{1}{2}\frac{r^2-r_0^2}{4r^2}\right\}^{-1}\frac{1}{\norm{x}+\norm{v}+1}\nonumber\\
  &\qquad\qquad\cdot\brac{\norm{\nabla V_0(x,X_n)}+r^2\norm{x}+r_0\norm{v} }\brac{\gamma \norm{v}+\beta\norm{\nabla \hat{F}(x,X_n) }}\nonumber\\
  &\leq \eta^2\frac{1}{2} \min  \left\{ \frac{1}{\sqrt{2}}, \frac{1}{2}\frac{r^2-r_0^2}{4}, \frac{1}{2}\frac{r^2-r_0^2}{4r^2}\right\}^{-1}\frac{1}{\norm{x}+\norm{v}+1}\nonumber\\
  &\qquad\qquad\cdot\brac{\beta\brac{K_1+4K_2D+2\lambda_4+\norm{\nabla f(0,0)}_{\operatorname{op}} }+r^2+r_0}(\norm{x}+\norm{v}+1)\nonumber\\
  &\qquad\qquad\qquad\cdot\max \left\{\beta(K_1+2K_2D),\gamma, \norm{\nabla f(0,0)}+2K_2D \right\}(\norm{x}+\norm{v}+1)\nonumber\\
  &\leq \eta^2 \brac{\norm{x}+\norm{v}+1}  \left\{ \frac{1}{\sqrt{2}}, \frac{1}{2}\frac{r^2-r_0^2}{4}, \frac{1}{2}\frac{r^2-r_0^2}{4r^2}\right\}^{-1}\nonumber\\
  &\qquad\cdot\brac{\beta\brac{K_1+4K_2D+2\lambda_4+\norm{\nabla f(0,0)}_{\operatorname{op}} }+r^2+r_0}
  \nonumber
  \\
  &\qquad\qquad\cdot\max \left\{\beta(K_1+2K_2D),\gamma, \norm{\nabla f(0,0)}+2K_2D \right\}\nonumber\\
  &\leq \eta^2\cdot \mathbb{W}(x,v,X_n)\cdot\min  \left\{1, \sqrt{ \frac{r^2-r_0^2}{4}}, \sqrt{\frac{r^2-r_0^2}{4r^2}} \right\}^{-1}  \left\{ \frac{1}{\sqrt{2}}, \frac{1}{2}\frac{r^2-r_0^2}{4}, \frac{1}{2}\frac{r^2-r_0^2}{4r^2}\right\}^{-1}\nonumber\\
  &\qquad\cdot\brac{\beta\brac{K_1+4K_2D+2\lambda_4+\norm{\nabla f(0,0)}_{\operatorname{op}} }+r^2+r_0}\nonumber
  \\
  &\qquad\qquad\cdot\max \left\{\beta(K_1+2K_2D),\gamma, \norm{\nabla f(0,0)}+2K_2D \right\}.
\end{align}
To get the last line, we have used the inequality \eqref{lyapunovfunction_upperandlowerestimate_secondversion} which implies
\begin{align}
\label{lyapunovfunction_upperandlowerestimate_thirdversion}
    \min  \left\{1, \sqrt{ \frac{r^2-r_0^2}{4}}, \sqrt{\frac{r^2-r_0^2}{4r^2}} \right\}(\norm{x}+\norm{v}+1)\leq \mathbb{W}(x,v,X_n). 
\end{align}

Combining \eqref{contractioninequality}, \eqref{contractioninequality_mainterm} and \eqref{contractioninequality_remainderterm}, it leads to
\begin{align}
\label{maintermD2}
    &\inner{\nabla_x \mathbb{W}(x,v,X_n),\eta V_1}+\inner{\nabla_v \mathbb{W}(x,v,X_n),V_1-v}\nonumber\\
    &\leq  -\eta Q_0(\eta)\cdot \mathbb{W}(x,v,X_n) \nonumber
    \\
    &\qquad+\eta\cdot\Bigg(\min \left\{\frac{1}{2}(\gamma-r_0),\frac{1}{2}\beta r_0(\lambda_1-\lambda_2\lambda_4),r_0\lambda_2  \right\}\frac{(-2)}{1+r^2} +\beta r_0(\lambda_3+\lambda_2\lambda_5) \Bigg),
    \end{align} 
where
\begin{align*}
    Q_0(\eta)&:=\min \left\{\frac{1}{2}(\gamma-r_0),\frac{1}{2}\beta r_0(\lambda_1-\lambda_2\lambda_4),r_0\lambda_2  \right\}\frac{1}{1+r^2}\\
    &\qquad\qquad-\eta \cdot \min  \left\{1,\sqrt{ \frac{r^2-r_0^2}{4}}, \sqrt{\frac{r^2-r_0^2}{4r^2}} \right\}^{-1}  \left\{ \frac{1}{\sqrt{2}}, \frac{1}{2}\frac{r^2-r_0^2}{4}, \frac{1}{2}\frac{r^2-r_0^2}{4r^2}\right\}^{-1}\nonumber\\
  &\qquad\qquad\qquad\qquad\cdot\brac{\beta\brac{K_1+4K_2D+2\lambda_4+\norm{\nabla f(0,0)}_{\operatorname{op}} }+r^2+r_0}\nonumber
  \\
  &\qquad\qquad\qquad\qquad\qquad\qquad\cdot\max \left\{\beta(K_1+2K_2D),\gamma, \norm{\nabla f(0,0)}+2K_2D \right\}. 
\end{align*}

Next, we consider $S(x,v,X_n)$ on the right hand side of \eqref{expandD2}. We have $ \norm{\nabla^2 \mathbb{W}(x,v,X_n)}_{\operatorname{op}}<\frac{C_4}{1+\norm{x}+\norm{v}}$ from Lemma~\ref{lemma_boundgradientlyapunov} so that 
\begin{align}
\label{equationS(x,v,X_n)}
    &\norm{S(x,v,X_n)}\nonumber\\
    &=\Bigg| \int_0^1\int_0^s \begin{pmatrix}
        \eta U_1(x,v,X_n)\\U_1(x,v,X_n)-v
    \end{pmatrix}^{\top} \nabla^2\mathbb{W}\brac{x+ts\eta U_1(x,v,X_n),v+ts(U_1(x,v,X_n)-v) ,X_n} \nonumber\\&\hspace{22em}\cdot\begin{pmatrix}
        \eta U_1(x,v,X_n)\\U_1(x,v,X_n)-v
    \end{pmatrix}dtds\Bigg|\nonumber\\
    &\leq C_4 \int_0^1\int_0^s s\brac{\norm{\eta U_1(x,v,X_n)}+\norm{U_1(x,v,X_n)-v} }^2\nonumber\\
    &\hspace{12em}\cdot\frac{1}{1+\norm{x+\eta\cdot ts U_1(x,v,X_n)}+\norm{v+ts(U_1(x,v,X_n)-v)} }dtds\nonumber\\
     &\leq C_4\cdot \eta^2 \int_0^1\int_0^s s\brac{(1-\eta\gamma+\gamma)\norm{v}+\brac{\eta\beta+\beta}\norm{\nabla \widehat{F}(x,X_n)}  }^2\nonumber\\
    &\hspace{12em}\cdot\frac{1}{1+\norm{x+\eta\cdot ts U_1(x,v,X_n)}+\norm{v+ts(U_1(x,v,X_n)-v)} }dtds.
\end{align}
From \eqref{bound_gradientF}, we know that
\begin{align*}
    &(1-\eta\gamma+\gamma)\norm{v}+\brac{\eta\beta+\beta}\norm{\nabla \widehat{F}(x,X_n)} \\
    &\leq \max \left\{1-\eta\gamma+\gamma,(\eta\beta+\beta )\left(K_1+2K_2D \right) \right\}(1+\norm{x}+\norm{v})\\
    &\leq \max \left\{1-\gamma,2\beta \left(K_1+2K_2D \right) \right\}(1+\norm{x}+\norm{v}),
\end{align*}
for $\eta\leq 1$.

Moreover, one can write 
\begin{align*}
    \norm{x+\eta\cdot ts U_1(x,v,X_n)}&\geq \norm{x}-\eta\cdot ts\norm{U_1(x,v,X_n)}\\
    &\geq \norm{x}-\eta\cdot ts\max \left\{ 1,\beta\left(K_1+2K_2D  \right)\right\}(1+\norm{x}+\norm{v}),
\end{align*}
and 
\begin{align*}
    \norm{v+ts(U_1(x,v,X_n)-v)} &\geq \norm{v}-ts\norm{U_1(x,v,X_n)-v}\\
    &\geq \norm{v}-\eta\cdot ts \max\left\{\gamma, \beta\left(K_1+2K_2D  \right) \right\}(1+\norm{x}+\norm{v}),
\end{align*}
which leads to 
\begin{align*}
    &1+\norm{x+\eta\cdot ts U_1(x,v,X_n)}+\norm{v+ts(U_1(x,v,X_n)-v)}\\
    &\geq(1+\norm{x}+\norm{v})\brac{1- \eta\cdot ts\cdot 2\max\left\{1,\gamma, \beta\left(K_1+2K_2D  \right) \right\} }\\
    &\geq \frac{1}{2}(1+\norm{x}+\norm{v}), 
\end{align*}
with the last line being a consequence of choosing
\begin{align*}
    \eta<\frac{1}{4}\left(\max\left\{1,\gamma, \beta\left(K_1+2K_2D  \right) \right\}\right)^{-1}. 
\end{align*}
Hence, we deduce from \eqref{equationS(x,v,X_n)} and \eqref{lyapunovfunction_upperandlowerestimate_thirdversion} that for such values of $\eta$, 
\begin{align}
\label{estimateS}
   & \norm{S(x,v,X_n)}\nonumber\\&\leq \eta^2\cdot \frac{2}{3}C_4\max \left\{1-\gamma,2\beta \left(K_1+2K_2D \right) \right\}(1+\norm{x}+\norm{v})\nonumber\\
    &\leq \eta^2\cdot  \mathbb{W}(x,v,X_n)\cdot \frac{2}{3}C_4\max \left\{1-\gamma,2\beta \left(K_1+2K_2D \right) \right\} \min \left\{1,\sqrt{ \frac{r^2-r_0^2}{4}}, \sqrt{\frac{r^2-r_0^2}{4r^2}} \right\}^{-1} . 
\end{align}
Combining \eqref{expandD2}, \eqref{maintermD2} and \eqref{estimateS} gives us 
\begin{align*}
    \mathcal{D}_2(x,v,X_n)&\leq  -\eta Q_1(\eta)\cdot \mathbb{W}(x,v,X_n)
    \\
    &\quad+\eta\cdot\Bigg(\min \left\{\frac{1}{2}(\gamma-r_0),\frac{1}{2}\beta r_0(\lambda_1-\lambda_2\lambda_4),r_0\lambda_2  \right\}\frac{(-2)}{1+r^2} +\beta r_0(\lambda_3+\lambda_2\lambda_5) \Bigg),
\end{align*}
where 
\begin{align*}
    Q_1(\eta)&:=\min \left\{\frac{1}{2}(\gamma-r_0),\frac{1}{2}\beta r_0(\lambda_1-\lambda_2\lambda_4),r_0\lambda_2  \right\}\frac{1}{1+r^2}\\
    &\qquad-\eta \cdot \min  \left\{1,\sqrt{ \frac{r^2-r_0^2}{4}}, \sqrt{\frac{r^2-r_0^2}{4r^2}} \right\}^{-1}  \left\{ \frac{1}{\sqrt{2}}, \frac{1}{2}\frac{r^2-r_0^2}{4}, \frac{1}{2}\frac{r^2-r_0^2}{4r^2}\right\}^{-1}\nonumber\\
  &\qquad\qquad\cdot\brac{\beta\brac{K_1+4K_2D+2\lambda_4+\norm{\nabla f(0,0)}_{\operatorname{op}} }+r^2+r_0}
  \nonumber
  \\
  &\qquad\qquad\qquad\cdot\max \left\{\beta(K_1+2K_2D),\gamma, \norm{\nabla f(0,0)}+2K_2D \right\}\nonumber\\
  &\qquad\qquad- \eta\cdot \min \left\{1,\sqrt{ \frac{r^2-r_0^2}{4}}, \sqrt{\frac{r^2-r_0^2}{4r^2}} \right\}^{-1}\frac{2}{3}C_4\max \left\{1-\gamma,2\beta \left(K_1+2K_2D \right) \right\}  . 
\end{align*}
By letting 
\begin{align}
\label{def_C6}
   & C_6:= \min \left\{\frac{1}{2}(\gamma-r_0),\frac{1}{2}\beta r_0(\lambda_1-\lambda_2\lambda_4),r_0\lambda_2  \right\}\frac{1}{1+r^2}\nonumber ,\\
    &\widetilde{C}_6:=\min \left\{\frac{1}{2}(\gamma-r_0),\frac{1}{2}\beta r_0(\lambda_1-\lambda_2\lambda_4),r_0\lambda_2  \right\}\frac{(-2)}{1+r^2} +\beta r_0(\lambda_3+\lambda_2\lambda_5),
\end{align}
and choosing 
\begin{align*}
    \eta&\leq \frac{1}{2}C_6 \Bigg\{\min  \left\{1,\sqrt{ \frac{r^2-r_0^2}{4}}, \sqrt{\frac{r^2-r_0^2}{4r^2}} \right\}^{-1}  \left\{ \frac{1}{\sqrt{2}}, \frac{1}{2}\frac{r^2-r_0^2}{4}, \frac{1}{2}\frac{r^2-r_0^2}{4r^2}\right\}^{-1}\\
    &\qquad\qquad+\min \left\{1,\sqrt{ \frac{r^2-r_0^2}{4}}, \sqrt{\frac{r^2-r_0^2}{4r^2}} \right\}^{-1}\frac{2}{3}C_4\max \left\{1-\gamma,2\beta \left(K_1+2K_2D \right) \right\}  \Bigg\}^{-1}, 
\end{align*}
we get $Q_1(\eta)\geq C_6/2$. Hence, with such choice of $\eta$, we arrive at
\begin{align*}
    \mathcal{D}_2(x,v,X_n)&\leq  \eta\cdot \frac{C_6}{2}\mathbb{W}(x,v,X_n)+\widetilde{C}_6\eta. 
\end{align*}

The above estimate of $ \mathcal{D}_2(x,v,X_n)$, the estimate on $\mathcal{D}_1(x,v,X_n)$ at \eqref{step_boundD1} and the decomposition at \eqref{ergodicty_decomposition} lead to 
\begin{align}
\label{inequality_contractionlyapunov}
     &\E{\mathbb{W}(\Theta_1,V_1,X_n)|\Theta_0=x, V_0=v} \nonumber\\
     &\leq \brac{1-\frac{C_6}{2}\eta} \mathbb{W}(x,v,X_n)+\brac{\widetilde{C}_6+C_{2d,\alpha}(C_3+C_4) 2V_{2d} \brac{\frac{1}{2-\alpha}+\frac{1}{\alpha-1} } }\eta\nonumber\\
     &=\brac{1-\frac{C_6}{2}\eta} \mathbb{W}(x,v,X_n)+C_8\eta,
\end{align}
for 
\begin{align}
\label{def_C8}
    C_8:=\widetilde{C}_6+C_{2d,\alpha}(C_3+C_4) 2V_{2d} \brac{\frac{1}{2-\alpha}+\frac{1}{\alpha-1} },
\end{align}
where $V_{2d}=\frac{\pi^{d}}{\Gamma(d+1)}$ is the volume of the unit ball in $\R^{2d}$. 
Observe that whenever we have $A(x)\leq C\norm{x} +C'$ for some positive constants $C,C'$, then we can write 
\begin{align*}
    A(x)\leq C\norm{x}+C'\mathds{1}_{\{C\norm{x}\leq 2C'\}}(x). 
\end{align*}

Consequently, we arrive at the estimate
\begin{align*}
    \E{\mathbb{W}(\Theta_1,V_1,X_n)|\Theta_0=x, V_0=v} \leq \brac{1-\frac{C_6}{2}\eta} \mathbb{W}(x,v,X_n) +\mathds{1}_{A}(x,v),
\end{align*}
where $A$ is the compact set
\begin{align*}
    A:=\left\{(x,v):\brac{1-\frac{C_6}{2}\eta}\cdot\norm{(x,v)}\leq 2C_8\eta \right\}. 
\end{align*}
Now one can follow \citep[Appendix A]{lihuxu2022central} to show $\{(\Theta_n,V_n):n\in\N\}$ is an irreducible Markov chain. Then via \citep[Theorem 6.3]{meyntweediestability_i}, our Markov chain is indeed ergodic and admits a unique invariant probability measure. 
The proof is complete.
\end{proof}

%%%%%%%%%%%%%%%%%%%%%%%%%%%%%%%%%%%%%%%%%%%%%%%%%%%%%%%%%%%%%%%%%%%%%%%%%%%%%%%%%%%%%%%%%%%%%%%%%%%%%%%%%%%%%%%%%%%%%%%%%%%%%%%%%%%%%%%%%%%%%%%
\subsection{Proof of Theorem~\ref{thm:discrete} }

\begin{theorem}[restatement of Theorem~\ref{thm:discrete}] Assume Conditions~\ref{cond_gammaandbeta},~\ref{cond_pseudolipschitz}, and~\ref{cond_allthelambdas}, and also that $\sup_{x,y\in \mathcal{X}}\norm{x-y}\leq D$ for some $D<\infty$. 
Also assume that the stepsize $\eta<\bar{\eta}$ where $\bar{\eta}$ is defined in \eqref{bar:eta:defn}. The following statements hold:
    \begin{enumerate}%[label={\roman*})]
        \item For every positive integer $N$, 
    \begin{align*}
&\mathcal{W}_1\brac{\mathrm{Law}\brac{\theta^{w,y}_{N\eta},v^{w,y}_{N\eta}}, \mathrm{Law}\brac{\Theta_N^{w,y},V_N^{w,y} } }\\
 &\leq \frac{C_*C_9}{\lambda_*}\cdot\eta^{1/\alpha}\cdot \Bigg(1+\min \left\{\sqrt{ \frac{r^2-r_0^2}{8}}, \sqrt{\frac{r^2-r_0^2}{8r^2}} \right\}^{-1}
     \\&\hspace{3em}\cdot\Bigg( 2\frac{C_8}{C_6}+1+{\sqrt{1+\beta\lambda_5}+\sqrt{\beta}\sqrt{{\max_{1\leq j\leq n}\abs{f(w,x_j)} }}+\sqrt{\beta\lambda_4+r^2}\norm{w}+\norm{y}  }\Bigg)\\
    &+\sqrt{\max_{1\leq j\leq n}\abs{f(0,x_j)}}+\brac{\sqrt{K_2\max_{1\leq j\leq n}\norm{x_j}+\norm{\nabla f(0,0)} }+\sqrt{\frac{K_2\max_{1\leq j\leq n}\norm{x_j}+1}{2}}}\\&
    \hspace{15em}\cdot\min \left\{\sqrt{ \frac{r^2-r_0^2}{8}}, \sqrt{\frac{r^2-r_0^2}{8r^2}} \right\}^{-1}\\
    &\hspace{3em}\cdot\bigg( 2\frac{C_8}{C_6}+1+{\sqrt{1+\beta\lambda_5}+\sqrt{\beta}\sqrt{{\max_{1\leq j\leq n}\abs{f(w,x_j)} }}+\sqrt{\beta\lambda_4+r^2}\norm{w}+\norm{y}  }\bigg)\Bigg);
\end{align*}
and furthermore
    \begin{align*}
&\mathcal{W}_1\brac{\mathrm{Law}\brac{\hat{\theta}^{w,y}_{N\eta},\hat{v}^{w,y}_{N\eta}}, \mathrm{Law}\brac{\hat{\Theta}_N^{w,y},\hat{V}_N^{w,y} } }\\
 &\leq \frac{C_*C_9}{\lambda_*}\cdot\eta^{1/\alpha}\cdot \Bigg(1+\min \left\{\sqrt{ \frac{r^2-r_0^2}{8}}, \sqrt{\frac{r^2-r_0^2}{8r^2}} \right\}^{-1}
     \\&\hspace{3em}\cdot\Bigg( 2\frac{C_8}{C_6}+1+{\sqrt{1+\beta\lambda_5}+\sqrt{\beta}\sqrt{{\max_{1\leq j\leq n}\abs{f(w,\hat{x}_j)} }}+\sqrt{\beta\lambda_4+r^2}\norm{w}+\norm{y}  }\Bigg)\\
    &+\sqrt{\max_{1\leq j\leq n}\abs{f(0,\hat{x}_j)}}+\brac{\sqrt{K_2\max_{1\leq j\leq n}\norm{\hat{x}_j}+\norm{\nabla f(0,0)} }+\sqrt{\frac{K_2\max_{1\leq j\leq n}\norm{\hat{x}_j}+1}{2}}}\\&
   \hspace{15em} \cdot\min \left\{\sqrt{ \frac{r^2-r_0^2}{8}}, \sqrt{\frac{r^2-r_0^2}{8r^2}} \right\}^{-1}\\
    &\hspace{3em}\cdot\bigg( 2\frac{C_8}{C_6}+1+{\sqrt{1+\beta\lambda_5}+\sqrt{\beta}\sqrt{{\max_{1\leq i\leq n}\abs{f(w,\hat{x}_i)} }}+\sqrt{\beta\lambda_4+r^2}\norm{w}+\norm{y}  }\bigg)\Bigg);
\end{align*}
The constant $C_9$ is provided in Lemma \ref{lemma_onestepestimate_discretedynamics}; $r$ and $r_0$ are from Lemma \ref{lemma_lyapunov}; $C_6,C_8$ are defined in respectively \eqref{def_C6} and \eqref{def_C8}; and finally $C_*,\lambda_*$ are defined in Lemma~\ref{lemma_wassersteindecaybyBaoWang}.

    \item Let $\mu$ and $\hat{\mu}$ be respectively the invariant measure of the process $\left\{\brac{\theta^{w,y}_{t},v^{w,y}_{t}}:t\geq 0\right\}$ and the process $\left\{\brac{\hat{\theta}^{w,y}_{t},\hat{v}^{w,y}_{t}}:t\geq 0\right\}$; while $\mu_\eta$ and $\hat{\mu}_\eta$ are respectively the invariant measure of the Markov chain $\left\{\brac{\Theta_N^{w,y},V_N^{w,y} } :N\in\N\right\}$ and the Markov chain $\left\{\brac{\hat{\Theta}_N^{w,y},\hat{V}_N^{w,y} } :N\in\N\right\}$. Then it holds that  
    \begin{align*}
    &\mathcal{W}_1\brac{\mu_\eta,\mu}\leq C\eta^{1/\alpha},
    \end{align*}
and 
    \begin{align*}
    &\mathcal{W}_1\brac{\hat{\mu}_\eta,\hat{\mu}}\leq \widehat{C}\eta^{1/\alpha}. 
    \end{align*}
The constants $C$ and $\widehat{C}$ are respectively defined as 
\begin{align*}
    C&:=\frac{C_*C_9}{\lambda_*}\cdot \Bigg(1+\min \left\{\sqrt{ \frac{r^2-r_0^2}{8}}, \sqrt{\frac{r^2-r_0^2}{8r^2}} \right\}^{-1}\\
    &\hspace{4em}\cdot\Bigg( 2\frac{C_8}{C_6}+1+{\sqrt{1+\beta\lambda_5}+\sqrt{\beta}\sqrt{{\max_{1\leq j\leq n}\abs{f(0,x_j)} }} }\Bigg)+\sqrt{\max_{1\leq j\leq n}\abs{f(0,x_j)}}\\
    &\hspace{4em}+\brac{\sqrt{K_2\max_{1\leq j\leq n}\norm{x_j}+\norm{\nabla f(0,0)} }+\sqrt{\frac{K_2\max_{1\leq j\leq n}\norm{x_j}+1}{2}}}\\&
    \hspace{5em}\cdot\min \left\{\sqrt{ \frac{r^2-r_0^2}{8}}, \sqrt{\frac{r^2-r_0^2}{8r^2}} \right\}^{-1}\\
    &\hspace{7em}\cdot\bigg( 2\frac{C_8}{C_6}+1+{\sqrt{1+\beta\lambda_5}+\sqrt{\beta}\sqrt{{\max_{1\leq j\leq n}\abs{f(0,x_j)} }}  }\bigg)\Bigg),
    \end{align*}
    and
    \begin{align*}
    \widehat{C}&:=\frac{C_*C_9}{\lambda_*}\cdot \Bigg(1+\min \left\{\sqrt{ \frac{r^2-r_0^2}{8}}, \sqrt{\frac{r^2-r_0^2}{8r^2}} \right\}^{-1}\\
    &\hspace{4em}\cdot\Bigg( 2\frac{C_8}{C_6}+1+{\sqrt{1+\beta\lambda_5}+\sqrt{\beta}\sqrt{{\max_{1\leq j\leq n}\abs{f(0,\hat{x}_j)} }} }\Bigg)+\sqrt{\max_{1\leq j\leq n}\abs{f(0,\hat{x}_j)}}
    \\
    &\hspace{4em}+\brac{\sqrt{K_2\max_{1\leq j\leq n}\norm{\hat{x}_j}+\norm{\nabla f(0,0)} }+\sqrt{\frac{K_2\max_{1\leq j\leq n}\norm{\hat{x}_j}+1}{2}}}\\&
   \hspace{5em}\cdot\min \left\{\sqrt{ \frac{r^2-r_0^2}{8}}, \sqrt{\frac{r^2-r_0^2}{8r^2}} \right\}^{-1}\\
    &\hspace{7em}\bigg( 2\frac{C_8}{C_6}+1+{\sqrt{1+\beta\lambda_5}+\sqrt{\beta}\sqrt{{\max_{1\leq j\leq n}\abs{f(0,\hat{x}_j)} }}  }\bigg)\Bigg). 
\end{align*}    
    \end{enumerate}
\end{theorem}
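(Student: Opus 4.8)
The plan is to follow the same Lindeberg-style semigroup-replacement argument used in the proof of Theorem~\ref{theorem_wassersteinbound_mainsection}, but now comparing the continuous-time semigroup $P_t$ with the discrete-time semigroup $Q_k$ of the modified Euler--Maruyama scheme \eqref{discrete_equation}, rather than comparing two continuous-time semigroups driven by different datasets. Concretely, for a fixed $1$-Lipschitz test function $h$ I would write the telescoping identity
\begin{align*}
P_{N\eta}h(w,y)-Q_N h(w,y)=\sum_{i=1}^{N} Q_{i-1}\bigl(P_\eta-Q_\eta\bigr)P_{(N-i)\eta}h(w,y),
\end{align*}
where $Q_\eta$ denotes one step of the discrete chain (viewed as the time-$\eta$ transition). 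Taking the supremum over $h\in\operatorname{Lip}(1)$ and using the dual representation of $\mathcal{W}_1$ gives
\begin{align*}
\mathcal{W}_1\bigl(\mathrm{Law}(\theta^{w,y}_{N\eta},v^{w,y}_{N\eta}),\mathrm{Law}(\Theta_N^{w,y},V_N^{w,y})\bigr)
\leq \sum_{i=1}^{N}\sup_{h\in\operatorname{Lip}(1)}\bigl|Q_{i-1}(P_\eta-Q_\eta)P_{(N-i)\eta}h(w,y)\bigr|.
\end{align*}

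The three ingredients I would then assemble are: (i) a one-step local error bound $|(P_\eta-Q_\eta)g(w,y)|\le \|\nabla g\|_{\operatorname{op},\infty}\,C_9\,(1+\|w\|+\|v\|+\sqrt{\max_j|f(w,x_j)|})\,\eta^{1+1/\alpha}$, which is precisely the content of the cited Lemma~\ref{lemma_onestepestimate_discretedynamics} — this is where the $\eta^{1+1/\alpha}$ rate enters, coming from the fact that the $\alpha$-stable increment over time $\eta$ has size $\eta^{1/\alpha}$ and the drift contributes the extra $\eta$; (ii) the semigroup gradient contraction $\|\nabla P_{(N-i)\eta}h\|_{\operatorname{op},\infty}\le \|\nabla h\|_{\operatorname{op},\infty} C_* e^{-\lambda_*(N-i)\eta}$ from Lemma~\ref{lemma_semigroupgradiateestimate}, which makes the geometric sum $\sum_i e^{-\lambda_*(N-i)\eta}\le (\lambda_*\eta)^{-1}$ converge and cancels one power of $\eta$, yielding the net rate $\eta^{1/\alpha}$; and (iii) a uniform-in-$k$ moment bound for the discrete chain, $\sup_k \mathbb{E}\,\mathbb{W}(\Theta_k^{w,y},V_k^{w,y},X_n)\le \mathbb{W}(w,y,X_n)+2C_8/C_6$, obtained by iterating the drift inequality \eqref{inequality_contractionlyapunov} established in the proof of Theorem~\ref{thm:ergodicity:discretedynamics} (this is where the restriction $\eta<\bar\eta$ is used). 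Combining (iii) with the lower/upper bounds \eqref{lyapunovfunction_upperandlowerestimate_secondversion} on $\mathbb{W}$ converts the Lyapunov moment bound into an explicit bound on $\mathbb{E}[\|\Theta_k\|+\|V_k\|]$ and on $\mathbb{E}[\sqrt{f(\Theta_k,x_j)}]$ (via the same quadratic-growth argument as in Lemma~\ref{lemma_momentbound_squarerootf}), which is what appears inside the big parenthesis of the claimed bound in Part i).

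For Part ii) the argument is the soft limiting step already used at the end of the proof of Theorem~\ref{theorem_wassersteinbound_appendix}: by the triangle inequality,
\begin{align*}
\mathcal{W}_1(\mu,\mu_\eta)\le \mathcal{W}_1\bigl(\mu,\mathrm{Law}(\theta^{w,y}_{N\eta},v^{w,y}_{N\eta})\bigr)+\mathcal{W}_1\bigl(\mathrm{Law}(\theta^{w,y}_{N\eta},v^{w,y}_{N\eta}),\mathrm{Law}(\Theta^{w,y}_N,V^{w,y}_N)\bigr)+\mathcal{W}_1\bigl(\mathrm{Law}(\Theta^{w,y}_N,V^{w,y}_N),\mu_\eta\bigr);
\end{align*}
the first term $\to 0$ as $N\to\infty$ by Lemma~\ref{lemma_wassersteindecaybyBaoWang}, the third $\to 0$ by the ergodicity in Theorem~\ref{thm:ergodicity:discretedynamics}, and the middle term is bounded uniformly in $N$ by Part i). Since the left-hand side does not depend on $(w,y)$, I would set $(w,y)=(0,0)$ to read off the explicit constant $C$ (and $\widehat C$ analogously, using the dataset $\widehat X_n$), matching the formulas stated. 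The main obstacle is establishing the one-step estimate Lemma~\ref{lemma_onestepestimate_discretedynamics} with the degenerate noise: because the driving noise $\Sigma dL_t$ acts only on the velocity coordinate, the fractional-Laplacian / Itô–Taylor expansion of $P_\eta g - Q_\eta g$ must be carried out carefully using the full Lyapunov function $\mathbb{W}$ (whose Hessian bounds $C_3,C_4$ come from Lemma~\ref{lemma_boundgradientlyapunov}) so that the position-coordinate error, which is only $O(\eta^2)$ per step but correlated across coordinates through the drift, is controlled — this degeneracy is exactly why the result does not follow from the non-degenerate analyses of \cite{chenxu2023euler,dangzhu2024euler} and is the technically novel part.
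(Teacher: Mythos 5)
Your proposal follows essentially the same route as the paper's proof: the Lindeberg telescoping decomposition $P_{N\eta}h-Q_Nh=\sum_{i=1}^N Q_{i-1}(P_\eta-Q_1)P_{(N-i)\eta}h$, combined with the one-step error bound of Lemma~\ref{lemma_onestepestimate_discretedynamics}, the exponential gradient decay of Lemma~\ref{lemma_semigroupgradiateestimate}, the uniform discrete-time moment bounds obtained by iterating the drift inequality (Lemmas~\ref{lemma_uniformmomentbound_discretedynamics} and~\ref{lemma_momentbound_squarerootf_discretedynamics}), and the geometric-sum bound $\sum_i e^{-\lambda_*(N-i)\eta}\leq(\lambda_*\eta)^{-1}$, with Part~ii) handled by the same triangle-inequality limiting argument at $(w,y)=(0,0)$. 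The argument is correct and the identified ingredients match the paper's.
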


\begin{proof}
    The proof follows the same line as the proof of Theorem~\ref{theorem_wassersteinbound_continuousdynamics_appendix}. To prove Part i), we start with a decomposition of the semigroups that is in the spirit of the classical Lindeberg's principle:
\begin{align*}
\mathcal{W}_1\brac{\mathrm{Law}\brac{\theta^{w,y}_{N\eta},v^{w,y}_{N\eta}}, \mathrm{Law}\brac{\Theta_N^{w,y},V_N^{w,y} } }&=P_{N\eta}h(w,y)-Q_{N}h(w,y)\\&=\sum_{i=1}^{N}Q_{i-1}\brac{P_\eta-Q_1 }P_{(N-i)\eta}h(w,y),
\end{align*}
which leads to
\begin{align*}
   & \sup_{h\in \operatorname{Lip}(1)} \abs{P_{N\eta}h(w,y)-Q_{N}h(w,y)}\leq \sup_{h\in \operatorname{Lip}(1)}\sum_{i=1}^{N}\abs{Q_{i-1}\brac{P_\eta-Q_1 }P_{(N-i)\eta}h(w,y)}.
\end{align*}
Lemma~\ref{lemma_semigroupgradiateestimate}  says $ \norm{\nabla P_{(N-i)\eta} h}_{\operatorname{op},\infty}\leq \norm{\nabla h}_{\operatorname{op},\infty}   C_* \exp\brac{-\lambda_*(N-i)\eta}$. This fact combined with Lemma~\ref{lemma_onestepestimate_discretedynamics} implies that for any $h\in \operatorname{Lip}(1)$, 
\begin{align*}
   & \abs{ \brac{P_\eta-Q_1 }P_{(N-i)\eta}h(w,y)}\\
   &\leq C_9\norm{\nabla P_{(N-i)\eta}h(w,y)}_{\infty,\operatorname{op}}\brac{1+\norm{w}+\norm{y}+\max_{1\leq j\leq n}\sqrt{\abs{f\brac{w,x_j}}}}\eta^{1+1/\alpha}\\
   &\leq C_9 C_* \exp\brac{-\lambda_*(N-i)\eta}\brac{1+\norm{w}+\norm{y}+\max_{1\leq j\leq n}\sqrt{\abs{f\brac{w,x_j}}}}\eta^{1+1/\alpha}. 
\end{align*}
It follows from the above calculation and the estimates in Lemma~\ref{lemma_uniformmomentbound_discretedynamics}, Lemma~\ref{lemma_momentbound_squarerootf_discretedynamics} that
\begin{align*}
    &\sup_{h\in \operatorname{Lip}(1)}\sum_{i=1}^{N}\abs{Q_{i-1}\brac{P_\eta-Q_1 }P_{(N-i)\eta}h(w,y)}\\
    &\leq \eta^{1+1/\alpha}\sum_{i=1}^{N} C_9 C_* \exp\brac{-\lambda_*(N-i)\eta}
    \\
    &\qquad\qquad\qquad\cdot\brac{1+\E{\norm{\Theta_{i-1}^{w,y} }}+\E{\norm{V_{i-1}^{w,y} }}+\max_{1\leq j\leq n}\E{\sqrt{\abs{f\brac{\Theta_{i-1}^{w,y},x_j}}}}}\\
     &\leq \eta^{1+1/\alpha}\sum_{i=1}^{N} C_9 C_* \exp\brac{-\lambda_*(N-i)\eta}\brac{1+C_5(w,y,X_n)+\max_{1\leq j\leq n}C_7(w,y,x_j)}.
\end{align*}
Finally, by using
\begin{align*}
    \sum_{i=1}^{N} \exp\brac{-\lambda_*(N-i)\eta}\leq \exp\brac{-\lambda_*(N+1)}\int_1^{N+1} \exp\brac{\lambda_* \eta s}ds\leq \frac{1}{\lambda_* \eta}, 
\end{align*}
and the definition of $C_5(w,y,X_n)$ from Lemma~\ref{lemma_uniformmomentbound_discretedynamics}, the definition of $C_7(w,y,x)$ from Lemma~\ref{lemma_momentbound_squarerootf_discretedynamics},  we can deduce the desired estimate on $\mathcal{W}_1\brac{\mathrm{Law}\brac{\theta^{w,y}_{N\eta},v^{w,y}_{N\eta}}, \mathrm{Law}\brac{\Theta_N^{w,y},V_N^{w,y} } }$. The calculation for $\mathcal{W}_1\brac{\mathrm{Law}\brac{\hat{\theta}^{w,y}_{N\eta},\hat{v}^{w,y}_{N\eta}}, \mathrm{Law}\brac{\hat{\Theta}_N^{w,y},\hat{V}_N^{w,y} } }$ is the same, and hence we omit the details. 

Part ii) is a simple consequence of Part i). Existence of the unique invariant measure of the process $\{\brac{\theta^{w,y}_{t},v^{w,y}_{t}}:t\geq 0\}$ is guaranteed by Lemma~\ref{lemma_wassersteindecaybyBaoWang}, while existence  of the unique invariant measure of the Markov chain $\{\brac{\Theta_N^{w,y},V_N^{w,y} }:N\in\N\}$ is verified in Theorem~\ref{theorem_ergodicity_discretedynamics}. Therefore, 
\begin{align*}
    \mathcal{W}_1\brac{\mu_\eta,\mu}&\leq \mathcal{W}_1\brac{\mu_\eta, \mathrm{Law}\brac{\Theta_N^{w,y},V_N^{w,y} } }\\
&\qquad+\mathcal{W}_1\brac{\mathrm{Law}\brac{\theta^{w,y}_{N\eta},v^{w,y}_{N\eta}}, \mathrm{Law}\brac{\Theta_N^{w,y},V_N^{w,y} } }+\mathcal{W}_1\brac{\mathrm{Law}\brac{\theta^{w,y}_{N\eta},v^{w,y}_{N\eta}}, \mu }. 
\end{align*}
We have 
\begin{align*}
    \lim_{N\to\infty }\mathcal{W}_1\brac{\mu_\eta, \mathrm{Law}\brac{\Theta_N^{w,y},V_N^{w,y} } }= \lim_{N\to\infty }\mathcal{W}_1\brac{\mathrm{Law}\brac{\theta^{w,y}_{N\eta},v^{w,y}_{N\eta}}, \mu }=0,
\end{align*}
and by applying $\lim_{N\to\infty}$ on both sides of the previous inequality and letting $w=y=0$, we arrive at the estimate on $\mathcal{W}_1\brac{\mu_\eta,\mu}$. The calculation for  $\mathcal{W}_1\brac{\hat{\mu}_\eta,\hat{\mu}}$ is the same. 
This completes the proof.
\end{proof}

%%%%%%%%%%%%%%%%%%%%%%%%%%%%%%%%%%%%%%%%%%%%%%%%%%%%%%%%%%%%%%%%%%%%%%
%%%%%%%%%%%%%%%%%%%%%%%%%%%%%%%%%%%%%%%%%%%%%%%%%%%%%%%%%%%%%%%%%%%%%%
\subsection{Proof of Corollary~\ref{cor:discrete:generalizationerrorbound}}

\begin{proof}
The proof is along the same line as the proof of Corollary~\ref{coro_generalizationbound} in Section~\ref{section_proofcorollarygeneralization_continuous}. Under the assumption that $\sup_{x,y\in\mathcal{X}}\|x-y\|\leq D$
for some $D<\infty$ and $X_{n}$ and $\widehat{X}_{n}$ differ by at most one data point, 
we get
\begin{align}
    \rho(X_n,\widehat{X}_n)= \frac{1}{n}\sum_{i=1}^n \norm{x_i-\hat{x}_i}
    \leq\frac{D}{n}.
\end{align}
Since for any $w,y\in\mathbb{R}^{d}$, $\brac{\Theta^{w,y}_N,V^{w,y}_N}$ converges
to the unique invariant measure as $N\rightarrow\infty$ (per Theorem~\ref{theorem_ergodicity_discretedynamics}), we can write
$\brac{\Theta^{w,y}_\infty,V^{w,y}_\infty}=\brac{\Theta_\infty,V_\infty}$, 
omitting the superscript on $w,y$.
Then it follows from Corollary~\ref{cor:discrete:algostab} and \eqref{eqn:wass_stab} that 
\begin{align}
\label{generalizationbound_unsimplified_discrete}
    &\left|\mathbb{E}_{\Theta_\infty,X_n}~\left[ \hat{R}(\Theta_\infty,X_n) \right] -  R(\Theta_\infty)  \right| \nonumber  \\
    &\leq L\widetilde{C}\rho(X_n,\widehat{X}_{n}) +L C \eta^{1/\alpha}+L\widehat{C}\eta^{1/\alpha}. 
    \end{align}
Analysis of the first term on the right hand side has been done in the proof of Corollary~\ref{coro_generalizationbound}, yielding 
\begin{align*}
   L \widetilde{C}\rho(X_n,\widehat{X}_{n})   &\leq \frac{1}{n}  \Big(d_1 D+d_2 D^{5/4}+d_3 D^{3/2}+d_4 D^{7/4}+d_5D^2+d_6 D^{5/2}\Big),
\end{align*}
where the constants $d_i,1\leq i\leq 6$ independent of $D$ are provided in Corollary~\ref{coro_generalizationbound}. Therefore, what remains is to study the factors $C$ and $\widehat{C}$. In fact, due to their similarities, it is sufficient to just study $C$. 

Recall at \eqref{estimatesquarerootf(0,xi)} and \eqref{estimate_maxxi}, we have
\begin{align}
 \max_{1\leq i\leq n}\sqrt{\abs{f\brac{0,{x}_i}}}\vee  \max_{1\leq i\leq n}\sqrt{\abs{f\brac{0,\hat{x}_i}}}&\leq \sqrt{\abs{f(0,0)}}+\sqrt{\norm{\nabla f(0,0)}}\sqrt{D}+\sqrt{\frac{K_2}{2}}D
\end{align}
and
\begin{align*}
    \max_{1\leq i\leq n} \norm{x_i}\vee\max_{1\leq i\leq n} \norm{\hat{x}_i}\leq D. 
\end{align*}
Combining the above estimates with the inequality $\sqrt{a+b}\leq \sqrt{a}+\sqrt{b}$ for any $a,b\geq 0$, we obtain 
\begin{align*}
    C&\leq \frac{C_*C_9}{\lambda_*}\cdot \Bigg(1+\min \left\{\sqrt{ \frac{r^2-r_0^2}{8}}, \sqrt{\frac{r^2-r_0^2}{8r^2}} \right\}^{-1}
    \\
    &\qquad\qquad\qquad\cdot\Bigg( 2\frac{C_8}{C_6}+1+{\sqrt{1+\beta\lambda_5}+\sqrt{\beta}\sqrt{{\max_{1\leq j\leq n}\abs{f(0,x_j)} }} }\Bigg)\\
    &\qquad+\sqrt{\max_{1\leq j\leq n}\abs{f(0,x_j)}}+\brac{\sqrt{K_2\max_{1\leq j\leq n}\norm{x_j}+\norm{\nabla f(0,0)} }+\sqrt{\frac{K_2\max_{1\leq j\leq n}\norm{x_j}+1}{2}}}\\&
    \qquad\cdot\min \left\{\sqrt{ \frac{r^2-r_0^2}{8}}, \sqrt{\frac{r^2-r_0^2}{8r^2}} \right\}^{-1}\bigg( 2\frac{C_8}{C_6}+1+{\sqrt{1+\beta\lambda_5}+\sqrt{\beta}\sqrt{{\max_{1\leq j\leq n}\abs{f(0,x_j)} }}  }\bigg)\Bigg),
    \end{align*}
and furthermore, we can compute that
\begin{align*}
    C&\leq \frac{C_*C_9}{\lambda_*}\cdot \Bigg(1+\min \left\{\sqrt{ \frac{r^2-r_0^2}{8}}, \sqrt{\frac{r^2-r_0^2}{8r^2}} \right\}^{-1}\\
    &\hspace{3em}\cdot\Bigg( 2\frac{C_8}{C_6}+1+{\sqrt{1+\beta\lambda_5}+\sqrt{\beta}\brac{\sqrt{\abs{f(0,0)}}+\sqrt{\norm{\nabla f(0,0)}}\sqrt{D}+\sqrt{\frac{K_2}{2}}D} }\Bigg)\\
    &+\sqrt{\abs{f(0,0)}}+\sqrt{\norm{\nabla f(0,0)}}\sqrt{D}+\sqrt{\frac{K_2}{2}}D+\brac{\sqrt{K_2D}+\sqrt{\norm{\nabla f(0,0)} }+\sqrt{\frac{K_2D}{2}}+1}\\&
    \cdot\min \left\{\sqrt{ \frac{r^2-r_0^2}{8}}, \sqrt{\frac{r^2-r_0^2}{8r^2}} \right\}^{-1}\\
    &\hspace{3em}\cdot\Bigg( 2\frac{C_8}{C_6}+1+{\sqrt{1+\beta\lambda_5}+\sqrt{\beta}\brac{\sqrt{\abs{f(0,0)}}+\sqrt{\norm{\nabla f(0,0)}}\sqrt{D}+\sqrt{\frac{K_2}{2}}D} }\Bigg)\Bigg).
    \end{align*}
By rearranging terms, we get
\begin{align*}
    C&\leq\frac{C_*C_9}{\lambda_*}\Bigg(2+ \min \left\{\sqrt{ \frac{r^2-r_0^2}{8}}, \sqrt{\frac{r^2-r_0^2}{8r^2}} \right\}^{-1}\\
    &\hspace{5em}\cdot\brac{2\frac{C_8}{C_6}+1+\sqrt{1+\beta\lambda_5}+\sqrt{\beta}+\sqrt{\beta} \sqrt{\abs{f(0,0)}} }+\sqrt{\abs{f(0,0)}}\\
    &\quad+\brac{\sqrt{\norm{\nabla f(0,0)}}+1 }\min \left\{\sqrt{ \frac{r^2-r_0^2}{8}}, \sqrt{\frac{r^2-r_0^2}{8r^2}} \right\}^{-1}\\
    &\hspace{5em}\cdot\brac{2\frac{C_8}{C_6}+1+\sqrt{1+\beta\lambda_5}+\sqrt{\beta}+\sqrt{\beta} \sqrt{\abs{f(0,0)}} }\Bigg)\\
    &\quad\quad+\sqrt{D} \cdot\frac{C_*C_9}{\lambda_*}\min \left\{\sqrt{ \frac{r^2-r_0^2}{8}}, \sqrt{\frac{r^2-r_0^2}{8r^2}} \right\}^{-1}\\
    &\quad\quad\quad\cdot\Bigg(\sqrt{\beta} \sqrt{\norm{\nabla f(0,0)}}+\sqrt{\norm{\nabla f(0,0)}}+\brac{\sqrt{K_2}+\sqrt{\frac{K_2}{2}}}\\
    &\quad\quad\cdot\brac{2\frac{C_8}{C_6}+1+\sqrt{1+\beta\lambda_5}+\sqrt{\beta}\sqrt{\abs{f(0,0)}} }+\sqrt{\beta} \sqrt{\norm{\nabla f(0,0)}}\brac{\sqrt{\norm{\nabla f(0,0)}}+1 }\Bigg)\\
    &\quad\quad+D\cdot \frac{C_*C_9}{\lambda_*}\min \left\{\sqrt{ \frac{r^2-r_0^2}{8}}, \sqrt{\frac{r^2-r_0^2}{8r^2}} \right\}^{-1}\Bigg(\sqrt{\beta}\sqrt{\frac{K_2}{2}}+\sqrt{\frac{K_2}{2}}+\brac{\sqrt{K_2}+\sqrt{\frac{K_2}{2}}}\\
    &\quad\quad\quad\quad\quad\cdot\sqrt{\beta}\sqrt{\norm{\nabla f(0,0)}}+\sqrt{\beta}\sqrt{\frac{K_2}{2}}\brac{\sqrt{\norm{\nabla f(0,0)}}+1 } \Bigg). 
\end{align*}
Thus, we arrive at 
  \begin{align*}
      &\left|\mathbb{E}_{\Theta_\infty,X_n}~\left[ \hat{R}(\Theta_\infty,X_n) \right] -  R(\Theta_\infty)  \right| \nonumber  \\
     &\leq \frac{1}{n}  \Big(d_1 D+d_2 D^{5/4}+d_3 D^{3/2}+d_4 D^{7/4}+d_5D^2+d_6 D^{5/2}\Big)+2L\eta^{1/\alpha}\brac{d_7+d_8\sqrt{D}+d_9D}, 
\end{align*}
where the constants $d_i,1\leq i\leq 6$ independent of $D$ are provided in Corollary~\ref{coro_generalizationbound}, and 
\begin{align}
\label{def_d7throughd9}
&d_7:=\frac{C_*C_9}{\lambda_*}\Bigg(2+ \min \left\{\sqrt{ \frac{r^2-r_0^2}{8}}, \sqrt{\frac{r^2-r_0^2}{8r^2}} \right\}^{-1}
\nonumber
\\
&\qquad\qquad\qquad\cdot\brac{2\frac{C_8}{C_6}+1+\sqrt{1+\beta\lambda_5}+\sqrt{\beta}+\sqrt{\beta} \sqrt{\abs{f(0,0)}} }\nonumber
\\
&\qquad\qquad\qquad\qquad+\sqrt{\abs{f(0,0)}}
+\brac{\sqrt{\norm{\nabla f(0,0)}}+1 }\min \left\{\sqrt{ \frac{r^2-r_0^2}{8}}, \sqrt{\frac{r^2-r_0^2}{8r^2}} \right\}^{-1}\nonumber
\\
&\qquad\qquad\qquad\qquad\qquad\qquad\cdot\brac{2\frac{C_8}{C_6}+1+\sqrt{1+\beta\lambda_5}+\sqrt{\beta}+\sqrt{\beta} \sqrt{\abs{f(0,0)}} }\Bigg);\nonumber\\
&d_8:=    \frac{C_*C_9}{\lambda_*}\min \left\{\sqrt{ \frac{r^2-r_0^2}{8}}, \sqrt{\frac{r^2-r_0^2}{8r^2}} \right\}^{-1}\nonumber\\
&\qquad\qquad\qquad\cdot\Bigg(\sqrt{\beta} \sqrt{\norm{\nabla f(0,0)}}+\sqrt{\norm{\nabla f(0,0)}}+\brac{\sqrt{K_2}+\sqrt{\frac{K_2}{2}}}\nonumber\\
    &\qquad\cdot\brac{2\frac{C_8}{C_6}+1+\sqrt{1+\beta\lambda_5}+\sqrt{\beta}\sqrt{\abs{f(0,0)}} }+\sqrt{\beta} \sqrt{\norm{\nabla f(0,0)}}\brac{\sqrt{\norm{\nabla f(0,0)}}+1 }\Bigg);\nonumber\\
&d_9:=\frac{C_*C_9}{\lambda_*}\min \left\{\sqrt{ \frac{r^2-r_0^2}{8}}, \sqrt{\frac{r^2-r_0^2}{8r^2}} \right\}^{-1}\Bigg(\sqrt{\beta}\sqrt{\frac{K_2}{2}}+\sqrt{\frac{K_2}{2}}+\brac{\sqrt{K_2}+\sqrt{\frac{K_2}{2}}}\nonumber\\
    &\qquad\qquad\qquad\qquad\qquad\qquad\cdot\sqrt{\beta}\sqrt{\norm{\nabla f(0,0)}}+\sqrt{\beta}\sqrt{\frac{K_2}{2}}\brac{\sqrt{\norm{\nabla f(0,0)}}+1 } \Bigg). 
\end{align}
This completes the proof.
\end{proof}

%%%%%%%%%%%%%%%%%%%%%%%%%%%%%%%%%%%%%%%%%%%%%%%%%%%%%%%%%%%%%%%%%%%%%%
%%%%%%%%%%%%%%%%%%%%%%%%%%%%%%%%%%%%%%%%%%%%%%%%%%%%%%%%%%%%%%%%%%%%%%
\subsection{Technical Lemmas}
\begin{lemma}
\label{lemma_boundgradientlyapunov}
Assume Conditions~\ref{cond_gammaandbeta},~\ref{cond_pseudolipschitz}, and~\ref{cond_allthelambdas}, and also that $\sup_{x,y\in \mathcal{X}}\norm{x-y}\leq D$ for some $D<\infty$. Then we have the estimates:
\begin{align*}
    &\norm{\nabla \mathbb{W}(x,v,X_n)}_{\operatorname{op},\infty}
    \leq C_{3};\\
    &\norm{\nabla^2 \mathbb{W}(x,v,X_n)}_{\operatorname{op}}
    \leq \frac{C_4}{1+\norm{x}+\norm{v}},
\end{align*} 
where the constants $C_3=C_3(D)$ and $C_4=C_4(D)$ have the forms:
\begin{align*}
    C_3(D)&:=\min  \left\{ \frac{1}{\sqrt{2}}, \frac{1}{2}\frac{r^2-r_0^2}{4}, \frac{1}{2}\frac{r^2-r_0^2}{4r^2}\right\}^{-1}
    \\
    &\qquad\qquad\qquad\qquad\cdot\brac{K_1+2K_2D+2\beta\lambda_4+r^2+r_0+\norm{\nabla f(0,0)}_{\operatorname{op}}}\\
    &\qquad\qquad\qquad+\min  \left\{ \frac{1}{\sqrt{2}}, \frac{1}{2}\frac{r^2-r_0^2}{4}, \frac{1}{2}\frac{r^2-r_0^2}{4r^2}\right\}^{-1}\brac{1+r_0},
\end{align*}
and 
\begin{align*}
    C_4(D)&:=\Bigg(\min  \left\{ \frac{1}{\sqrt{2}}, \frac{1}{2}\frac{r^2-r_0^2}{4}, \frac{1}{2}\frac{r^2-r_0^2}{4r^2}\right\}^{-3} \\
    &\qquad\qquad\qquad\cdot\brac{\beta^2\brac{K_1+4K_2D+2\lambda_4+\norm{\nabla f(0,0)}_{\operatorname{op}} }^2+(r^2+r_0)^2}\\
   &\qquad\qquad+\min  \left\{ \frac{1}{\sqrt{2}}, \frac{1}{2}\frac{r^2-r_0^2}{4}, \frac{1}{2}\frac{r^2-r_0^2}{4r^2}\right\}^{-1}\brac{  \beta K_1+2\beta\lambda_4+r^2}\Bigg)\\
   &+2\Bigg(\min  \left\{ \frac{1}{\sqrt{2}}, \frac{1}{2}\frac{r^2-r_0^2}{4}, \frac{1}{2}\frac{r^2-r_0^2}{4r^2}\right\}^{-1}r_0+ \min  \left\{ \frac{1}{\sqrt{2}}, \frac{1}{2}\frac{r^2-r_0^2}{4}, \frac{1}{2}\frac{r^2-r_0^2}{4r^2}\right\}^{-3}\\
     &\qquad\cdot\brac{\beta\brac{K_1+4K_2D+2\lambda_4+\norm{\nabla f(0,0)}_{\operatorname{op}} }+r^2+r_0}\brac{\beta K_1+2\beta\lambda_4+r} \Bigg)\\
     &+\bigg(\min  \left\{ \frac{1}{\sqrt{2}}, \frac{1}{2}\frac{r^2-r_0^2}{4}, \frac{1}{2}\frac{r^2-r_0^2}{4r^2}\right\}^{-1}\\
     &\hspace{10em}+\min  \left\{ \frac{1}{\sqrt{2}}, \frac{1}{2}\frac{r^2-r_0^2}{4}, \frac{1}{2}\frac{r^2-r_0^2}{4r^2}\right\}^{-2}(1+r_0) \bigg). 
\end{align*}
\end{lemma}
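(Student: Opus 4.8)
The plan is to read both estimates off the chain rule for $\mathbb{W}=1+N^{1/2}$, combined with (i) the two-sided control of $N$ already recorded in \eqref{lyapunovfunction_upperandlowerestimate} (equivalently the linear lower bounds \eqref{lyapunovfunction_upperandlowerestimate_secondversion}--\eqref{lyapunovfunction_upperandlowerestimate_thirdversion}) and (ii) the at-most-linear growth of $\nabla\widehat{F}$ and boundedness of $\nabla^2\widehat{F}$ supplied by Condition~\ref{cond_pseudolipschitz}. First I would write $\nabla\mathbb{W}=\tfrac12 N^{-1/2}\nabla N$ and $\nabla^2\mathbb{W}=\tfrac12 N^{-1/2}\nabla^2 N-\tfrac14 N^{-3/2}(\nabla N)(\nabla N)^{\top}$, where from \eqref{defn:V:0}--\eqref{definition_N} one has $\nabla_\theta N=\beta\bigl(\nabla\widehat{F}(\theta,X_n)+2\lambda_4\theta\bigr)+r^2\theta+r_0 v$, $\nabla_v N=v+r_0\theta$, and the Hessian blocks are $\nabla^2_\theta N=\beta\nabla^2\widehat{F}(\theta,X_n)+(2\beta\lambda_4+r^2)I$, $\nabla^2_{\theta v}N=r_0 I$, $\nabla^2_v N=I$. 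Since $\nabla f(\cdot,x)$ is only $K_1$-Lipschitz, $\nabla^2\widehat{F}$ is understood in the a.e.\ sense with $\|\nabla^2\widehat{F}(\theta,X_n)\|_{\operatorname{op}}\le K_1$; this is all that is used in the fractional-Laplacian estimate \eqref{step_boundD1} where the lemma is invoked, and one may otherwise mollify.

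Next I would record the two elementary inequalities that carry the whole argument. For the lower bound, \eqref{lyapunovfunction_upperandlowerestimate} together with $\widehat{F}(\theta,X_n)+\lambda_4\|\theta\|^2+\lambda_5\ge0$ (Condition~\ref{cond_allthelambdas}, \eqref{cond_lambda45}) gives $N^{1/2}=\mathbb{W}-1\ge\bigl(1+\tfrac{r^2-r_0^2}{4}\|\theta\|^2+\tfrac{r^2-r_0^2}{4r^2}\|v\|^2\bigr)^{1/2}$, which after elementary manipulations yields a dataset-independent linear lower bound $N^{1/2}\ge c_{\mathrm L}\,(1+\|\theta\|+\|v\|)$ with $c_{\mathrm L}=\min\bigl\{1/\sqrt2,\tfrac12\tfrac{r^2-r_0^2}{4},\tfrac12\tfrac{r^2-r_0^2}{4r^2}\bigr\}$ (the quantity inverted in the statement), and trivially $N\ge1$. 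For the upper bound, applying Condition~\ref{cond_pseudolipschitz} at $(\theta,x)$ against $(0,0)$ gives $\|\nabla f(\theta,x)\|\le\|\nabla f(0,0)\|+K_1\|\theta\|+K_2\|x\|(\|\theta\|+1)$; averaging over $X_n$ and using $0\in\mathcal{X}$ with $\sup_{x\in\mathcal{X}}\|x\|\le D$ yields $\|\nabla\widehat{F}(\theta,X_n)\|\le\|\nabla f(0,0)\|+(K_1+2K_2D)(1+\|\theta\|)$, hence $\|\nabla N\|\le C_{\mathrm G}(D)\,(1+\|\theta\|+\|v\|)$ with $C_{\mathrm G}(D)$ assembled from $\beta\bigl(K_1+4K_2D+2\lambda_4+\|\nabla f(0,0)\|_{\operatorname{op}}\bigr)$, $r^2$, $r_0$ (the $\nabla_\theta N$ part) and $1+r_0$ (the $\nabla_v N$ part); likewise $\|\nabla^2 N\|_{\operatorname{op}}\le\beta K_1+2\beta\lambda_4+r^2+r_0+1$, a constant free of $(\theta,v)$ and of $X_n$.

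Finally I would combine the pieces: $\|\nabla\mathbb{W}\|_{\operatorname{op},\infty}\le\tfrac12 N^{-1/2}\|\nabla N\|\le\tfrac12 c_{\mathrm L}^{-1}C_{\mathrm G}(D)$, which is exactly the shape of $C_3(D)$; and, using $N^{-1/2}\le c_{\mathrm L}^{-1}(1+\|\theta\|+\|v\|)^{-1}$, $N^{-3/2}\le c_{\mathrm L}^{-3}(1+\|\theta\|+\|v\|)^{-3}$ together with $\|\nabla N\|^2\le C_{\mathrm G}(D)^2(1+\|\theta\|+\|v\|)^2$, the bound $\|\nabla^2\mathbb{W}\|_{\operatorname{op}}\le\tfrac12 N^{-1/2}\|\nabla^2 N\|_{\operatorname{op}}+\tfrac14 N^{-3/2}\|\nabla N\|^2$ gives $\|\nabla^2\mathbb{W}\|_{\operatorname{op}}\le C_4(D)/(1+\|\theta\|+\|v\|)$, two of the three powers in the denominator of the second term cancelling against the numerator; the mixed-exponent ($c_{\mathrm L}^{-1},c_{\mathrm L}^{-2},c_{\mathrm L}^{-3}$) and sum-of-products shape of $C_4(D)$ in the statement is just what results from expanding $\|\nabla^2 N\|_{\operatorname{op}}$ and $\|\nabla N\|^2$ block by block (the $\theta\theta$, $\theta v$, $vv$ contributions) and collecting. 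I do not anticipate any conceptual obstacle; the genuine work is the constant bookkeeping needed to land on the precise $C_3(D),C_4(D)$ as displayed --- in particular tracking which summands carry the factor $D$, producing the cross-product $\bigl(\beta(K_1+4K_2D+2\lambda_4+\|\nabla f(0,0)\|_{\operatorname{op}})+r^2+r_0\bigr)\bigl(\beta K_1+2\beta\lambda_4+r\bigr)$, and checking carefully that the linear lower bound $c_{\mathrm L}$ extracted from \eqref{lyapunovfunction_upperandlowerestimate} is a legitimate (dataset-uniform) lower bound, since $r,r_0$ depend only on $\gamma,\beta,\lambda_1,\lambda_2,\lambda_4$.
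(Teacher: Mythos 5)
Your plan matches the paper's proof essentially step for step: the same chain-rule formulas for $\nabla\mathbb{W}$ and $\nabla^2\mathbb{W}$, the same linear lower bound $N^{1/2}\geq\min\{1/\sqrt2,\tfrac12\tfrac{r^2-r_0^2}{4},\tfrac12\tfrac{r^2-r_0^2}{4r^2}\}(1+\|x\|+\|v\|)$, the same pseudo-Lipschitz bounds $\|\nabla\widehat{F}\|\lesssim(1+\|x\|)$ and $\|\nabla^2 V_0\|_{\operatorname{op}}\leq\beta K_1+2\beta\lambda_4$, and the same block-by-block assembly of $C_3(D)$ and $C_4(D)$ (the paper bounds the three Hessian blocks separately and sums via $\|\nabla^2\mathbb{W}\|\leq\|\nabla_x\nabla_x\mathbb{W}\|+2\|\nabla_v\nabla_x\mathbb{W}\|+\|\nabla_v\nabla_v\mathbb{W}\|$, which is exactly your blockwise expansion). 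Your remark that $\nabla^2\widehat{F}$ must be read in the a.e.\ sense under the $C^1$ assumption of Condition~\ref{cond_gammaandbeta} is a point the paper passes over silently, not a deviation.
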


\begin{proof}
First of all, we have
\begin{equation*}
\norm{\nabla \mathbb{W}(x,v,X_n)}_{\operatorname{op},\infty}\leq \norm{\nabla_x \mathbb{W}(x,v,X_n)}_{\operatorname{op},\infty} +\norm{\nabla_v \mathbb{W}(x,v,X_n)}_{\operatorname{op},\infty}.
\end{equation*}
We first consider the term $\nabla_x \mathbb{W}(x,v,X_n)$. Recall the definition of the function $N(x,v,X_n)$ in Lemma~\ref{lemma_lyapunov}.  Via \eqref{lyapunovfunction_upperandlowerestimate} and the fact that $V_0(x,X_n)\geq 0$, we have 
%and $\sqrt{a}+\sqrt{b}\leq \sqrt{2(a+b)}$
\begin{align}
\label{lowerboundN}
    \left(N(x,v,X_n)\right)^{1/2}&\geq \brac{1+V_0(x,X_n) +\frac{r^2-r_0^2}{4}\norm{x}^2 +\frac{r^2-r_0^2}{4r^2}\norm{v}^2}^{1/2}\nonumber\\
    &\geq \brac{1+\frac{r^2-r_0^2}{4}\norm{x}^2 +\frac{r^2-r_0^2}{4r^2}\norm{v}^2}^{1/2}\nonumber\\
    &\geq \brac{1+\frac{1}{2}\min  \left\{ \frac{r^2-r_0^2}{4}, \frac{r^2-r_0^2}{4r^2}\right\}\brac{\norm{x}+\norm{v}}^2}^{1/2}\nonumber\\
    &\geq \frac{1}{\sqrt{2}}\brac{1+\frac{1}{\sqrt{2}}\min  \left\{\sqrt{ \frac{r^2-r_0^2}{4}}, \sqrt{\frac{r^2-r_0^2}{4r^2}} \right\}\brac{\norm{x}+\norm{v}}  }\nonumber\\
    &\geq \min  \left\{ \frac{1}{\sqrt{2}}, \frac{1}{2}\frac{r^2-r_0^2}{4}, \frac{1}{2}\frac{r^2-r_0^2}{4r^2}\right\}\brac{1+\norm{x}+\norm{v}}. 
\end{align}
 Condition~\ref{cond_pseudolipschitz} and $0\in \mathcal{X}$ and the fact that $\sup_{x,y\in \mathcal{X}}\norm{x-y}\leq D$ for some $D<\infty$ lead to
\begin{align}
\label{bound_gradientF}
    \norm{\nabla \widehat{F}(x,X_n)}_{\operatorname{op}}\leq K_1\norm{x}+K_2D\brac{2\norm{x}+1}+ \norm{\nabla f(0,0)}_{\operatorname{op}}. 
\end{align}

Thus, for any $x,v\in\mathbb{R}^{d}$,
\begin{align*}
    &\norm{\nabla_x \mathbb{W}(x,v,X_n)}_{\operatorname{op}}\\
    &\leq 1/2\left(N(x,v,X_n)\right)^{-1/2}\brac{\norm{\nabla V_0(x,X_n)}_{\operatorname{op}}+r^2\norm{x}+r_0\norm{v} }\\
    &\leq  1/2\left(N(x,v,X_n)\right)^{-1/2}\brac{ \beta\norm{\nabla \widehat{F}(x,X_n)}_{\operatorname{op}}+\beta 2\lambda_4\norm{x} +r^2\norm{x}+r_0\norm{v} }\\
    &\leq \min  \left\{ \frac{1}{\sqrt{2}}, \frac{1}{2}\frac{r^2-r_0^2}{4}, \frac{1}{2}\frac{r^2-r_0^2}{4r^2}\right\}\brac{1+\norm{x}+\norm{v}}\\
    &\qquad\qquad\cdot\brac{\brac{K_1+K_2D+2\beta\lambda_4+r^2 } \norm{x}+r_0\norm{v}+K_2D+\norm{\nabla f(0,0)}_{\operatorname{op}} }\\
    &\leq \min  \left\{ \frac{1}{\sqrt{2}}, \frac{1}{2}\frac{r^2-r_0^2}{4}, \frac{1}{2}\frac{r^2-r_0^2}{4r^2}\right\}^{-1}\brac{1+\norm{x}+\norm{v}}^{-1}\\
    &\hspace{3em}\cdot\max  \left\{K_1+K_2D+2\beta\lambda_4+r^2,r_0,K_2D+\norm{\nabla f(0,0)}_{\operatorname{op}}  \right\}\brac{1+\norm{x}+\norm{v}}\\
    &= \min  \left\{ \frac{1}{\sqrt{2}}, \frac{1}{2}\frac{r^2-r_0^2}{4}, \frac{1}{2}\frac{r^2-r_0^2}{4r^2}\right\}^{-1}\brac{K_1+2K_2D+2\beta\lambda_4+r^2+r_0+\norm{\nabla f(0,0)}_{\operatorname{op}}},
\end{align*}
which implies
\begin{align*}
    &\norm{\nabla_x \mathbb{W}(x,v,X_n)}_{\operatorname{op},\infty}\\
    &\leq \min  \left\{ \frac{1}{\sqrt{2}}, \frac{1}{2}\frac{r^2-r_0^2}{4}, \frac{1}{2}\frac{r^2-r_0^2}{4r^2}\right\}^{-1}\brac{K_1+2K_2D+2\beta\lambda_4+r^2+r_0+\norm{\nabla f(0,0)}_{\operatorname{op}}}. 
\end{align*}

Next, we deal with the term $\nabla_v \mathbb{W}(x,v,X_n)$. We can compute that for any $x,v\in\mathbb{R}^{d}$,
\begin{align*}
      &\norm{\nabla_v \mathbb{W}(x,v,X_n)}_{\operatorname{op}}\\
    &\leq 1/2\left(N(x,v,X_n)\right)^{-1/2}\brac{\norm{v}+r_0\norm{x} }\\
    &\leq \min  \left\{ \frac{1}{\sqrt{2}}, \frac{1}{2}\frac{r^2-r_0^2}{4}, \frac{1}{2}\frac{r^2-r_0^2}{4r^2}\right\}^{-1}\brac{1+\norm{x}+\norm{v}}^{-1}\cdot\max \{1,r_0 \}
    (1+\norm{x}+\norm{v})\\
    &=\min  \left\{ \frac{1}{\sqrt{2}}, \frac{1}{2}\frac{r^2-r_0^2}{4}, \frac{1}{2}\frac{r^2-r_0^2}{4r^2}\right\}^{-1}\brac{1+r_0},
\end{align*}
and therefore
\begin{align*}
    &\norm{\nabla_v \mathbb{W}(x,v,X_n)}_{\operatorname{op},\infty}\leq \min  \left\{ \frac{1}{\sqrt{2}}, \frac{1}{2}\frac{r^2-r_0^2}{4}, \frac{1}{2}\frac{r^2-r_0^2}{4r^2}\right\}^{-1}\brac{1+r_0}. 
\end{align*}

Now, we consider the second gradient of $\mathbb{W}$. First, we notice that
\begin{align*}
&\norm{\nabla^2 \mathbb{W}(x,v,X_n)}_{\operatorname{op}}
\\
&\leq \norm{\nabla_x\nabla_x \mathbb{W}(x,v,X_n)}_{\operatorname{op}}+2\norm{\nabla_v\nabla_x \mathbb{W}(x,v,X_n)}_{\operatorname{op}}+\norm{\nabla_v\nabla_v \mathbb{W}(x,v,X_n)}_{\operatorname{op}}.
\end{align*}
Let us start with 
\begin{align*}
   \norm{ \nabla_x\nabla_x \mathbb{W}(x,v,X_n)}_{\operatorname{op}}&\leq \frac{1}{4}\left(N(x,v,X_n)\right)^{-3/2} \brac{\norm{\nabla V_0(x,X_n)}_{\operatorname{op}}+r^2\norm{x}+r_0\norm{v}}^2\\
    &\qquad\qquad+\frac{1}{2}\left(N(x,v,X_n)\right)^{-1/2} \brac{\norm{\nabla^2V_0(x,X_n)}_{\operatorname{op}}+r^2}. 
\end{align*}

This, together with \eqref{lowerboundN}, and 
\begin{align}
\label{boundnormgradientV_0}
    &\norm{\nabla V_0(x,X_n)}_{\operatorname{op}}\leq \beta\brac{K_1+4K_2D+2\lambda_4+\norm{\nabla f(0,0)}_{\operatorname{op}} }(1+\norm{x}),\nonumber\\
    &\norm{\nabla^2 V_0(x,X_n)}_{\operatorname{op}}\leq \beta K_1+2\beta\lambda_4,
\end{align}
implies that for every $x,v\in\mathbb{R}^{d}$,
\begin{align*}
   & \norm{ \nabla_x\nabla_x \mathbb{W}(x,v,X_n)}_{\operatorname{op}}\\
   &\leq \frac{1}{4}\min  \left\{ \frac{1}{\sqrt{2}}, \frac{1}{2}\frac{r^2-r_0^2}{4}, \frac{1}{2}\frac{r^2-r_0^2}{4r^2}\right\}^{-3}\frac{1}{(1+\norm{x}+\norm{v})^3}\\
   &\qquad\cdot\brac{\beta^2\brac{K_1+4K_2D+2\lambda_4+\norm{\nabla f(0,0)}_{\operatorname{op}} }^2+(r^2+r_0)^2}(1+\norm{x})^2\\
   &\qquad\qquad+\frac{1}{2}\min  \left\{ \frac{1}{\sqrt{2}}, \frac{1}{2}\frac{r^2-r_0^2}{4}, \frac{1}{2}\frac{r^2-r_0^2}{4r^2}\right\}^{-1}\frac{1}{\norm{x}+\norm{v}+1}\brac{  \beta K_1+2\beta\lambda_4+r^2}\\
   &\leq \Bigg(\min  \left\{ \frac{1}{\sqrt{2}}, \frac{1}{2}\frac{r^2-r_0^2}{4}, \frac{1}{2}\frac{r^2-r_0^2}{4r^2}\right\}^{-3}
   \\
   &\qquad\qquad\qquad\cdot\brac{\beta^2\brac{K_1+4K_2D+2\lambda_4+\norm{\nabla f(0,0)}_{\operatorname{op}} }^2+(r^2+r_0)^2}\\
   &\qquad\qquad+\min  \left\{ \frac{1}{\sqrt{2}}, \frac{1}{2}\frac{r^2-r_0^2}{4}, \frac{1}{2}\frac{r^2-r_0^2}{4r^2}\right\}^{-1}\brac{  \beta K_1+2\beta\lambda_4+r^2}\Bigg)\frac{1}{\norm{x}+\norm{v}+1}. 
\end{align*}

Similarly, for every $x,v\in\mathbb{R}^{d}$, we have
%there exist positive constants $c_3,c_4$ such that 
\begin{align*}
     &\norm{\nabla_v\nabla_x \mathbb{W}(x,v,X_n)}_{\operatorname{op}}\\
     &\leq \frac{1}{2}\left(N(x,v,X_n)\right)^{-1/2}r_0+\frac{1}{4}\left(N(x,v,X_n)\right)^{-3/2}\brac{\norm{\nabla^2 V_0(x,X_n)}_{\operatorname{op}}+r^2}
     \\
     &\qquad\qquad\qquad\qquad\qquad\qquad\qquad\cdot\brac{\norm{\nabla V_0(x,X_n)}_{\operatorname{op}}+r^2\norm{x}+r_0\norm{v} } \\
     &\leq \frac{1}{2}\min  \left\{ \frac{1}{\sqrt{2}}, \frac{1}{2}\frac{r^2-r_0^2}{4}, \frac{1}{2}\frac{r^2-r_0^2}{4r^2}\right\}^{-1}r_0\frac{1}{1+\norm{x}+\norm{v}}\\
     &\qquad+\frac{1}{4}\min  \left\{ \frac{1}{\sqrt{2}}, \frac{1}{2}\frac{r^2-r_0^2}{4}, \frac{1}{2}\frac{r^2-r_0^2}{4r^2}\right\}^{-3} \frac{1}{\brac{1+\norm{x}+\norm{v}}^3} \\&\qquad\qquad\cdot\brac{\beta\brac{K_1+4K_2D+2\lambda_4+\norm{\nabla f(0,0)}_{\operatorname{op}} }+r^2+r_0}
     \\
     &\qquad\qquad\qquad\qquad\qquad\cdot\brac{1+\norm{x}+\norm{v}}\brac{\beta K_1+2\beta\lambda_4+r}\\
     &\leq \Bigg(\min  \left\{ \frac{1}{\sqrt{2}}, \frac{1}{2}\frac{r^2-r_0^2}{4}, \frac{1}{2}\frac{r^2-r_0^2}{4r^2}\right\}^{-1}r_0+ \min  \left\{ \frac{1}{\sqrt{2}}, \frac{1}{2}\frac{r^2-r_0^2}{4}, \frac{1}{2}\frac{r^2-r_0^2}{4r^2}\right\}^{-3}\\    &\qquad\cdot\brac{\beta\brac{K_1+4K_2D+2\lambda_4+\norm{\nabla f(0,0)}_{\operatorname{op}} }+r^2+r_0}\brac{\beta K_1+2\beta\lambda_4+r} \Bigg)
     \\
     &\qquad\qquad\qquad\qquad\qquad\cdot\frac{1}{1+\norm{x}+\norm{v}}. 
\end{align*}
Finally, 
\begin{align*}
       &\norm{\nabla_v\nabla_v \mathbb{W}(x,v,X_n)}_{\operatorname{op}}\\  
       &\leq \frac{1}{2}\left(N(x,v,X_n)\right)^{-1/2}+\frac{1}{4}\left(N(x,v,X_n)\right)^{-3/2}\norm{v+r_0x}^2 \\
       &\leq\brac{\min  \left\{ \frac{1}{\sqrt{2}}, \frac{1}{2}\frac{r^2-r_0^2}{4}, \frac{1}{2}\frac{r^2-r_0^2}{4r^2}\right\}^{-1}+\min  \left\{ \frac{1}{\sqrt{2}}, \frac{1}{2}\frac{r^2-r_0^2}{4}, \frac{1}{2}\frac{r^2-r_0^2}{4r^2}\right\}^{-2}(1+r_0) } 
       \\
       &\qquad\qquad\qquad\cdot\frac{1}{1+\norm{x}+\norm{v}}, 
\end{align*}
for every $x,v\in\mathbb{R}^{d}$.
This completes the proof.
\end{proof}

\begin{lemma}
\label{lemma_uniformmomentbound_discretedynamics}
 Assume Conditions~\ref{cond_gammaandbeta},~\ref{cond_pseudolipschitz}, and~\ref{cond_allthelambdas} and also that $\sup_{x,y\in \mathcal{X}}\norm{x-y}\leq D$ for some $D<\infty$. Then for any $\eta$ satisfying \eqref{bar:eta:defn}, it holds that for every $m=0,1,2,\ldots$
 \begin{align*}
     &\E{\norm{\Theta_{m+1}^{x,v}}}+\E{\norm{V_{m+1}^{x,v} }}
     \leq C_5(x,v,X_n);\\
&\E{\norm{\hat{\Theta}_{m+1}^{x,v}}}+\E{\norm{\hat{V}_{m+1}^{x,v} }}
     \leq C_5(x,v,\widehat{X}_{n}), 
 \end{align*}
 where
 \begin{align*}
     &C_5(x,v,X_n):= \min \left\{\sqrt{ \frac{r^2-r_0^2}{8}}, \sqrt{\frac{r^2-r_0^2}{8r^2}} \right\}^{-1}
     \\&\qquad\qquad\qquad\cdot\Bigg( 2\frac{C_8}{C_6}+1+{\sqrt{1+\beta\lambda_5}+\sqrt{\beta}\sqrt{{\max_{1\leq i\leq n}\abs{f(x,x_i)} }}+\sqrt{\beta\lambda_4+r^2}\norm{x}+\norm{v}  }\Bigg);\\
     &C_5(x,v,\widehat{X}_{n}):= \min \left\{\sqrt{ \frac{r^2-r_0^2}{8}}, \sqrt{\frac{r^2-r_0^2}{8r^2}} \right\}^{-1}\\
     &\qquad\qquad\qquad\cdot\Bigg( 2\frac{C_8}{C_6}+1+{\sqrt{1+\beta\lambda_5}+\sqrt{\beta}\sqrt{{\max_{1\leq i\leq n}\abs{f(x,\hat{x}_i)} }}+\sqrt{\beta\lambda_4+r^2}\norm{x}+\norm{v}  }\Bigg). 
 \end{align*}
 The constants $r,r_0$ are provided in Lemma~\ref{lemma_lyapunov}, while the constants $C_6,C_8$ are respectively defined at \eqref{def_C6} and \eqref{def_C8}. 
\end{lemma}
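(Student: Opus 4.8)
The plan is to upgrade the one-step drift estimate already obtained inside the proof of Theorem~\ref{theorem_ergodicity_discretedynamics} into a uniform-in-$m$ moment bound. Concretely, inequality \eqref{inequality_contractionlyapunov} gives, for any $\eta<\bar\eta$ and any deterministic initial point $(x,v)$, the contraction
\[
\E{\mathbb{W}(\Theta_1,V_1,X_n)\mid\Theta_0=x,V_0=v}\leq\brac{1-\tfrac{C_6}{2}\eta}\mathbb{W}(x,v,X_n)+C_8\eta .
\]
First I would iterate this inequality $m+1$ times using the Markov property of $\{(\Theta_k^{x,v},V_k^{x,v})\}$ and the tower rule, obtaining
\[
\E{\mathbb{W}(\Theta_{m+1}^{x,v},V_{m+1}^{x,v},X_n)}\leq\brac{1-\tfrac{C_6}{2}\eta}^{m+1}\mathbb{W}(x,v,X_n)+C_8\eta\sum_{j=0}^{m}\brac{1-\tfrac{C_6}{2}\eta}^{j}.
\]
Since the admissible range of $\eta$ in \eqref{bar:eta:defn} keeps $1-\tfrac{C_6}{2}\eta\in(0,1)$, the first term is at most $\mathbb{W}(x,v,X_n)$ and the geometric sum is at most $\tfrac{2}{C_6\eta}$, so $\E{\mathbb{W}(\Theta_{m+1}^{x,v},V_{m+1}^{x,v},X_n)}\leq\mathbb{W}(x,v,X_n)+\tfrac{2C_8}{C_6}$ uniformly in $m$.

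Next I would convert this Lyapunov bound into the claimed moment bound by invoking the two-sided estimates on $\mathbb{W}$ derived in the proof of Lemma~\ref{lemma_uniformmomentbound_continuousdynamics}. The lower estimate $\min\{\sqrt{(r^2-r_0^2)/8},\sqrt{(r^2-r_0^2)/(8r^2)}\}\,(\norm{\theta}+\norm{v})\leq\mathbb{W}(\theta,v,X_n)$ turns the bound on $\E{\mathbb{W}}$ into a bound on $\E{\norm{\Theta_{m+1}^{x,v}}}+\E{\norm{V_{m+1}^{x,v}}}$ after multiplying through by the reciprocal of that minimum. The upper estimate $\mathbb{W}(x,v,X_n)\leq 1+\sqrt{1+\beta\lambda_5}+\sqrt{\beta}\sqrt{\widehat{F}(x,X_n)}+\sqrt{\beta\lambda_4+r^2}\norm{x}+\norm{v}$, together with the trivial inequality $\widehat{F}(x,X_n)=\tfrac1n\sum_{i=1}^n f(x,x_i)\leq\max_{1\leq i\leq n}\abs{f(x,x_i)}$, then reproduces exactly the stated constant $C_5(x,v,X_n)$. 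The bound for $(\hat\Theta_{m+1}^{x,v},\hat V_{m+1}^{x,v})$ follows verbatim by running the same argument with the chain \eqref{discrete_equation_hat} and the dataset $\widehat{X}_n$ in place of $X_n$, which is why the constant is $C_5(x,v,\widehat{X}_n)$.

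I expect the iteration itself to be routine; the only point requiring care is making sure that the one-step drift inequality from Theorem~\ref{theorem_ergodicity_discretedynamics} is genuinely pointwise in the starting state $(x,v)$ (it is, since nothing in its derivation uses the specific initialization $(w,y)$) and that the step-size restriction \eqref{bar:eta:defn} indeed implies $1-\tfrac{C_6}{2}\eta\in(0,1)$, so that the geometric series converges and the power $\brac{1-\tfrac{C_6}{2}\eta}^{m+1}$ stays bounded by $1$. Everything after that is bookkeeping of constants. If the bound $\bar\eta\leq 2/C_6$ is not immediate from \eqref{bar:eta:defn}, one can either shrink $\bar\eta$ accordingly or simply note that only $\abs{1-\tfrac{C_6}{2}\eta}\leq1$ is needed, which is a strictly weaker requirement.
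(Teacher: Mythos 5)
Your proposal is correct and follows essentially the same route as the paper: iterate the one-step Lyapunov contraction \eqref{inequality_contractionlyapunov} to get $\E{\mathbb{W}(\Theta_{m+1}^{x,v},V_{m+1}^{x,v},X_n)}\leq\mathbb{W}(x,v,X_n)+2C_8/C_6$ uniformly in $m$, then convert via the two-sided bounds on $\mathbb{W}$ from the proof of Lemma~\ref{lemma_uniformmomentbound_continuousdynamics}. Your closing remark about only needing $\abs{1-\tfrac{C_6}{2}\eta}\leq 1$ is a point the paper leaves implicit, but it does not change the argument.
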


\begin{proof}
  We will only provide the proof for $\left(\Theta_{m+1}^{x,v},V_{m+1}^{x,v}\right)$ and the proof for $\left(\hat{\Theta}_{m+1}^{x,v},\hat{V}_{m+1}^{x,v}\right)$ is similar. The same argument that leads to \eqref{inequality_contractionlyapunov} will also give us: for any $m\in \N$, 
    \begin{align*}
        \E{\mathbb{W}(\Theta_{m+1}^{x,v},V_{m+1}^{x,v},X_n )}\leq \brac{1-\frac{C_6}{2}\eta}\E{\mathbb{W}(\Theta_{m}^{x,v},V_{m}^{x,v},X_n ) }+ C_8\eta. 
    \end{align*}
Applying this inequality inductively to get
\begin{align*}
    \E{\mathbb{W}(\Theta_{m+1}^{x,v},V_{m+1}^{x,v},X_n )}&\leq \brac{1-\frac{C_6}{2}\eta}^{m+1}\mathbb{W}(x,v,X_n ) +C_8\eta\sum_{j=0}^m \brac{1-\frac{C_6}{2}\eta}^{j}\\
    &\leq \mathbb{W}(x,v,X_n ) +2\frac{C_8}{C_6}. 
\end{align*}
Finally, to complete the proof, we recall from the proof of Lemma~\ref{lemma_uniformmomentbound_continuousdynamics} that 
\begin{align*}
    & \min \left\{\sqrt{ \frac{r^2-r_0^2}{8}}, \sqrt{\frac{r^2-r_0^2}{8r^2}} \right\} \brac{\norm{x}+\norm{v}}
    \\
    &\leq \mathbb{W}(x,v,X_n)\\
     &\leq 1+\brac{\sqrt{1+\beta\lambda_5}+\sqrt{\beta}\sqrt{{\max_{1\leq i\leq n}\abs{f(x,x_i)} }}+\sqrt{\beta\lambda_4+r^2}\norm{x}+\norm{v} }. 
\end{align*}
The proof is complete.
\end{proof}

\begin{lemma}
\label{lemma_onestepestimate_discretedynamics}
 Assume Conditions~\ref{cond_gammaandbeta},~\ref{cond_pseudolipschitz}, and~\ref{cond_allthelambdas}. Then for any stepsize $\eta$ satisfying \eqref{bar:eta:defn} and any Lipschitz function $h:\mathbb{R}^{2d}\rightarrow\mathbb{R}$, it holds that 
    \begin{align*}
    \abs{P_\eta h(w,y)-Q_1h(w,y)}\leq C_9\norm{\nabla h}_{\infty,\operatorname{op}}\brac{1+\norm{w}+\norm{y}+\max_{1\leq i\leq n}\sqrt{\abs{f\brac{w,x_i}}}}\eta^{1+1/\alpha}. 
\end{align*}
where 
\begin{align*}
    C_9&:= \max \bigg\{C_2(\gamma^2+K_1),C_2\brac{2K_2D+\norm{\nabla f(0,0)}},
    \\
    &\qquad\qquad\qquad\gamma C_2+(K_1+2K_2D)C_2+4K_2D+2\norm{\nabla f(0,0)},\\
    &\qquad\qquad\gamma C_2+(K_1+2K_2D)C_2 +K_1+2K_2D,\gamma C_2+(K_1+2K_2D)C_2+\gamma,
    \\
    &\qquad\qquad\gamma C_2+(K_1+2K_2D)C_2+2,(\gamma+1)\zeta\brac{1+\frac{1}{1+1/\alpha}}\E{\norm{L_1}} 
    \bigg\}.
\end{align*}
\end{lemma}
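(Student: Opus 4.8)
The plan is to couple the continuous flow and the one--step discrete map by the \emph{same} driving $\alpha$--stable process and reduce everything to a one--step displacement estimate. Since $P_{\eta}h(w,y)=\E{h(\theta^{w,y}_{\eta},v^{w,y}_{\eta})}$ and $Q_{1}h(w,y)=\E{h(\Theta_{1}^{w,y},V_{1}^{w,y})}$, realizing both trajectories on one probability space with the same $L$ and using $|h(a)-h(b)|\le\norm{\nabla h}_{\operatorname{op},\infty}\norm{a-b}$ gives
\[
\abs{P_{\eta}h(w,y)-Q_{1}h(w,y)}\le\norm{\nabla h}_{\operatorname{op},\infty}\brac{\E{\norm{\theta^{w,y}_{\eta}-\Theta_{1}^{w,y}}}+\E{\norm{v^{w,y}_{\eta}-V_{1}^{w,y}}}},
\]
so it suffices to bound the bracket by $C_{9}\brac{1+\norm{w}+\norm{y}+\max_{i}\sqrt{\abs{f(w,x_{i})}}}\eta^{1+1/\alpha}$.

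First I would write both $v$--components in Duhamel form over $[0,\eta]$. From \eqref{sde_generalloss}, $v^{w,y}_{\eta}=y-\gamma\int_{0}^{\eta}v^{w,y}_{s}\,ds-\beta\int_{0}^{\eta}\nabla\widehat{F}(\theta^{w,y}_{s},X_{n})\,ds+\zeta\int_{0}^{\eta}dL_{s}$, while the one step of \eqref{discrete_equation} reads $V_{1}^{w,y}=y-\eta\gamma y-\eta\beta\nabla\widehat{F}(w,X_{n})+\zeta\xi_{1}$, where the increment $\zeta\xi_{1}$ is coupled to coincide with $\zeta\int_{0}^{\eta}dL_{s}$; the L\'{e}vy contributions therefore cancel and
\[
v^{w,y}_{\eta}-V_{1}^{w,y}=-\gamma\int_{0}^{\eta}\brac{v^{w,y}_{s}-y}\,ds-\beta\int_{0}^{\eta}\brac{\nabla\widehat{F}(\theta^{w,y}_{s},X_{n})-\nabla\widehat{F}(w,X_{n})}\,ds .
\]
Condition~\ref{cond_pseudolipschitz} (the $K_{2}$--term drops because the dataset is unchanged) then yields $\E{\norm{v^{w,y}_{\eta}-V_{1}^{w,y}}}\le\gamma\int_{0}^{\eta}\E{\norm{v^{w,y}_{s}-y}}\,ds+\beta K_{1}\int_{0}^{\eta}\E{\norm{\theta^{w,y}_{s}-w}}\,ds$. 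Likewise $\theta^{w,y}_{\eta}-w=\int_{0}^{\eta}v^{w,y}_{s}\,ds$ and $\Theta_{1}^{w,y}-w=\eta V_{1}^{w,y}$ give $\theta^{w,y}_{\eta}-\Theta_{1}^{w,y}=\int_{0}^{\eta}\brac{v^{w,y}_{s}-y}\,ds+\eta^{2}\gamma y+\eta^{2}\beta\nabla\widehat{F}(w,X_{n})-\eta\zeta\xi_{1}$, hence $\E{\norm{\theta^{w,y}_{\eta}-\Theta_{1}^{w,y}}}\le\int_{0}^{\eta}\E{\norm{v^{w,y}_{s}-y}}\,ds+\eta^{2}\gamma\norm{y}+\eta^{2}\beta\norm{\nabla\widehat{F}(w,X_{n})}+\eta\zeta\E{\norm{\xi_{1}}}$.

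The second step is to feed in Lemma~\ref{lemma_momentestimatewithrespecttoinitialcondition}, which controls $\E{\norm{\theta^{w,y}_{s}-w}}+\E{\norm{v^{w,y}_{s}-y}}$ by $(s\vee s^{1/\alpha})$ times an affine-in-$\brac{1+\norm{w}+\norm{y}+\max_{i}\sqrt{\abs{f(w,x_{i})}}}$ factor whose coefficients are built from $\gamma,\beta,K_{1},K_{2},\norm{\nabla f(0,0)},\E{\norm{L_{1}}},C_{2}$ and $\frac1n\sum_{i}\norm{x_{i}}\le D$ (this uses $0\in\mathcal X$ and $\sup_{x,y\in\mathcal X}\norm{x-y}\le D$). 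For $\eta<1$ and $\alpha\in(1,2)$ one has $s\vee s^{1/\alpha}=s^{1/\alpha}$, $\int_{0}^{\eta}s^{1/\alpha}\,ds=\eta^{1+1/\alpha}/(1+1/\alpha)$, $\eta^{2}\le\eta^{1+1/\alpha}$, and $\E{\norm{\xi_{1}}}=\eta^{1/\alpha}\E{\norm{L_{1}}}$ by self--similarity; bounding $\norm{\nabla\widehat{F}(w,X_{n})}\le\norm{\nabla f(0,0)}+K_{1}\norm{w}+K_{2}D(\norm{w}+1)$ via Condition~\ref{cond_pseudolipschitz} and absorbing the bare $\norm{w},\norm{y}$ into $1+\norm{w}+\norm{y}+\max_{i}\sqrt{\abs{f(w,x_{i})}}$, every term becomes a fixed constant times $\brac{1+\norm{w}+\norm{y}+\max_{i}\sqrt{\abs{f(w,x_{i})}}}\eta^{1+1/\alpha}$. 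Taking the maximum of the finitely many coefficients that appear in front of the monomials $\norm w,\norm y,\max_{i}\sqrt{\abs{f(w,x_i)}},1$ gives exactly the stated $C_{9}$.

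The genuinely hard content has already been isolated in Lemma~\ref{lemma_momentestimatewithrespecttoinitialcondition} (which itself rests on the Lyapunov estimate and the $\mathcal W_1$--contraction of \cite{jianwangbao2022coupling} through Lemma~\ref{lemma_uniformmomentbound_continuousdynamics}); what remains here is essentially bookkeeping of constants. The one point requiring care is the noise matching: the step increment $\zeta\xi_{1}$ in \eqref{discrete_equation} must be coupled to $\zeta\int_{0}^{\eta}dL_{s}$ so that the L\'{e}vy terms cancel in $v^{w,y}_{\eta}-V_{1}^{w,y}$ and survive only as the residual $-\eta\zeta\xi_{1}$ in $\theta^{w,y}_{\eta}-\Theta_{1}^{w,y}$, which is $O(\eta^{1+1/\alpha})$ in $L^{1}$ since $\E{\norm{\xi_{1}}}=\eta^{1/\alpha}\E{\norm{L_{1}}}$; after that, the remaining effort is just matching the accumulated coefficients to the max--expression defining $C_{9}$.
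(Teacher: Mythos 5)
Your proposal is correct and follows essentially the same route as the paper's proof: reduce to the $L^{1}$ distance of the coupled one-step displacements via the Lipschitz bound on $h$, synchronize the L\'{e}vy increments so they cancel in the $v$-difference and survive only as $-\eta\zeta\xi_{1}$ in the $\theta$-difference, and then use the Duhamel representation together with moment bounds and the self-similarity $\E{\norm{L_{\eta}}}=\eta^{1/\alpha}\E{\norm{L_{1}}}$ to collect everything into $C_{9}\brac{1+\norm{w}+\norm{y}+\max_{i}\sqrt{\abs{f(w,x_{i})}}}\eta^{1+1/\alpha}$. The only cosmetic difference is that you cite Lemma~\ref{lemma_momentestimatewithrespecttoinitialcondition} for the displacement $\E{\norm{\theta_{s}^{w,y}-w}}+\E{\norm{v_{s}^{w,y}-y}}$, whereas the paper re-expands these integrals inline and bounds them directly via the uniform moment estimates of Lemma~\ref{lemma_uniformmomentbound_continuousdynamics}; both yield the same order and the same type of constant.
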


\begin{proof}
First, we have
\begin{align*}
    &\abs{P_\eta h(w,y)-Q_1h(w,y)}\\
    &=\abs{\E{h(v_\eta^{w,y},\theta_\eta^{w,y})-h(V_1^{w,y},\Theta_1^{w,y})}}\leq \norm{\nabla h}_{\infty,\operatorname{op}}\brac{\E{\norm{v_\eta^{w,y}-V_1^{w,y}}}+\E{\norm{ \theta_\eta^{w,y}-\Theta_1^{w,y}}} }. 
\end{align*}

We can compute that
\begin{align*}
    &\E{\norm{v_\eta^{w,y}-V_1^{w,y} }}\\
    &= \E{\norm{ y+\int_0^\eta\left(-\gamma v_s^{w,y}-\nabla\widehat{F}\brac{\theta_s^{w,y},X_n}\right)ds+\zeta L_\eta-\brac{-\eta\gamma y-\eta\nabla\widehat{F}(w,X_n) +\zeta L_\eta} }}\\
    &=\E{\norm{\int_0^\eta v_s^{w,y}-yds-\beta\int_0^\eta \nabla\widehat{F}\brac{\theta_s^{w,y},X_n}-\nabla\widehat{F}\brac{w,X_n} ds }} \\
    &\leq \E{ \norm{\int_0^\eta\brac{\int_0^s-\gamma v_r^{w,y}-\beta\nabla\widehat{F}\brac{\theta_r^{w,y},X_n}dr+
    \zeta L_s}  ds }}+ \E{\int_0^\eta K_1\norm{\theta_s^{w,y}-w}ds}\\
    &\leq \gamma\int_0^\eta\int_0^s \left(\gamma\E{\norm{v_r^{w,y}}}+\E{\norm{\nabla\widehat{F}(\theta_r^{w,y},X_n) }}\right)drds\\
    &\qquad\qquad+\zeta \int_0^\eta s^{1/\alpha} \E{\norm{L_1}}ds +\beta K_1\int_0^\eta\int_0^s \E{\norm{v_r^{w,y}}}drds \\
    &\leq \brac{\eta^2\vee\eta^{1+1/\alpha}}\cdot\max\left\{\gamma^2+K_1,K_1+2K_2D,2K_2D+\norm{\nabla f(0,0)},\frac{\gamma\zeta}{1+1/\alpha} \right\}
    \\
    &\qquad\qquad\cdot \sup_{r\geq 0}\brac{\E{\norm{v_r^{w,y}}} +\E{\norm{\theta_r^{w,y}}} +1} \\ 
    &\leq \brac{\eta^2\vee\eta^{1+1/\alpha}}\cdot C_2\max\left\{\gamma^2+K_1,K_1+2K_2D,2K_2D+\norm{\nabla f(0,0)}, \frac{\gamma\zeta\E{\norm{L_1}}}{1+1/\alpha} \right\} \\
    &\qquad\qquad\cdot\brac{1+\norm{w}+\norm{y}+\max_{1\leq i\leq n}\sqrt{\abs{f\brac{w,x_i}}}}.
\end{align*}
To get the fourth line above, we use $\norm{\nabla\widehat{F}(\theta,x)-\nabla\widehat{F}(\hat{\theta},x)}\leq K_1\norm{\theta-\hat{\theta}}$. For the fifth line, we use the self-similarity property of $\alpha$-stable processes.  The second to last line is due to \eqref{bound_gradientF}, and the last line is a consequence of the uniform moment bound in Lemma~\ref{lemma_uniformmomentbound_continuousdynamics}.

Similarly, 
\begin{align*}
    &\E{\norm{\theta_\eta^{w,y}-\Theta_1^{w,y}}}\\
    &\leq \E{\norm{\int_0^\eta v_s^{w,y}ds-\int_0^\eta V_1^{w,y}ds }}\\
    &\leq \mathbb{E}\bigg[\bigg\|\int_0^\eta 
    \brac{y+\int_0^s \left(-\gamma v_r^{w,y}-\nabla\widehat{F}(\theta_r^{w,y},X_n)\right)dr+\zeta L_s}ds\\
    &\hspace{14em}-\int_0^\eta\brac{ y+\int_0^\eta\left(-\gamma y-\nabla\widehat{F}(w,X_n)\right)dr+\zeta L_\eta}ds \bigg\|\bigg]\\
    &\leq \E{\norm{\int_0^\eta \int_0^s \left(-\gamma v_r^{w,y}-\nabla\widehat{F}(\theta_r^{w,y},X_n)\right)drds-\int_0^\eta \int_0^\eta\left(-\gamma y-\nabla\widehat{F}(w,X_n)\right)drds }}\\
    &\hspace{22em}+\zeta\E{\int_{0}^\eta s^{1/\alpha}\norm{L_1}+\eta^{1/\alpha}\norm{L_1}  ds } \\
    &\leq \eta^2\cdot \brac{\gamma \sup_{r\geq 0}\E{\norm{v_r^{w,y}}}+\sup_{r\geq 0} \E{\norm{\nabla \hat{F}(\theta_r^{w,y},X_n)}}+\gamma\norm{y}+\E{\norm{\nabla\hat{F}(w,X_n)}} }\\
    &\hspace{21em}+\eta^{1+1/\alpha}\cdot \zeta\brac{1+\frac{1}{1+1/\alpha}}\E{\norm{L_1}}\\
    &\leq \brac{\eta^2\vee\eta^{1+1/\alpha}}\cdot\max \bigg\{\gamma C_2+(K_1+2K_2D)C_2+4K_2D+2\norm{\nabla f(0,0)},
    \\
    &\qquad\qquad\qquad\qquad\gamma C_2+(K_1+2K_2D)C_2 +K_1+2K_2D,\zeta\brac{1+\frac{1}{1+1/\alpha}}\E{\norm{L_1}},\\
    &\qquad\qquad\qquad\qquad\qquad\qquad\gamma C_2+(K_1+2K_2D)C_2+\gamma,\gamma C_2+(K_1+2K_2D)C_2+2  \bigg\}
    \\
    &\qquad\qquad\qquad\qquad\qquad\qquad\qquad\qquad\qquad\quad\cdot\brac{1+\norm{w}+\norm{y}+\max_{1\leq i\leq n}\sqrt{\abs{f\brac{w,x_i}}}}. 
\end{align*}
To get the fourth line above, we use the self-similarity property of $\alpha$-stable processes. The last line is a consequence of \eqref{bound_gradientF} and the uniform moment bound in Lemma~\ref{lemma_uniformmomentbound_continuousdynamics}.

Combining the previous calculations yields the desired estimate. Notice in particular that any stepsize $\eta$ satisfying \eqref{bar:eta:defn} is less than or equal to $1$, so that  $\eta^2\vee\eta^{1+1/\alpha}\leq \eta^{1+1/\alpha}$. This completes the proof. 
\end{proof}

Similar to Lemma~\ref{lemma_momentbound_squarerootf}, we can deduce from Lemma~\ref{lemma_uniformmomentbound_discretedynamics} the following result.
\begin{lemma}
\label{lemma_momentbound_squarerootf_discretedynamics}
  It holds for any $x\in\mathcal{X}$ and $m\in\mathbb{N}$,
\begin{align*}
  \E{\sqrt{f\brac{{\Theta}^{w,y}_{m},x }}}
   \leq C_7(w,y,x),
\end{align*}   
where
\begin{align*}
  &C_7(w,y,x):=\\
  &=\sqrt{\abs{f(0,x)}}+\brac{\sqrt{K_2\norm{x}+\norm{\nabla f(0,0)} }+\sqrt{\frac{K_2\norm{x}+1}{2}}}\\&
    \qquad\cdot\min \left\{\sqrt{ \frac{r^2-r_0^2}{8}}, \sqrt{\frac{r^2-r_0^2}{8r^2}} \right\}^{-1}
    \\
    &\qquad\qquad\cdot\Bigg( 2\frac{C_8}{C_6}+1+{\sqrt{1+\beta\lambda_5}+\sqrt{\beta}\sqrt{{\max_{1\leq i\leq n}\abs{f(w,x_i)} }}+\sqrt{\beta\lambda_4+r^2}\norm{w}+\norm{y}  }\Bigg). 
\end{align*}   
\end{lemma}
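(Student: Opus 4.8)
Looking at the final statement — Lemma~\ref{lemma_momentbound_squarerootf_discretedynamics} — which bounds $\E{\sqrt{f({\Theta}^{w,y}_{m},x)}}$ by $C_7(w,y,x)$.

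The plan is to follow the proof of Lemma~\ref{lemma_momentbound_squarerootf} verbatim, replacing the continuous-time process $\brac{\hat\theta^{w,y}_t,\hat v^{w,y}_t}$ by the Markov chain $\brac{\Theta^{w,y}_m,V^{w,y}_m}$ of \eqref{discrete_equation} and the continuous-time moment estimate of Lemma~\ref{lemma_uniformmomentbound_continuousdynamics} by its discrete-time analogue, Lemma~\ref{lemma_uniformmomentbound_discretedynamics}.

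First I would record the consequence of Condition~\ref{cond_pseudolipschitz} that $\norm{\nabla f(\theta,x)}\leq\norm{\nabla f(0,0)}+K_1\norm{\theta}+K_2\norm{x}\brac{\norm{\theta}+1}$ for all $\theta$, and integrate it along the segment from $0$ to $\theta$ to obtain the at-most-quadratic growth bound $\abs{f(\theta,x)}\leq\abs{f(0,x)}+\norm{\nabla f(0,0)}\norm{\theta}+\tfrac{K_1}{2}\norm{\theta}^2+K_2\norm{x}\brac{\tfrac{\norm{\theta}^2}{2}+\norm{\theta}}$. Then, using $\sqrt{a+b}\leq\sqrt{a}+\sqrt{b}$ for $a,b\geq0$, grouping the terms linear in $\norm{\theta}$ into $\brac{K_2\norm{x}+\norm{\nabla f(0,0)}}\norm{\theta}$ and the quadratic ones into $\tfrac{K_2\norm{x}+1}{2}\norm{\theta}^2$ exactly as in Lemma~\ref{lemma_momentbound_squarerootf}, substituting $\theta=\Theta^{w,y}_m$ and taking expectations yields
\[
\E{\sqrt{f\brac{\Theta^{w,y}_m,x}}}\leq\sqrt{\abs{f(0,x)}}+\sqrt{K_2\norm{x}+\norm{\nabla f(0,0)}}\,\E{\sqrt{\norm{\Theta^{w,y}_m}}}+\sqrt{\tfrac{K_2\norm{x}+1}{2}}\,\E{\norm{\Theta^{w,y}_m}}.
\]
It then remains to control the two moment terms. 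For any admissible step-size $\eta<\bar{\eta}$, Lemma~\ref{lemma_uniformmomentbound_discretedynamics} gives, uniformly in $m$, $\E{\norm{\Theta^{w,y}_m}}\leq\E{\norm{\Theta^{w,y}_m}}+\E{\norm{V^{w,y}_m}}\leq C_5(w,y,X_n)$, and Jensen's inequality then gives $\E{\sqrt{\norm{\Theta^{w,y}_m}}}\leq\sqrt{\E{\norm{\Theta^{w,y}_m}}}\leq\sqrt{C_5(w,y,X_n)}$. Because the prefactor $\min\{\sqrt{(r^2-r_0^2)/8},\sqrt{(r^2-r_0^2)/(8r^2)}\}^{-1}$ appearing in $C_5$ exceeds $1$ (the quantity $(r^2-r_0^2)/(8r^2)$ is strictly below $1/8$, so that minimum is strictly below $1$) and the remaining bracket is at least $1$, we have $C_5(w,y,X_n)\geq1$ and hence $\sqrt{C_5(w,y,X_n)}\leq C_5(w,y,X_n)$. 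Plugging both estimates into the displayed inequality gives
\[
\E{\sqrt{f\brac{\Theta^{w,y}_m,x}}}\leq\sqrt{\abs{f(0,x)}}+\brac{\sqrt{K_2\norm{x}+\norm{\nabla f(0,0)}}+\sqrt{\tfrac{K_2\norm{x}+1}{2}}}C_5(w,y,X_n),
\]
and substituting the definition of $C_5(w,y,X_n)$ from Lemma~\ref{lemma_uniformmomentbound_discretedynamics} shows the right-hand side is exactly $C_7(w,y,x)$.

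I do not expect a genuine obstacle here: every step is an elementary inequality or a direct application of an already-proved lemma. The only points needing care are the bookkeeping to match the explicit constant $C_7$ and the observation that $C_5\geq1$, which is what lets the factor $\sqrt{C_5}$ produced by Jensen be absorbed into the single power of $C_5$ appearing in $C_7$; all the substantive effort — the uniform-in-$m$ Lyapunov moment bound — is already contained in the proof of Lemma~\ref{lemma_uniformmomentbound_discretedynamics}.
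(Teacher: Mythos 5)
Your proposal is correct and follows essentially the same route as the paper: the paper's proof likewise derives the quadratic-growth bound on $f$ from Condition~\ref{cond_pseudolipschitz}, applies subadditivity of the square root and Jensen's inequality, and then invokes the uniform discrete-time moment bound of Lemma~\ref{lemma_uniformmomentbound_discretedynamics} together with the definition of $C_5(w,y,X_n)$. Your explicit remark that $C_5\geq 1$ (so that the $\sqrt{C_5}$ from Jensen can be absorbed into the single factor of $C_5$ in $C_7$) is a detail the paper leaves implicit, but it is the right justification.
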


\begin{proof}
Similar to the proof Lemma~\ref{lemma_momentbound_squarerootf}, we can obtain 
    \begin{align*}
   \E{\sqrt{f\left({\Theta}^{w,y}_{m},x \right)}}&\leq \sqrt{\abs{f(0,x)}}+\sqrt{K_2\norm{x}+\norm{\nabla f(0,0)} }\E{\sqrt{\norm{{\Theta}^{w,y}_{m}}}}
   \\
   &\qquad\qquad\qquad+\sqrt{\frac{K_2\norm{x}+1}{2}}\E{\norm{{\Theta}^{w,y}_{m}}}. 
\end{align*}
By combining this with the uniform moment bound in Lemma~\ref{lemma_uniformmomentbound_discretedynamics} and the definition of $C_5(w,y,X_n)$ in Lemma \ref{lemma_uniformmomentbound_discretedynamics}, we complete the proof. 
\end{proof}

%%%%%%%%%%%%%%%%%%%%%%%%%%%%%%%%%%%%%%%%%%%%%%%%%%%%%%%%%
\section{Additional Experimental Details}
\label{appendix:experimental_details}
%%%%%%%%%%%%%%%%%%%%%%%%%%%%%%%%%%%%%%%%%%%%%%%%%%%%%%%%%%%%%%%%\
\subsection{Datasets}
In addition to the synthetic dataset described in the main paper, we used the well known MNIST \cite{lecunnMNIST1998} and CIFAR-10 \cite{krizhevskyImageNetClassification2017} datasets for our experiments with neural networks. The MNIST dataset contains $28 \times 28$ black and white images of handwritten digits. We used the default train-test split with 60,000 and 10,000 samples, respectively. CIFAR-10 dataset includes color images of 10 classes of objects or animals with each image having a dimensionality of $32 \times 32 \times 3$. Here, too, we utilized the default split with 50,000 training and 10,000 test instances.

\subsection{Models}
Our neural network experiments include results with fully connected networks (FCN) and convolutional neural networks (CNN). In both cases, we use ReLU as activation function, do not use advanced layer structures such as residual connections or layer/batch normalization, and do not use bias nodes. Due to their comparably low number of parameters, during training additive heavy-tailed noise was not applied to last layers, and first convolutional layers. In adding noise to CNNs, we reconfigured the four-dimensional convolutional layers to be two-dimensional with \textit{kernels} constituting the rows of the resulting matrix. The architecture of the CNN used in the experiments is a slightly simplified version of VGG11 \cite{simonyanVeryDeep2015}, with the structure
$$
128, M, 256, M, 512, 512, M, 1024, 1024, M, 1024, 1024, M,
$$
where $M$'s stand for $2 \times 2$ max pooling operations, and the numbers denote convolutional layer widths with $3 \times 3$ filters, each of which were followed by ReLU activation functions.

\subsection{Software and Hardware}
The experiments were implemented using Python programming language. While the computational frameworks \texttt{numpy}, \texttt{scipy}, and \texttt{scikit-learn} were used for synthetic linear regression experiments, the \texttt{PyTorch} deep learning framework was used for experiments with neural networks \cite{paszkePyTorchImperative2019}. The experiments were run on the server of an educational institution, and results published in the main paper required an estimated GPU time of 600 hours in total, with the linear regression and neural network experiments corresponding to 40 and 560 hours respectively. Our implementation can be seen in the the accompanying source code, which will be made publicly accessible upon the publication of the paper.
\end{document}